\documentclass[11pt, letterpaper]{article}
\usepackage{subcaption}
\usepackage{arxiv}
\usepackage{tikz}
\usepackage{adjustbox}
\usepackage{stmaryrd}
\usetikzlibrary{positioning, arrows.meta}

\title{\LARGE The Implicit Bias of Steepest Descent with Mini-batch Stochastic Gradient}

\author{
Jichu Li\thanks{School of Statistics, Renmin University of China. Email: \texttt{lijichu52@gmail.com}}\qquad 
Xuan Tang\thanks{School of Computing and Data Science, The University of Hong Kong. Email: \texttt{xuantang8@connect.hku.hk}}
\qquad
Difan Zou\thanks{School of Computing and Data Science \& Institute of Data Science, The University of Hong Kong. Email: \texttt{dzou@hku.hk}}
}
\date{\small{\today}}

\begin{document}

\maketitle

\begin{abstract}
A variety of widely used optimization methods like SignSGD and Muon can be interpreted as instances of steepest descent under different norm-induced geometries. 
In this work, we study the implicit bias of mini-batch stochastic steepest descent in multi-class classification, characterizing how batch size, momentum, and variance reduction shape the limiting max-margin behavior and convergence rates under general entry-wise and Schatten-$p$ norms. 
We show that, without momentum, worst-case convergence and successful classification can only be guaranteed with full-batch gradient. In contrast, momentum enables small-batch convergence to an approximate max-margin solution through a batch-momentum trade-off, though it slows convergence. This approach provides fully explicit, dimension-free rates that improve upon prior results. Moreover, we prove that variance reduction can recover the exact full-batch implicit bias for any batch size, albeit at a slower convergence rate. Finally, we further investigate the batch-size-one steepest descent without momentum, and reveal its convergence to a fundamentally different bias via a concrete data example, which reveals a key limitation of purely stochastic updates. Overall, our unified analysis clarifies when stochastic optimization aligns with full‑batch behavior, and paves the way for perform deeper explorations of the training behavior of stochastic gradient steepest descent algorithms.

\end{abstract}

\section{Introduction}
In large-scale language model pretraining, the choice of optimizer plays a crucial role in training stability and efficiency, and directly impacts the performance of the final model. Among modern optimization algorithms, Adam and AdamW \citep{kingma2014adam,loshchilov2017decoupled} have become the de facto standard optimizers for large-scale pretraining. More recently, Muon \citep{jordan2024muon,liu2025muon} has emerged as a promising alternative. Developing a theoretical understanding of these optimizers is therefore essential for improving large-scale training. Prior work has shown that Adam can exhibit sign-like update behavior and can be approximated by SignGD in certain settings \citep{bernstein2018signsgd,balles2018dissecting,zou2021understanding}. Moreover, SignGD can be interpreted as steepest descent under $\ell_{\infty}$ norm constraint, while Muon corresponds to steepest descent under spectral norm constraint. These connections motivate our study of a general class of steepest descent methods induced by different norms.

The implicit bias of an optimization algorithm is fundamental to understanding its performance \citep{neyshabur2014search,soudry2018implicit,ji2019implicit}. In over-parameterized learning settings, where the training objective admits infinitely many global minima, the implicit bias determines which solution is ultimately selected. Understanding this phenomenon has become central to explaining why over-parameterized models can achieve near-zero training loss while still exhibiting strong generalization performance.

The theoretical studies of the implicit bias have been extensively employed for full-batch gradient descent (GD) \citep{soudry2018implicit,nacson2019convergence,ji2019implicit,ji2020directional,wu2023implicit,cai2025implicit}, which prove its tendency to certain $\ell_2$-norm regularized solutions. More recently, the implicit bias of steepest descent optimization algorithms has received increasing attention. 
To name a few, \citet{zhang2024implicit} showed that Adam converges to the $\ell_\infty$ max-margin solution in linear logistic regression, exhibiting an implicit bias similar to SignSGD.
\citet{tsilivis2024flavors} showed that the iterates of steepest descent converge to a KKT point of a generalized margin maximization problem in homogeneous
neural networks. \citet{fan2025implicit} analyzed a general class of full-batch steepest descent algorithms in multi-class linear classification and showed that steepest descent with entry-wise or Schatten-$p$ norms maximizes the margin w.r.t. the corresponding norm.


However, most existing theoretical analyses of implicit bias focus on full-batch gradients, which differs substantially from practical training settings that rely on stochastic gradients. More importantly, optimization dynamics under full-batch updates can exhibit different behavior from their stochastic counterparts, leading to fundamentally different implicit biases. This distinction is supported by recent work analyzing per-sample Adam \citep{tang2025understanding,baek2025implicitbiaspersampleadam}.  However, their analyses typically rely on proxy algorithms that differ from practical implementations, requiring access to full-batch information at each update. As a result, such approaches are difficult to extend to other steepest descent methods and offer limited insight into the implicit bias of practical optimization algorithms, including how algorithmic parameters such as batch size and momentum shape the induced bias.

In this paper, we consider a general class of steepest descent algorithms and study the multi-class classification problem with cross-entropy and exponential loss  \citep{fan2025implicit}.
In particular, we develop a unified framework that covers a range of well-known stochastic optimization methods such as SignSGD \citep{bernstein2018signsgd}, Normalized-SGD \citep{hazan2015beyond}, Muon \citep{jordan2024muon}, and their related variants. We investigate how different algorithmic parameter settings, including batch size and momentum, influence the implicit bias of steepest descent methods and characterize the corresponding convergence rates. Motivated by the central role of batch size, we further study the effect of incorporating variance reduction and analyze its impact on implicit bias. In addition, we consider the extreme batch-size-one setting without momentum using a specially constructed dataset. Finally, our experiments highlight the critical role of momentum and variance reduction in implicit bias convergence. \\
Our contributions can be summarized as follows:
\begin{itemize}
\item We first establish a large-batch condition under which stochastic
normalized steepest descent without momentum guarantees worst-case convergence and successful
classification. We then observe that this condition necessarily collapses to
the full-batch-gradient regime, which implies the number of mini-batches
per epoch $m=1$, i.e., $b=n$. 
Moreover, we show that this restriction is not merely due to looseness of the
condition by constructing a counterexample in which random-reshuffling
normalized steepest descent without momentum fails when $m>1$.

\item We show that incorporating momentum enables mini-batch normalized steepest descent to converge to an approximate max margin $\rho < \gamma$, even in the small-batch regime. Specifically, momentum stabilizes mini-batch noise, resulting in an approximation margin gap $\gamma - \rho$ that vanishes as either the batch size increases or the momentum parameter $\beta_1 \to 1$, revealing an interplay between mini-batch size and momentum. Furthermore, our analysis removes the dimension dependence in the rate developed in \citet{fan2025implicit}, offering a technical improvement of independent interest.

\item Beyond momentum, we prove that incorporating variance reduction ensures convergence to the same norm-induced max-margin solution as full-batch steepest descent, regardless of batch size and with or without momentum. However, this exactness presents a trade-off: the convergence rate is slower than that of standard stochastic methods without variance reduction.

\item Finally, we explore the implicit bias of stochastic steepest descent with mini-batch size $1$ on a specifically designed orthogonal data model. We show that the algorithm can converge to an implicit bias fundamentally different from that of full-batch methods. This discrepancy implies there may not exist a unified implicit bias theory covering steepest descent with small batch sizes, aligning with similar observations for per-sample Adam in \citep{baek2025implicitbiaspersampleadam}.



\end{itemize}


\textbf{Notations.}
Scalars, vectors, and matrices are denoted by $x$, $\xb$, and $\X$, respectively, with $\X[i,j]$ and $\xb[i]$ denoting their entries.  
For $k\in\mathbb{N}^+$, let $[k]=\{1,\ldots,k\}$.  
For sequences $\{a_t\}$ and $\{b_t\}$, we use $a_t=\mathcal{O}(b_t)$ if there exist constants $C,N>0$ such that $a_t \le Cb_t$ for all $t \ge N$.
Throughout the paper, $\|\X\|$ denotes a matrix norm determined by context. 
The entrywise $p$-norm is $
\|\X\|_p = ( \sum_{i,j} |\X[i,j]|^p)^{1/p},$
and the Schatten $p$-norm is $
\snorm{\X}{p} = ( \sum_{i=1}^r \sigma_i^p)^{1/p},$ where $\sigma_i$ is the singular values of $\X$ and $r=\rank{\X}$;
special cases include $p=1$ (nuclear), $p=2$ (Frobenius), and $p=\infty$ (spectral). We use the standard matrix inner product $\langle \A,\mathbf{B}\rangle=\mathrm{tr}(\A^\top\mathbf{B})$, with dual norm denoted by $\|\cdot\|_*$. 
Let $\mathbb{S}:\mathbb{R}^k \to \triangle^{k-1}$ denote the softmax map, and let $\{\eb_c\}_{c=1}^k$ denote the standard basis of $\mathbb{R}^k$.
\emph{A complete list of notation and precise definitions is deferred to Appendix~\ref{ap:completenotation}.}

\section{Related Work}
\textbf{Steepest Descent Methods for Optimization.}
A variety of optimization methods can be viewed as instances of steepest descent induced by different norms. Normalized-SGD \citep{hazan2015beyond} corresponds to steepest descent under the $\ell_2$ norm, but its convergence typically requires either large batch sizes or low-variance gradient estimates. \citet{cutkosky2020momentum} showed that momentum can relax these requirements and improve convergence. Similarly, SignSGD and Signum \citep{bernstein2018signsgd} can be viewed as steepest descent under the $\ell_\infty$ norm. As with Normalized-SGD, SignSGD's convergence relies on large batch sizes or restrictive noise assumptions \citep{karimireddy2019error}, while subsequent work demonstrated that momentum substantially improves convergence guarantees under weaker conditions \citep{sun2023momentum,jiang2025improved}.
Beyond entry-wise normalization, spectral-norm-based steepest descent methods have also been studied, alongside recent work on generalized $p$-norm regularization~\citep{outmezguine2024decoupled}. More recently, \citet{jordan2024muon} proposed Muon, which can be viewed as steepest descent induced by the spectral norm. Its convergence properties have been studied and refined by a series of subsequent works \citep{shen2025convergence,chang2025convergence,sato2025analysis,tang2025convergence,pethick2025training,pethick2026generalized}. While convergence properties have been extensively studied, they are insufficient to theoretically understand steepest descent methods.

\textbf{Implicit Bias of Optimization Algorithms.}
The implicit bias of gradient descent (GD) has been extensively studied.
For linearly separable data, GD converges in direction to the $\ell_2$ max-margin solution for both linear models and networks \citep{soudry2018implicit,ji2018gradient,gunasekar2018implicit,nacson2019convergence}, with extensions to stochastic settings \citep{nacson2019stochastic}.
Beyond separability, GD’s bias has also been characterized for nonseparable data \citep{ji2019implicit}, under larger learning rates via primal-dual analysis \citep{ji2021characterizing}, and in the edge-of-stability regime \citep{wu2023implicit}.
Implicit bias results further extend beyond linear models to homogeneous and non-homogeneous neural networks \citep{lyu2019gradient,nacson2019lexicographic,ji2020directional,cao2023implicit,kou2023implicit,cai2024large,cai2025implicit}.
For steepest descent, \citet{gunasekar2018characterizing} characterized its implicit bias on separable linear models via margin maximization, and \citet{nacson2019convergence} further analyzed the corresponding convergence rates.
More recently, \citet{tsilivis2024flavors} proved steepest descent converges to a KKT point of a generalized margin maximization problem in homogeneous neural networks, while \citet{fan2025implicit} showed that steepest descent with entry-wise or Schatten-$p$ norms maximizes the corresponding norm-induced margin in multi-class linear classification.
Motivated by the $\ell_\infty$ geometry underlying SignGD and its connection to Adam \citep{balles2018dissecting,zou2021understanding}, recent work studied the implicit bias of Adam/AdamW \citep{wang2021implicit,cattaneo2023implicit} and its $\ell_\infty$-induced geometry \citep{zhang2024implicit,xie2024implicit}.
However, most existing analyses focus on full-batch optimization, and the implicit bias of mini-batch stochastic methods remains far less understood.

\textbf{Stochasticity in Margin Maximization.} \citet{nacson2019stochastic} shows that mini-batch SGD converges to the $\ell_2$ max-margin direction under a sufficiently small stepsize controlling stochastic noise (depending on batch size). \citet{wang2022doesmomentumchangeimplicit} further shows that SGD with momentum preserves the same $\ell_2$ max-margin direction under a small constant stepsize depending on both batch size and momentum. \citet{jinimplicit} establishes a similar max-margin result for stochastic AdaGrad-Norm under general mini-batch noise assumption. \citet{baek2025implicitbiaspersampleadam} shows that per-sample Adam can induce an implicit bias that differs from its full-batch counterpart and that Signum exhibits similar behavior in our paper under a restrictive fixed mini-batch cycle. In contrast, our work provides a unified analysis of stochastic normalized steepest descent (NSD) under general geometries and is, to the best of our knowledge, the first to systematically characterize how multiple stochastic factors, including batch size, momentum, and variance reduction, interact to shape the implicit bias.

\textbf{Variance Reduction Methods.} Variance reduction methods have been extensively studied in stochastic optimization for reducing gradient variance and accelerating convergence~\citep{roux2012stochastic,defazio2014saga,gower2020variance}. In this work, we focus on an SVRG-style estimator~\citep{johnson2013accelerating}. To the best of our knowledge, its role in shaping implicit bias has not been explicitly studied. We show that variance reduction recovers the exact full-batch implicit bias regardless of batch size and momentum, while it may lead to a slower margin convergence rate.

\section{Preliminaries}
\paragraph{Problem Setup.}\label{sec:problemsetup}
We consider a multi-class classification problem with training data $\{(\xb_i, y_i)\}_{i=1}^n$, where each datapoint $\xb_i \in \mathbb{R}^d$ lies in a $d$-dimensional embedding space and each label $y_i \in [k]$ denotes one of the
$k$ classes; we assume that each class is represented by at least one datapoint. We consider a linear multi-class classifier parameterized by a weight matrix $\W \in \R^{k \times d}$. Given an input $\xb_i \in \R^d$, the model produces logits $\ellb_i := \W \xb_i \in \R^k$, which are passed through the softmax map to yield class probabilities $\hat{p}(c \mid \xb_i) := \mathbb{S}_c(\ellb_i)$. We aim to learn $\W$ by minimizing the following empirical loss:
\[
\Lc(\W):=\frac{1}{n}\sum_{i=1}^n\ell\left({\W\xb_i};y_i\right).\
\]
where $\ell\left({\W\xb_i};y_i\right)$ is the loss function value on the data point $(\xb_i, y_i)$. In the main text, we focus on the cross-entropy loss due to its widespread use in practice, while all of our theoretical
results also extend to the exponential loss. (see Appendix~\ref{sec:extension_exp}
for details.)
Specifically, the empirical cross-entropy loss is given by
\begin{align*}
    \Lc(\W)
= -\frac{1}{n}\sum_{i=1}^n \log \hat{p}(y_i \mid \xb_i)
= -\frac{1}{n}\sum_{i=1}^n
\log \mathbb{S}_{y_i}(\W \xb_i).
\end{align*}
Finally, we define the maximum margin $\gamma$ of the training data
$\{(\xb_i, y_i)\}_{i=1}^n$
under any entry-wise or Schatten $p$-norm $\|\cdot\|$ as:
\begin{align*}
\gamma := \max_{\|\W\|\le 1}\; \min_{i\in[n],\,c\neq y_i}
(\eb_{y_i}-\eb_c)^\top \W \xb_i .
\end{align*}

\subsection{Optimization Algorithms}
We study stochastic optimization algorithms for training linear multi-class classifiers under random shuffling (sampling without replacement in each epoch). Our analysis covers both momentum-based and non-momentum methods, as well as their variance-reduced counterparts, under a unified steepest descent framework
induced by general norms. For clarity, we provide the pseudocode of the algorithms in Appendix~\ref{app:algorithms}.

\textbf{Mini-batch Stochastic Algorithm.}
In this paper, we focus on stochastic optimization algorithms without replacement.
Let $b$ denote the mini-batch size and assume for simplicity that the data size $n = mb$ for some integer $m$. At each epoch, the training set is uniformly randomly partitioned into $m$
disjoint mini-batches of size $b$. Then the algorithm performs $m$ consecutive updates, one for each mini-batch. Specifically, at the $k$-th epoch, let the mini-batch index sets be $\mathcal{B}_{k,0}, \dots, \mathcal{B}_{k,m-1}$, where $|\mathcal{B}_{k,j}|=b$ and $\bigcup_{j=1}^m \mathcal{B}_{k,j} = \{1, 2, \dots, n\}$. We index iterations by $t = km + j$, corresponding to the $j$-th mini-batch
at epoch $k$. At iteration $t$, the stochastic gradient is computed as
$\nabla \Lc_{\mathcal{B}_{t}}(\W_t)
:= \frac{1}{b}\sum_{i \in \mathcal{B}_{k,j}}
\nabla \ell(\W_t \xb_i; y_i).$

\textbf{Gradient Signal Construction.}
Beyond the vanilla mini-batch stochastic gradient $\nabla \Lc_{\mathcal{B}_t}(\W_t)$, we investigate more advanced gradient estimators that incorporate momentum acceleration and variance reduction techniques.

\emph{Momentum.}
We first consider the standard exponential moving average (EMA) momentum to stabilize the update direction. The momentum buffer $\M_t$ is updated as:
\begin{equation*}
    \M_t = \beta_1 \M_{t-1} + (1-\beta_1)\nabla \Lc_{\mathcal{B}_t}(\W_t),
\end{equation*}
where $\beta_1 \in [0, 1)$ is the momentum decay parameter.

\emph{Variance Reduction.}
We construct variance-reduced gradient estimators $\V_t$ using a control variate strategy similar to SVRG \citep{johnson2013accelerating}. Let $\tilde{\W}_t$ denote the \textit{snapshot} of the model parameters taken at the beginning of the epoch to which iteration $t$ belongs. We compute the variance-reduced estimator as:
\begin{equation*}
    \V_t = \nabla\Lc_{\mathcal{B}_t}(\W_t) - \nabla\Lc_{\mathcal{B}_t}(\tilde{\W}_t) + \nabla\Lc(\tilde{\W}_t),
\end{equation*}
We apply variance reduction in two settings: without momentum (using $\V_t$ directly) and with momentum. In the latter case, $\V_t$ is accumulated into the buffer $\M_t^V$ as
\begin{equation*}
    \M_t^V = \beta_1 \M_{t-1}^V + (1-\beta_1)\V_t.
\end{equation*}

\textbf{Steepest Descent.}
Building on the stochastic gradient signals constructed above, we unify all
optimization algorithms considered in this work under a stochastic steepest
descent framework.
At each iteration $t$, the model parameters are updated as
\[
\W_{t+1} = \W_t - \eta_t \Deltab_t ,
\]
where $\eta_t>0$ is the step size and $\Deltab_t$ is a descent direction derived
from a stochastic signal $\mathbf{G}_t$.
Here, $\mathbf{G}_t$ may correspond to the mini-batch stochastic gradient
$\nabla \Lc_{\mathcal{B}_t}(\W_t)$, the momentum buffer $\M_t$, or their
variance-reduced counterparts $\V_t$ and $\M_t^V$ defined above.

Given any entry-wise or Schatten $p$-norm $\|\cdot\|$, the descent direction is obtained by applying the
associated steepest descent mapping
\begin{equation*}
    \Deltab_t
    := \phi_{\|\cdot\|}(\mathbf{G}_t)
    := \arg\max_{\|\Deltab\|\le 1}
    \langle \mathbf{G}_t, \Deltab \rangle .
    \label{eq:steepest_op}
\end{equation*}
This operator selects, at each stochastic iteration, the unit-norm direction
that maximally aligns with the current signal $\mathbf{G}_t$ under the geometry
induced by $\|\cdot\|$. By norm duality,
$\max_{\|\Deltab\|\le 1}\langle \mathbf{G}_t,\Deltab\rangle=\|\mathbf{G}_t\|_{*}$.
Note that for $p\in(1,\infty)$, the corresponding steepest descent direction is uniquely defined, while for $p=1$ or $p=\infty$ the maximizer may not be unique and our theoretical results hold for any choice within the set of maximizers \citep{fan2025implicit}.

Different choices of the norm constraint recover a range of well-known stochastic
optimization methods, depending on the choice of the driving signal
$\mathbf{G}_t$.
Under the Frobenius norm $\|\cdot\|_2$, the steepest descent map reduces to
normalization, $\phi_2(\mathbf{G}_t)=\mathbf{G}_t/\|\mathbf{G}_t\|_2$, which yields
Normalized-SGD \citep{hazan2015beyond} when $\mathbf{G}_t=\nabla\Lc_{\mathcal{B}_t}$
and its momentum variant \citep{cutkosky2020momentum} when $\mathbf{G}_t=\M_t$.
For the max-norm $\|\cdot\|_\infty$, the map acts entry-wise as the sign operator,
$\phi_\infty(\mathbf{G}_t)=\sign{\mathbf{G}_t}$, recovering SignSGD when
$\mathbf{G}_t=\nabla\Lc_{\mathcal{B}_t}$ and Signum when $\mathbf{G}_t=\M_t$
\citep{bernstein2018signsgd}.
Under the spectral norm $\snorm{\cdot}{\infty}$, the map projects onto the leading
singular directions: if $\mathbf{G}_t=\mathbf{U}_t\boldsymbol{\Sigma}_t\mathbf{V}_t^\top$,
then $\phi_{\mathrm{spec}}(\mathbf{G}_t)=\mathbf{U}_t\mathbf{V}_t^\top$, corresponding
to spectral descent (Spectral-SGD) when $\mathbf{G}_t=\nabla\Lc_{\mathcal{B}_t}$
\citep{carlson2015stochastic} and to Muon when $\mathbf{G}_t=\M_t$
\citep{jordan2024muon}.
In all cases, variance-reduced variants are obtained by replacing
$\mathbf{G}_t$ with $\V_t$ or $\M_t^V$ while applying the same mapping.

\textbf{Assumptions.} Here we present the assumptions required for our
analysis to establish the implicit bias.

\begin{assumption} \label{ass:sep}
    There exists $\W \in \R^{k \times d}$ such that $\min_{c \neq y_i} (\e_{y_i} - \e_c)^\top \W \xb_i > 0$ for all $i \in [n]$.
\end{assumption} 

Assumption~\ref{ass:sep} ensures linear separability and a strictly positive margin $\gamma$,
a standard assumption in implicit bias studies
\citep{soudry2018implicit,ji2019implicit,zhang2024implicit,fan2025implicit,baek2025implicitbiaspersampleadam}.

\begin{assumption}\label{ass:data_bound}
    There exists constant $R > 0$ such that $\max_{i \in [n]}\lVert \xb_i \rVert_{1} \leq R$. 
\end{assumption}

Assumption~\ref{ass:data_bound} is a commonly used boundedness condition on the data. Similar assumptions have been used in prior work 
\citep{ji2019implicit, nacson2019convergence, wu2023implicit, zhang2024implicit, fan2025implicit,baek2025implicitbiaspersampleadam}.

\begin{assumption} \label{ass:learning_rate_1}The learning rate schedule $\{ \eta_t \}$ is decreasing with respect to $t$ and satisfies the following conditions: $\lim_{t \rightarrow \infty} \eta_t = 0$ and $\sum_{t=0}^{\infty}\eta_t = \infty$.
\end{assumption}

\begin{assumption}\label{ass:learning_rate_2} The learning rate schedule satisfies the following: let $\beta \in (0,1)$ and $c_1 > 0$ be two constants, there exist time $t_0 \in \mathbb{N}_{+}$ and constant $c_2 = c_2(c_1, \beta) >0$ such that $\sum_{s=0}^t \beta^s(e^{c_1 \sum_{\tau = 1}^s \eta_{t - \tau}}-1) \leq c_2 \eta_t$ for all $t \geq t_0$. \end{assumption}
Assumption~\ref{ass:learning_rate_1} and~\ref{ass:learning_rate_2} on the learning rate schedule are commonly used in analyses of implicit bias for gradient-based methods
\citep{zhang2024implicit, fan2025implicit, baek2025implicitbiaspersampleadam}. In our work, we mainly consider learning rate schedule $\eta_t = \Theta(\frac{1}{t^a})$, where $a \in (0,1]$ which have been extensively studied in prior work on convergence and implicit bias
\citep{ nacson2019convergence, sun2023unified, zhang2024implicit,fan2025implicit}. We show that this learning rate schedule satisfies Assumptions~\ref{ass:learning_rate_1} and~\ref{ass:learning_rate_2}; see Lemma~\ref{lem:ass2_explicit_t0_c2_systematic} for details. 

\section{Main Results}
\subsection{Normalized Steepest Descent without Momentum: Full-Batch Guarantee}
In this subsection, We establish a large batch condition for convergence and successful classification. We show that, without momentum, convergence and successful classification can only be guaranteed when the algorithm uses the full-batch gradient. The detailed proofs of Theorem~\ref{thm:stoch_margin_rate_maintext}
and Corollary~\ref{cor:nsd_rate_maintext} are deferred to Appendix~\ref{ap:stoch_no_momentum}.

\begin{theorem}[Margin Convergence of Stochastic Steepest Descent without Momentum] \label{thm:stoch_margin_rate_maintext}
    Suppose Assumptions \ref{ass:sep}, \ref{ass:data_bound}, and \ref{ass:learning_rate_1} hold. 
    Assume the batch size $b$ satisfies the large batch condition: $\rho \coloneqq \gamma - 4(\frac{n}{b}-1)R > 0$ $(b>\frac{4Rn}{\gamma+4R})$. There exist $t_2=t_2(n,b,\gamma,\W_0,R)$ that for all $t > t_2$, the margin gap of the iterates satisfies:
    \begin{align*}
        & \rho - \frac{\min_{i \in [n], c \neq y_i} (\eb_{y_i} - \eb_c)^\top \W_t \xb_i}{\norm{\W_t}} \leq \mathcal{O}\Bigg(\frac{ \sum_{s=0}^{t_2-1}\eta_s + \sum_{s = t_2}^{t-1}\eta_s^2 + \sum_{s=t_2}^{t-1} \eta_s e^{-\frac{\rho}{4} \sum_{\tau = t_2}^{s-1}\eta_{\tau}}}{\sum_{s=0}^{t-1} \eta_s}\Bigg).
    \end{align*}
\end{theorem}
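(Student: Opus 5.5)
We follow the full-batch potential analysis of \citet{fan2025implicit}, tracking the loss $\Lc(\W_t)$ through a Taylor expansion and relating $\Lc$ to the margin. The one new ingredient is a bound on the stochastic perturbation $\langle \nabla\Lc(\W_t)-\nabla\Lc_{\mathcal{B}_t}(\W_t),\Deltab_t\rangle$.

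\textbf{Step 1: one-step descent.} Cross-entropy is smooth under $\|\cdot\|$, with local smoothness controlled by the loss through $\|\xb_i\|_1\le R$; this follows from the Hessian bound of the softmax loss in the entrywise/Schatten geometry, as in \citet{fan2025implicit}. Hence
\[
\Lc(\W_{t+1})\le \Lc(\W_t)-\eta_t\langle\nabla\Lc(\W_t),\Deltab_t\rangle+C R^2\eta_t^2\,\mathcal{G}(\W_t),
\]
where $\mathcal{G}(\W)=\frac1n\sum_i(1-\mathbb{S}_{y_i}(\W\xb_i))$ is the proxy that satisfies $\|\nabla\Lc(\W)\|_*\ge\gamma\,\mathcal{G}(\W)$. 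This is the standard duality with the max-margin solution.

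\textbf{Step 2: control the batch error by the proxy.} By definition, $\langle\nabla\Lc_{\mathcal{B}_t},\Deltab_t\rangle=\|\nabla\Lc_{\mathcal{B}_t}\|_*$. Write $\nabla\Lc=\frac bn\nabla\Lc_{\mathcal{B}_t}+\frac1n\sum_{i\notin\mathcal{B}_t}\nabla\ell_i$. Each per-sample gradient is $(\mathbb{S}(\W\xb_i)-\eb_{y_i})\xb_i^\top$. It has dual norm at most $2R(1-\mathbb{S}_{y_i})$ for entrywise/Schatten norms, after bounding $\|\cdot\|_*$ of a rank-one matrix by $\ell_1$ norms of its factors. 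This gives
\[
\langle\nabla\Lc,\Deltab_t\rangle\ \ge\ \tfrac bn\|\nabla\Lc_{\mathcal{B}_t}\|_*-2R\,\tfrac1n\textstyle\sum_{i\notin\mathcal{B}_t}(1-\mathbb{S}_{y_i}).
\]
Next, lower bound $\|\nabla\Lc_{\mathcal{B}_t}\|_*$ using the decomposition in reverse: $\|\nabla\Lc_{\mathcal{B}_t}\|_*\ge\frac nb(\|\nabla\Lc\|_*-2R\frac1n\sum_{i\notin\mathcal{B}_t}(1-\mathbb{S}_{y_i}))$. We then bound the off-batch mass crudely by $\mathcal{G}$, since each term is at most $n\mathcal{G}$ after normalization. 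Pushing the constants through gives
\[
\langle\nabla\Lc,\Deltab_t\rangle\ge(\gamma-4(\tfrac nb-1)R)\mathcal{G}(\W_t)=\rho\,\mathcal{G}(\W_t).
\]
\textbf{This is the main obstacle.} The worst-case adversarial shuffle must be absorbed deterministically, which is exactly what produces the factor $(n/b-1)$. I would first try the cleanest form: $\|\nabla\Lc-\nabla\Lc_{\mathcal{B}_t}\|_*\le (\frac nb-1)\cdot 4R\,\mathcal{G}$, obtained from $\nabla\Lc-\nabla\Lc_{\mathcal{B}_t}=\frac1n\sum_{i\notin\mathcal B}\nabla\ell_i-(\frac1b-\frac1n)\sum_{i\in\mathcal B}\nabla\ell_i$, and then combine it with $\langle\nabla\Lc_{\mathcal{B}_t},\Deltab_t\rangle\ge\|\nabla\Lc\|_*-\|\nabla\Lc-\nabla\Lc_{\mathcal B_t}\|_*$. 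Note that batch-level quantities must be bounded by the full $\mathcal{G}$, which is where the factor $n/b$ enters.

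\textbf{Step 3: from loss decrease to margin.} Using $\eta_t\to0$, choose $t_1$ with $CR^2\eta_t\le\rho/2$, so that $\Lc(\W_{t+1})\le\Lc(\W_t)-\frac\rho2\eta_t\mathcal{G}(\W_t)$. The loss is then monotone after $t_1$. Since $\mathcal{G}\ge c\,\Lc$ once $\Lc$ is small, the loss decays like $e^{-\frac\rho4\sum\eta_\tau}$. Choose $t_2$ such that $\Lc(\W_{t_2})\le\frac{\log2}{n}$, so that all points are classified correctly and $\mathcal{G}/\Lc\ge 1-O(\Lc)$. Then $t_2$ depends only on $n,b,\gamma,\W_0,R$ via the $\sum\eta$ needed.

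\textbf{Step 4: assemble the rate.} Take logs and sum: $-\log\Lc(\W_t)\ge \rho\sum_{s=t_2}^{t-1}\eta_s\frac{\mathcal{G}}{\Lc}(\W_s)-O(\sum\eta_s^2)-\log\Lc(\W_{t_2})$. The $\Lc$-to-$\mathcal{G}$ ratio error is $O(\Lc(\W_s))$, which is bounded by $e^{-\frac\rho4\sum_{\tau=t_2}^{s-1}\eta_\tau}$ and yields the third numerator term. On the other side, the min margin is at least $-\log(n\Lc(\W_t))$, and $\|\W_t\|\le\|\W_0\|+\sum_{s<t}\eta_s$. Dividing the two bounds gives $\rho$ minus the stated $\mathcal{O}(\cdot)$. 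The pre-$t_2$ steps contribute the $\sum_{s<t_2}\eta_s$ term.
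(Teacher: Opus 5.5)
Your proposal is correct and follows essentially the same route as the paper: a Taylor/descent inequality in terms of the proxy $\Gc$; a deterministic finite-population bound $\|\nabla\Lc(\W_t)-\nabla\Lc_{\mathcal{B}_t}(\W_t)\|_1\le 2(m-1)R\,\Gc(\W_t)$, paid twice to get $\langle\nabla\Lc(\W_t),\Deltab_t\rangle\ge\rho\,\Gc(\W_t)$; monotone descent after $t_1$; entry into the region $\Lc\le\log 2/n$ by summing the descent inequality (state this explicitly: $\sum_s\eta_s=\infty$ forces $\Gc(\W_s)\le\frac{\log 2}{2n}$ at some $s\ge t_1$, and $\Gc\le\frac{1}{2n}$ implies $\Lc\le 2\Gc$); and the loss-to-margin conversion using $\|\W_t\|\le\|\W_0\|+\sum_{s<t}\eta_s$. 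Two bookkeeping remarks: first, in your ``cleanest form'' the constant must be $2(m-1)R$, not $4(m-1)R$, because it is paid twice (otherwise you land at $\gamma-8(m-1)R$), whereas your first decomposition against $\frac{b}{n}\nabla\Lc_{\mathcal{B}_t}$, which has the same steepest direction, is slightly sharper and gives $\gamma-4R$ for $m\ge 2$; second, the margin conversion should read $e^{-z_{\min}(t)}\le\frac{n}{\log 2}\Lc(\W_t)$ for the minimum margin $z_{\min}(t)$, not $z_{\min}(t)\ge-\log(n\Lc(\W_t))$, which is false in general but only shifts an absorbed constant.
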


Theorem~\ref{thm:stoch_margin_rate_maintext} should be interpreted as a
full-batch guarantee rather than a genuine large-batch stochastic guarantee.
Indeed, the stated condition
$\rho := \gamma - 4\left(\frac{n}{b}-1\right)R > 0$
is equivalent to
$m=\frac{n}{b} < 1+\frac{\gamma}{4R}.
$ On the other hand, under Assumption~\ref{ass:data_bound}, the maximum
margin satisfies the universal upper bound \(\gamma\le 2R\), hence
$1+\frac{\gamma}{4R}\le \frac{3}{2}.$
Since \(m=n/b\) is assumed to be a positive integer, the above condition can
hold only when \(m=1\), i.e., when \(b=n\). In this case, the stochastic
gradient coincides with the full-batch gradient and \(\rho=\gamma\). Consequently, Theorem~\ref{thm:stoch_margin_rate_maintext} shows that,
without momentum, worst-case convergence and successful classification can be guaranteed
only in the full-batch-gradient regime. The large batch condition is not a vacuous technicality. It ensures
that after a sufficiently large time, the empirical loss becomes monotonically
decreasing; see Lemma~\ref{lem:nsd_descent}. This monotone descent guarantees
successful classification of all training samples and enables the subsequent
margin growth analysis.

This restriction is not merely due to looseness of the large-batch condition.
In fact, the following counterexample shows that, already when $m=2$, random-reshuffling stochastic steepest descent without momentum can fail to converge and correctly classify linearly separable data.

\begin{proposition}[Failure of random reshuffling SignSGD when \(m=2\)]
\label{prop:rr_counterexample_m2}
Fix any \(0<\varepsilon<1\). Consider the linearly separable dataset
with \(n=2\), \(b=1\), and hence \(m=2\):
\[
(\x_1,y_1)=((1,-\varepsilon),1),
\qquad
(\x_2,y_2)=((\varepsilon,-1),2).
\]
For random-reshuffling SignSGD without momentum, initialized at \(\W_0=0\), for any positive
stepsize sequence and every realization of random reshuffling, the iterates
\textbf{never strictly correctly classify both training samples}.
\end{proposition}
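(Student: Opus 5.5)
The plan is to show that the SignSGD dynamics on this dataset collapse onto a single fixed direction in $\R^{2\times 2}$, and that the two margins are exact negatives of each other along that direction. Here $k=2$ and $d=2$. For cross-entropy, $\nabla_\W \ell(\W\x;y)=(\mathbb{S}(\W\x)-\eb_y)\x^\top$. With two classes, $\mathbb{S}(\W\x)-\eb_y = p\,(\eb_{c}-\eb_y)$, where $c\neq y$ and $p=\mathbb{S}_c(\W\x)\in(0,1)$ for every finite $\W$. I will also record at the outset that the data are separable, as Assumption~\ref{ass:sep} requires. Any $\W$ whose row difference is $\w_1-\w_2=(1,1)$ works, since $(1,1)\cdot\x_1=1-\varepsilon>0$ and $(1,1)\cdot\x_2=\varepsilon-1<0$.

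\textbf{Step 1: the per-sample sign is independent of $\W$.} For sample 1, the gradient is $p_1(\eb_2-\eb_1)\x_1^\top$. Its first row is $-p_1\x_1$ and its second row is $p_1\x_1$. Since $p_1>0$ and both entries of $\x_1=(1,-\varepsilon)$ are nonzero, every entry is nonzero. The sign is therefore uniquely defined, so no tie-breaking issue arises even though $p=\infty$, and it equals
\[
\mathrm{sign}\big(\nabla\ell(\W\x_1;1)\big)=\begin{pmatrix}-1&1\\1&-1\end{pmatrix}=-\A,\qquad \A:=\begin{pmatrix}1&-1\\-1&1\end{pmatrix}.
\]
For sample 2, the same computation with $(\eb_1-\eb_2)\x_2^\top$ and $\x_2=(\varepsilon,-1)$ gives the sign matrix $+\A$. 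Hence the update on sample 1 is $\W\mapsto\W+\eta_t\A$ and the update on sample 2 is $\W\mapsto \W-\eta_t\A$, regardless of the current iterate.

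\textbf{Step 2: the iterates stay on the line $\R\A$.} Starting from $\W_0=0$, induction gives $\W_t=s_t\A$ with $s_t=\sum_{\tau<t}\pm\eta_\tau$. The signs are determined by the reshuffling realization, but their values do not matter for the argument.

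\textbf{Step 3: the two margins have opposite signs.} The row difference of $s_t\A$ is $2s_t(1,-1)$. Sample 1's margin is therefore $(\eb_1-\eb_2)^\top\W_t\x_1=2s_t(1+\varepsilon)$. Sample 2's margin is $(\eb_2-\eb_1)^\top\W_t\x_2=-2s_t(\varepsilon+1)$. These are negatives of each other, so they can never both be strictly positive. This holds for every $t$, every positive stepsize sequence, and every shuffling.

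The only place that needs care is Step 1: verifying that no gradient entry vanishes, so that the sign map is single-valued. This follows from $p_i\in(0,1)$ and from the data coordinates being nonzero. After that, the argument is pure bookkeeping.
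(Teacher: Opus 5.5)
Your proof is correct and follows essentially the same route as the paper's: the per-sample sign matrix does not depend on $\W$, so from $\W_0=0$ the iterates stay on a fixed line along which the two margins are exact negatives of each other. The only cosmetic difference is that you track the full iterate $\W_t=s_t\A$, whereas the paper tracks only the row difference $\mathbf{w}_1-\mathbf{w}_2=c_t(1,-1)$.
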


This failure already at $m=2$ highlights the importance of stabilization mechanisms, motivating the momentum and variance-reduction analyses in the following subsections. Although Theorem~\ref{thm:stoch_margin_rate_maintext} reduces to the
full-batch-gradient regime, we state the explicit rate below to facilitate
comparison with the momentum and variance-reduction results in later sections.

\begin{corollary} \label{cor:nsd_rate_maintext}(Corollary 1 in \citet{fan2025implicit})
    Consider a learning rate schedule of the form $\eta_t = c \cdot t^{-a}$ where $a \in (0,1]$ and $c > 0$. Under the same setting as Theorem \ref{thm:stoch_margin_rate_maintext}, the margin gap converges with the following rates:
    \begin{align*}
    &\gamma - \frac{\min_{i \in [n], c \neq y_i} (\eb_{y_i} - \eb_c)^\top \W_t \xb_i}{\norm{\W_t}}= \left\{
    \begin{array}{ll}
           \mathcal{O} \left(\frac{t^{1-2a} + n}{t^{1-a}}\right) &  \text{if} \quad a < \frac{1}{2}\\
            \mathcal{O} \left(\frac{\log t + n}{t^{1/2}}\right) &  \text{if}\quad a=\frac{1}{2} \\
            \mathcal{O} \left(\frac{n}{t^{1-a}}\right) &  \text{if}\quad \frac{1}{2} < a<1 \\
            \mathcal{O} \left(\frac{n}{\log t}\right) &  \text{if} \quad a=1
    \end{array} 
    \right. 
    \end{align*}
\end{corollary}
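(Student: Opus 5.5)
The plan is to specialize Theorem~\ref{thm:stoch_margin_rate_maintext} to the only admissible case $m=1$ ($b=n$, so $\rho=\gamma$) and then evaluate the three sums in the numerator for $\eta_t = c\,t^{-a}$ (with the convention $\eta_0=c$ or indices shifted by one). First I will verify that this schedule satisfies Assumption~\ref{ass:learning_rate_1}: it is decreasing, tends to zero, and is non-summable for $a\in(0,1]$. The theorem then yields
\[
\gamma - \frac{\min_{i,c\neq y_i}(\eb_{y_i}-\eb_c)^\top\W_t\xb_i}{\|\W_t\|}
\le \mathcal{O}\Bigl(\frac{A + B_t + C_t}{\sum_{s<t}\eta_s}\Bigr),
\]
with $A=\sum_{s<t_2}\eta_s$, $B_t=\sum_{s=t_2}^{t-1}\eta_s^2$, and $C_t=\sum_{s=t_2}^{t-1}\eta_s e^{-\frac{\gamma}{4}\sum_{\tau=t_2}^{s-1}\eta_\tau}$.

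\textbf{The denominator and $B_t$.} The denominator is $\Theta(t^{1-a})$ for $a<1$ and $\Theta(\log t)$ for $a=1$. For $B_t$, I compare with integrals: $\sum s^{-2a}$ is $\Theta(t^{1-2a})$ when $a<\tfrac12$, $\Theta(\log t)$ when $a=\tfrac12$, and $\mathcal{O}(1)$ when $a>\tfrac12$.

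\textbf{The term $C_t$.} This is a Riemann-sum analogue of $\int e^{-\frac{\gamma}{4}S}\,dS$ with $S_s=\sum_{\tau=t_2}^{s-1}\eta_\tau$. Since $\eta_s\le e^{\frac{\gamma}{4}\eta_s}\cdot(\text{increment of } S)$ and $\eta_s\le c$, I get $C_t\le e^{\gamma c/4}\int_0^\infty e^{-\gamma S/4}\,dS=\mathcal{O}(1/\gamma)=\mathcal{O}(1)$. More carefully, each summand satisfies $\eta_s e^{-\gamma S_s/4}\le e^{\gamma c/4}\int_{S_s}^{S_{s+1}}e^{-\gamma u/4}\,du\cdot\frac{\gamma/4\cdot\eta_s}{1-e^{-\gamma\eta_s/4}}\cdot\frac{4}{\gamma}$, which is bounded uniformly.

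\textbf{The burn-in term $A$ (main obstacle).} To obtain the factor $n$, I need $t_2$ explicitly. I will revisit how $t_2$ arises in the proof of Theorem~\ref{thm:stoch_margin_rate_maintext}: it is the time after which $\eta_t$ is small enough (relative to $\gamma$ and $R$) for monotone descent (Lemma~\ref{lem:nsd_descent}), plus the time needed for the loss to drop below $\log 2/n$ so that all samples are correctly classified. As in Fan et al., the loss threshold scales like $1/n$, and the accumulated step $\sum_{s<t_2}\eta_s$ needed to reach it is $\mathcal{O}(n\cdot\mathrm{poly}(R,\gamma^{-1},\|\W_0\|))$, via a bound of the form $\mathcal{L}(\W_t)\le \mathcal{L}(\W_0)e^{-\Omega(\gamma\sum\eta)}$ or a linear-decrease argument. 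I therefore expect $A=\mathcal{O}(n)$, with constants hidden, and this dependence is the delicate step; I would simply cite Corollary 1 of \citet{fan2025implicit}, which this statement reproduces, for the exact form.

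\textbf{Combining.} The numerator is then $\mathcal{O}(t^{1-2a}+n)$, $\mathcal{O}(\log t+n)$, $\mathcal{O}(n)$, and $\mathcal{O}(n)$ in the four cases. Dividing by the denominator gives the stated rates.
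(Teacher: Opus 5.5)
Your proposal is correct and follows the paper's proof. In both, you plug $\eta_t=c\,t^{-a}$ into the theorem, compare the denominator and $\sum\eta_s^2$ with integrals, bound the exponentially weighted sum by a constant times $\int_0^\infty e^{-\gamma u/4}\,du$, and trace the factor $n$ to the burn-in threshold $\log 2/n$.

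For that last step you should not cite Fan et al., since that is the statement itself and would be circular. Instead, take $t_2$ minimal in the accumulation condition of Lemma~\ref{lem:convergence_time_bound}; this gives
$\sum_{s<t_2}\eta_s<\sum_{s<t_1}\eta_s+\frac{n\,(4\mathcal{L}(\mathbf{W}_0)+8R\sum_{s<t_1}\eta_s)}{\gamma\log 2}=\mathcal{O}(n)$,
because $t_1$ does not depend on $n$. Use this linear-decrease argument rather than the exponential-decay alternative you mention: exponential decay is not available before the loss reaches $\log 2/n$, because $\mathcal{G}/\mathcal{L}$ need not be bounded below there.
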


\subsection{Implicit Bias with Momentum}

In this subsection, we show that momentum enables mini-batch steepest descent to converge
to an approximate max-margin solution even with small batches, with the margin gap vanishing
as either the batch size increases or $\beta_1 \to 1$. Notably, in addition to building the theory for mini-batch setting, our technical analysis also removes the dependence on the problem dimension $d$ present in prior work \citep{fan2025implicit}, leading to a tighter bound on the convergence rate.
The detailed proofs are deferred to Appendix~\ref{stoch_with_momentum}.

\begin{theorem}[Margin Convergence of Stochastic Steepest Descent with Momentum] 
\label{thm:nmd_margin_rate_maintext}
Suppose Assumptions~\ref{ass:sep}, \ref{ass:data_bound}, \ref{ass:learning_rate_1}, and \ref{ass:learning_rate_2} hold. 
Assume the momentum parameter $\beta_1 \in (0,1)$ and batch size $b$ satisfy the positive effective margin condition
$
\rho \coloneqq \gamma - 2(1-\beta_1)m(m^2-1)R > 0,
\text{ where } m = n/b .
$ 
Let $D = \frac{4R \eta_0}{1 - \sqrt{\beta_1}}$ denote the constant bound on the momentum drift.
Consider a learning rate schedule of the form $\eta_t = c \cdot t^{-a}$ with $a \in (0,1]$ and $\eta_0 \le c$.
There exists $t_2 = t_2(n,b,\beta_1,\gamma,\W_0,R)$ such that for all $t > t_2$, the margin gap of the iterates satisfies:
\begin{align*}
         &\rho - \frac{\min_{i \in [n], c \neq y_i} (\eb_{y_i} - \eb_c)^\top \W_t \xb_i}{\norm{\W_t}}\leq \mathcal{O}\Bigg(\frac{ \sum\limits_{s=0}^{t_2-1}\eta_s + \frac{m}{1-\beta_1}\sum\limits_{s = t_2}^{t-1}\eta_s^2 + \sum\limits_{s=t_2}^{t-1} \eta_s e^{-\frac{\rho}{4} \sum_{\tau = t_2}^{s-1}\eta_{\tau}}+ D}{\sum_{s=0}^{t-1} \eta_s}\Bigg).
    \end{align*}
\end{theorem}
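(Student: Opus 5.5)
The plan is to follow the full-batch template of \citet{fan2025implicit}: a loss-descent inequality, then a margin lower bound through the loss. The new work is controlling the gap between the momentum buffer $\M_t$ and the full gradient $\nabla\Lc(\W_t)$.

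\textbf{Step 1 (descent lemma).} By the smoothness-type bound for cross-entropy in general norms (the Hessian in the $\|\cdot\|$ geometry is bounded by a multiple of $R^2\Lc$), we have
\[
\Lc(\W_{t+1})\le \Lc(\W_t)-\eta_t\langle\nabla\Lc(\W_t),\Deltab_t\rangle+\mathcal{O}(\eta_t^2R^2)\Lc(\W_t).
\]
Since $\langle\M_t,\Deltab_t\rangle=\|\M_t\|_*$, this gives
\[
\langle\nabla\Lc(\W_t),\Deltab_t\rangle\ge\|\nabla\Lc(\W_t)\|_*-2\|\M_t-\nabla\Lc(\W_t)\|_*.
\]
We also use the standard bound $\|\nabla\Lc(\W)\|_*\ge\gamma\,\mathcal{G}(\W)$, where $\mathcal{G}(\W)=\frac1n\sum_i(1-\mathbb{S}_{y_i}(\W\xb_i))$ is the proxy from \citet{fan2025implicit}, together with $\mathcal{G}\le\Lc$ and $\Lc\le2\mathcal{G}$ once the loss is small.

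\textbf{Step 2 (momentum error decomposition; the main obstacle).} We unroll $\M_t=(1-\beta_1)\sum_{s\le t}\beta_1^{t-s}\nabla\Lc_{\mathcal{B}_s}(\W_s)$ and split the error into two parts.
\begin{itemize}
\item[(i)] \emph{Drift.} This is $(1-\beta_1)\sum_s\beta_1^{t-s}\bigl(\nabla\Lc_{\mathcal{B}_s}(\W_s)-\nabla\Lc_{\mathcal{B}_s}(\W_t)\bigr)$. We bound it using the local Lipschitz property of per-sample gradients, $\|\nabla\ell(\W')-\nabla\ell(\W)\|_*\lesssim R\,(e^{R\|\W'-\W\|}-1)\,\ell$-type factors, and $\|\W_s-\W_t\|\le\sum_{\tau=s}^{t-1}\eta_\tau$. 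Assumption~\ref{ass:learning_rate_2} then turns this into $\mathcal{O}(\eta_t)\,\mathcal{G}(\W_t)$. A separate accounting of the transient initial terms contributes the additive constant $D=\frac{4R\eta_0}{1-\sqrt{\beta_1}}$. I would get this constant by bounding $\sum_s\beta_1^{(t-s)/2}\eta_s$ via Cauchy--Schwarz, which avoids any dimension factor: work directly in the dual norm $\|\cdot\|_*$ instead of converting to $\ell_1$ entrywise as \citet{fan2025implicit} do.
\item[(ii)] \emph{Mini-batch bias.} This is $(1-\beta_1)\sum_s\beta_1^{t-s}\nabla\Lc_{\mathcal{B}_s}(\W_t)-\nabla\Lc(\W_t)$. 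Group the sum into the blocks of $m$ consecutive steps inside each epoch. A full epoch sums to $m\nabla\Lc$ with weights that differ from uniform by at most $1-\beta_1^{m}\le(1-\beta_1)m$. Comparing the geometric weights with uniform weights over the $m$ positions then yields an error $\lesssim(1-\beta_1)m(m^2-1)R\,\mathcal{G}(\W_t)$. Here I use $\|\nabla\Lc_{\mathcal{B}}\|_*\le 2Rm\,\mathcal{G}$, since a batch sees at most $m$ times the average. Getting the exact constant $2(1-\beta_1)m(m^2-1)R$ means carefully summing the deviation $\sum_{j}|\beta_1^j-\text{avg}|$ across partial epochs; this is the delicate bookkeeping.
\end{itemize}
Combining (i) and (ii) gives
\[
\langle\nabla\Lc,\Deltab_t\rangle\ge\rho\,\mathcal{G}(\W_t)-\mathcal{O}\!\left(\tfrac{m}{1-\beta_1}\eta_t\right)\mathcal{G}(\W_t).
\]

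\textbf{Step 3 (loss eventually decreases; margin conversion).} Since $\eta_t\to0$, there is a $t_1$ after which the loss descends, with $\Lc(\W_{t+1})\le\Lc(\W_t)\bigl(1-\tfrac{\rho}{2}\eta_t\bigr)+\dots$. Because $\sum\eta_t=\infty$, we reach $t_2$ with $\Lc(\W_{t_2})<\frac{\log2}{n}$, at which point all points are classified. From then on $\Lc(\W_s)\lesssim e^{-\frac\rho4\sum_{\tau=t_2}^{s-1}\eta_\tau}$, and this produces the term $\sum\eta_se^{-\frac\rho4\sum\eta_\tau}$ (it arises when replacing $\mathcal{G}/\Lc$ by $1$).

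For the margin, telescope $\log\Lc$, using that the margin is at least $-\log(n\Lc(\W_t))$ for the minimum. This gives
\[
\min_{i}(\cdots)\ge\rho\sum_{s=t_2}^{t-1}\eta_s-\mathcal{O}\Bigl(\tfrac{m}{1-\beta_1}\sum\eta_s^2\Bigr)-\sum\eta_se^{-\cdots}-D.
\]
Dividing by $\|\W_t\|\le\|\W_0\|+\sum_{s<t}\eta_s$ and moving $\rho\sum_{s<t_2}\eta_s$ into the error yields the stated bound.
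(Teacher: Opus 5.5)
\textbf{Main gap: the zero-initialized momentum buffer, and where $D$ comes from.} Your overall architecture matches the paper's: a Taylor descent inequality with error $2\|\M_t-\nabla\Lc(\W_t)\|_1$, drift controlled by Assumption~\ref{ass:learning_rate_2}, epoch-wise zero-sum cancellation under random reshuffling, then loss and margin telescoping. However, Step~2 mishandles the zero-initialized buffer, and with it the origin of $D$.

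Since $\M_{-1}=\mathbf{0}$, the weights satisfy $(1-\beta_1)\sum_{\tau=0}^{t}\beta_1^{\tau}=1-\beta_1^{t+1}$. So your term (ii) equals $(1-\beta_1)\sum_{\tau=0}^{t}\beta_1^{\tau}\bigl(\nabla\Lc_{\mathcal{B}_{t-\tau}}(\W_t)-\nabla\Lc(\W_t)\bigr)-\beta_1^{t+1}\nabla\Lc(\W_t)$. The claimed bound $(1-\beta_1)m(m^2-1)R\,\Gc(\W_t)$ can cover only the first piece: for $m=1$ the bound is $0$, while (ii) equals $-\beta_1^{t+1}\nabla\Lc(\W_t)\neq\mathbf{0}$. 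Hence the combined inequality at the end of your Step~2 is missing a term.

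The paper isolates this piece and bounds $\|\beta_1^{t+1}\nabla\Lc(\W_t)\|_1\le 2R\beta_1^{t+1}\Gc(\W_t)$. It then carries $4R\,\eta_t\beta_1^{t/2}\Gc(\W_t)$ in the descent inequality (Lemma~\ref{lem:nmd_descent}), so $t_1$ must also make $4R\beta_1^{t/2}$ small. In the margin telescope the accumulated contribution is $4R\sum_{s\ge t_2}\eta_s\beta_1^{s/2}\le 4R\eta_0/(1-\sqrt{\beta_1})=D$ (Lemma~\ref{lem:nmd_unnormalized_margin}). That is the only source of $D$: it decays in the absolute time $s$, not in the lag $t-s$.

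Your proposed mechanism cannot produce this constant. The drift is already fully absorbed by Assumption~\ref{ass:learning_rate_2} into $2mR(1-\beta_1)c_2\eta_t\Gc(\W_t)$. Moreover, $\sum_{s\le t}\beta_1^{(t-s)/2}\eta_s$ is of order $\eta_t/(1-\sqrt{\beta_1})$, so it would only feed the $\sum_s\eta_s^2$ term, never an additive constant. No Cauchy--Schwarz step is involved anywhere.

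\textbf{Three further corrections.}

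\emph{Second-order term.} It must be bounded by $\Gc(\W_t)$, not $\Lc(\W_t)$. Lemma~\ref{lem:unified_hessian} with $c=y_i$ gives $4R^2(1-s_{iy_i})\|\Deltab\|^2$, and Lemma~\ref{lem:G_ratio_general} moves the intermediate point back to $\W_t$, yielding $2\eta_t^2R^2e^{2R\eta_0}\Gc(\W_t)$. With $\Lc$ instead, your claim of monotone descent after some $t_1$ does not follow. Before the loss is small, $\Lc/\Gc$ is unbounded, so $\rho\eta_t\Gc$ need not dominate $\eta_t^2\Lc$.

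\emph{Noise bound in (ii).} The raw per-batch bound $\|\nabla\Lc_{\mathcal{B}}\|_*\le 2Rm\Gc$ is the wrong ingredient. Plugged into the epoch-cancellation argument, it gives a coefficient like $(1-\beta_1)m(m+1)^2R$. This is strictly larger than $(1-\beta_1)m(m^2-1)R$ and nonzero even at $m=1$, so you would not recover the stated $\rho$. What is needed is the centered finite-population bound $\|\nabla\Lc_{\mathcal{B}}(\W)-\nabla\Lc(\W)\|_1\le 2(m-1)R\Gc(\W)$ (Lemma~\ref{lem:noise_bound}), fed into Lemma~\ref{lem:momentum_noise_accumulation}.

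\emph{Dimension-free rate.} Your explanation is reversed. The paper does convert to the entrywise $\ell_1$ norm: it bounds $\|\nabla\ell_i(\W')-\nabla\ell_i(\W)\|_1$ globally through $\|\xb_i\|_1\le R$ (Lemma~\ref{lem:grad_stability}) and then uses $\|\cdot\|_*\le\|\cdot\|_1$. By contrast, \citet{fan2025implicit} bound each entry separately and pay a factor $d$ when summing. This has nothing to do with $D$.
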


\begin{remark}
The momentum-based analysis does not recover the no-momentum result by setting $\beta_1=0$ because it exploits smooth exponential decay across epochs and geometric-series bounds,
which break down in the degenerate case $\beta_1=0$.
\end{remark}

Theorem~\ref{thm:nmd_margin_rate_maintext} shows that momentum fundamentally alters the regime under which the implicit bias of stochastic steepest descent emerges, by stabilizing the descent direction under mini-batch noise, as temporal gradient accumulation exploits the
epoch-wise zero-sum structure of Random Reshuffling to cancel stochastic fluctuations and
enforce alignment with the full gradient (Lemma~\ref{lem:momentum_noise_accumulation} and~\ref{lem:nmd_descent}).
In contrast to the no-momentum case, momentum enables convergence to an approximate norm-induced max-margin solution even with small batches, provided that the effective margin $\rho$ is positive, which typically requires sufficiently large momentum $\beta_1$.  Specifically, while no-momentum worst-case guarantee $\Omega(\frac{nR}{\gamma+R})$ collapses to 
$b=n$ under the stated assumptions, the momentum condition only requires $b=\Omega\big(n(\frac{(1-\beta_1)R}{\gamma})^{1/3}\big).$ As $\beta_1\to 1$, the required batch size can become much smaller; e.g., if $1-\beta_1=O(n^{-3})$, the condition permits $b=\Omega(1)$. When $\rho>0$, momentum restores the monotonic loss decay and subsequent margin growth typical of large-batch settings, but at a later threshold $t_2$. The bound further reveals a tradeoff between batch size and momentum: increasing $\beta_1$ or increasing $b$ both reduce the gap between $\rho$ and $\gamma$. While momentum stabilizes the descent direction under stochastic gradients, it also introduces additional terms in the margin bound scaling with $\frac{m}{1-\beta_1}$ and the momentum drift constant $D$, which collectively slow down the convergence compared to the no-momentum case. 


Notably,
\citet{baek2025implicitbiaspersampleadam} also analyzed Signum with mini-batch stochastic gradients and obtained margin convergence results related to Theorem~\ref{thm:nmd_margin_rate_maintext}. However, their analysis is limited to Signum and relies on a fixed mini-batch partition that is cycled deterministically throughout training, which departs from standard stochastic training protocols. Moreover, their results characterize only asymptotic convergence behavior and do not provide convergence rate guarantees.

\begin{corollary}\label{cor:nmd_rate_exact_maintext}
    Consider the learning rate schedule $\eta_t = c \cdot t^{-a}$ with $a \in (0,1]$. Under the setting of Theorem \ref{thm:nmd_margin_rate_maintext}, the margin gap converges with the following rates:
    \begin{align*}
    &\rho - \frac{\min_{i \in [n], c \neq y_i} (\eb_{y_i} - \eb_c)^\top \W_t \xb_i}{\norm{\W_t}}= \left\{
    \begin{array}{ll}
           \mathcal{O} \left( \frac{n [\frac{m}{1-\beta_1}]^{\frac{1}{a}-1}  + \frac{m}{1-\beta_1} t^{1-2a}}{t^{1-a}} \right) &  \text{if} \quad a < \frac{1}{2}\\
           \mathcal{O} \left( \frac{\frac{m}{1-\beta_1}(n+\log t) }{t^{1/2}} \right) &  \text{if}\quad a=\frac{1}{2} \\
            \mathcal{O} \left( \frac{n  [\frac{m}{1-\beta_1}]^{\frac{1}{a}-1} +\frac{m}{1-\beta_1}}{t^{1-a}} \right) &  \text{if}\quad \frac{1}{2} < a < 1 \\
             \mathcal{O} \left( \frac{n  \log(\frac{m}{1-\beta_1}) +\frac{m}{1-\beta_1}}{\log t}\right) &  \text{if} \quad a=1
    \end{array} 
    \right. 
    \end{align*}

\end{corollary}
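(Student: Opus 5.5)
We plug $\eta_t = c\,t^{-a}$ into the bound of Theorem~\ref{thm:nmd_margin_rate_maintext} and estimate each term of the numerator against the denominator $\sum_{s=0}^{t-1}\eta_s$. The denominator is $\Theta(t^{1-a})$ for $a<1$ and $\Theta(\log t)$ for $a=1$. The proof therefore has three parts: estimate the threshold $t_2$, estimate the three numerator sums, and combine.

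\textbf{Step 1: size of $t_2$.} We trace the definition of $t_2$ through the lemmas behind Theorem~\ref{thm:nmd_margin_rate_maintext}. It is the first time after which two things hold.
\begin{itemize}
\item[(i)] The step size is small enough that the momentum noise, of order $\frac{m}{1-\beta_1}\eta_t$, is dominated. This needs $t_1 \gtrsim (\frac{m}{1-\beta_1})^{1/a}$, up to constants depending on $R,\gamma$.
\item[(ii)] The loss has dropped below the level that forces correct classification, e.g.\ $\Lc(\W_t)\le \frac{\log 2}{n}$.
\end{itemize}
For (ii) we use the descent inequality of Lemma~\ref{lem:nmd_descent}. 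After $t_1$ the loss decreases by at least of order $\rho\,\eta_s\,\Lc$ minus second-order terms, and this yields $\sum_{s=t_1}^{t_2}\eta_s \gtrsim n + \|\W_0\|$-type quantities. Hence $\sum_{s<t_2}\eta_s = O\big(\sum_{s<t_1}\eta_s + n\big)$. Evaluating $\sum_{s<t_1}\eta_s$ gives:
\begin{itemize}
\item for $a<1$: $\Theta(t_1^{1-a}) = \Theta\big((\frac{m}{1-\beta_1})^{1/a-1}\big)$;
\item for $a=1$: $\Theta(\log t_1) = \Theta\big(\log\frac{m}{1-\beta_1}\big)$.
\end{itemize}
Combining the $n$ contribution multiplicatively or additively, up to constants, gives the leading terms $n[\frac{m}{1-\beta_1}]^{1/a-1}$ and $n\log\frac{m}{1-\beta_1}$. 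The case $a=\frac12$ gives $n\frac{m}{1-\beta_1}$, which is absorbed into $\frac{m}{1-\beta_1}(n+\log t)$.

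\textbf{Step 2: remaining numerator terms.}
\begin{itemize}
\item \emph{Square sum.} $\frac{m}{1-\beta_1}\sum_{s=t_2}^{t-1}\eta_s^2$ is $O\big(\frac{m}{1-\beta_1}t^{1-2a}\big)$ for $a<\frac12$, $O\big(\frac{m}{1-\beta_1}\log t\big)$ for $a=\frac12$, and $O\big(\frac{m}{1-\beta_1}\big)$ for $a>\frac12$, since the series converges.
\item \emph{Exponential sum.} Set $u_s=\sum_{\tau=t_2}^{s-1}\eta_\tau$. Then $\sum_{s\ge t_2}\eta_s e^{-\frac{\rho}{4}u_s}$ is a Riemann sum for $\int_0^\infty e^{-\rho u/4}\,du$. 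Since $\eta_s\le\eta_0$, it is bounded by $O\big(\frac{4}{\rho}+\eta_0\big) = O(1)$.
\item \emph{Drift constant.} $D=\frac{4R\eta_0}{1-\sqrt{\beta_1}} \le \frac{8R\eta_0}{1-\beta_1}$ is $O\big(\frac{m}{1-\beta_1}\big)$ and is absorbed into the existing terms.
\end{itemize}

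\textbf{Step 3: combine.} Dividing by the denominator yields the four cases as stated.

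\textbf{Main obstacle.} The delicate step is Step~1: making the dependence of $t_2$ on $n$, $m$ and $\beta_1$ explicit. In particular, we must verify that the loss-reduction phase costs only $O(n)$ in accumulated step size. I would get this by bounding $\Lc(\W_{t_1})\le \Lc(\W_0)e^{O(\sum\eta)}$ and then using the exponential contraction $\Lc(\W_{t})\le \Lc(\W_{t_1})e^{-\frac{\rho}{2}\sum_{s=t_1}^{t}\eta_s}$, with the required target $\frac{\log 2}{n}$. This turns the log-factor into the $n$ dependence, with a product form as a safe upper bound. Once this is in place, the rest is routine $p$-series estimation.
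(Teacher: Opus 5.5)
Your treatment of the square sum, the exponential sum, the drift constant $D$, and the final case split matches the paper. Step~1, however, has a genuine gap, and it is exactly the step that produces the leading $n[\frac{m}{1-\beta_1}]^{\frac1a-1}$ (resp.\ $n\log\frac{m}{1-\beta_1}$) term.

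After $t_1$, Lemma~\ref{lem:nmd_descent} gives $\Lc(\W_{s+1})\le\Lc(\W_s)-\frac{\rho}{2}\eta_s\Gc(\W_s)$, with the proxy $\Gc$ in the decrease term, not $\Lc$. Since $\Gc\le 1$ while $\Lc$ is unbounded, $\Gc/\Lc$ can be arbitrarily small before the data are separated. Lemma~\ref{lem:G and L}(ii) gives $\Lc\le 2\Gc$ only once $\Lc\le\frac{\log 2}{n}$, i.e.\ after $t_2$, so using it to locate $t_2$ is circular. Hence your claim that the loss drops by order $\rho\eta_s\Lc$ fails on $[t_1,t_2]$, and so does the contraction $\Lc(\W_t)\le\Lc(\W_{t_1})e^{-\frac{\rho}{2}\sum_{s=t_1}^{t}\eta_s}$. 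If they held, you would get an entry cost of order $\mathcal{S}_1+\log n$, with $\mathcal{S}_1:=\sum_{s<t_1}\eta_s$, which is strictly better than the corollary itself.

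What you can actually guarantee is a linear decrease. While $\Lc>\frac{\log 2}{n}$ one has $\Gc>\frac{\log 2}{2n}$, so each step removes at least order $\rho\eta_s/n$. The provable entry cost is therefore of order $n\,\Lc(\W_{t_1})/\rho$. Combined with this, your bound $\Lc(\W_{t_1})\le\Lc(\W_0)e^{O(\mathcal{S}_1)}$ is fatal: it makes the entry cost exponential rather than linear in $\mathcal{S}_1$.

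The missing argument is the one in Lemma~\ref{lem:nmd_loss_convergence}:
\begin{itemize}
\item Sum the strict descent inequality to get $\sum_{s=t_1}^{t_2}\eta_s\Gc(\W_s)\le\frac{2}{\rho}\Lc(\W_{t_1})$.
\item Take $t^*=\arg\min_{s\in[t_1,t_2]}\Gc(\W_s)$ and force $\Gc(\W_{t^*})\le\frac{\log 2}{2n}$. Lemma~\ref{lem:G and L}(ii) and monotonicity then give $\Lc(\W_t)\le\frac{\log 2}{n}$ for all $t>t_2$.
\item Control $\Lc(\W_{t_1})$ additively via the Lipschitz bound $\Lc(\W_{t_1})\le\Lc(\W_0)+2R\,\mathcal{S}_1$ (Lemma~\ref{lem:l_fast_bound} with $\norm{\Deltab_s}\le 1$).
\end{itemize}
This yields $\sum_{s=t_1}^{t_2}\eta_s\ge\frac{n(4\Lc(\W_0)+8R\,\mathcal{S}_1)}{\rho\log 2}$, hence $\sum_{s<t_2}\eta_s=\mathcal{O}(n\,\mathcal{S}_1)$. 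This is genuinely multiplicative, not the additive $\mathcal{O}(\mathcal{S}_1+n)$ you first wrote. It is this derivation, not the appeal to a ``safe upper bound'', that justifies the product form.

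A smaller omission: $t_1$ must also exceed two further times. One is the time $t_0$ at which Assumption~\ref{ass:learning_rate_2} (with $c_1=2R$, $\beta=\beta_1$) takes effect, which Lemma~\ref{lem:nmd_descent} requires. The other is the time after which $\alpha_2\beta_1^{t/2}$ is dominated by $\rho$. These add $(\log\frac{1}{\beta_1})^{a-1}$ and $\tilde t^{\,1-a}$ (Lemma~\ref{lem:ass2_explicit_t0_c2_systematic}) to $\mathcal{S}_1$. The paper absorbs them into $[\frac{m}{1-\beta_1}]^{\frac1a-1}$ using $\log\frac{1}{\beta_1}\ge 1-\beta_1$ and $m\ge 1$, and into $\log\frac{m}{1-\beta_1}$ when $a=1$.
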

Corollary~\ref{cor:nmd_rate_exact_maintext} builds on Theorem~\ref{thm:nmd_margin_rate_maintext} by translating the general margin gap bound into explicit convergence rates under specific decaying learning rates. In contrast to existing analyses such as \citet{zhang2024implicit,fan2025implicit}, which do not account for the effect of momentum on convergence rates, this corollary explicitly characterizes how the momentum parameter $\beta_1$ and the batch-dependent factor $m = n/b$ jointly influence the convergence rate by carefully controlling the time scale at which the underlying Assumption~\ref{ass:learning_rate_2} become effective, a step that is technically challenging in the presence of momentum.


The corollary reveals that introducing momentum slows down convergence through multiplicative factors scaling with $\frac{m}{1-\beta_1}$. 
This behavior is fundamentally different from the well-known acceleration effects of momentum in standard convergence analyses. 
Here, margin growth is governed by the Taylor expansion
$
\Lc(\W_{t+1}) \le \Lc(\W_t)-\eta_t\langle\nabla\Lc(\W_t),\Deltab_t\rangle+O(\eta_t^2),
$
so the effective progress depends on the alignment inner product $\langle\nabla\Lc(\W_t),\Deltab_t\rangle$. 
While full-gradient steepest descent achieves $\langle\nabla\Lc(\W_t),\Deltab_t\rangle=\|\nabla\Lc(\W_t)\|_*$, momentum replaces this with the steepest direction induced by the history-averaged gradient $\M_t=\beta_1 \M_{t-1}+(1-\beta_1)\G_t$, yielding in general
$
\langle\nabla\Lc(\W_t),\bm\Delta_t\rangle\le\|\nabla\Lc(\W_t)\|_*.
$
Controlling this persistent alignment loss under stochasticity introduces factors scaling with $\frac{m}{1-\beta_1}$, delaying the monotonic loss-decreasing regime and leading to a larger threshold time $t_2$ for the margin analysis to apply.

For a direct comparison with the full-batch results in \citet{fan2025implicit}, we set $m = 1$ (full-batch) and treat the momentum parameter $\beta_1$ as a constant. Under this setting, the convergence rates in Corollary~\ref{cor:nmd_rate_exact_maintext} recover the same rates as those obtained in the no-momentum full-batch case (Corollary~\ref{cor:nsd_rate_maintext}). More importantly, our bound is sharper, as it avoids explicit dependence on the problem dimension $d$ in the numerator. For example, when $a = \frac{1}{2}$, our rate is $\mathcal{O}(\frac{\log t + n}{t^{1/2}})$, whereas the corresponding bound in \citet{fan2025implicit} is $\mathcal{O}(\frac{d \log t + n d}{t^{1/2}})$. This sharper dependence originates from a different way of bounding the gradient difference $\nabla \Lc(\W_1) - \nabla \Lc(\W_2)$. Under the same assumption that the data $\lVert \xb_i \rVert_{1}$ has upper bound, their analyses apply entry-wise bounds $|\nabla \Lc(\W_1)[i,j] - \nabla \Lc(\W_2)[i,j]|$ and use only coordinate-wise control of $\xb_i$, whereas our analysis exploits the global $\ell_1$-norm $\rVert\nabla \Lc(\W_1) - \nabla \Lc(\W_2)\rVert_{1}$ to obtain a tighter, dimension-free scaling.

\subsection{Implicit Bias with Variance Reduction}
In this subsection, we show that incorporating variance reduction restores the implicit bias of stochastic steepest descent to that of full-batch counterpart, yielding convergence to the same
norm-induced max-margin solution regardless of batch size or momentum, at the cost of a
slower worst-case convergence rate than those obtained with or without momentum. The detailed proofs are deferred to Appendix~\ref{svr_no_momentum} and~\ref{svr_with_momentum}.

\begin{theorem}[Margin Convergence of VR-Stochastic Steepest Descent] 
\label{thm:svr_margin_rate_maintext}
Suppose Assumptions~\ref{ass:sep}, \ref{ass:data_bound}, and~\ref{ass:learning_rate_1} hold,
and that Assumption~\ref{ass:learning_rate_2} holds when momentum is used.
Consider $\eta_t = c t^{-a}$ with $a\in(0,1]$ and $\eta_0\le c$.
Define $D=\frac{4R\eta_0}{1-\sqrt{\beta_1}}$. Then there exists $t_2=t_2(n,b,\beta_1,\gamma,\W_0,R)$ such that for all $t>t_2$, with $\beta_1=0$ corresponding to the case without momentum, variance reduction recovers the full max-margin solution $\gamma$, and the margin gap satisfies
\begin{align*}
    &\gamma - \frac{\min_{i \in [n], c \neq y_i} (\eb_{y_i} - \eb_c)^\top \W_t \xb_i}{\norm{\W_t}}\leq \mathcal{O}\Bigg(
    \frac{
    \sum\limits_{s=0}^{t_2-1}\eta_s
    + C_m \sum\limits_{s=t_2}^{t-1}\eta_s^2
    + \sum\limits_{s=t_2}^{t-1} \eta_s e^{-\frac{\gamma}{4} \sum_{\tau = t_2}^{s-1}\eta_{\tau}}
    + C_D
    }{
    \sum_{s=0}^{t-1} \eta_s
    }
    \Bigg),
\end{align*}
where $(C_m, C_D)= \begin{cases} 
(m^{2+a},\,0), & \text{without momentum},\\
\left(\dfrac{m^{2+a}}{1-\beta_1},\,D\right), & \text{with momentum}.
\end{cases}$

\end{theorem}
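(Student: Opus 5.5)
The plan is to follow the template of the full-batch analysis behind Theorem~\ref{thm:stoch_margin_rate_maintext}. The only new work is to show that the variance-reduced signal $\G_t\in\{\V_t,\M^V_t\}$ is close to $\nabla\Lc(\W_t)$ in dual norm. The error should be of order $\eta$ times the loss, with no additive constant depending on $b$. This is what lets $\gamma$ rather than $\rho$ appear.

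\textbf{Step 1: error of $\V_t$.} Write
\[
\V_t-\nabla\Lc(\W_t)=\big[\nabla\Lc_{\mathcal B_t}(\W_t)-\nabla\Lc_{\mathcal B_t}(\tilde\W_t)\big]-\big[\nabla\Lc(\W_t)-\nabla\Lc(\tilde\W_t)\big].
\]
I will use the dimension-free gradient-difference lemma, which controls $\|\nabla\ell(\W_1\xb_i;y_i)-\nabla\ell(\W_2\xb_i;y_i)\|_1$ through $\|\xb_i\|_1\le R$ and the ratio of per-sample losses, $e^{2R\|\W_1-\W_2\|}-1$. The snapshot is at most $m$ steps old, so $\|\W_t-\tilde\W_t\|\le\sum_{s=t-m}^{t-1}\eta_s\le m\eta_{t-m}$. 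With $\eta_t=ct^{-a}$ this is $O(m^{1+a}\eta_t)$ once $t\gtrsim m$, since $\eta_{t-m}/\eta_t$ is bounded. The minibatch term costs an extra factor $m=n/b$: the per-sample losses are averaged over $b$ samples instead of $n$, so $\Lc_{\mathcal B_t}\le m\Lc$. This gives
\[
\|\V_t-\nabla\Lc(\W_t)\|_*\lesssim R\,m^{2+a}\eta_t\,\Lc(\W_t)
\]
after also comparing $\Lc(\tilde\W_t)$ with $\Lc(\W_t)$ via the same exponential factor. Here I use that for entrywise and Schatten-$p$ norms the dual norm is bounded by the entrywise $\ell_1$ norm. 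The resulting $m^{2+a}$ is exactly $C_m$.

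\textbf{Step 2: with momentum.} Unroll $\M^V_t-\nabla\Lc(\W_t)=\sum_s\beta_1^{t-s}(1-\beta_1)\big(\V_s-\nabla\Lc(\W_s)\big)+\sum_s\beta_1^{t-s}(1-\beta_1)\big(\nabla\Lc(\W_s)-\nabla\Lc(\W_t)\big)$. The first sum is handled by Step 1, using $\Lc(\W_s)\le e^{2R\sum\eta}\Lc(\W_t)$. The second sum uses Assumption~\ref{ass:learning_rate_2}, as in Theorem~\ref{thm:nmd_margin_rate_maintext}, which contributes a $\frac{1}{1-\beta_1}$ factor. The initial-buffer and early-time contribution is absorbed into the drift constant $D=\frac{4R\eta_0}{1-\sqrt{\beta_1}}$, which gives $C_D$. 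No random-reshuffling cancellation argument is needed, because VR already makes the noise proportional to $\eta_t$.

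\textbf{Step 3: descent and margin.} Use $\langle\nabla\Lc,\phi(\G_t)\rangle\ge\|\nabla\Lc\|_*-2\|\G_t-\nabla\Lc\|_*$ together with the Taylor bound. Then
\[
\Lc(\W_{t+1})\le\Lc(\W_t)-\eta_t\|\nabla\Lc(\W_t)\|_*+O(C_m\eta_t^2)\,\Lc(\W_t).
\]
Combining this with $\|\nabla\Lc(\W)\|_*\ge\gamma\,\mathcal G(\W)$, where $\mathcal G$ is the standard proxy comparable to $\Lc$ once the loss is small, yields monotone descent after some $t_1$. Since $\eta_t\to0$ and $\sum\eta_t=\infty$, the loss then drops below $\log2/n$ at some $t_2$, which gives correct classification. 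From there the standard argument (lower-bounding $\min$-margin by $-\log$ of the exp-loss, upper-bounding $\|\W_t\|\le\|\W_{t_2}\|+\sum\eta_s$, and bounding the CE-to-exp discrepancy by $\sum\eta_se^{-\frac{\gamma}{4}\sum\eta_\tau}$) produces the stated bound with $\gamma$ in place of $\rho$.

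I expect the main obstacle to be Step 1's uniform control before $t_2$. The factor $e^{2Rm\eta_{t-m}}$ and the threshold at which $m^{1+a}\eta_t$ is small must both be tracked so that $t_2$ depends only on $(n,b,\beta_1,\gamma,\W_0,R)$. I would handle this by waiting until $Rm\eta_{t-m}\le1$, where $e^x-1\le2x$ applies.
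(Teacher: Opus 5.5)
\textbf{Where the proposal breaks.} Your plan follows the paper's route almost step for step: the finite-population bound on $\mathbf{V}_t-\nabla\mathcal{L}(\mathbf{W}_t)$ with snapshot distance $m\eta_{t-m}\le m(1+m)^a\eta_t$ giving $m^{2+a}$; the unrolled momentum with Assumption~\ref{ass:learning_rate_2} and the $\beta_1^{t+1}\nabla\mathcal{L}(\mathbf{W}_t)$ buffer term summing to $D$; then $t_1$, $t_2$ and the margin argument. One step, however, fails as written. You measure every error in units of the loss $\mathcal{L}(\mathbf{W}_t)$ instead of the proxy $\mathcal{G}(\mathbf{W}_t)=\frac1n\sum_i(1-\mathbb{S}_{y_i}(\mathbf{W}_t\mathbf{x}_i))$. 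Your descent inequality is then $\mathcal{L}(\mathbf{W}_{t+1})\le\mathcal{L}(\mathbf{W}_t)-\gamma\eta_t\mathcal{G}(\mathbf{W}_t)+O(C_m\eta_t^2)\,\mathcal{L}(\mathbf{W}_t)$, and you obtain monotone descent by appealing to comparability of $\mathcal{G}$ and $\mathcal{L}$ ``once the loss is small''.

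That is circular. The bound $\mathcal{L}\le 2\mathcal{G}$ is only available after $\mathcal{L}\le\log 2/n$, which is exactly what monotone descent is supposed to produce. Before that point, $\mathcal{G}\le 1$, while a priori you only know $\mathcal{L}(\mathbf{W}_t)\le\mathcal{L}(\mathbf{W}_0)+2R\sum_{s<t}\eta_s$. A single badly misclassified sample makes $\mathcal{G}/\mathcal{L}$ arbitrarily small. For $a>1/2$ you could rescue the argument with $\prod_t(1+C_m\eta_t^2)<\infty$ (a Robbins--Siegmund argument). For $a\le 1/2$, however, $C_m\eta_t\mathcal{L}(\mathbf{W}_t)$ is only controlled by something of order $t^{1-2a}$, which does not vanish. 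So the inequality yields no descent, and the theorem's full range of $a$ is not covered.

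\textbf{The fix.} Keep every error term proportional to $\mathcal{G}(\mathbf{W}_t)$, which is what the paper does. The per-sample gradient difference is controlled by $1-\mathbb{S}_{y_i}$, not by $\ell_i$: $\|\nabla\ell_i(\mathbf{W}')-\nabla\ell_i(\mathbf{W})\|_1\le 2R\,(e^{2R\|\mathbf{W}'-\mathbf{W}\|_{\max}}-1)\,(1-\mathbb{S}_{y_i}(\mathbf{W}\mathbf{x}_i))$ (Lemma~\ref{lem:grad_stability}).
\begin{itemize}
\item Taking the base point $\mathbf{W}_t$ and using the finite-population identity gives directly $\|\mathbf{V}_t-\nabla\mathcal{L}(\mathbf{W}_t)\|_1\le 2(m-1)R\,(e^{2R\|\mathbf{W}_t-\tilde{\mathbf{W}}_t\|_{\max}}-1)\,\mathcal{G}(\mathbf{W}_t)$ (Lemma~\ref{lem:noise_SVR_bound}). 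No comparison with the snapshot loss is needed.
\item The Taylor remainder is at most $2R^2e^{2R\eta_0}\eta_t^2\,\mathcal{G}(\mathbf{W}_t)$, by Lemmas~\ref{lem:unified_hessian} and~\ref{lem:G_ratio_general}.
\item In the momentum sums, transport $\mathcal{G}(\mathbf{W}_s)$ to $\mathcal{G}(\mathbf{W}_t)$ with Lemma~\ref{lem:G_ratio_general}.
\end{itemize}
You then get $\mathcal{L}(\mathbf{W}_{t+1})\le\mathcal{L}(\mathbf{W}_t)-\eta_t(\gamma-\alpha\eta_t-4R\beta_1^{t/2})\,\mathcal{G}(\mathbf{W}_t)$ with $\alpha=O(C_m)$, valid at every loss level. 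It is monotone once the bracket is at least $\gamma/2$. Summing gives $\frac{\gamma}{2}\sum_{s\ge t_1}\eta_s\mathcal{G}(\mathbf{W}_s)\le\mathcal{L}(\mathbf{W}_{t_1})$. This forces some iterate to satisfy $\mathcal{G}\le\frac{\log 2}{2n}$, hence $\mathcal{L}\le\log 2/n$ from then on, and the rest of your argument goes through unchanged.

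\textbf{Two smaller corrections.}
\begin{itemize}
\item The inequality $e^x-1\le 2x$ holds only for $x$ up to about $1.25$. With $x=2R\|\mathbf{W}_t-\tilde{\mathbf{W}}_t\|_{\max}\le 2Rm\eta_{t-m}$, you should wait until $2Rm\eta_{t-m}\le\ln 2$ (the paper's $t_0$), not until $Rm\eta_{t-m}\le 1$.
\item For the early lags in the momentum sum, the paper does not route them into $D$. Instead it merges the snapshot and lag distances into a single factor $e^{2R\sum_{p=1}^{\tau+m}\eta_{t-p}}-1$ and applies Assumption~\ref{ass:learning_rate_2} with the enlarged constant $c_1'=2R(1+m(1+m)^a)$ (Lemma~\ref{lem:svr_momentum_error}). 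Only the $\beta_1^{t+1}$ buffer term then feeds $D$. Your variant also works, but at the price of an $m$-dependent constant in place of $D$, or a later $t_0$.
\end{itemize}
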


Theorem~\ref{thm:svr_margin_rate_maintext} show that variance reduction removes the dependence of the implicit bias on algorithmic parameters such as the batch size and momentum. Without requiring either a large batch size or momentum, variance reduction guarantees convergence to the same norm-induced max-margin solution as full-batch steepest descent.
This is achieved by correcting the stochastic dynamics so that the descent direction
asymptotically aligns with the full gradient, thereby restoring the full-batch steepest
descent trajectory under mini-batch sampling (Lemma~\ref{lem:noise_SVR_bound},~\ref{lem:svr_momentum_error},~\ref{lem:nsd_svr_descent} and~\ref{lem:nmd_svr_descent}).
This robustness comes at the cost of a longer transient phase. Compared to stochastic methods without variance reduction, the onset of the monotonic loss-decreasing regime and correct classification occurs later, which is reflected in more conservative convergence bounds involving larger batch-dependent factors, scaling as $m^{2+a}$ and $\frac{m^{2+a}}{1-\beta_1}$.

\begin{corollary}\label{cor:svr}
Under the setting of Theorem~\ref{thm:svr_margin_rate_maintext}, the following margin convergence rates hold. Setting $\beta_1 = 0$ recovers the corresponding rates for variance-reduced stochastic steepest descent without momentum.
\begin{align*}
    &\gamma - \frac{\min_{i \in [n], c \neq y_i} (\eb_{y_i} - \eb_c)^\top \W_t \xb_i}{\norm{\W_t}}= \left\{
    \begin{array}{ll}
           \mathcal{O} \left( \frac{n (\frac{m^{2+a}}{1-\beta_1})^{\frac{1}{a}-1}  + \frac{m^{2+a}}{1-\beta_1} t^{1-2a}}{t^{1-a}} \right) &  \text{if} \quad a < \frac{1}{2}\\
           \mathcal{O} \left( \frac{ \frac{m^{5/2}}{1-\beta_1} (n+\log t)}{t^{1/2}} \right) &  \text{if}\quad a=\frac{1}{2} \\
            \mathcal{O} \left( \frac{n(\frac{m^{2+a}}{1-\beta_1}) ^{\frac{1}{a}-1} +\frac{m^{2+a}}{1-\beta_1}}{t^{1-a}} \right) &  \text{if}\quad \frac{1}{2} < a < 1 \\
             \mathcal{O} \left( \frac{n \log\frac{m^{3}}{1-\beta_1} +\frac{m^{3}}{1-\beta_1}}{\log t} \right) &  \text{if} \quad a=1
    \end{array} 
    \right. 
    \end{align*}
\end{corollary}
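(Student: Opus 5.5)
We plug $\eta_t=c\,t^{-a}$ into the general bound of Theorem~\ref{thm:svr_margin_rate_maintext}, following the template used for Corollary~\ref{cor:nmd_rate_exact_maintext}, with $\frac{m}{1-\beta_1}$ replaced by $K:=\frac{m^{2+a}}{1-\beta_1}$. The plan has three parts: locate the threshold $t_2$, then bound each numerator term, and finally divide by $\sum_{s<t}\eta_s$.

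\emph{The threshold.} I first track how $t_2$ scales with $K$. In the proof of Theorem~\ref{thm:svr_margin_rate_maintext}, $t_2$ is the first time after which two things hold: the VR estimator error (Lemmas~\ref{lem:noise_SVR_bound}, \ref{lem:svr_momentum_error}) is small enough that the loss decreases monotonically (Lemmas~\ref{lem:nsd_svr_descent}, \ref{lem:nmd_svr_descent}), and all samples are correctly classified. The descent condition reads roughly $K\eta_t\lesssim \gamma$ up to constants depending on $R$. The $m^{2+a}$ arises because the snapshot lag within an epoch is $\sum_{j<m}\eta_{km+j}\le m\eta_{km}\approx m^{1+a}\eta_k$-scale, and it is compounded by a further factor $m$ from the epoch sum. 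The momentum factor $1/(1-\beta_1)$ enters through Assumption~\ref{ass:learning_rate_2} via Lemma~\ref{lem:ass2_explicit_t0_c2_systematic}. This gives $t_2\asymp K^{1/a}$, plus an additive offset from the loss-decrease phase needed for classification. That phase requires $\mathcal{L}(\W_t)\le \log 2/n$, and it contributes $\sum\eta_s=\mathcal{O}(n)$ times a log of $K$ in the $a=1$ case.

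\emph{The numerator terms.} With $t_2$ in hand:
\begin{itemize}
\item $\sum_{s<t_2}\eta_s\asymp t_2^{1-a}=K^{\frac1a-1}$ for $a<1$, and $\log t_2\asymp \log K$ for $a=1$. The $n$ prefactor comes from the extra loss-decrease phase.
\item $K\sum_{s=t_2}^{t-1}\eta_s^2$ is $Kt^{1-2a}$ for $a<\tfrac12$, $K\log t$ for $a=\tfrac12$, and $\mathcal{O}(K)$ for $a>\tfrac12$.
\item The exponential sum is $\mathcal{O}(1/\gamma)$, since $\sum_s \eta_s e^{-\frac{\gamma}{4}\sum_{\tau<s}\eta_\tau}$ is a Riemann sum of $\int e^{-\gamma u/4}\,du$.
\item $C_D$ is $\mathcal{O}(1)$ for fixed $\eta_0,\beta_1$. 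To keep the statement clean, I absorb $D\le \frac{8R\eta_0}{1-\beta_1}$ into $K$.
\end{itemize}

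\emph{Cases.} The denominator is $\sum_{s<t}\eta_s\asymp t^{1-a}$ for $a<1$ and $\log t$ for $a=1$, which gives the four cases directly. At $a=\tfrac12$, $K^{1}=m^{5/2}/(1-\beta_1)$, and bounding $nK+K\log t\le K(n+\log t)$ gives the stated form. At $a=1$, $m^{2+a}=m^3$.

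\emph{Main obstacle.} The step I expect to be hardest is showing that $t_0$ and $c_2$ in Assumption~\ref{ass:learning_rate_2} and the threshold $t_2$ scale no worse than $K^{1/a}$. This requires making explicit the constants in Lemma~\ref{lem:ass2_explicit_t0_c2_systematic} with $\beta=\sqrt{\beta_1}$. I will first try to use the bound $\sum_s\beta^s(e^{c_1 s\eta_{t-s}}-1)\lesssim \eta_t/(1-\beta)^2$ once $\eta_{t/2}\lesssim 1-\beta$. I will then check that the combined requirement is still of the form $\eta_{t}\lesssim 1/K$, which gives $t\gtrsim K^{1/a}$.
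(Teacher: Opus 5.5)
Your proposal is correct and follows essentially the same route as the paper. The paper proves this corollary by rerunning the argument of Corollary~\ref{cor:nmd_rate_exact_final} with $\frac{m}{1-\beta_1}$ replaced by $K=\frac{m^{2+a}}{1-\beta_1}$, which is the scaling of $\alpha_{\text{svr-m}}$ obtained from $c_2'=\mathcal{O}(m^{1+a}/(1-\beta_1)^2)$ in Lemma~\ref{lem:ass2_explicit_t0_c2_systematic}. It then bounds $\sum_{s<t_1}\eta_s$ by $\mathcal{O}(K^{1/a-1})$ (or $\mathcal{O}(\log K)$ when $a=1$), uses $\sum_{s<t_2}\eta_s=\mathcal{O}(n\sum_{s<t_1}\eta_s)$, and absorbs $D$ into $K$, exactly as you plan.

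One adjustment to your ``main obstacle'': for $a=1$, the components $t_{\mathrm{tail}}$ and $t_{\mathrm{poly}}$ of $t_0$ are only $\mathcal{O}(K\log K)$, not $\mathcal{O}(K)$. There you only need, and only get, $\log t_0=\mathcal{O}(\log K)$. You must also absorb the threshold coming from $\alpha_2\beta_1^{t/2}\le\gamma/4$, which the paper does via $\log(1/\beta_1)\ge 1-\beta_1$.
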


This corollary shows that while variance reduction guarantees convergence to the full-batch max-margin solution independently of batch size and momentum, it yields more conservative worst-case convergence rates. 
As in the momentum case, this slowdown is not at odds with the well-known acceleration effects of variance reduction in optimization, but instead stems from a similar alignment mismatch between the full gradient and the update direction induced by the variance-reduced gradient. 
Controlling this mismatch uniformly over iterations introduces larger batch-dependent factors in the margin gap bounds.

\subsection{Implicit Bias in Batch-size-one Regime}
In previous subsections, we showed that momentum and variance reduction recover the full-batch implicit bias in small-batch regimes. We now ask what implicit bias arises without these mechanisms. Focusing on the extreme case of batch size $1$ with plain stochastic steepest descent, we show that it can converge to a fundamentally different implicit bias from its full-batch counterpart.

However, analyzing implicit bias in the batch size $1$ regime is particularly challenging. Even algorithmically, stochastic steepest descent driven by per-sample gradients may fail to converge. Convergence often requires increasing batch sizes, restrictive noise assumptions, or momentum \citep{bernstein2018signsgd,karimireddy2019error,cutkosky2020momentum,sun2023momentum,jiang2025improved}. More generally, per-sample updates can deviate substantially from the full gradient and remain highly stochastic, precluding guaranteed descent or tractable dynamics. We therefore focus on a carefully constructed dataset that admits explicit analysis of the induced implicit bias.

A key observation is that when the batch size is $1$, Spectral-SGD reduces exactly to Normalized-SGD. (See Appendix~\ref{app:Spec_normal_equivalent}) Consequently, in this subsection we focus on per-sample SignSGD and Normalized-SGD. The detailed proofs of Theorem~\ref{thm:persample_bias}
are deferred to Appendix~\ref{ap:twospecial} and the empirical validation is provided in Appendix~\ref{empiricalTwospecial}.

\textbf{Data Construction.} We consider a dataset $\mathcal{D} = \{(\xb_i, y_i)\}_{i=1}^n$ with $K$ classes constructed under an \textit{orthogonal scale-skewed} setting. Specifically, for the $i$-th sample with label $y_i$, the input feature is aligned with the canonical basis vector of its class, given by $\bm{x}_i = \alpha_i \eb_{y_i}$, where $\eb_{y_i} \in \mathbb{R}^K$ denotes the standard basis vector and $\alpha_i > 0$ represents the arbitrary, heterogeneous scale of the sample.

To characterize the implicit bias induced by per-sample stochastic updates, we introduce a bias matrix constructed by aggregating the normalized per-sample loss gradient directions underlying SignSGD and Normalized-SGD.
\begin{definition}[Bias Directions]
\label{def:bias_matrices}
Define the bias matrix $\bar{\W}$ associated with per-sample SignSGD and per-sample Normalized-SGD as:
\[
\bar{\W}
\triangleq
\begin{cases}
\displaystyle
\sum_{i=1}^n
\bigl(2\eb_{y_i}-\bm{1}\bigr)\,\sign{\xb_i}^\top,
& \text{(SignSGD)},\\
\displaystyle
\sum_{i=1}^n
\frac{\eb_{y_i}-\frac{1}{K}\bm{1}}
{\bigl\|\eb_{y_i}-\frac{1}{K}\bm{1}\bigr\|_2}
\;
\frac{\xb_i^\top}{\|\xb_i\|_2},
& \text{(Normalized-SGD)}.
\end{cases}
\]
\end{definition}



\begin{theorem}[Implicit Bias of Per-sample SignSGD and Per-sample Normalized-SGD]
\label{thm:persample_bias}
Consider per-sample SignSGD and per-sample Normalized-SGD with random reshuffling initialized at
$\W_0=\mathbf{0}$, and learning rate $\eta_t = c\, t^{-a}$ with $c>0$ and $a\in(0,1]$.
On the Orthogonal Scale-Skewed dataset $\mathcal{D}$, \textbf{the training loss converges to zero} and
\[
\lim_{t\to\infty}\frac{\W_t}{\|\W_t\|_F}
=
\frac{\bar{\W}}{\|\bar{\W}\|_F},
\]
where $\bar{\W}$ is defined in Definition~\ref{def:bias_matrices}, corresponding to
Per-sample SignSGD or Per-sample Normalized-SGD.
\end{theorem}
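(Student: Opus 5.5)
The plan is to exploit the orthogonal structure of $\mathcal{D}$ to show that, for both per-sample SignSGD and per-sample Normalized-SGD initialized at $\W_0=\mathbf{0}$, every update is \emph{state-independent up to its step size}. The iterate then becomes an explicit weighted sum of fixed per-sample matrices, and the directional limit reduces to a statement about partial sums of $\eta_t$ under random reshuffling.

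\textbf{Step 1 (per-sample gradient structure).} With $\xb_i=\alpha_i\eb_{y_i}$, the cross-entropy gradient of sample $i$ is
\[
\nabla\ell(\W\xb_i;y_i)=\alpha_i\bigl(\mathbb{S}(\alpha_i\W\eb_{y_i})-\eb_{y_i}\bigr)\eb_{y_i}^\top .
\]
It therefore touches only column $y_i$ of $\W$, and that column evolves only through samples of class $y_i$.

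\emph{SignSGD.} Since every softmax entry lies in $(0,1)$, we have $\sign{\mathbb{S}-\eb_{y_i}}=\bm{1}-2\eb_{y_i}$ exactly, for every $\W$. Hence the step is $\W_{t+1}=\W_t+\eta_t(2\eb_{y_i}-\bm{1})\eb_{y_i}^\top$, and $\sign{\xb_i}=\eb_{y_i}$.

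\emph{Normalized-SGD.} I will show by induction that, starting from $\mathbf{0}$, each column $y$ of $\W_t$ is a nonnegative multiple of $\eb_y-\frac1K\bm{1}$.
\begin{itemize}
\item If this holds, then $\mathbb{S}(\alpha_i\W_t\eb_{y})$ has equal entries off coordinate $y$.
\item Therefore $\eb_y-\mathbb{S}$ is a zero-sum vector with equal off-coordinates, i.e.\ a positive multiple of $\eb_y-\frac1K\bm{1}$.
\item After Frobenius normalization, the update is exactly $\eta_t\,\frac{\eb_{y_i}-\frac1K\bm{1}}{\|\eb_{y_i}-\frac1K\bm{1}\|_2}\,\frac{\xb_i^\top}{\|\xb_i\|_2}$, which closes the induction.
\end{itemize}

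\textbf{Step 2 (explicit iterate).} In both cases we get $\W_t=\sum_{s<t}\eta_s\,\mathbf{B}_{i_s}$, where $\mathbf{B}_i$ is the $i$-th summand in Definition~\ref{def:bias_matrices} and $i_s$ is the sample used at step $s$. So $\bar{\W}=\sum_i\mathbf{B}_i$.

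Write $t=km+j$ with $m=n$. Within epoch $k'$ each sample appears exactly once, and $\eta$ is decreasing. Hence the epoch-$k'$ contribution equals $\eta_{k'n}\bar{\W}+\mathbf{E}_{k'}$ with $\|\mathbf{E}_{k'}\|_F\le n\max_i\|\mathbf{B}_i\|_F(\eta_{k'n}-\eta_{k'n+n-1})$. This error bound holds for every realization of the reshuffling.

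Summing over epochs, the errors telescope to at most $n\max_i\|\mathbf{B}_i\|_F\,\eta_0$, while the main term is $(\sum_{k'}\eta_{k'n})\bar{\W}$. The incomplete last epoch adds at most $n\eta_0\max_i\|\mathbf{B}_i\|_F$. Since $\sum_t\eta_t=\infty$ for $a\le1$, also $\sum_{k'}\eta_{k'n}\ge\frac1n\sum_{t\ge n}\eta_t=\infty$. Consequently $\W_t=S_t\bar{\W}+O(1)$ with $S_t\to\infty$ and $\bar{\W}\neq\mathbf{0}$, which gives $\W_t/\|\W_t\|_F\to\bar{\W}/\|\bar{\W}\|_F$.

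\textbf{Step 3 (loss to zero).} For sample $i$ of class $y$, column $y$ of $\bar{\W}$ equals $N_y(2\eb_y-\bm{1})$ (SignSGD) or $N_y\frac{\eb_y-\frac1K\bm{1}}{\|\cdot\|}$ (Normalized-SGD), where $N_y\ge1$ is the class count. The margin $(\eb_y-\eb_c)^\top\W_t\xb_i=\alpha_i(\eb_y-\eb_c)^\top\W_t\eb_y$ therefore equals $\alpha_i S_t\cdot(\text{positive constant})+O(1)\to\infty$ for every $c\neq y$. Each per-sample cross-entropy term then tends to $0$.

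\textbf{Main obstacle.} The only delicate points are the induction in Step 1 for Normalized-SGD, where the symmetry of the off-class softmax entries must be preserved at every step, and making the reshuffling error uniform over realizations in Step 2. The telescoping bound using monotonicity of $\eta_t$ is what I would rely on for the latter.
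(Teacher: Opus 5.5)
Your proposal is correct and follows essentially the same route as the paper. In both, the per-sample update is shown to be independent of the weights: via the exact sign pattern for SignSGD, and via an inductive column-symmetry invariant for Normalized-SGD. Each epoch is then written as $\eta_{rn}\bar{\mathbf{W}}$ plus a drift term, and the bounded accumulated drift is compared with the divergent signal $\sum_r \eta_{rn}$.

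The differences are technical and minor:
\begin{itemize}
\item You bound the total drift by telescoping the monotone step sizes, which gives $O(\eta_0)$ for any non-increasing schedule. The paper instead uses a mean-value-theorem estimate $\|E_r\|_F = O((rn)^{-a-1})$.
\item You absorb the incomplete final epoch into the $O(1)$ term. The paper instead passes from the subsequence $\{\mathbf{W}_{rn}\}$ to the full sequence.
\end{itemize}
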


From Theorem~\ref{thm:persample_bias}, we observe a fundamental difference between per-sample methods and full-batch gradient descent. Per-sample updates induce an \emph{averaging effect} over individual samples. As a result, the limiting direction is independent of the sample scales $\alpha_i$ and depends only on the class labels and their frequencies.
In particular, class imbalance directly biases the asymptotic solution.
This behavior contrasts with full-batch gradient descent, which converges to an $l_p$ max-margin solution driven by the geometry of the hard samples regardless of their frequency.

\begin{figure*}[!t]
    \centering
    \begin{subfigure}{0.32\textwidth}
        \centering
        \includegraphics[width=\linewidth]{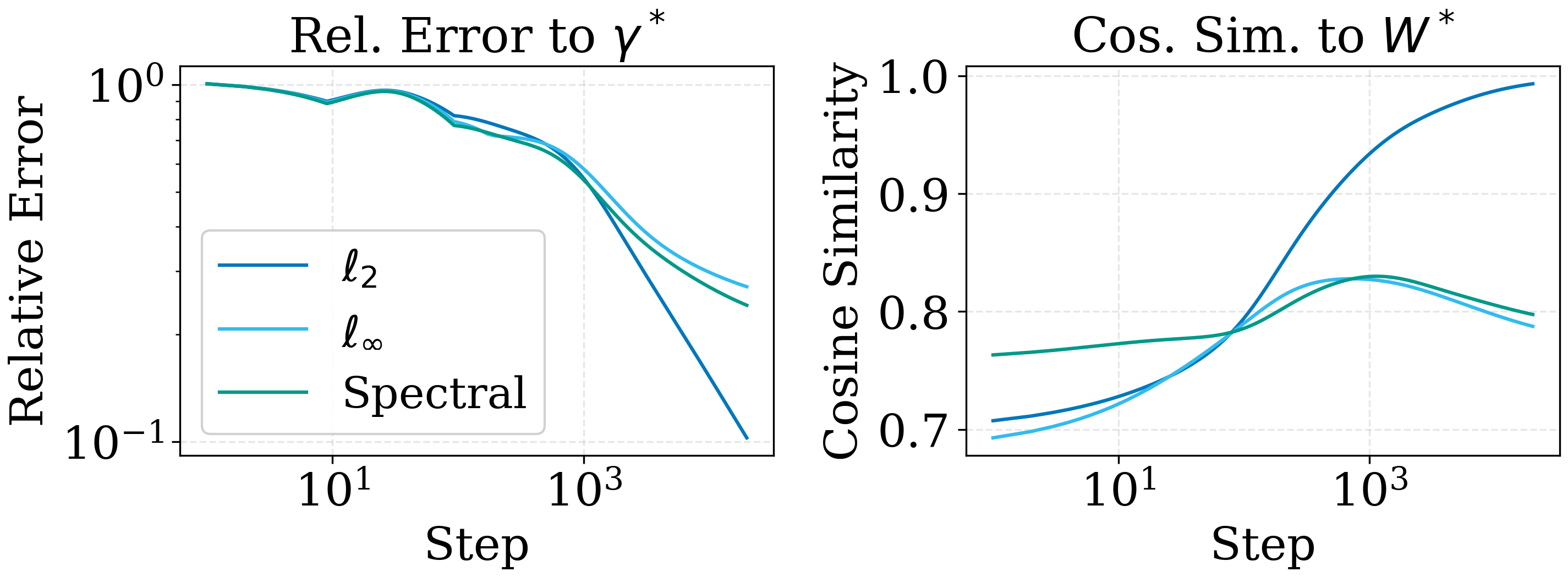}
        \caption{}
    \end{subfigure}
    \hfill
    \begin{subfigure}{0.32\textwidth}
        \centering
        \includegraphics[width=\linewidth]{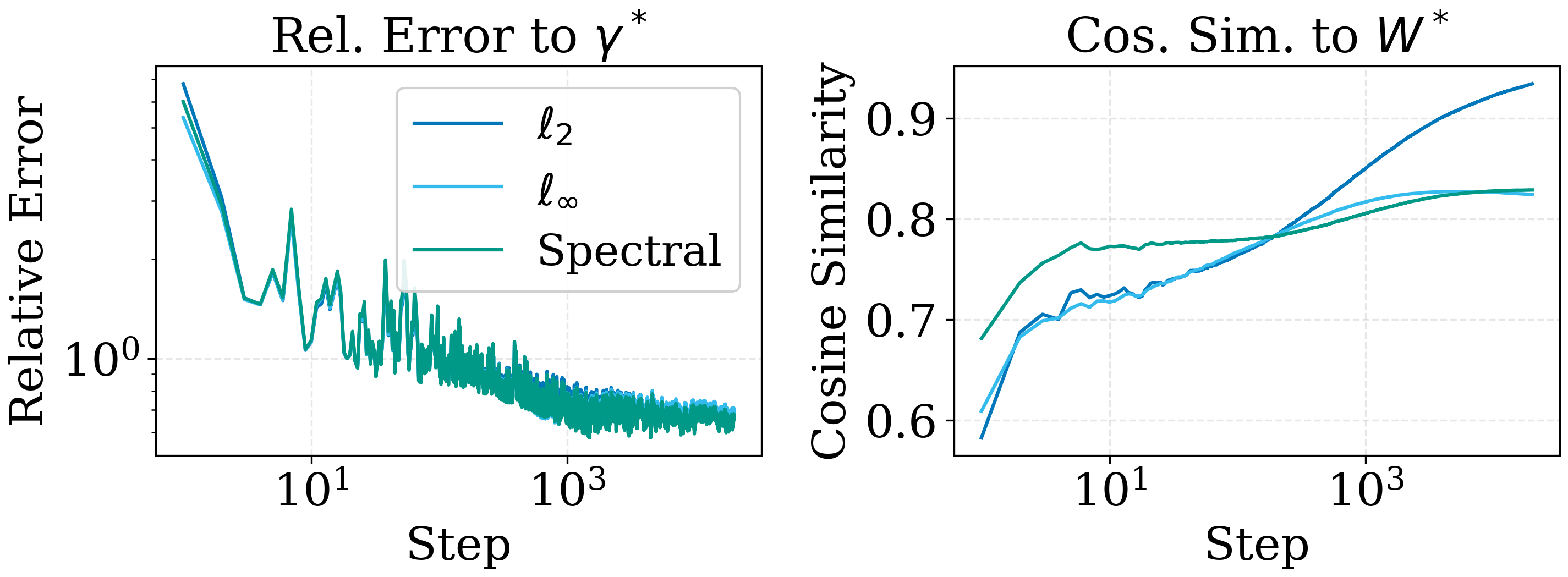}
        \caption{}
    \end{subfigure}
    \hfill
    \begin{subfigure}{0.32\textwidth}
        \centering
        \includegraphics[width=\linewidth]{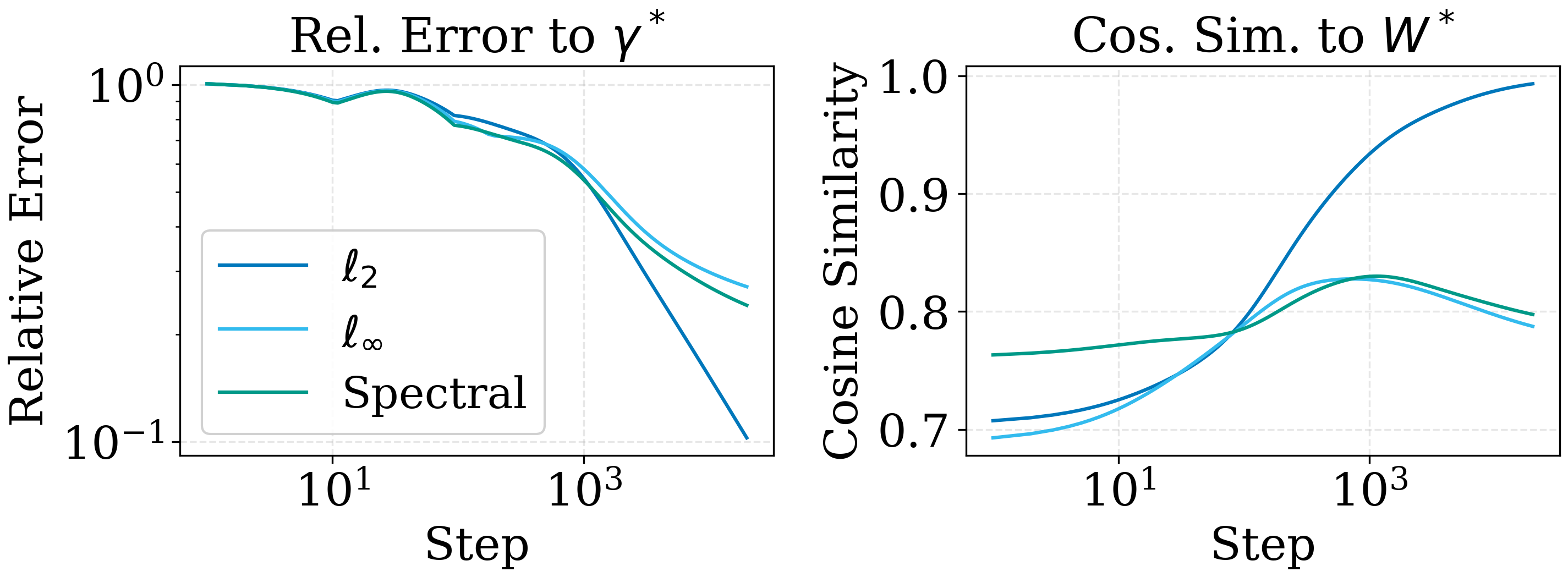}
        \caption{}
    \end{subfigure}


    \begin{subfigure}{0.32\textwidth}
        \centering
        \includegraphics[width=\linewidth]{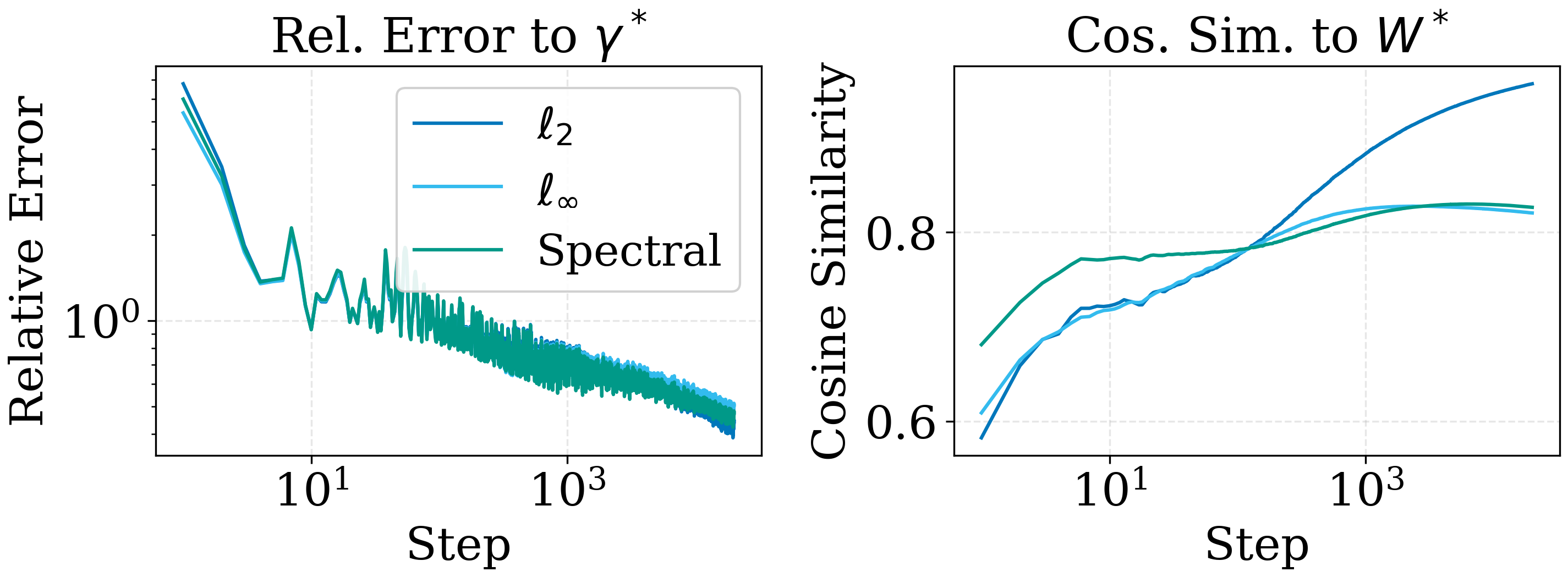}
        \caption{}
    \end{subfigure}
    \hfill
    \begin{subfigure}{0.32\textwidth}
        \centering
        \includegraphics[width=\linewidth]{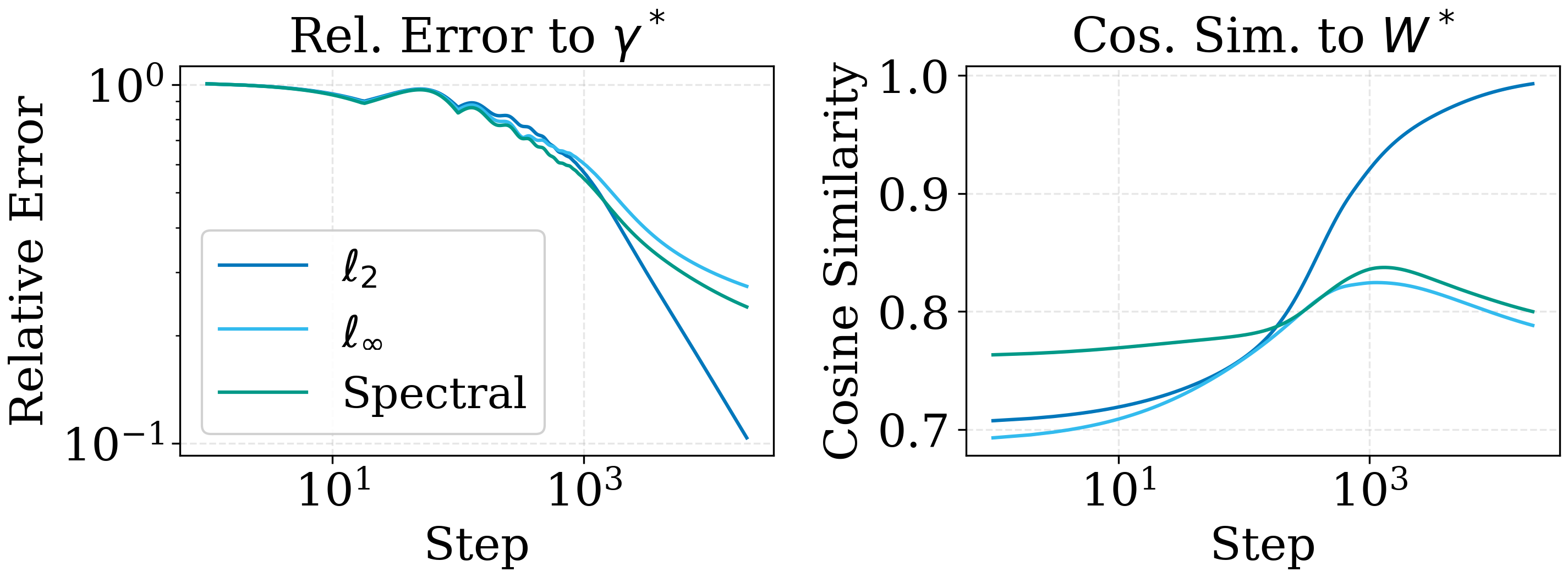}
        \caption{}
    \end{subfigure}
    \hfill
    \begin{subfigure}{0.32\textwidth}
        \centering
        \includegraphics[width=\linewidth]{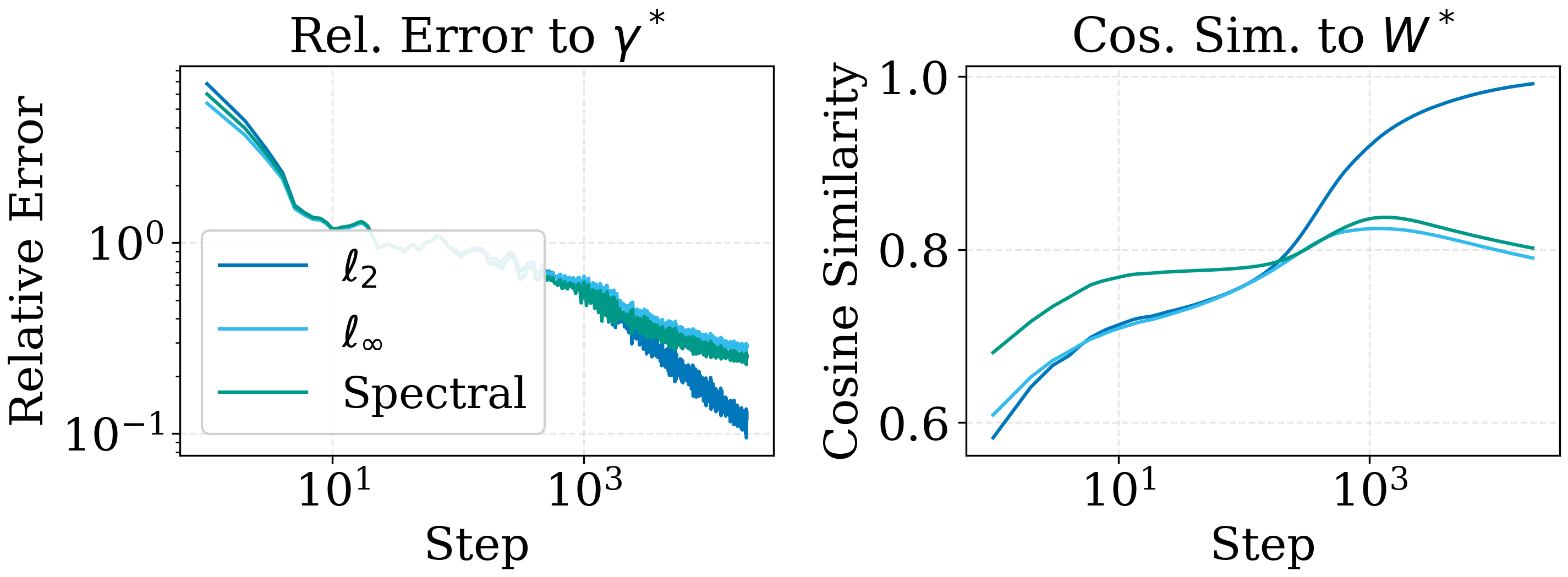}
        \caption{}
    \end{subfigure}

        \begin{subfigure}{0.32\textwidth}
        \centering
        \includegraphics[width=\linewidth]{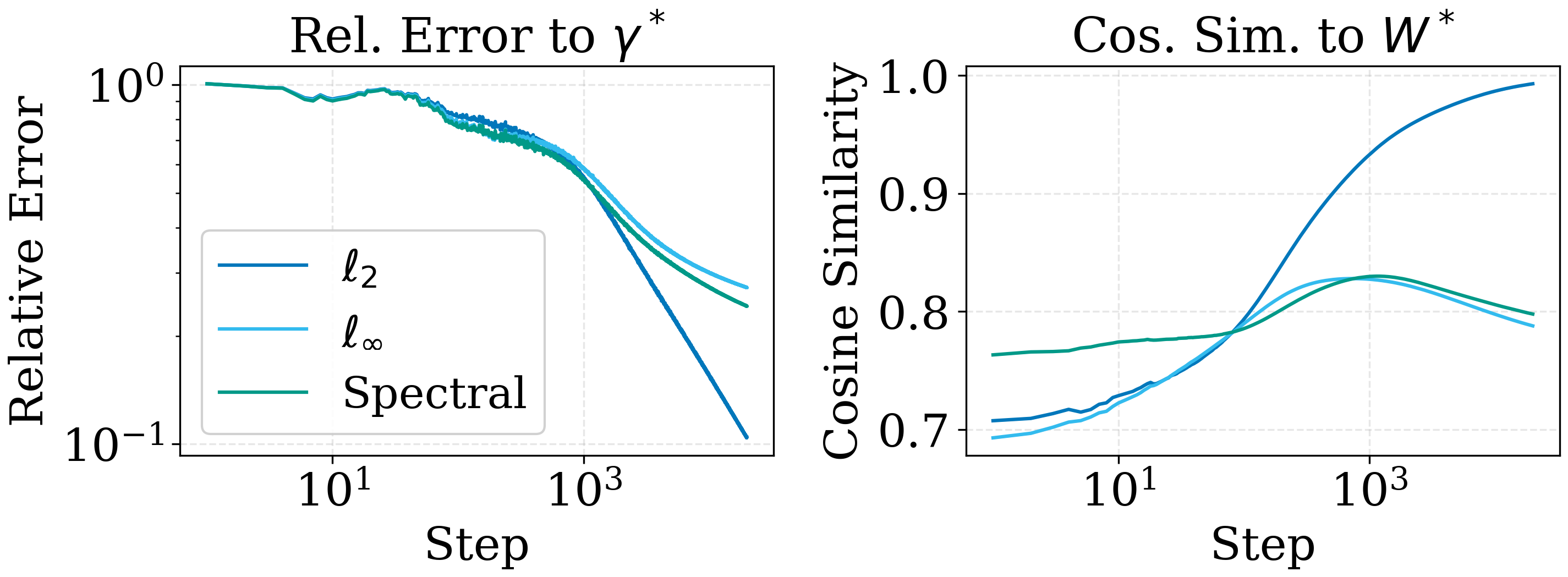}
        \caption{}
    \end{subfigure}
    \hfill
    \begin{subfigure}{0.32\textwidth}
        \centering
        \includegraphics[width=\linewidth]{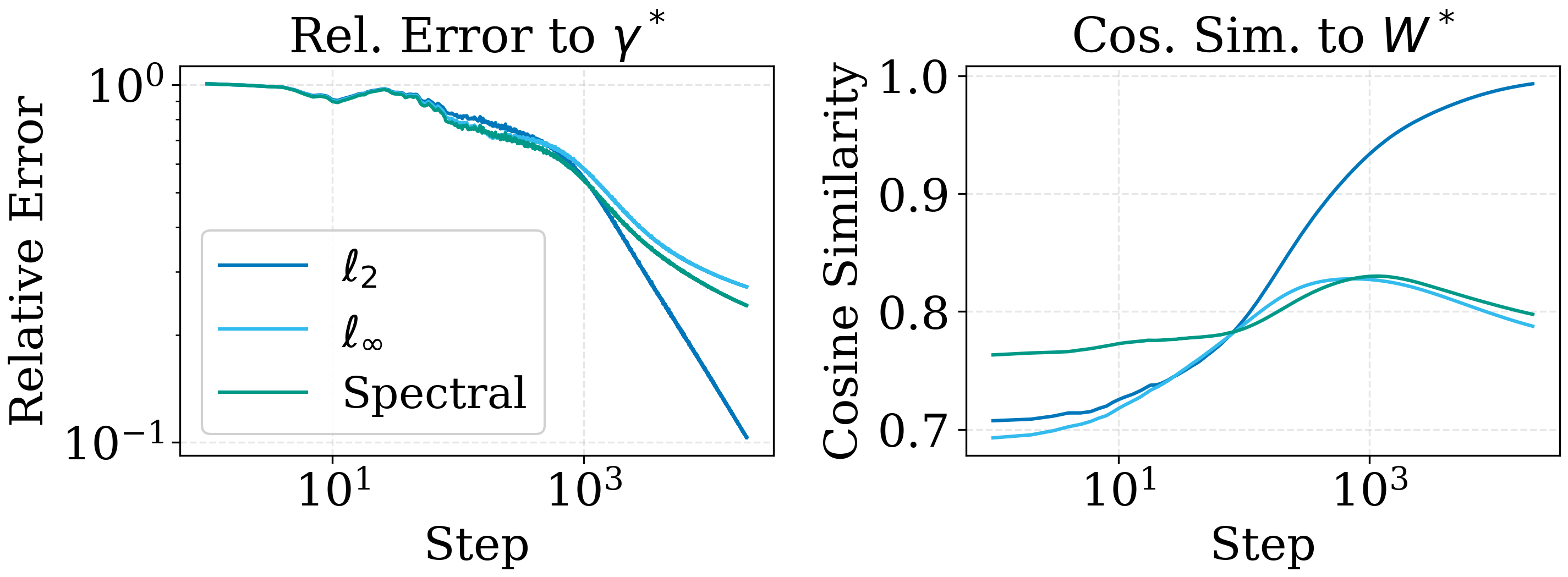}
        \caption{}
    \end{subfigure}
    \hfill
    \begin{subfigure}{0.32\textwidth}
        \centering
        \includegraphics[width=\linewidth]{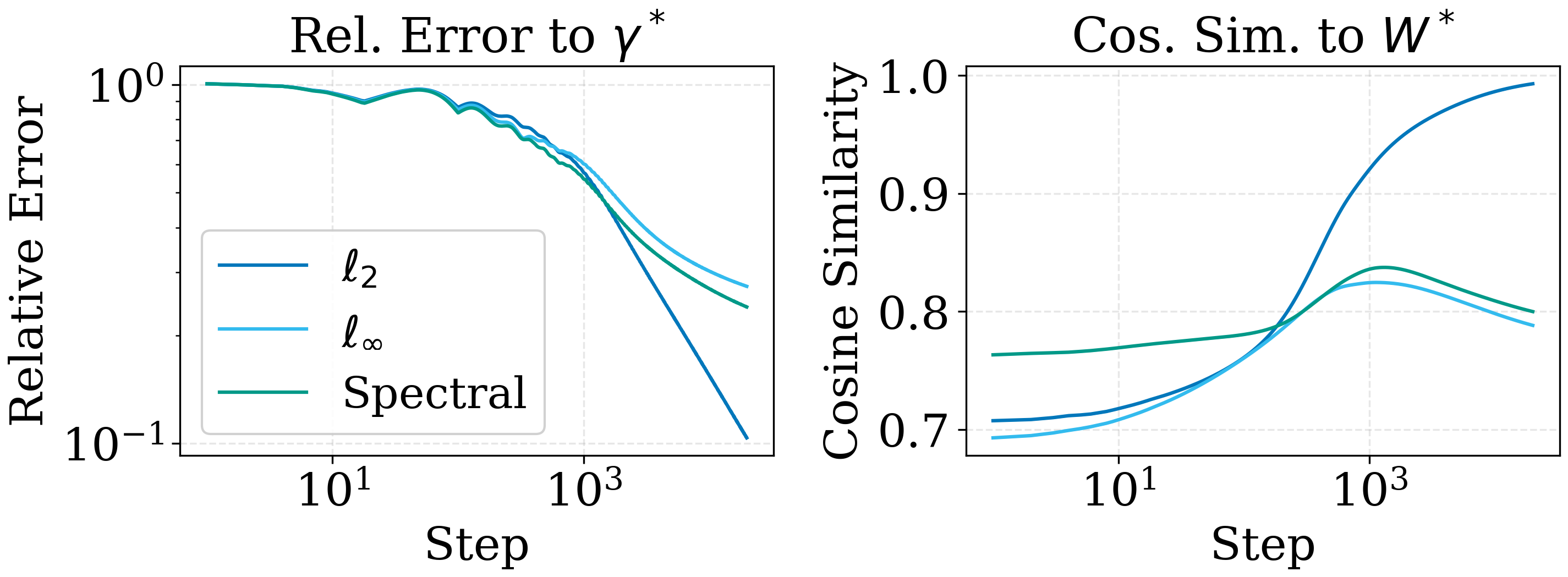}
        \caption{}
    \end{subfigure}

\caption{\small
Empirical validation of the implicit bias of normalized steepest descent under the $\ell_2$ norm.
(a) N-SGD with full-batch size $b=200$.
(b) N-SGD with mini-batch size $b=20$.
(c) N-MSGD with momentum $\beta_1=0.5$ and full-batch size $b=200$.
(d) N-MSGD with momentum $\beta_1=0.5$ and mini-batch size $b=20$.
(e) N-MSGD with momentum $\beta_1=0.99$ and full-batch size $b=200$.
(f) N-MSGD with momentum $\beta_1=0.99$ and mini-batch size $b=20$.
(g) VR-N-SGD with mini-batch size $b=20$.
(h) VR-N-MSGD with momentum $\beta_1=0.5$ and mini-batch size $b=20$.
(i) VR-N-MSGD with momentum $\beta_1=0.99$ and mini-batch size $b=20$.
}
\label{fig:six_plots}
\end{figure*}

\section{Experiment}

\textbf{Synthetic data.} We consider a synthetic multi-class linear classification problem with $10$ classes and
$20$ samples per class ($n=200$), feature dimension $d=5$, and i.i.d.\ Gaussian features
with variance $\sigma=0.1$, which are linearly separable.
We study convergence to the multi-class max-margin solution under steepest descent dynamics,
comparing stochastic steepest descent and its variance-reduced variants initialized at
$\W_0=\mathbf{0}$.
We sweep batch sizes $b\in\{20,200\}$ and momentum parameters $\beta\in\{0,0.5,0.99\}$. All methods are run for $T=20{,}000$ iterations using a decaying step size $\eta_t=\eta_0 t^{-\alpha}$ with $\alpha=0.5$.
The base step size is set to $\eta_0=0.5$ for the $\ell_2$ and spectral norms, and
$\eta_0=0.05$ for the $\ell_\infty$ norm to ensure stability.
Convergence is evaluated under the $\ell_2$, $\ell_\infty$, and spectral norms using the
relative error to the max-margin $\gamma^*$ and cosine similarity to the max-margin solution
$\W^*$ defined in Sec.~\ref{sec:problemsetup}, with results reported across all batch size and
momentum settings.

\textbf{Empirical validation of theory.} In the main text, we present empirical results for normalized-SGD (N-SGD), normalized momentum SGD (N-MSGD) and variance reduction variants (VR-N-SGD/VR-N-MSGD) under the $\ell_2$ norm constraint. Results for $\ell_\infty$ and spectral norm are deferred to the Appendix~\ref{addition_experiment}. 

Figure~\ref{fig:six_plots} illustrates how batch size, momentum, and variance reduction affect convergence to the $\ell_2$ max-margin solution. 
(a)-(b) show that without momentum, the implicit bias converges to $\ell_2$ max-margin solution in the full-batch case but fails under mini-batch sampling. 
(c)-(d) show that moderate momentum ($\beta_1=0.5$) partially improves convergence, yielding smaller relative error and higher cosine similarity than the no-momentum case, though discrepancies remain. 
(e)-(f) show that sufficiently large momentum ($\beta_1=0.99$) enables convergence even with small batches. 
Finally, (g)-(i) confirm that variance reduction eliminates the dependence on batch size and momentum, ensuring convergence to the full-batch max-margin solution across all settings.

\textbf{Real-world data.} We further validate our findings on a two-layer non-linear NN trained on MNIST dataset~\citep{lecun1998gradient}. Motivated by \citet{fan2025implicit}, we sample $n=1000$ data points from $K=10$ classes (100 per class), and train a two-layer network with hidden dimension $m=100$, with first-layer weights $\mathbf{W_1}$ and second-layer weights $\mathbf{W_2}$, using cross-entropy loss. 
We measure the spectral margin $\gamma^{\mathbf{W}_1,\mathbf{W}_2} := 
\min_{i\in[n],\, c\neq y_i} 
\frac{(\mathbf{e}_{y_i}-\mathbf{e}_c)^\top \mathbf{W}_2\,\sigma(\mathbf{W}_1 \mathbf{h}_i)}
{\max\{\interleave\mathbf{W}_1\interleave_{S_\infty},\interleave\mathbf{W}_2\interleave_{S_\infty}\}}.$ where $\sigma(\cdot)$ denotes the Sigmoid activation. We report its evolution over training steps under different batch sizes $b\in\{10,100,500,1000\}$ and momentum $\beta\in\{0,0.5,0.9\}$, along with their variance-reduced counterparts. We observe qualitatively consistent behaviors with our theoretical predictions. See details in Appendix~\ref{addition_experiment}.

\section{Conclusions and Limitations}
We investigated the implicit bias of mini-batch stochastic steepest descent in multi-class classification, characterizing how batch size, momentum, and variance reduction jointly determine the limiting max-margin behavior. Our results reveal a sharp distinction between stochastic normalized steepest descent with
and without stabilization mechanisms. Key theoretical findings are in order: In the absence of momentum, our large-batch condition for convergence and successful classification necessarily reduces to the full-batch-gradient regime; moreover, a counterexample shows that failure can already occur in the smallest
non-full-batch regime $m=2$. Introducing momentum eliminates the large-batch requirement, allowing small-batch convergence at the expense of slower, dimension-free rates. Variance reduction further strengthens this result, recovering the exact full-batch solution for any batch sizes, albeit with more conservative rates. Conversely, we showed that lacking these mechanisms, per-sample steepest descent converges to a distinct implicit bias governed by sample averaging rather than max-margin geometry.

\textbf{Limitations.} Our analysis is subject to several limitations.
First, we focus on linear classifiers trained on linearly separable data, which allows a
precise characterization of implicit bias but does not extend directly to nonlinear models.
Second, our positive convergence guarantee without momentum is limited
to the full-batch-gradient regime; outside this regime, our counterexample
rules out a general guarantee, but does not provide a complete characterization
of all possible stochastic dynamics.
Finally, in the vanilla batch-size-one regime, we restrict
attention to a carefully constructed dataset to enable explicit analysis.

\section*{Acknowledgments}
We would like to thank the anonymous reviewers and area chairs for their helpful comments. We acknowledge the support from NSFC 62306252, Hong Kong ECS award 27309624, Guangdong NSF
2024A1515012444, and the central fund from HKU.

\bibliography{reference}

\bibliographystyle{plainnat}

\newpage

\appendix

\newpage
\tableofcontents
\resettocdepth{}

\section{Optimization Algorithms}\label{app:algorithms}
\subsection{Detailed Algorithmic Procedures}

\begin{algorithm}[H]
\caption{Unified Stochastic Steepest Descent (Momentum / VR as options)}
\label{alg:unified_rr_steepest_compact}
\begin{algorithmic}[1]
\REQUIRE $\mathcal{D}=\{(\xb_i,y_i)\}_{i=1}^n$, batch size $b$, epochs $K$, stepsizes $\{\eta_t\}$.
\REQUIRE Norm $\|\cdot\|$ and $\phi_{\|\cdot\|}(\mathbf{G}) = \arg\max_{\|\Delta\|\le 1}\langle \mathbf{G},\Delta\rangle$.
\REQUIRE Switches $\nu,\mu\in\{0,1\}$ (VR / Momentum) and $\beta_1\in[0,1)$.
\STATE $(\nu,\mu)=(0,0)$: vanilla;\ $(0,1)$: momentum;\ $(1,0)$: VR;\ $(1,1)$: VR+momentum.
\STATE Initialize $\W_0$; set $m\leftarrow n/b$, $t\leftarrow 0$, and $\Hb_{-1}\leftarrow \mathbf{0}$.
\FOR{$k=0$ to $K-1$}
    \STATE Randomly reshuffle $[n]$ and form mini-batches $\{\mathcal{B}_{k,j}\}_{j=0}^{m-1}$. \tri{random reshuffling}
    \IF{$\nu=1$}
        \STATE $\tilde{\W}\leftarrow \W_t$; compute $\nabla \Lc(\tilde{\W})$. \tri{epoch snapshot / full gradient}
    \ENDIF
    \FOR{$j=0$ to $m-1$}
        \STATE $\nabla \Lc_{\mathcal{B}_{k,j}}(\W_t)
        := \frac{1}{b}\sum_{i\in\mathcal{B}_{k,j}}\nabla \ell(\W_t\xb_i;y_i)$.
        \IF{$\nu=1$}
            \STATE $\V_t \leftarrow \nabla \Lc_{\mathcal{B}_{k,j}}(\W_t) - \nabla \Lc_{\mathcal{B}_{k,j}}(\tilde{\W}) + \nabla \Lc(\tilde{\W})$. \tri{SVRG-style}
        \ENDIF
        \STATE $\mathbf{G}_t \leftarrow (1-\nu)\nabla \Lc_{\mathcal{B}_{k,j}}(\W_t) + \nu \V_t$. \tri{$\nu=0:\mathbf{G}_t=\nabla \Lc_{\mathcal{B}_{k,j}}(\W_t)$;\ $\nu=1:\mathbf{G}_t=\V_t$}
        \STATE $\Hb_t \leftarrow (1-\mu)\mathbf{G}_t + \mu\big(\beta_1\Hb_{t-1}+(1-\beta_1)\mathbf{G}_t\big)$. \tri{$\mu=0:\Hb_t=\mathbf{G}_t$;\ $\mu=1:\Hb_t=\beta_1\Hb_{t-1}+(1-\beta_1)\mathbf{G}_t$}
        \STATE $\Deltab_t \leftarrow \phi_{\|\cdot\|}(\Hb_t)$; \quad $\W_{t+1} \leftarrow \W_t - \eta_t \Deltab_t$; \quad $t\leftarrow t+1$.
    \ENDFOR
\ENDFOR
\STATE \textbf{return} $\W_t$.
\end{algorithmic}
\end{algorithm}

\section{Complete Notations}\label{ap:completenotation}
\paragraph{Notations.}Scalars, vectors, and matrices are denoted by $x$, $\xb$, and $\X$. 
We denote the $(i,j)$-th entry of $\X$ by $\X[i,j]$ and the $i$-th entry of $\xb$ by $\xb[i]$.  For $k \in \mathbb{N}^+$, let $[k] = \{1,2,\ldots,k\}$. 
For real sequences $\{a_t\}$ and $\{b_t\}$, we write $a_t = \mathcal{O}(b_t)$ if there exist constants $C,N>0$ such that $a_t \le Cb_t$ for all $t \ge N$; $a_t = \Omega(b_t)$ if $b_t = \mathcal{O}(a_t)$; 
$a_t = \Theta(b_t)$ if both $a_t = \mathcal{O}(b_t)$ and $a_t = \Omega(b_t)$. The entry-wise matrix $p$-norm is defined as $\|\X\|_p := (\sum_{i,j} |\X[i,j]|^p)^{1/p}$ for $p \ge 1$ with the corresponding vector $\ell_p$ norm defined analogously.
Of particular interest are the max-norm $\ninf{\X}:=\|\X\|_\infty := \max_{i,j} |\X[i,j]|$ and the entry-wise $\ell_1$ norm $\none{\X}:=\|\X\|_1 := \sum_{i,j} |\X[i,j]|$, which is dual to the max-norm.
For vectors, $\|\xb\|_\infty$ and $\|\xb\|_1$ denote the usual $\ell_\infty$ and $\ell_1$ norms. 
The Schatten $p$-norm of $\X$ is defined as $\snorm{\X}{p} := (\sum_{i=1}^r \sigma_i^p)^{1/p}$, where $\sigma_1 \ge \cdots \ge \sigma_r > 0$ are the singular values of $\X$ and $r=\text{rank}(\X)$; special cases include the nuclear ($p=1$), Frobenius ($p=2$), and spectral ($p=\infty$) norms.
When the specific norm is clear from context, we write $\|\X\|$ to denote any entry-wise or Schatten $p$-norm with $p \ge 1$.
The dual norm with respect to the standard matrix inner product $\langle \X, \M \rangle := \tr(\X^\top \M)$ is denoted by $\|\X\|_*$. Let $\mathbb{S}:\R^k \to \triangle^{k-1}$ denote the softmax map defined by
$\mathbb{S}_c(\xb) := \frac{\exp(\xb[c])}{\sum_{j \in [k]} \exp(\xb[j])}$ for $c \in [k]$.
We denote by $\mathbb{S}'(\xb) := \diag{\mathbb{S}(\xb)} - \mathbb{S}(\xb)\mathbb{S}(\xb)^\top$
the softmax Jacobian.
Let $\{\eb_c\}_{c=1}^k$ be the standard basis of $\R^k$, and indicator $\delta_{ij}$ be such that $\delta_{ij} = 1$ if and only if $i=j$.

\section{Technical Lemmas}
\paragraph{Construction of Proxy Function.}Our construction of the proxy function follows the multiclass formulation
introduced by \citet{fan2025implicit}, which relates the gradient magnitude of
the cross-entropy loss to the margin.
Specifically, we define
\[
\Gc(\W) \coloneqq \frac{1}{n}\sum_{i=1}^n \bigl(1-\mathbb{S}_{y_i}(\W \xb_i)\bigr),
\]
and show that it serves as an effective surrogate for our analysis.

\subsection{Properties of Loss function and Norm}
In this section, we establish the functional properties and lemmas required for our analysis. Several lemmas presented here are derived from or inspired by \citet{fan2025implicit}. However, we include their proofs here for completeness and to ensure consistency with our notation.

\begin{lemma}\label{lem:snorm dominate}
(Lemma 11 in \citep{fan2025implicit})
For any matrix $\A \in \mathbb{R}^{m \times n}$ and any entry-wise or Schatten p-norm $\norm{\cdot}$ with $p \geq 1$, it holds that 
\[
\ninf{\A} \leq \norm{\A} \leq \none{\A}\,.
\]
\end{lemma}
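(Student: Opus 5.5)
We need to prove that $\|\A\|_{\max} \le \|\A\| \le \|\A\|_{\sum}$ holds for every entry-wise $p$-norm and every Schatten $p$-norm with $p\ge 1$. The plan is to treat the two families separately, and within each to rely on monotonicity of $\ell_p$ norms in $p$.

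\emph{Entry-wise norms.} Let $\mathbf{a}$ be the vector of entries of $\A$. Then $\|\A\|_p=\|\mathbf{a}\|_p$, and we use the standard monotonicity $\|\mathbf{a}\|_\infty\le\|\mathbf{a}\|_p\le\|\mathbf{a}\|_1$ for $p\ge1$.
\begin{itemize}
\item The lower bound holds because $\max_j|a_j|^p\le\sum_j|a_j|^p$.
\item For the upper bound, normalize so that $\|\mathbf{a}\|_1=1$. Then each $|a_j|\le1$, so $|a_j|^p\le|a_j|$. Summing gives $\|\mathbf{a}\|_p^p\le1$.
\end{itemize}
The endpoints $\|\A\|_\infty=\ninf{\A}$ and $\|\A\|_1=\none{\A}$ are exactly the two outer quantities, so this family is done.

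\emph{Schatten norms.} Applying the same vector monotonicity to the singular values $\sigma=(\sigma_1,\dots,\sigma_r)$ gives
\[
\sigma_1=\snorm{\A}{\infty}\le\snorm{\A}{p}\le\snorm{\A}{1}.
\]
It then suffices to prove two comparisons with the entry-wise extremes:
\[
\ninf{\A}\le\sigma_1
\qquad\text{and}\qquad
\snorm{\A}{1}\le\none{\A}.
\]

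\emph{Lower comparison.} For any entry, $|\A[i,j]|=|\eb_i^\top\A\eb_j|\le\sigma_1$, using the variational characterization of the spectral norm over unit vectors.

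\emph{Upper comparison.} This is the only step that needs an idea. We write $\A=\sum_{i,j}\A[i,j]\,\eb_i\eb_j^\top$. Each summand is rank one with nuclear norm $|\A[i,j]|$. By the triangle inequality for the nuclear norm,
\[
\snorm{\A}{1}\le\sum_{i,j}|\A[i,j]|=\none{\A}.
\]
An alternative route is the dual characterization $\snorm{\A}{1}=\max_{\snorm{\mathbf{B}}{\infty}\le1}\langle\A,\mathbf{B}\rangle$, combined with $|\mathbf{B}[i,j]|\le\snorm{\mathbf{B}}{\infty}\le1$ from the lower comparison.

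Chaining the Schatten monotonicity with these two comparisons gives the claim for Schatten norms. The main point to get right is recognizing that the nuclear norm is dominated by the entry-wise $\ell_1$ norm; everything else is standard norm monotonicity.
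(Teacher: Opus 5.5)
Your proof is correct and takes essentially the same route as the paper. You handle the entry-wise case by monotonicity of $\ell_p$ norms, and the Schatten case by chaining spectral $\le$ Schatten-$p$ $\le$ nuclear with the comparisons max-entry $\le$ spectral (via unit basis vectors) and nuclear $\le$ entry-wise $\ell_1$. For that last step the paper uses exactly your ``alternative'' duality argument; your primary route, the rank-one decomposition with the nuclear-norm triangle inequality, is an equally valid and slightly more elementary variant.
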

\begin{proof}
    The entry-wise p-norm case is trivial. Here, we focus the Schatten p-norm case. Note that $\snorm{\A}{2}$ coincides with the entrywise $2$-norm $\|\A\|_2$, but in general Schatten norms are different from entry-wise norms. On the other hand, Schatten norms preserve the ordering of norms. Specifically, por any $p \geq 1$, it holds:
\begin{equation}
\snorm{\A}{\infty}=\sigma_1 \leq \snorm{\A}{p} = \left(\sum_{i=1}^{r} \sigma_i^p\right)^{1/p} \leq \sum_{i=1}^{r} \sigma_i = \snorm{\A}{1}\,.
\end{equation}

It is also well-known that 
\begin{align}\label{eq:schatten lower}
\snorm{\A}{\infty}=\max_{\|\ub\|_2=\|\vb\|_2=1}\ub^\top\A\vb \geq \max_{i,j}|\A[i,j]| = \ninf{\A}\,
\end{align}
where the inequality follows by selecting $\ub=\sign{\A[i',j']}\cdot\eb_{i'}$ and $\vb=\eb_{j'}$ for $(i',j')$ such that $|\A[i',j']|=\ninf{\A}$ and $\eb_{i'}, \eb_{j'}$ corresponding basis vectors.

Using this together with duality, it also holds that
\begin{align}\label{eq:schatten upper bound}
    \snorm{\A}{1}\leq \none{\A}\,.
\end{align}
This follows from the following sequnece of inequalities
\begin{align}
    \snorm{\A}{1} = \max_{\snorm{\Bb}{\infty}\leq 1}\inp{\A}{\Bb} \leq \none{\A}\cdot \max_{\snorm{\Bb}{\infty}\leq 1}\ninf{\Bb} \leq \none{\A}\cdot \max_{\snorm{\Bb}{\infty}\leq 1}\snorm{\Bb}{\infty} \leq \none{\A}\,,
\end{align}
where the first inequality follows from generalized Cauchy-Scwhartz and the second inequality by \eqref{eq:schatten lower}. 
\end{proof}

\begin{lemma}[Gradient and Hessian]\label{lem:CE logit loss properties}\label{lem:hessian}(Lemma 9 and 10 in \citep{fan2025implicit})
   Let CE loss
   \[
   \Lc(\W):=-\frac{1}{n}\sum_{i\in[n]}\log\big(\sfti{\yi}{\W\xb_i}\big),
   \]
   and simplify $\Sb:=\sft{\W\xb} = [\s_1, \ldots, \s_n]\in \R^{k \times n}$.
   Then, for any $\W$, it holds
   \begin{itemize}
   \item $\nabla\Lc(\W) = -\frac{1}{n}\sum_{i\in[n]}  \left(\eb_\yi-\s_i\right)\xb_i^\top = -\frac{1}{n}(\Y-\Sb)\xb^\top$.
   \item $\ones_k^\top\nabla\Lc(\W)=0$.
   \item For any matrix $\Ab\in\R^{k\times d}$,
        \begin{align}
            \inp{\A}{-\nabla\Lc(\W)}
            &=
            \frac{1}{n}\sum_{i\in[n]} {\sum_{c\neq \yi} s_{ic}\, (\eb_{\yi}-\eb_c)^\top\A\xb_i}\label{eq:CE gradient inp}
        \end{align}
        and consequently,
        \begin{align}\label{eq:nabla_ineq_basic}
            \inp{\A}{-\nabla\Lc(\W)} \geq \frac{1}{n}\sum_{i\in[n]} \left(1-s_{i\yi}\right)\,\cdot\,\min_{c\neq \yi}\left(\eb_\yi-\eb_c\right)^\top\A\xb_i.
        \end{align}
   \item Let perturbation $\Deltab\in\R^{k\times d}$ and denote $\W'=\W+\Deltab$. Then,
        \begin{align*}
            \Lc(\W') &= \Lc(\W) - \frac{1}{n}\sum_{i\in[n]}\inp{(\eb_\yi-\sft{\W\xb_i})\xb_i^\top}{\Deltab}
            \\
            &\quad+ \frac{1}{2n}\sum_{i\in[n]}\xb_i^\top\Deltab^\top\left(\diag{\sft{\W\xb_i}}-\sft{\W\xb_i}\sft{\W\xb_i}^\top\right)\Deltab\,\xb_i+o(\|\Deltab\|^2)\,.
        \end{align*}
   \end{itemize}
\end{lemma}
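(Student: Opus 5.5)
The statement concerns the CE loss $\Lc$, and I would prove its four items in order by direct computation on a single sample, then average over $i\in[n]$.

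\textbf{Gradient.} Write $\ell_i(\W)=-\log\sfti{\yi}{\W\xb_i}=-\eb_\yi^\top\W\xb_i+\log\sum_{c}\exp(\eb_c^\top\W\xb_i)$. As a function of the logits $\ellb=\W\xb_i$, this has gradient $\sft{\ellb}-\eb_\yi$. The chain rule through the linear map $\W\mapsto\W\xb_i$ then gives $\nabla\ell_i(\W)=(\s_i-\eb_\yi)\xb_i^\top$, and averaging over $i$ yields the first item. The matrix form $-\frac1n(\Y-\Sb)\xb^\top$ is just the columnwise stacking, where $\xb$ denotes the $d\times n$ data matrix. For the second item, $\ones_k^\top(\eb_\yi-\s_i)=1-1=0$, because softmax outputs lie on the simplex.

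\textbf{Inner-product identity.} Start from $\inp{\A}{-\nabla\Lc(\W)}=\frac1n\sum_i(\eb_\yi-\s_i)^\top\A\xb_i$. The key rewrite is $\eb_\yi-\s_i=\sum_{c\neq\yi}s_{ic}(\eb_\yi-\eb_c)$, which holds since $1-s_{i\yi}=\sum_{c\neq\yi}s_{ic}$. Substituting gives \eqref{eq:CE gradient inp}. Each term is then bounded below by $s_{ic}\min_{c'\neq\yi}(\eb_\yi-\eb_{c'})^\top\A\xb_i$, using $s_{ic}\ge0$. Summing $s_{ic}$ over $c\neq\yi$ gives the factor $1-s_{i\yi}$, which is \eqref{eq:nabla_ineq_basic}.

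\textbf{Second-order expansion.} The Hessian of $\ellb\mapsto\log\sum_c e^{\ellb[c]}$ is the softmax Jacobian $\mathbb{S}'(\ellb)=\diag{\sft{\ellb}}-\sft{\ellb}\sft{\ellb}^\top$, and the linear term $-\eb_\yi^\top\ellb$ has zero Hessian. I would apply Taylor's theorem to the smooth function $\ellb\mapsto\ell(\ellb;\yi)$ at $\W\xb_i$ with increment $\Deltab\xb_i$, which gives
\[
\ell_i(\W+\Deltab)=\ell_i(\W)+(\s_i-\eb_\yi)^\top\Deltab\xb_i+\tfrac12\xb_i^\top\Deltab^\top\mathbb{S}'(\W\xb_i)\Deltab\xb_i+o(\|\Deltab\xb_i\|^2).
\]
Rewrite the linear term as $-\inp{(\eb_\yi-\s_i)\xb_i^\top}{\Deltab}$. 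Finally, $\|\Deltab\xb_i\|\le C\|\Deltab\|$ by norm equivalence in finite dimensions, so the remainder is $o(\|\Deltab\|^2)$; averaging over $i$ gives the stated expansion.

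None of these steps is a real obstacle; they are routine calculus. The only point needing care is the bookkeeping in the inner-product identity, namely the simplex rewrite of $\eb_\yi-\s_i$.
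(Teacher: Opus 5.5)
Your proposal is correct and takes essentially the same route as the paper. Both arguments compute the per-sample logit gradient and the softmax Jacobian, obtain the inner-product identity and its lower bound from the simplex rewrite $\mathbf{e}_{y_i}-\mathbf{s}_i=\sum_{c\neq y_i}s_{ic}(\mathbf{e}_{y_i}-\mathbf{e}_c)$, and expand to second order in the logits before pulling back through $\boldsymbol{\Delta}\mapsto\boldsymbol{\Delta}\mathbf{x}_i$ and averaging over $i$; your explicit check that $o(\|\boldsymbol{\Delta}\mathbf{x}_i\|^2)=o(\|\boldsymbol{\Delta}\|^2)$ is a small detail the paper leaves implicit.
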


\begin{proof}
    \textbf{(Gradient part).}
    The first bullet is by direct calculation. The second bullet uses the fact that
    $\ones^\top(\eb_{\yi}-\s_i)=1-1=0$ since $\ones^\top\s_i=1$.
    For the third bullet, write
    \[
        \s_i^\top\A\xb_i
        =\Big(\sum_{c\in[k]}s_{ic}\eb_c\Big)^\top\A\xb_i
        =\sum_{c\in[k]} s_{ic}\,\eb_c^\top\A\xb_i,
    \]
    and expand $\inp{\A}{-\nabla\Lc(\W)}=\frac{1}{n}\sum_{i\in[n]}(\eb_{\yi}-\s_i)^\top\A\xb_i$
    to obtain \eqref{eq:CE gradient inp}. Then \eqref{eq:nabla_ineq_basic} follows by lower
    bounding the weighted average over $c\neq \yi$ by the minimum term and using
    $\sum_{c\neq \yi}s_{ic}=1-s_{i\yi}$.

    \textbf{(Hessian / Taylor part).}
    Define function $\ell_y:\R^{k}\rightarrow\R$ parameterized by $y\in[k]$ as follows:
    \[
     \ell_y(\ellbb):=-\log(\sfti{y}{\ellbb})\,.
    \]
    From the gradient computation above (equivalently Lemma~\ref{lem:CE logit loss properties}),
    \[
    \nabla\ell_y(\ellbb)=-(\eb_y-\sft{\ellbb})\,.
    \]
    Thus,
    \[
    \nabla^2\ell_y(\ellbb)=\nabla\sft{\ellbb}=\diag{\sft{\ellbb}}-\sft{\ellbb}\sft{\ellbb}^\top.
    \]
    Combining these, the second-order Taylor expansion of $\ell_y$ writes as follows for any
    $\ellbb,\deltab\in\R^k$:
    \begin{align*}
        \ell_y(\ellbb+\deltab)
        = \ell_y(\ellbb)
        - (\eb_y-\sft{\ellbb})^\top\deltab
        + \frac{1}{2}\deltab^\top\left(\diag{\sft{\ellbb}}-\sft{\ellbb}\sft{\ellbb}^\top\right)\deltab
        +o(\|\deltab\|^2)\,.
    \end{align*}
    To evaluate this with respect to a change on the classifier parameters, set
    $\ellbb=\W\xb$ and $\deltab=\Deltab\xb$ for $\Deltab\in\R^{k\times d}$. Denoting
    $\W'=\W+\Deltab$, we then have
    \begin{align*}
        \ell_y(\W')
        = \ell_y(\W)
        - \inp{(\eb_y-\sft{\ellbb})\xb^\top}{\Deltab}
        + \frac{1}{2}\xb^\top\Deltab^\top\left(\diag{\sft{\ellbb}}-\sft{\ellbb}\sft{\ellbb}^\top\right)\Deltab\xb
        +o(\|\Deltab\|^2)\,.
    \end{align*}
    Summing over $i\in[n]$ and dividing by $n$ yields the stated expansion for $\Lc(\W')$.

    Moreover, using the mean-value form of the remainder, we can further obtain
    \begin{align}
        \ell_y(\W')
        = \ell_y(\W)
        - \inp{(\eb_y-\sft{\ellbb})\xb^\top}{\Deltab}
        + \frac{1}{2}\xb^\top\Deltab^\top\left(\diag{\sft{\ellbb'}}-\sft{\ellbb'}\sft{\ellbb'}^\top\right)\Deltab\xb
        \label{eq:taylor_eq1},
    \end{align}
    where $\ellbb' = \ellbb + \zeta \deltab$ for some $\zeta \in [0,1]$.
\end{proof}

Lemma \ref{lem:unified_hessian} is used in bounding the second order term in the Taylor expansion of $\Lc(\W)$.

\begin{lemma}[Hessian Quadratic Bound]
\label{lem:unified_hessian}
    Let $\norm{\cdot}$ denote any entry-wise or Schatten $p$-norm with $p \geq 1$. Consider a probability vector $\s \in \Delta^{k-1}$, an arbitrary matrix $\Deltab \in \mathbb{R}^{k \times d}$, and a data vector $\xb \in \mathbb{R}^d$ bounded by $\norm{\xb}_{*} \leq R$ (where $\norm{\cdot}_{*}$ is the dual of the vector norm induced by the problem geometry). For any class index $c \in [k]$, the quadratic form associated with the softmax Hessian satisfies:
    \[
        \xb^\top \Deltab^\top \left(\diag{\s} - \s\s^\top\right) \Deltab \xb \leq 4 R^2 (1 - s_c) \norm{\Deltab}^2.
    \]
\end{lemma}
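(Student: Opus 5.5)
Set $\vb := \Deltab\xb \in \R^k$. The quadratic form is then $\vb^\top(\diag{\s}-\s\s^\top)\vb = \sum_j s_j v_j^2 - (\sum_j s_j v_j)^2$, which is the variance of a random variable equal to $v_j$ with probability $s_j$. The plan is to bound this variance by a quantity proportional to $1-s_c$ times the squared oscillation of $\vb$, and then to bound the oscillation by $2R\norm{\Deltab}$.

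\textbf{Step 1 (variance bound).} A variance is at most the second moment about any constant, so taking the constant $v_c$ gives
\[
\sum_j s_j v_j^2 - \Big(\sum_j s_j v_j\Big)^2 \le \sum_j s_j (v_j - v_c)^2 = \sum_{j\neq c} s_j (v_j-v_c)^2 \le (1-s_c)\max_{j}(v_j-v_c)^2 .
\]
This uses only $\sum_{j\ne c} s_j = 1-s_c$, and it is the step that produces the factor $(1-s_c)$.

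\textbf{Step 2 (oscillation bound).} Each difference satisfies $|v_j - v_c| \le 2\|\Deltab\xb\|_\infty$. Writing $\Deltab\xb[j] = \sum_l \Deltab[j,l]\xb[l]$, Hölder's inequality gives $|\Deltab\xb[j]| \le \max_{j,l}|\Deltab[j,l]|\,\|\xb\|_1 = \ninf{\Deltab}\|\xb\|_1$. By Lemma~\ref{lem:snorm dominate}, $\ninf{\Deltab}\le\norm{\Deltab}$ for every entry-wise or Schatten $p$-norm. Hence
\[
|v_j - v_c| \le 2\norm{\Deltab}\,\|\xb\|_1 \le 2R\norm{\Deltab}.
\]
Squaring and combining with Step 1 yields the claimed bound $4R^2(1-s_c)\norm{\Deltab}^2$.

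\textbf{Main obstacle: interpreting the hypothesis on $\xb$.} The statement writes $\norm{\xb}_* \le R$ abstractly, but the argument above uses $\|\xb\|_1\le R$, which is what Assumption~\ref{ass:data_bound} supplies. I will read the hypothesis as the $\ell_1$ bound: $\ell_1$ is dual to the $\ell_\infty$ control $\ninf{\Deltab}\le\norm{\Deltab}$ that Lemma~\ref{lem:snorm dominate} provides uniformly over all norms in the class. This reading is what makes the constant independent of both the dimension and the choice of norm.
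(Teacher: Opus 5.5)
Your proof is correct, and it takes a genuinely different route from the paper.

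\textbf{How the two arguments differ.} The paper writes the quadratic form as $\mathrm{tr}(\mathbf{H}\vb\vb^\top)$ with $\mathbf{H}=\diag{\s}-\s\s^\top$. It then applies matrix H\"older duality, $\mathrm{tr}(\mathbf{H}\vb\vb^\top)\le\|\mathbf{H}\|_{*}\norm{\vb\vb^\top}$, and bounds the dual norm by the entry-wise $\ell_1$ norm: $\|\mathbf{H}\|_{*}\le\|\mathbf{H}\|_1=2(1-\sum_i s_i^2)\le 4(1-s_c)$. Finally it bounds the rank-one factor $\norm{\vb\vb^\top}\le R^2\norm{\Deltab}^2$ by a case split. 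For Schatten norms it uses $\norm{\vb\vb^\top}=\|\vb\|_2^2\le\snorm{\Deltab}{\infty}^2\|\xb\|_2^2$; for entry-wise norms it uses $\|\vb\|_p\le\|\Deltab\|_p\|\xb\|_q$. In the paper, the factor $(1-s_c)$ therefore comes from the $\ell_1$ mass of the softmax Jacobian. You obtain it instead by centering the variance at $v_c$, so only the probability mass off class $c$ contributes.

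\textbf{What each approach buys.} Your argument is more elementary. It needs no matrix H\"older inequality and no Schatten/entry-wise case split, because all geometries enter only through $\ninf{\Deltab}\le\norm{\Deltab}$ from Lemma~\ref{lem:snorm dominate}. The paper's route is more modular: it separates the softmax factor from the geometry factor. It also states the hypothesis on $\xb$ in the natural dual form for each geometry.

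\textbf{Your ``main obstacle'' is not really one.} Step~1 of your argument is completely norm-free; only Step~2 touches the hypothesis on $\xb$. Reading that hypothesis as $\|\xb\|_1\le R$ is strictly stronger than what the paper's proof uses, namely $\|\xb\|_q\le R$ in the entry-wise case and $\|\xb\|_2\le R$ in the Schatten case. As written, you therefore prove a slightly narrower statement. This is harmless for the paper, since every application (e.g.\ Lemma~\ref{lem:nsd_descent}) only has $\|\xb_i\|_1\le R$ from Assumption~\ref{ass:data_bound}.

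You can also recover the weaker hypotheses without changing the structure of your proof. In the entry-wise case, apply H\"older row by row: $|v_j|\le\|\Deltab[j,\cdot]\|_p\|\xb\|_q\le\|\Deltab\|_p\|\xb\|_q$. In the Schatten case, use $|v_j|\le\|\vb\|_2\le\snorm{\Deltab}{\infty}\|\xb\|_2$. So the dual-norm reading does not cost dimension- or norm-independence either, contrary to the last sentence of your proposal.
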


\begin{proof}
    Let $\vb \coloneqq \Deltab \xb \in \mathbb{R}^k$ and $\vct{H} \coloneqq \diag{\s} - \s\s^\top \in \mathbb{R}^{k \times k}$. The term of interest can be written as the trace of a product:
    \[
        \mathcal{Q} \coloneqq \vb^\top \vct{H} \vb = \tr(\vct{H} \vb \vb^\top).
    \]
    Let $q$ be the conjugate exponent of $p$ such that $1/p + 1/q = 1$. By the generalized Hölder's inequality for matrix norms (where $\norm{\cdot}_q$ denotes the dual norm of $\norm{\cdot}$), we have:
    \begin{align} \label{eq:holder_sep}
        \mathcal{Q} \leq \norm{\vct{H}}_q \norm{\vb \vb^\top}.
    \end{align}
    We proceed by bounding the two terms in \eqref{eq:holder_sep} separately.

    Fisrt, we need to Bound the Hessian Norm $\norm{\vct{H}}_q$.
    Recall the norm inequality $\norm{\A}_q \leq \norm{\A}_1$ (entry-wise 1-norm) for any matrix $\A$ and $q \geq 1$ (since $\norm{\cdot}_1$ dominates other p-norms and Schatten norms). We compute the entry-wise 1-norm of $\xb$:
    \[
        \norm{\vct{H}}_1 = \sum_{i,j} |H_{ij}| = \sum_{i} |s_i - s_i^2| + \sum_{i \neq j} |-s_i s_j| = \sum_{i} s_i(1-s_i) + \sum_{i \neq j} s_i s_j.
    \]
    Since $\sum_{j \neq i} s_j = 1 - s_i$, the second term becomes $\sum_{i} s_i (1-s_i)$. Thus:
    \[
        \norm{\vct{H}}_q \leq \norm{\vct{H}}_1 = 2 \sum_{i=1}^k s_i (1-s_i).
    \]
    To relate this sum to a specific index $c$, we utilize the constraint $\sum_{i} s_i = 1$:
    \begin{align*}
        \sum_{i} s_i (1-s_i) &= s_c(1-s_c) + \sum_{i \neq c} s_i - \sum_{i \neq c} s_i^2 \\
        &= s_c(1-s_c) + (1-s_c) - \sum_{i \neq c} s_i^2 \\
        &= (1-s_c)(1+s_c) - \sum_{i \neq c} s_i^2 = 1 - s_c^2 - \sum_{i \neq c} s_i^2.
    \end{align*}
    Alternatively, a simpler algebraic bound suffices: $\sum_{i} s_i(1-s_i) = 1 - \sum s_i^2 \leq 2(1-s_c)$ is equivalent to $(1-s_c)^2 + \sum_{i \neq c} s_i^2 \geq 0$, which is trivially true. Hence, we establish:
    \begin{equation} \label{eq:hessian_term}
        \norm{\vct{H}}_q \leq 4(1-s_c).
    \end{equation}

    Next, we bound the Rank-1 Norm $\norm{\vb \vb^\top}$.
    We consider two cases for the norm definition:
    \begin{itemize}
        \item \textbf{Case I: Schatten p-norms.} The matrix $\vb \vb^\top$ is rank-1. Its only non-zero singular value is $\norm{\vb}_2^2$. Thus, for any $p \ge 1$, $\snorm{\vb\vb^\top}{} = \norm{\vb}_2^2$. Using the spectral norm property $\snorm{\Deltab}{\infty} \leq \snorm{\Deltab}{p}$ and $\norm{\xb}_2 \leq R$:
        \begin{align*}
        \snorm{\vb\vb^\top}{} =  \|\vb\|_2^2 \leq  \snorm{\Deltab}{\infty}^2 \|\xb\|_2^2 \leq \snorm{\Deltab}{}^2 \|\xb\|_2^2 \leq R^2 \snorm{\Deltab}{}^2 \,.
        \end{align*}
        \item \textbf{Case II: Entry-wise p-norms.}
        Using the consistency of vector and matrix norms:
        \[
            \norm{\vb \vb^\top} = \norm{\vb}_p^2 \leq (\norm{\Deltab} \norm{\xb}_*)^2 \leq R^2 \norm{\Deltab}^2.
        \]
    \end{itemize}
    In both cases, we obtain:
    \begin{equation} \label{eq:vec_term}
        \norm{\vb \vb^\top} \leq R^2 \norm{\Deltab}^2.
    \end{equation}

    \textbf{Conclusion.}
    Substituting \eqref{eq:hessian_term} and \eqref{eq:vec_term} back into \eqref{eq:holder_sep} yields the desired result:
    \[
        \xb^\top \Deltab^\top \vct{H}\Deltab \xb \leq 4(1-s_c) \cdot R^2 \norm{\Deltab}^2.
    \]
\end{proof}

\begin{lemma} \label{lem:unified_helper}(Lemma 15 in \citep{fan2025implicit}) 
    For any $\vb, \vb', \qb, \qb' \in \R^k$ and $c \in [k]$, the following inequalities hold:
    \begin{enumerate}[label=(\roman*)]
        \item $|\frac{\sfti{c}{\vb'}}{\sfti{c}{\vb}} - 1| \leq e^{2\lVert \vb -\vb'\rVert_{\infty}} - 1$
        \item $|\frac{1 - \sfti{c}{\vb'}}{1 - \sfti{c}{\vb}} - 1| \leq e^{2\lVert \vb - \vb' \rVert_{\infty}} - 1$
    \end{enumerate}
\end{lemma}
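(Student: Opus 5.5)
We prove both inequalities by working directly with the softmax formula. Set $\deltab := \vb' - \vb$ and $\epsilon := \|\deltab\|_\infty$, so every coordinate satisfies $|\deltab[j]| \le \epsilon$.

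\emph{Part (i).} Write
\[
\frac{\sfti{c}{\vb'}}{\sfti{c}{\vb}} = e^{\deltab[c]}\cdot \frac{\sum_j e^{\vb[j]}}{\sum_j e^{\vb[j]+\deltab[j]}}.
\]
The second factor is the reciprocal of a convex combination of the numbers $e^{\deltab[j]}$, with weights $\sfti{j}{\vb}$. Each such number lies in $[e^{-\epsilon}, e^{\epsilon}]$, so the second factor lies in $[e^{-\epsilon}, e^{\epsilon}]$. Multiplying by $e^{\deltab[c]}$ places the ratio in $[e^{-2\epsilon}, e^{2\epsilon}]$. We then use the elementary fact $1-e^{-2\epsilon} \le e^{2\epsilon}-1$ for $\epsilon \ge 0$, which gives $|\text{ratio} - 1| \le e^{2\epsilon} - 1$.

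\emph{Part (ii).} We use the same idea, observing that
\[
1-\sfti{c}{\vb} = \frac{\sum_{j\ne c} e^{\vb[j]}}{\sum_j e^{\vb[j]}}.
\]
The ratio $\frac{1-\sfti{c}{\vb'}}{1-\sfti{c}{\vb}}$ therefore splits into two factors:
\begin{itemize}
\item $\frac{\sum_{j\ne c} e^{\vb[j]+\deltab[j]}}{\sum_{j\ne c} e^{\vb[j]}}$, which is a weighted average of the $e^{\deltab[j]}$ over $j \ne c$ and hence lies in $[e^{-\epsilon}, e^{\epsilon}]$;
\item $\frac{\sum_j e^{\vb[j]}}{\sum_j e^{\vb[j]+\deltab[j]}}$, which lies in $[e^{-\epsilon}, e^{\epsilon}]$ exactly as in Part (i).
\end{itemize}
The product lies in $[e^{-2\epsilon}, e^{2\epsilon}]$, and the same elementary inequality concludes.

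If $k=1$, the quantity $1-\sfti{c}{\vb}$ vanishes and the ratio in (ii) is undefined. This case is excluded implicitly, since the setting has $k\ge2$ classes.

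No step presents a real obstacle; the only thing to get right is expressing each factor as a weighted average of exponentials.
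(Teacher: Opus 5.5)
Your proof is correct and takes essentially the same route as the paper: both expand the ratio with the explicit softmax formula and reduce everything to the fact that each $e^{\delta_j}$ lies in $[e^{-\epsilon},e^{\epsilon}]$, so that the relevant termwise ratios lie in $[e^{-2\epsilon},e^{2\epsilon}]$. The only difference is the last step. You first sandwich the ratio in $[e^{-2\epsilon},e^{2\epsilon}]$ via weighted averages and then apply $1-e^{-2\epsilon}\le e^{2\epsilon}-1$, whereas the paper writes the ratio minus one as a single fraction and bounds its numerator termwise using $|e^{x}-1|\le e^{|x|}-1$.
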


\begin{proof}
    We prove each inequality:

    (i) First, observe that
    \begin{align*}
        |\frac{\sfti{c}{\vb'}}{\sfti{c}{\vb}} - 1| &= |\frac{e^{v'_c}}{e^{v_c}} \frac{\sum_{i \in [k]} e^{v_i}}{\sum_{i \in [k]} e^{v'_i}} - 1| \\
        &= |\frac{\sum_{i \in [k]} e^{v'_c + v_i} - \sum_{i \in [k]} e^{v_c + v'_i}}{\sum_{i \in [k]} e^{v_c + v'_i}}| \\
        &\leq \frac{\sum_{i \in [k]} |e^{v'_c + v_i} - e^{v_c + v'_i}|}{\sum_{i \in [k]} e^{v_c + v'_i}}
    \end{align*}
    For any $i \in [k]$, we have $\frac{|e^{v'_c + v_i} - e^{v_c + v'_i}|}{e^{v_c + v'_i}} = | e^{v'_c - v_c + v_i - v'_i} - 1| \leq e^{|v'_c - v_c + v_i - v'_i|} - 1 \leq e^{2\lVert \vb -\vb'\rVert_{\infty}} - 1$. This implies $\sum_{i \in [k]}|e^{v'_c + v_i} - e^{v_c + v'_i}| \leq \bigl( e^{2\lVert \vb -\vb'\rVert_{\infty}} - 1\bigr) \sum_{i \in [k]} e^{v_c + v'_i}$, from which we obtain the desired inequality.

    (ii) For the second inequality:
    \begin{align*}
        |\frac{1 - \sfti{c}{\vb'}}{1 - \sfti{c}{\vb}} - 1| &= |\frac{1 - \frac{e^{v'_c}}{\sum_{i \in [k]} e^{v'_i}}}{ 1 - \frac{e^{v_c}}{\sum_{i \in [k]} e^{v_i}}} - 1| \\
        &= |\frac{(\sum_{j \in [k], j \neq c} e^{v'_j}) (\sum_{i \in [k]} e^{v_i})}{(\sum_{j \in [k], j \neq c} e^{v_j}) (\sum_{i \in [k]} e^{v'_i})} - 1| \\
        &= |\frac{\sum_{j \in [k], j \neq c} \sum_{i \in [k]} \bigl[ e^{v'_j + v_i} - e^{v_j + v'_i}\bigl]}{\sum_{j \in [k], j \neq c} \sum_{i \in [k]} e^{v_j + v'_i}}| \\
        &\leq \frac{ \sum_{j \in [k], j \neq c} \sum_{i \in [k]} |e^{v'_j + v_i} - e^{v_j + v'_i}|}{ \sum_{j \in [k], j \neq c} \sum_{i \in [k]} e^{v_j + v'_i}}
    \end{align*}
    For any $j \in [k]$, $j \neq c$, and $i \in [k]$, we have $\frac{|e^{v_j' + v_i} - e^{v_j + v_i'}|}{e^{v_j + v_i'}} \leq e^{2 \lVert \vb - \vb' \rVert_{\infty}} - 1$. This implies that $\sum_{j \in [k], j \neq c} \sum_{i \in [k]} |e^{v_j' + v_i} - e^{v_j + v_i'}| \leq (e^{2\lVert \vb - \vb' \rVert_{\infty}} - 1) \sum_{j \in [k], j \neq c} \sum_{i \in [k]} e^{v_j + v_i'}$, from which the result follows.

\end{proof}

\subsection{Lemmas for Loss and Proxy Function} 
\label{sec: G and L}

Lemma \ref{lem:G and gradient} shows that $\Gc(\W)$ upper and lower bound the dual norm of the loss gradient.

\begin{lemma}[$\Gc(\W)$ as proxy to the loss-gradient norm](Lemma 16 in \citep{fan2025implicit})
\label{lem:G and gradient}
Under Assumption \ref{ass:data_bound}. For any $\W \in \R^{k \times d}$, it holds that
    \[
    2R \cdot \Gc(\W) \geq \norm{\nabla\Lc(\W)}_{*} \geq \gamma\cdot \Gc(\W)\,.
    \]
\end{lemma}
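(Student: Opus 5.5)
Both inequalities follow from the gradient formula in Lemma~\ref{lem:CE logit loss properties} and the identity \eqref{eq:CE gradient inp}. We treat the upper and lower bounds separately. In each case we use the duality $\|\nabla\Lc(\W)\|_*=\max_{\|\A\|\le 1}\inp{\A}{-\nabla\Lc(\W)}$, where the sign is immaterial because the unit ball is symmetric.

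\textbf{Lower bound.} Let $\W^\star$ attain the maximum margin, so that $\|\W^\star\|\le 1$ and $(\eb_{y_i}-\eb_c)^\top\W^\star\xb_i\ge\gamma$ for all $i$ and all $c\neq y_i$. By Assumption~\ref{ass:sep}, $\gamma>0$. Take $\A=\W^\star$ in the duality formula and apply \eqref{eq:nabla_ineq_basic}:
\[
\|\nabla\Lc(\W)\|_*\ge \inp{\W^\star}{-\nabla\Lc(\W)}\ge \frac1n\sum_i (1-s_{iy_i})\min_{c\neq y_i}(\eb_{y_i}-\eb_c)^\top\W^\star\xb_i\ge \gamma\,\Gc(\W).
\]
This step is routine.

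\textbf{Upper bound.} By Lemma~\ref{lem:snorm dominate}, $\|\A\|_\infty\le\|\A\|\le 1$ for every feasible $\A$, so every entry of $\A$ is at most $1$ in absolute value. Hence each scalar $(\eb_{y_i}-\eb_c)^\top\A\xb_i=(\A[y_i,:]-\A[c,:])\xb_i$ is bounded by
\[
\|\A[y_i,:]-\A[c,:]\|_\infty\,\|\xb_i\|_1\le 2R,
\]
using Assumption~\ref{ass:data_bound}. Plugging this into \eqref{eq:CE gradient inp} gives
\[
\inp{\A}{-\nabla\Lc(\W)}\le \frac1n\sum_i\sum_{c\neq y_i}s_{ic}\cdot 2R=2R\,\Gc(\W),
\]
since $\sum_{c\neq y_i}s_{ic}=1-s_{iy_i}$. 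Taking the maximum over $\A$ completes the upper bound.

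An equivalent route is to bound the dual norm by the entrywise $\ell_1$ norm, $\|\cdot\|_*\le\|\cdot\|_1$, which also follows by duality from Lemma~\ref{lem:snorm dominate}. Then
\[
\|(\eb_{y_i}-\s_i)\xb_i^\top\|_1=\|\eb_{y_i}-\s_i\|_1\|\xb_i\|_1=2(1-s_{iy_i})\|\xb_i\|_1,
\]
and summing over $i$ gives the same bound.

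The only point requiring care is the reduction from an arbitrary entrywise or Schatten norm to the max-norm bound on the entries of $\A$. This reduction is exactly what Lemma~\ref{lem:snorm dominate} supplies, so no step should be a real obstacle.
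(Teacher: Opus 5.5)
Your proof is correct and follows the paper's. The lower bound is the same duality argument: you evaluate \eqref{eq:nabla_ineq_basic} at a max-margin matrix, which needs only $1-s_{iy_i}\ge 0$, not $\gamma>0$. For the upper bound, your ``equivalent route'' ($\|\cdot\|_*\le\|\cdot\|_1$ together with $\|(\mathbf{e}_{y_i}-\mathbf{s}_i)\mathbf{x}_i^\top\|_1=2(1-s_{iy_i})\|\mathbf{x}_i\|_1$) is exactly the paper's argument, and your primary route via \eqref{eq:CE gradient inp} and the entrywise bound $|\mathbf{A}[i,j]|\le 1$ is simply the dual form of that same estimate.
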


\begin{proof} First, we prove the lower bound. By duality and direct application of \eqref{eq:nabla_ineq_basic}
\begin{align*}
    \norm{\nabla \Lc(\W)}_{*} & = \max_{\norm{\A} \leq 1} \langle \A, -\nabla \Lc(\W)\rangle \\
    &\geq \max_{\norm{\A} \leq 1} \frac{1}{n} \sum_{i \in [n]} (1 - s_{iy_i}) \min_{c \neq y_i} (\e_{y_i} - \e_c)^\top \A \xb_i \\
    &\geq \frac{1}{n} \sum_{i \in [n]} (1 - s_{iy_i}) \cdot  \max_{\norm{\A} \leq 1}\min_{i \in [n],c \neq y_i} (\e_{y_i} - \e_c)^\top \A \xb_i.
\end{align*}
Second, for the upper bound, it holds by triangle inequality and Lemma \ref{lem:snorm dominate} that 
\[
\norm{\nabla \Lc(\W)}_{*} \leq \none{\nabla \Lc(\W)} \leq \frac{1}{n}\sum_{i\in[n]}\none{\nabla \ell_i(\W)}\,,
\]
where $\ell_i(\W)=-\log(\sfti{y_i}{\W\xb_i})$. Recall that
\[
\nabla\ell_i(\W) = -(\eb_y-\sfti{y_i}{\W\xb_i})\xb_i^\top,
\]
and, for two vectors $\vb,\ub$: $\none{\ub\vb^\top}=\|\ub\|_1\|\vb\|_1$. Combining these and noting that \[\|\eb_{y_i}-\sfti{y_i}{\W\xb_i}\|_1=2(1-s_{y_i})\] together with using the assumption $\|\xb_i\|\leq R$ yields the advertised upper bound.
\end{proof}

Built upon Lemma \ref{lem:G and gradient}, we obtain a simple bound on the loss difference at two points. 
\begin{lemma} \label{lem:l_fast_bound}(Lemma 17 in \citep{fan2025implicit})
    For any $\W, \W_0 \in \R^{k \times d}$, suppose that $\Lc(\W)$ is convex, we have
    \begin{align*}
        |\Lc(\W) - \Lc(\W_0)| \leq 2R \norm{\W - \W_0}. 
    \end{align*}
\end{lemma}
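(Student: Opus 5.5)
The plan is the standard Lipschitz argument: a one-variable integral of the gradient along the segment from $\W_0$ to $\W$, plus the norm-duality bound on the integrand. (Convexity of $\Lc$ is not actually needed; $\Lc$ is smooth and convexity only gives a shortcut.)

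\textbf{Step 1 (mean value).} Set $\W_\tau := \W_0 + \tau(\W-\W_0)$ for $\tau\in[0,1]$. By the fundamental theorem of calculus,
\[
\Lc(\W)-\Lc(\W_0)=\int_0^1 \inp{\nabla\Lc(\W_\tau)}{\W-\W_0}\,d\tau .
\]

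\textbf{Step 2 (duality).} For every $\tau$, Hölder's inequality for the pair $\norm{\cdot}$, $\norm{\cdot}_*$ gives
\[
|\inp{\nabla\Lc(\W_\tau)}{\W-\W_0}|\le \norm{\nabla\Lc(\W_\tau)}_*\,\norm{\W-\W_0}.
\]

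\textbf{Step 3 (uniform gradient bound).} By the upper bound in Lemma~\ref{lem:G and gradient},
\[
\norm{\nabla\Lc(\W_\tau)}_*\le 2R\,\Gc(\W_\tau).
\]
Since $\Gc(\W_\tau)=\frac1n\sum_i(1-s_{iy_i})\le 1$ for softmax probabilities, we get $\norm{\nabla\Lc(\W_\tau)}_*\le 2R$ uniformly in $\tau$.

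\textbf{Step 4 (conclude).} Integrating the bound from Steps 2–3 over $\tau\in[0,1]$ gives $|\Lc(\W)-\Lc(\W_0)|\le 2R\norm{\W-\W_0}$.

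\emph{Alternative via convexity.} The two one-sided inequalities $\Lc(\W)-\Lc(\W_0)\ge\inp{\nabla\Lc(\W_0)}{\W-\W_0}$ and $\Lc(\W_0)-\Lc(\W)\ge\inp{\nabla\Lc(\W)}{\W_0-\W}$ give the same conclusion after applying Steps 2–3 at the endpoints $\W_0$ and $\W$.

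There is no real obstacle. The only points to check are that the dual-norm bound of Lemma~\ref{lem:G and gradient} applies for every entry-wise or Schatten norm, which it does via Lemma~\ref{lem:snorm dominate}, and that $\Gc\le 1$.
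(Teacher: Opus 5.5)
Your proof is correct. The ``alternative via convexity'' you sketch is exactly the paper's proof. The paper applies the first-order convexity inequality at each endpoint, then H\"older with the dual norm, then the upper bound $\|\nabla\mathcal{L}(\mathbf{W}_0)\|_*\le 2R\,\mathcal{G}(\mathbf{W}_0)$ from Lemma~\ref{lem:G and gradient}.

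Your main route replaces the convexity inequality with the fundamental theorem of calculus along the segment. This needs the gradient bound at every point of the segment rather than only at the two endpoints. Since $\mathcal{G}\le 1$ everywhere, that costs nothing, and it removes the convexity hypothesis altogether. The hypothesis is automatic here anyway, because cross-entropy composed with a linear map is convex, so the gain is only in generality.

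You also make explicit the step $\mathcal{G}\le 1$. The paper uses this silently when it passes from $2R\,\mathcal{G}(\mathbf{W}_0)$ to $2R$.
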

\begin{proof} By convexity of $\Lc$, we have
    \begin{align*}
        \Lc(\W_0) - \Lc(\W) \leq \langle \nabla \Lc(\W_0), \W_0 - \W \rangle \leq \norm{\nabla \Lc(\W_0)}_{*}  \norm{\W_0 - \W} \leq 2R \norm{\W_0 - \W} \,,
    \end{align*}
    where the last inequality is by Lemma \ref{lem:G and gradient}. Similarly, we can also show that $\Lc(\W) - \Lc(\W_0) \leq  2R \norm{\W_0 - \W}$.
\end{proof}

Lemma \ref{lem:G and L} shows the close relationships between $\Gc(\W)$ and $\Lc(\W)$. The proxy $\Gc(\W)$ not only lower bounds $\Lc(\W)$, but also upper bounds $\Lc(\W)$ up to a factor depending on $\Lc(\W)$. Moreover, the rate of convergence $\frac{\Gc(\W)}{\Lc(\W)}$ depends on the rate of decrease in the loss.

\begin{lemma}[$\Gc(\W)$ as proxy to the loss]\label{lem:G and L}(Lemma 18 in \citep{fan2025implicit})
    Let $\W \in \R^{k \times d}$, we have
    \begin{enumerate}[label=(\roman*)]
    \item $1\geq \frac{\Gc(\W)}{\Lc(\W)} \geq 1-\frac{n\Lc(\W)}{2} $
    \item  Suppose that $\W$ satisfies $\Lc (\W) \leq \frac{\log 2}{n}$ or $\Gc(\W) \leq \frac{1}{2n}$, then $\Lc(\W) \leq 2 \Gc(\W).$
    \end{enumerate}
\end{lemma}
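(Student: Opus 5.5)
The plan is to reduce everything to scalar inequalities on the per-sample quantities $s_i := \mathbb{S}_{y_i}(\W\xb_i)\in(0,1]$. By definition, $\Lc(\W) = \frac{1}{n}\sum_i \ell_i$ with $\ell_i := -\log s_i \ge 0$, and $\Gc(\W) = \frac{1}{n}\sum_i (1-s_i)$. The two parts of the lemma then follow from elementary bounds on $1-e^{-x}$ combined with a bound on the individual $\ell_i$ in terms of the total loss $n\Lc(\W)$.

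\textbf{Part (i).} The upper bound $\Gc\le\Lc$ follows termwise. Writing $1-s_i = 1-e^{-\ell_i}$, the inequality $1-e^{-x}\le x$ for $x\ge 0$ gives $1-s_i\le \ell_i$; averaging yields $\Gc(\W)\le\Lc(\W)$.

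For the lower bound we use $1-e^{-x}\ge x - x^2/2$ for $x\ge 0$, which holds since the difference is $0$ at $x=0$ and its derivative $e^{-x}-1+x$ is nonnegative. This gives
\[
\Gc(\W)\ge \Lc(\W) - \frac{1}{2n}\sum_i \ell_i^2 .
\]
Each $\ell_i\ge 0$ and $\sum_i \ell_i = n\Lc(\W)$, so $\sum_i\ell_i^2 \le (\max_i \ell_i)\sum_i\ell_i \le (n\Lc)\cdot n\Lc$. Hence $\Gc \ge \Lc - \frac{n\Lc^2}{2}$, and dividing by $\Lc>0$ gives $\Gc/\Lc \ge 1-\frac{n\Lc}{2}$. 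If $\Lc=0$ the ratio is vacuous and the statement is read as $\Gc=\Lc=0$.

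\textbf{Part (ii), loss hypothesis.} If $\Lc\le \frac{\log 2}{n}$, part (i) gives $\Gc/\Lc\ge 1-\frac{\log 2}{2} > \frac12$, since $\log 2<1$. Therefore $\Lc\le 2\Gc$.

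\textbf{Part (ii), proxy hypothesis.} Suppose instead $\Gc\le\frac{1}{2n}$. Then each $1-s_i\le n\Gc\le \frac12$, so $s_i\ge \frac12$ for every $i$. For $u:=1-s_i\in[0,\frac12]$ we have $\ell_i = -\log(1-u)$, and we claim $-\log(1-u)\le 2u$ on this interval. The function $2u+\log(1-u)$ is concave and vanishes at $u=0$. At $u=\frac12$ it equals $1-\log 2>0$. A concave function that is nonnegative at both endpoints of an interval is nonnegative on the whole interval, which proves the claim. Summing $\ell_i\le 2(1-s_i)$ over $i$ gives $\Lc\le 2\Gc$.

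\textbf{Main obstacle.} The only genuinely non-termwise step is controlling $\sum_i\ell_i^2$ in part (i). The trick is to bound $\max_i\ell_i$ by the total $n\Lc$; this is exactly where the factor $n$ in $1-\frac{n\Lc}{2}$ comes from. The remaining steps are routine one-variable calculus.
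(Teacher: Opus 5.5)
Your proposal is correct and follows the paper's proof essentially step for step. Both use the scalar inequalities $1-e^{-x}\le x$ and $e^{-u}\le 1-u+u^2/2$, and the same bound $\sum_i \ell_i^2\le (n\Lc(\W))^2$; you obtain it via $\max_i\ell_i$, the paper via $\sum_i a_i^2\le(\sum_i|a_i|)^2$. Part (ii) uses the same reduction to $s_i\ge\frac12$: you show nonnegativity of $2u+\log(1-u)$ on $[0,\frac12]$ by concavity, while the paper uses monotonicity of $h(s)=2(1-s)-\log(1/s)$ on $[\frac12,1]$.
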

\begin{proof}
    \textbf{(i)} Let $s_i \coloneqq \sfti{\yi}{\W\xb_i}$. Then $\Lc(\W)=\frac{1}{n}\sum_{i}\log(1/s_i)$ and $\Gc(\W)=\frac{1}{n}\sum_{i}(1-s_i)$.
    
    \textit{Upper Bound ($\frac{\Gc}{\Lc} \le 1$):} Using the inequality $\log x \le x - 1$ with $x = 1/s_i$, we have $\log(1/s_i) \ge 1 - s_i$. Summing over $i$ and dividing by $n$ yields $\Lc(\W) \ge \Gc(\W)$.

    \textit{Lower Bound:} We first establish the scalar inequality $1-s \ge \log(1/s) - \frac{1}{2}\log^2(1/s)$ for $s \in (0,1]$. Letting $u = \log(1/s) \ge 0$, this is equivalent to $e^{-u} \le 1 - u + \frac{1}{2}u^2$, which holds by the second-order Taylor expansion of $e^{-u}$.
    Applying this to each sample:
    \begin{align*}
        \Gc(\W) &= \frac{1}{n}\sum_{i=1}^n (1-s_i) 
        \ge \frac{1}{n}\sum_{i=1}^n \left( \log(1/s_i) - \frac{1}{2}\log^2(1/s_i) \right) \\
        &= \Lc(\W) - \frac{1}{2n} \sum_{i=1}^n \left(\log(1/s_i)\right)^2.
    \end{align*}
    Using the inequality $\sum a_i^2 \le (\sum |a_i|)^2$, we have $\sum_{i} \log^2(1/s_i) \le \left(\sum_{i} \log(1/s_i)\right)^2 = (n\Lc(\W))^2$. Substituting this back:
    \[
        \Gc(\W) \ge \Lc(\W) - \frac{1}{2n}(n\Lc(\W))^2 = \Lc(\W)\left(1 - \frac{n\Lc(\W)}{2}\right).
    \]

    \textbf{(ii)} We prove $\Lc(\W) \le 2\Gc(\W)$ under either condition.
    
    \textit{Case 1: $\Lc(\W) \le \frac{\log 2}{n}$.} Since $\log 2 < 1$, we have $\Lc(\W) < 1/n$. Thus, $1 - \frac{n\Lc(\W)}{2} > 1/2$. Applying the lower bound from (i):
    \[
        \frac{\Gc(\W)}{\Lc(\W)} \ge 1 - \frac{n\Lc(\W)}{2} > \frac{1}{2} \implies \Lc(\W) < 2\Gc(\W).
    \]

    \textit{Case 2: $\Gc(\W) \le \frac{1}{2n}$.} For any $i \in [n]$, observe that $1 - s_i \le \sum_{j} (1-s_j) = n\Gc(\W) \le 1/2$, which implies $s_i \ge 1/2$.
    Consider the function $h(s) = 2(1-s) - \log(1/s)$. Its derivative is $h'(s) = -2 + 1/s$. For $s \in [1/2, 1]$, $h'(s) \le 0$, so $h(s)$ is non-increasing. Since $h(1)=0$, we have $h(s) \ge 0$ for all $s \in [1/2, 1]$.
    Therefore, $\log(1/s_i) \le 2(1-s_i)$ holds for all $i$. Averaging over $n$ samples yields $\Lc(\W) \le 2\Gc(\W)$.
\end{proof}

Lemma  \ref{lem:sep} shows that the data becomes separable when the loss is small. It is used in deriving the lower bound on the un-normalized margin. 
\begin{lemma} [Low $\Lc(\W)$ implies separability](Lemma 19 in \citep{fan2025implicit}) \label{lem:sep}
    Suppose that there exists $\W \in \R^{k\times d}$ such that $\Lc (\W) \leq \frac{\log 2}{n}$, then we have
    \begin{align}
        (\eb_{y_i} - \e_c)^\top \W \xb_i \geq 0, \quad \text{for all $i \in [n]$ and for all $c \in [k]$ such that $c \neq y_i$}. \label{eq:sep}
    \end{align}
\end{lemma}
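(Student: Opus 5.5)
The plan is to argue by contraposition, one sample at a time. Suppose some pair $(i,c)$ with $c\neq y_i$ has $(\eb_{y_i}-\eb_c)^\top \W\xb_i<0$. We will show that the $i$-th loss term alone already exceeds $\log 2$. Since every other term of $\Lc$ is nonnegative, this forces $\Lc(\W)>\frac{\log 2}{n}$, contradicting the hypothesis. The only ingredient is the explicit softmax formula, so no earlier lemma is strictly needed. For orientation, the contrapositive of Lemma~\ref{lem:G and L}(ii) gives the same kind of conclusion for $\Gc$.

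First, write the per-sample loss with logits $\ellb_i=\W\xb_i$:
\[
\ell_i=\log\Bigl(1+\sum_{c'\neq y_i} e^{\ellb_i[c']-\ellb_i[y_i]}\Bigr).
\]
Next, drop all summands in the sum except $c'=c$; they are positive, so this only lowers the expression. This yields
\[
\ell_i\ge \log\bigl(1+e^{-(\eb_{y_i}-\eb_c)^\top\W\xb_i}\bigr).
\]
Now use the assumed negative margin. It makes the exponent strictly positive, so the right-hand side is strictly greater than $\log 2$. Therefore
\[
\Lc(\W)\ge \frac1n\ell_i>\frac{\log 2}{n}.
\]
This contradicts the hypothesis, and so \eqref{eq:sep} holds for every $i$ and every $c\neq y_i$.

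The main obstacle is the boundary case, and the strict versus non-strict inequalities have to be tracked carefully there. If $\Lc(\W)=\frac{\log 2}{n}$ exactly, the chain above still gives a strict inequality $\Lc(\W)>\frac{\log 2}{n}$ whenever some margin is negative, so the contradiction goes through. At that boundary, zero margins can genuinely occur (for example, $n=1$ with $k=2$ and $\W=0$). This is exactly why the conclusion is stated as $\ge 0$ rather than $>0$, and no further work is needed.
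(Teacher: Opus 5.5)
Your argument is correct and is essentially the paper's proof run in contrapositive form. The paper fixes $i$ and uses $\log\bigl(1+\sum_{c\neq y_i}e^{-(\eb_{y_i}-\eb_c)^\top\W\xb_i}\bigr)\le n\Lc(\W)\le\log 2$, which rests on the nonnegativity of the other per-sample losses; it then bounds the maximum summand by the sum, which is exactly your ``drop all but one summand'' step. Your orientation remark about Lemma~\ref{lem:G and L}(ii) is unnecessary, since its contrapositive does not by itself give a sign condition on the margins, but nothing in your proof relies on it.
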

\begin{proof} We rewrite the loss into the form:
\begin{align*}
    \Lc(\W) = - \frac{1}{n} \sum_{i \in [n]} \log(\frac{e^{\ellb_{i}[y_i]}}{\sum_{c \in [k]} e^{\ellb_i[c]}}) = \frac{1}{n} \sum_{i \in [n]} \log(1 + \sum_{c\neq y_i} e^{-(\ellb_i[y_i] - \ellb_i[c])}).
\end{align*}
Fix any $i\in [n]$, by the assumption that $\Lc (\W) \leq \frac{\log 2}{n}$, we have the following:
\begin{align*}
    \log(1 + \sum_{c\neq y_i} e^{-(\ellb_i[y_i] - \ellb_i[c])} )\leq n \Lc(\W) \leq \log(2).
\end{align*}
This implies:
\begin{align*}
    e^{- \min_{c\neq y_i} (\ellb_i[y_i] - \ellb_i[c])} = \max_{c \neq y_i} e^{-(\ellb_i[y_i] - \ellb_i[c]) \leq }  \leq \sum_{c\neq y_i} e^{-(\ellb_i[y_i] - \ellb_i[c])} \leq 1.
\end{align*}
After taking $\log$ on both sides, we obtain the following: $\ellb_i[y_i] - \ellb_i[c] = (\eb_{y_i} - \e_c)^\top \W \xb_i \geq 0$ for any $c \in [k]$ such that $c \neq y_i$. 
\end{proof}

\begin{lemma}[Stability of Proxy Function]
\label{lem:G_ratio_general}
    For any two weight matrices $\W, \W' \in \mathbb{R}^{k \times d}$, assuming the data satisfies $\norm{\xb_i}_1 \le R$, the ratio of the proxy function values is bounded by:
    \[
        \frac{\Gc(\W')}{\Gc(\W)} \leq e^{2 R \ninf{\W' - \W}} \leq e^{2 R \norm{\W' - \W}},
    \]
    where $\norm{\cdot}$ denotes any entry-wise or Schatten $p$-norm ($p \ge 1$), and $\ninf{\cdot}$ is the entry-wise max-norm.
\end{lemma}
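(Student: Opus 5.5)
The plan is to reduce the ratio of proxies to a termwise comparison and then invoke Lemma~\ref{lem:unified_helper}(ii). Write $\Gc(\W)=\frac{1}{n}\sum_{i}(1-\sfti{y_i}{\W\xb_i})$, and set $\vb_i:=\W\xb_i$ and $\vb_i':=\W'\xb_i$. Applying Lemma~\ref{lem:unified_helper}(ii) with $c=y_i$ gives, for every $i$,
\[
1-\sfti{y_i}{\vb_i'} \le e^{2\|\vb_i-\vb_i'\|_\infty}\bigl(1-\sfti{y_i}{\vb_i}\bigr).
\]
The term $1-\sfti{y_i}{\vb_i}$ is strictly positive, since softmax entries lie in $(0,1)$ whenever $k\ge 2$. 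So the relative bound is legitimate, and it turns into a multiplicative one-sided bound.

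Next I control the logit perturbation uniformly in $i$. For each class $c$,
\[
|(\vb_i-\vb_i')[c]| = \Bigl|\sum_j (\W-\W')[c,j]\,\xb_i[j]\Bigr| \le \ninf{\W-\W'}\,\|\xb_i\|_1 \le R\,\ninf{\W'-\W},
\]
using Assumption~\ref{ass:data_bound}. Hence every term satisfies $1-\sfti{y_i}{\vb_i'}\le e^{2R\ninf{\W'-\W}}(1-\sfti{y_i}{\vb_i})$. Because the factor does not depend on $i$, summing over $i$ and dividing by $n$ gives $\Gc(\W')\le e^{2R\ninf{\W'-\W}}\Gc(\W)$. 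Dividing by $\Gc(\W)>0$ then yields the first inequality.

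The second inequality follows directly from Lemma~\ref{lem:snorm dominate}, which gives $\ninf{\W'-\W}\le\norm{\W'-\W}$ for any entry-wise or Schatten $p$-norm, combined with the monotonicity of the exponential.

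I do not expect a real obstacle. The only point that needs care is that the $\ell_\infty$ logit bound should go through the $\ell_1$ norm of $\xb_i$ paired with the max-norm of the weight difference, not through operator norms, so that the constant comes out as exactly $R$ with no dimension factor.
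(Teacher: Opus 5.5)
Your proposal is correct and follows essentially the same route as the paper: Lemma~\ref{lem:unified_helper}(ii) with $c=y_i$, the H\"older bound $\|(\W'-\W)\xb_i\|_\infty\le R\,\ninf{\W'-\W}$ via the $\ell_1$ norm of $\xb_i$, summing over samples, and Lemma~\ref{lem:snorm dominate} for the second inequality. You also note explicitly that $1-\sfti{y_i}{\W\xb_i}>0$, which justifies dividing by $\Gc(\W)$; the paper's proof uses this without stating it.
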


\begin{proof}
    The second inequality follows from the norm dominance relationship $\ninf{\A} \leq \norm{\A}$ established in Lemma \ref{lem:snorm dominate}. We focus on proving the first inequality.
    
    Recall the definition $\Gc(\W) = \frac{1}{n} \sum_{i \in [n]} \bigl( 1 - \sfti{y_i}{\W\xb_i} \bigr)$.
    Let $\vb_i = \W \xb_i$ and $\vb'_i = \W' \xb_i$ be the logit vectors for the $i$-th sample.
    From the properties of the softmax function (specifically Lemma \ref{lem:unified_helper} (ii)), for any class index $c$, the ratio of the complements of softmax probabilities satisfies:
    \begin{align*}
        \frac{1 - \sfti{c}{\vb'_i}}{1 - \sfti{c}{\vb_i}} \leq e^{2 \lVert \vb'_i - \vb_i \rVert_{\infty}}.
    \end{align*}
    We now bound the exponent term $\lVert \vb'_i - \vb_i \rVert_{\infty}$. By definition, $\vb'_i - \vb_i = (\W' - \W) \xb_i$.
    Consider the $j$-th component of this difference vector, given by $\eb_j^\top (\W' - \W) \xb_i$. Using Hölder's inequality with the assumption $\norm{\xb_i}_1 \leq R$:
    \[
        |(\vb'_i - \vb_i)_j| = |\eb_j^\top (\W' - \W) \xb_i| \leq \norm{\eb_j^\top (\W' - \W)}_{\infty} \norm{\xb_i}_1.
    \]
    Note that $\norm{\eb_j^\top (\W' - \W)}_{\infty}$ represents the maximum absolute value in the $j$-th row of the difference matrix, which is naturally upper bounded by the global matrix max-norm $\ninf{\W' - \W}$. Therefore:
    \[
        |(\vb'_i - \vb_i)_j| \leq \ninf{\W' - \W} R.
    \]
    Since this bound holds for all components $j \in [k]$, we have:
    \[
        \lVert \vb'_i - \vb_i \rVert_{\infty} \leq R \ninf{\W' - \W}.
    \]
    Substituting this back into the softmax ratio inequality with $c=y_i$:
    \[
        1 - \sfti{y_i}{\W'\xb_i} \leq e^{2 R \ninf{\W' - \W}} \bigl( 1 - \sfti{y_i}{\W\xb_i} \bigr).
    \]
    Summing over all samples $i \in [n]$ and dividing by $n$:
    \begin{align*}
        \Gc(\W') = \frac{1}{n} \sum_{i \in [n]} \bigl( 1 - \sfti{y_i}{\W'\xb_i} \bigr) 
        &\leq e^{2 R \ninf{\W' - \W}} \frac{1}{n} \sum_{i \in [n]} \bigl( 1 - \sfti{y_i}{\W\xb_i} \bigr) \\
        &= e^{2 R \ninf{\W' - \W}} \Gc(\W).
    \end{align*}
    Rearranging the terms yields the desired result.
\end{proof}

\begin{lemma}[Gradient Stability via Proxy Function] \label{lem:grad_stability}
    For any two weight matrices $\W, \W' \in \mathbb{R}^{k \times d}$, let $\Deltab = \W' - \W$. Suppose the data satisfies $\norm{\xb_i}_1 \le R$. Then, the entry-wise 1-norm of the gradient difference is bounded by:
    \[
        \none{\nabla \Lc(\W') - \nabla \Lc(\W)} \leq \frac{1}{n}\sum_{i=1}^n\none{\nabla \ell_i(\W') - \nabla \ell_i(\W)} \leq 2R \left( e^{2R \ninf{\Deltab}} - 1 \right) \Gc(\W).
    \]
\end{lemma}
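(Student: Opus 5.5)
The plan is to work per sample and then average. The first inequality is just the triangle inequality, since $\nabla\Lc=\frac1n\sum_i\nabla\ell_i$. For the second, recall from Lemma~\ref{lem:CE logit loss properties} that $\nabla\ell_i(\W)=-(\eb_{y_i}-\s_i)\xb_i^\top$, where $\s_i=\sft{\W\xb_i}$, and write $\s_i'=\sft{\W'\xb_i}$. Then
\[
\nabla \ell_i(\W')-\nabla\ell_i(\W)=(\s_i'-\s_i)\xb_i^\top .
\]
Using the rank-one identity $\none{\ub\vb^\top}=\|\ub\|_1\|\vb\|_1$ together with $\|\xb_i\|_1\le R$, it suffices to prove
\[
\|\s_i'-\s_i\|_1\le 2\bigl(e^{2R\ninf{\Deltab}}-1\bigr)\bigl(1-s_{iy_i}\bigr).
\]
Averaging over $i$ then gives exactly $2R(e^{2R\ninf{\Deltab}}-1)\Gc(\W)$.

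To bound $\|\s_i'-\s_i\|_1$, I would split into the label coordinate and the off-label coordinates. As in the proof of Lemma~\ref{lem:G_ratio_general}, the logits satisfy $\|\W'\xb_i-\W\xb_i\|_\infty\le R\ninf{\Deltab}$; call this $\delta$.

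For the label coordinate, $|s'_{iy_i}-s_{iy_i}|=|(1-s'_{iy_i})-(1-s_{iy_i})|$. Lemma~\ref{lem:unified_helper}(ii) bounds this by $(e^{2\delta}-1)(1-s_{iy_i})$.

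For $c\neq y_i$, Lemma~\ref{lem:unified_helper}(i) gives $|s'_{ic}-s_{ic}|\le (e^{2\delta}-1)s_{ic}$. Summing over $c\ne y_i$ and using $\sum_{c\neq y_i}s_{ic}=1-s_{iy_i}$ yields another $(e^{2\delta}-1)(1-s_{iy_i})$.

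Adding the two contributions gives the factor $2$. Monotonicity of $e^{2x}-1$ in $x$ lets us replace $\delta$ by $R\ninf{\Deltab}$.

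The only delicate point is making sure every term is controlled by $1-s_{iy_i}$ rather than by $1$; the label coordinate in particular needs this. Using part (ii) of Lemma~\ref{lem:unified_helper} for the label coordinate, instead of part (i), is what achieves this. Everything else is routine bookkeeping.
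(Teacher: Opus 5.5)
Your proposal is correct and follows essentially the same route as the paper. The paper likewise applies the triangle inequality, factors out $\|\xb_i\|_1\le R$, bounds the logit shift by $R\ninf{\Deltab}$, and treats the label coordinate with Lemma~\ref{lem:unified_helper}(ii) and the off-label coordinates with Lemma~\ref{lem:unified_helper}(i), which gives the factor $2(e^{2R\ninf{\Deltab}}-1)(1-s_{iy_i})$ before averaging.
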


\begin{proof}
    Let $\vb_i = \W \xb_i$ and $\vb'_i = \W' \xb_i$ denote the logits for the $i$-th sample. The gradient of the loss at $\W$ is given by $\nabla \Lc(\W) = \frac{1}{n} \sum_{i \in [n]} (\sft{\vb_i} - \eb_{y_i}) \xb_i^\top$. We proceed in three steps.
    We consider the entry-wise 1-norm of the gradient difference. By the triangle inequality:
    \begin{align*}
        \none{\nabla \Lc(\W') - \nabla \Lc(\W)} 
        &= \sum_{c=1}^k \sum_{j=1}^d \left| \frac{1}{n} \sum_{i=1}^n (\sfti{c}{\vb'_i} - \sfti{c}{\vb_i}) x_{ij} \right| \\
        &\leq \frac{1}{n} \sum_{i=1}^n \sum_{c=1}^k \sum_{j=1}^d |(\sfti{c}{\vb'_i} - \sfti{c}{\vb_i})x_{ij}| = \frac{1}{n}\sum_{i=1}^n\none{\nabla \ell_i(\W') - \nabla \ell_i(\W)}\\
        &= \frac{1}{n} \sum_{i=1}^n \sum_{c=1}^k |\sfti{c}{\vb'_i} - \sfti{c}{\vb_i}| \cdot \underbrace{\sum_{j=1}^d |x_{ij}|}_{\norm{\xb_i}_1} \\
        &\leq \frac{R}{n} \sum_{i=1}^n \underbrace{\sum_{c=1}^k |\sfti{c}{\vb'_i} - \sfti{c}{\vb_i}|}_{\text{Sum of probability shifts}}.
    \end{align*}

    Then, we analyze the exponent term appearing in the softmax perturbation bounds. By the definition of $\Deltab$ and the data bound:
    \[
        \norm{\vb'_i - \vb_i}_\infty = \norm{(\W' - \W)\xb_i}_\infty \leq \ninf{\Deltab} \norm{\xb_i}_1 \leq R \ninf{\Deltab}.
    \]
    Now, we split the sum over classes $c \in [k]$ into the target class $y_i$ and non-target classes $c \neq y_i$:
    
    \textit{(i) Target class $c = y_i$:}
    Using Lemma \ref{lem:unified_helper} (ii):
    \begin{align*}
        |\sfti{y_i}{\vb'_i} - \sfti{y_i}{\vb_i}| 
        &= |(1 - \sfti{y_i}{\vb_i}) - (1 - \sfti{y_i}{\vb'_i})| \\
        &= (1 - \sfti{y_i}{\vb_i}) \left| 1 - \frac{1 - \sfti{y_i}{\vb'_i}}{1 - \sfti{y_i}{\vb_i}} \right| \\
        &\leq (1 - \sfti{y_i}{\vb_i}) \left( e^{2\norm{\vb'_i - \vb_i}_\infty} - 1 \right) 
        \leq (1 - \sfti{y_i}{\vb_i}) \left( e^{2R\ninf{\Deltab}} - 1 \right).
    \end{align*}

    \textit{(ii) Non-target classes $c \neq y_i$:}
    Using Lemma \ref{lem:unified_helper} (i):
    \begin{align*}
        |\sfti{c}{\vb'_i} - \sfti{c}{\vb_i}| 
        &= \sfti{c}{\vb_i} \left| \frac{\sfti{c}{\vb'_i}}{\sfti{c}{\vb_i}} - 1 \right| \\
        &\leq \sfti{c}{\vb_i} \left( e^{2\norm{\vb'_i - \vb_i}_\infty} - 1 \right) 
        \leq \sfti{c}{\vb_i} \left( e^{2R\ninf{\Deltab}} - 1 \right).
    \end{align*}

    Summing these parts together for a single sample $i$, and noting that $\sum_{c \neq y_i} \sfti{c}{\vb_i} = 1 - \sfti{y_i}{\vb_i}$:
    \begin{align*}
        \sum_{c=1}^k |\sfti{c}{\vb'_i} - \sfti{c}{\vb_i}| 
        &= |\sfti{y_i}{\vb'_i} - \sfti{y_i}{\vb_i}| + \sum_{c \neq y_i} |\sfti{c}{\vb'_i} - \sfti{c}{\vb_i}| \\
        &\leq \left( e^{2R\ninf{\Deltab}} - 1 \right) \left[ (1 - \sfti{y_i}{\vb_i}) + \sum_{c \neq y_i} \sfti{c}{\vb_i} \right] \\
        &= 2 \left( e^{2R\ninf{\Deltab}} - 1 \right) (1 - \sfti{y_i}{\vb_i}).
    \end{align*}

    Substitute the result from back into the inequality:
    \begin{align*}
        \none{\nabla \Lc(\W') - \nabla \Lc(\W)} 
        &\leq \frac{R}{n} \sum_{i=1}^n 2 \left( e^{2R\ninf{\Deltab}} - 1 \right) (1 - \sfti{y_i}{\vb_i}) \\
        &= 2R \left( e^{2R\ninf{\Deltab}} - 1 \right) \cdot \underbrace{\frac{1}{n} \sum_{i=1}^n (1 - \sfti{y_i}{\vb_i})}_{\Gc(\W)}.
    \end{align*}
    This completes the proof.
\end{proof}

\subsection{Lemmas for mini-batch algorithm}

\begin{lemma}[Bound for Mini-batch Gradient Noise]
\label{lem:noise_bound}
    Consider the mini-batch gradient $\nabla \Lc_{\mathcal{B}}(\W) = \frac{1}{b} \sum_{i \in \mathcal{B}} \nabla \ell_i(\W)$ computed on a batch $\mathcal{B}$ of size $b$, and the full gradient $\nabla \Lc(\W)$. Let $m = n/b$ be an integer. The gradient error matrix $\Xib \coloneqq \nabla \Lc_{\mathcal{B}}(\W) - \nabla \Lc(\W)$ satisfies the following bound with respect to the entry-wise 1-norm:
    \[
        \none{\Xib} \leq 2(m-1) R \Gc(\W).
    \]
    Consequently, for any entry-wise or Schatten $p$-norm $\norm{\cdot}$, we have $\norm{\Xib} \leq 2(m-1) R \Gc(\W)$.
\end{lemma}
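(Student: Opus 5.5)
Write the error as a weighted sum of per-sample gradients and bound each one in the entry-wise $\ell_1$ norm, as in Lemma~\ref{lem:G and gradient}. Since $\nabla \Lc(\W) = \frac{1}{n}\sum_{i\in[n]}\nabla\ell_i(\W)$ and $n=mb$,
\[
\Xib = \frac{1}{b}\sum_{i\in\mathcal{B}}\nabla\ell_i(\W) - \frac{1}{mb}\sum_{i\in[n]}\nabla\ell_i(\W) = \frac{m-1}{n}\sum_{i\in\mathcal{B}}\nabla\ell_i(\W) - \frac{1}{n}\sum_{i\notin\mathcal{B}}\nabla\ell_i(\W).
\]
Each in-batch sample therefore carries coefficient $(m-1)/n$, and each out-of-batch sample carries coefficient $1/n$ in absolute value.

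Apply the triangle inequality in $\none{\cdot}$. From the proof of Lemma~\ref{lem:G and gradient}, $\none{\nabla\ell_i(\W)} = \|\eb_{y_i}-\s_i\|_1\|\xb_i\|_1 \le 2R(1-s_{iy_i})$. Hence
\[
\none{\Xib} \le \frac{2R}{n}\Big[(m-1)\sum_{i\in\mathcal{B}}(1-s_{iy_i}) + \sum_{i\notin\mathcal{B}}(1-s_{iy_i})\Big].
\]

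Because $m\ge 1$, we cannot simply take $(m-1)$ as the maximal coefficient: when $m=1$ the out-of-batch coefficient $1$ exceeds $m-1=0$. That case is trivial, though, since $\mathcal{B}=[n]$ and $\Xib=0$. For $m\ge 2$ we have $\max(m-1,1)=m-1$, so the bracket is at most $(m-1)\sum_{i\in[n]}(1-s_{iy_i}) = (m-1)\,n\,\Gc(\W)$. This gives $\none{\Xib}\le 2(m-1)R\,\Gc(\W)$.

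A sharper route first uses $\sum_{i\in[n]}\nabla\ell_i$ to rewrite $\Xib$ with centered terms. The plan does not need this; the only subtle point is the case split at $m=1$.

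Finally, the claim for a general norm follows from Lemma~\ref{lem:snorm dominate}: $\norm{\Xib}\le\none{\Xib}$ for every entry-wise or Schatten $p$-norm with $p\ge1$. No step looks like a real obstacle. The only care needed is the coefficient bookkeeping, in particular confirming that $n=mb$ produces the $(m-1)/n$ weight on in-batch samples.
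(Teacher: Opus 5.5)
Your proposal is correct and is essentially the paper's argument. The paper reaches the same coefficients, $\frac{m-1}{n}$ on in-batch samples and $\frac{1}{n}$ on out-of-batch samples, through the finite-population identity $\Xib = \frac{m-1}{m}(\nabla\Lc_{\mathcal{B}} - \nabla\Lc_{\mathcal{B}^c})$, then uses the same per-sample bound $2R(1-s_{iy_i})$, the same $m=1$ versus $m\ge 2$ case split, and Lemma~\ref{lem:snorm dominate}. Your direct expansion is slightly cleaner because it never divides by $n-b$, which is zero when $m=1$.
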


\begin{proof}
    We first establish the upper bound for the gradient of a single sample. Recall that $\nabla \ell_i(\W) = -(\eb_{y_i} - \s_i)\xb_i^\top$. Using the sub-multiplicativity of the entry-wise 1-norm and the data bound $\norm{\xb_i}_1 \leq R$:
    \[
        \none{\nabla \ell_i(\W)} = \norm{\eb_{y_i} - \s_i}_1 \norm{\xb_i}_1 \leq 2(1 - s_{iy_i}) R.
    \]
    Summing over the entire dataset yields a bound related to the proxy function $\Gc(\W)$:
    \begin{equation} \label{eq:total_grad_norm}
        \sum_{i \in [n]} \none{\nabla \ell_i(\W)} \leq 2R \sum_{i \in [n]} (1 - s_{iy_i}) = 2nR \Gc(\W).
    \end{equation}

    Next, we utilize the finite population correction identity. The full gradient can be decomposed into the weighted sum of the gradients of the current batch $\mathcal{B}$ and its complement $\mathcal{B}^c$ (where $|\mathcal{B}^c| = n - b$):
    \[
        \nabla \Lc(\W) = \frac{b}{n} \nabla \Lc_{\mathcal{B}}(\W) + \frac{n-b}{n} \nabla \Lc_{\mathcal{B}^c}(\W) = \frac{1}{m} \nabla \Lc_{\mathcal{B}}(\W) + \frac{m-1}{m} \nabla \Lc_{\mathcal{B}^c}(\W).
    \]
    Substituting this into the definition of $\Xib$:
    \[
        \Xib = \nabla \Lc_{\mathcal{B}}(\W) - \left( \frac{1}{m} \nabla \Lc_{\mathcal{B}}(\W) + \frac{m-1}{m} \nabla \Lc_{\mathcal{B}^c}(\W) \right) = \frac{m-1}{m} \left( \nabla \Lc_{\mathcal{B}}(\W) - \nabla \Lc_{\mathcal{B}^c}(\W) \right).
    \]
    Taking the entry-wise 1-norm and applying the triangle inequality:
    \begin{align*}
        \none{\Xib} &= \frac{m-1}{m} \none{ \frac{1}{b}\sum_{i \in \mathcal{B}} \nabla \ell_i(\W) - \frac{1}{n-b}\sum_{j \in \mathcal{B}^c} \nabla \ell_j(\W) } \\
        &\leq \frac{m-1}{m} \left( \frac{1}{b} \sum_{i \in \mathcal{B}} \none{\nabla \ell_i(\W)} + \frac{1}{n-b} \sum_{j \in \mathcal{B}^c} \none{\nabla \ell_j(\W)} \right).
    \end{align*}
    Substituting $b = n/m$ and $n-b = n(m-1)/m$, the coefficients simplify as follows:
    \[
        \frac{m-1}{m} \cdot \frac{1}{b} = \frac{m-1}{n}, \quad \text{and} \quad \frac{m-1}{m} \cdot \frac{1}{n-b} = \frac{1}{n}.
    \]
    Thus, the bound becomes:
    \[
        \none{\Xib} \leq \frac{m-1}{n} \sum_{i \in \mathcal{B}} \none{\nabla \ell_i(\W)} + \frac{1}{n} \sum_{j \in \mathcal{B}^c} \none{\nabla \ell_j(\W)}.
    \]
    We consider the worst-case distribution of gradient norms.
    \begin{itemize}
        \item If $m=1$ (full-batch), the coefficient $\frac{m-1}{n}=0$, yielding $\none{\Xib} = 0$.
        \item If $m \geq 2$, we have $\frac{m-1}{n} \geq \frac{1}{n}$. The upper bound is maximized when the gradient norms are concentrated in the set with the larger coefficient ($\mathcal{B}$).
    \end{itemize}
    Using the total sum bound from \eqref{eq:total_grad_norm}:
    \[
        \none{\Xib} \leq \frac{m-1}{n} \left( \sum_{i \in [n]} \none{\nabla \ell_i(\W)} \right) \leq \frac{m-1}{n} \cdot 2nR \Gc(\W) = 2(m-1) R \Gc(\W).
    \]
    The final statement for general norms follows from Lemma \ref{lem:snorm dominate} ($\norm{\cdot} \leq \none{\cdot}$).
\end{proof}

\begin{lemma}[Accumulated Sampling Error of Momentum] \label{lem:momentum_noise_accumulation}
    Let $\{\Xib_j\}_{j=0}^t$ be a sequence of gradient error matrices where $\Xib_j \coloneqq \nabla \Lc_{\mathcal{B}_j}(\W) - \nabla \Lc(\W)$ is computed on the mini-batch $\mathcal{B}_j$ at a fixed weight $\W$. Consider the exponentially weighted sum of these errors:
    \[
        \mathbf{E}_t(\W) \coloneqq \sum_{j=0}^{t} (1-\beta_1) \beta_1^{t-j} \Xib_j.
    \]
    Under the Random Reshuffling sampling scheme, the entry-wise 1-norm of this accumulated error is bounded by:
    \[
        \none{\mathbf{E}_t(\W)} \leq (1-\beta_1) m(m^2-1) R \Gc(\W),
    \]
    where $m = n/b$ denotes the number of mini-batches per epoch.
\end{lemma}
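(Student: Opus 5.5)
The plan is to exploit the epoch-wise zero-sum structure of Random Reshuffling together with the uniform per-batch bound of Lemma~\ref{lem:noise_bound}. Since $\W$ is fixed, every $\Xib_j$ is computed at the same point. Write $t = Km + r$ with $0\le r\le m-1$, so the indices $0,\dots,t$ split into $K$ complete epochs $\{km,\dots,km+m-1\}$, $k=0,\dots,K-1$, followed by one partial epoch $\{Km,\dots,t\}$ of $r+1\le m$ indices. The key identity is that within a complete epoch the batches partition $[n]$, so
\[
\textstyle\sum_{j=km}^{km+m-1}\nabla\Lc_{\mathcal{B}_j}(\W) = m\,\nabla\Lc(\W),
\qquad\text{hence}\qquad
\sum_{j=km}^{km+m-1}\Xib_j=\mathbf{0}.
\]
We will also use $\none{\Xib_j}\le 2(m-1)R\,\Gc(\W)$ for every $j$, from Lemma~\ref{lem:noise_bound}. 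If $m=1$, every $\Xib_j=\mathbf{0}$ and there is nothing to prove, so assume $m\ge 2$.

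\emph{Partial epoch.} Bound the partial epoch crudely by the triangle inequality. Its contribution is at most
\[
(1-\beta_1)\sum_{j=Km}^{t}\beta_1^{t-j}\,\none{\Xib_j}\le 2(1-\beta_1)\,m(m-1)R\,\Gc(\W),
\]
using $\beta_1^{t-j}\le 1$ and at most $m$ terms.

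\emph{Complete epochs.} For each complete epoch $k$, use the zero-sum identity to subtract a reference weight. With $a_j=t-j$ and the reference index $j^\star=km+m-1$ (the smallest exponent in the block, $a_{\min}=a_{j^\star}$), we have
\[
\sum_{j}\beta_1^{a_j}\Xib_j=\sum_{j}\bigl(\beta_1^{a_j}-\beta_1^{a_{\min}}\bigr)\Xib_j .
\]
Since $0\le a_j-a_{\min}\le m-1$,
\[
\bigl|\beta_1^{a_j}-\beta_1^{a_{\min}}\bigr|
=\beta_1^{a_{\min}}\bigl(1-\beta_1^{a_j-a_{\min}}\bigr)
\le \beta_1^{a_{\min}}(m-1)(1-\beta_1),
\]
where the last step uses the Bernoulli-type bound $1-\beta^s\le s(1-\beta)$. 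Summing the $m$ terms and applying the per-batch bound, the epoch-$k$ block is at most $2m(m-1)^2(1-\beta_1)R\,\Gc(\W)\,\beta_1^{a_{\min}}$. For the complete epochs, $a_{\min}=t-(km+m-1)=r+1+(K-1-k)m$, so these exponents are at least $(K-1-k)m$, and
\[
\sum_{k}\beta_1^{a_{\min}}\le \frac{1}{1-\beta_1^{m}}\le \frac{1}{m(1-\beta_1)}\cdot\frac{1}{\beta_1^{m-1}}.
\]
The last bound is problematic because of the $\beta_1^{m-1}$ denominator. Instead, use $1-\beta_1^m\ge 1-\beta_1$ for the geometric sum. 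Multiplying by the outer factor $(1-\beta_1)$, the full-epoch part is at most
\[
(1-\beta_1)\cdot 2m(m-1)^2(1-\beta_1)R\,\Gc(\W)\cdot\frac{1}{1-\beta_1}=2(1-\beta_1)\,m(m-1)^2R\,\Gc(\W).
\]

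\emph{Combining.} Adding the two parts gives
\[
\none{\mathbf{E}_t(\W)}\le 2(1-\beta_1)\,m(m-1)\bigl(1+(m-1)\bigr)R\,\Gc(\W)=2(1-\beta_1)\,m^2(m-1)R\,\Gc(\W),
\]
which is off from the claimed $(1-\beta_1)m(m-1)(m+1)R\,\Gc(\W)$ by a factor of order $2m/(m+1)$. This constant-factor slack is the step I expect to be the main obstacle. To remove it, I would tighten in two places. First, in the complete-epoch block, the weight differences average to about $(m-1)/2$ rather than $m-1$, since $\sum_{s=0}^{m-1}s=m(m-1)/2$; using $\sum_j|\beta_1^{a_j}-\beta_1^{a_{\min}}|\le \beta_1^{a_{\min}}(1-\beta_1)m(m-1)/2$ against $\max_j\none{\Xib_j}$ halves that term. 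Second, in the partial epoch, the truncated block can be treated similarly, or bounded jointly with the last complete epoch, so that its weights are also differences. This should bring the total to $(1-\beta_1)\bigl[(m-1)^2m + m(m-1)\bigr]R\,\Gc(\W)=(1-\beta_1)m^2(m-1)R\,\Gc(\W)$, which is still slightly above the target. The remaining gap would then be closed by a sharper per-batch estimate than Lemma~\ref{lem:noise_bound}, exploiting that the batches of an epoch are disjoint, so $\sum_j\none{\nabla\Lc_{\mathcal{B}_j}}\le 2mR\,\Gc(\W)$ collectively. I would first try this collective bound, since it replaces $m\cdot\max_j\none{\Xib_j}$ by an $O(m)R\,\Gc(\W)$ total per epoch and should give at least the stated constant.
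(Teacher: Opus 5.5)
Your approach is the paper's own. You split $\{0,\dots,t\}$ into complete epochs and one partial epoch, and use the Random-Reshuffling fact that the errors $\Xib_j$ sum to $\mathbf{0}$ over each complete epoch to subtract a per-epoch baseline coefficient. You then apply $1-\beta^s\le s(1-\beta)$, sum the geometric series using $1-\beta_1^m\ge 1-\beta_1$, and bound the partial epoch crudely by $m\cdot 2(m-1)R\,\mathcal{G}(\mathbf{W})$. The paper takes its baseline $\beta_1^{t-km}$ at the opposite end of each block, which makes no difference.

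The obstacle you end with is an accounting slip, not a real gap: your first tightening alone finishes the proof. With $\sum_{s=0}^{m-1}s=m(m-1)/2$, the complete epochs contribute at most $(1-\beta_1)\,m(m-1)^2R\,\mathcal{G}(\mathbf{W})$. Adding your \emph{unchanged} partial-epoch bound $2(1-\beta_1)\,m(m-1)R\,\mathcal{G}(\mathbf{W})$ gives
\[
(1-\beta_1)\,m(m-1)\bigl[(m-1)+2\bigr]R\,\mathcal{G}(\mathbf{W})=(1-\beta_1)\,m(m^2-1)R\,\mathcal{G}(\mathbf{W}),
\]
which is exactly the claim. This is the paper's final line, $(1-\beta_1)\bigl(\frac{m(m-1)}{2}+m\bigr)\cdot 2(m-1)R\,\mathcal{G}(\mathbf{W})$.

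The remaining patches in your last paragraph should be discarded.
\begin{itemize}
\item The partial epoch cannot be ``treated similarly.'' Its $\Xib_j$ need not sum to zero, so there is no baseline to subtract. Merging it with the preceding complete epoch does not help, because the union has the same nonzero sum. Halving that term is therefore unjustified, though fortunately it is not needed.
\item Your comparison is reversed: $m^2(m-1)\le m(m-1)(m+1)$, so the bound you reached would have been below the target, not above it.
\item The ``collective'' estimate is not a safe fallback. As you set it up, bounding each block by its largest weight times $\sum_j\|\Xib_j\|_1\le 4mR\,\mathcal{G}(\mathbf{W})$ yields about $4(1-\beta_1)m^2R\,\mathcal{G}(\mathbf{W})$. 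That exceeds the stated bound for $m\in\{2,3,4\}$, so it would not have given ``at least the stated constant'' in general.
\end{itemize}
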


\begin{proof}
    We decompose the time horizon $[0, t]$ into full epochs and the current partial epoch. Let $K = \lfloor t/m \rfloor$ be the number of completed epochs. The summation can be split as:
    \[
        \frac{\mathbf{E}_t(\W)}{1-\beta_1} = \sum_{j=0}^t \beta_1^{t-j} \Xib_j = \underbrace{\sum_{k=0}^{K-1} \sum_{j=km}^{(k+1)m-1} \beta_1^{t-j} \Xib_j}_{\text{Historical Full Epochs}} + \underbrace{\sum_{j=Km}^{t} \beta_1^{t-j} \Xib_j}_{\text{Current Partial Epoch}}.
    \]
    We bound these two parts separately using the uniform bound from Lemma \ref{lem:noise_bound}, denoted as $K_{\Xi} \coloneqq 2(m-1)R \Gc(\W)$.

    First, bound historical full epochs.
    A fundamental property of Random Reshuffling is that the mini-batches within a full epoch partition the dataset. Consequently, the sum of gradient errors over any full epoch $k$ is zero:
    \[
        \sum_{j=km}^{(k+1)m-1} \Xib_j = \sum_{j=km}^{(k+1)m-1} (\nabla \Lc_{\mathcal{B}_j}(\W) - \nabla \Lc(\W)) = m \nabla \Lc(\W) - m \nabla \Lc(\W) = \mathbf{0}.
    \]
    Leveraging this zero-sum property, we can subtract a constant baseline $\beta_1^{t-km}$ from the coefficients within each epoch $k$ without changing the sum's value:
    \begin{align*}
        \none{ \sum_{j=km}^{(k+1)m-1} \beta_1^{t-j} \Xib_j } 
        &= \none{ \sum_{j=km}^{(k+1)m-1} (\beta_1^{t-j} - \beta_1^{t-km}) \Xib_j } \\
        &\leq \sum_{j=km}^{(k+1)m-1} |\beta_1^{t-j} - \beta_1^{t-km}| \cdot \none{\Xib_j}.
    \end{align*}
    Let $j = km + l$ with $0 \leq l \leq m-1$. The difference in coefficients is bounded by:
    \[
        |\beta_1^{t-km-l} - \beta_1^{t-km}| = \beta_1^{t-km-l} (1 - \beta_1^l) \leq \beta_1^{t-(k+1)m} \cdot l(1-\beta_1),
    \]
    where we used $\beta_1^{t-km-l} \leq \beta_1^{t-(k+1)m}$ and the inequality $1-x^l \leq l(1-x)$.
    Summing over $l$ for epoch $k$:
    \begin{align*}
        \text{Sum}_k &\leq \sum_{l=0}^{m-1} \beta_1^{t-(k+1)m} l (1-\beta_1) K_{\Xi} \\
        &= \beta_1^{t-(k+1)m} (1-\beta_1) K_{\Xi} \frac{m(m-1)}{2}.
    \end{align*}
    Now, summing over all historical epochs $k=0, \dots, K-1$:
    \begin{align*}
        \text{Total}_{\text{hist}} &\leq (1-\beta_1) K_{\Xi} \frac{m(m-1)}{2} \sum_{k=0}^{K-1} \beta_1^{t-(k+1)m}.
    \end{align*}
    The geometric series $\sum_{k} \beta_1^{t-(k+1)m}$ is bounded by $\frac{1}{1-\beta_1^m}$. Using $1-\beta_1^m \geq 1-\beta_1$, we have $\sum (\dots) \leq \frac{1}{1-\beta_1}$. Thus:
    \[
        \text{Total}_{\text{hist}} \leq \frac{m(m-1)}{2} K_{\Xi}.
    \]

    For the current partial epoch, we simply bound the coefficients by 1 and the number of terms by $m$:
    \[
        \text{Total}_{\text{curr}} = \none{ \sum_{j=Km}^{t} \beta_1^{t-j} \Xib_j } \leq \sum_{j=Km}^{t} 1 \cdot K_{\Xi} \leq m K_{\Xi}.
    \]

    Combining the bounds and multiplying by the factor $(1-\beta_1)$:
    \begin{align*}
        \none{\mathbf{E}_t(\W)} &\leq (1-\beta_1) \left( \frac{m(m-1)}{2} K_{\Xi} + m K_{\Xi} \right) \\
        &= (1-\beta_1) \left( \frac{m^2-m+2m}{2} \right) K_{\Xi} \\
        &= (1-\beta_1) \frac{m(m+1)}{2} K_{\Xi}.
    \end{align*}
    Substituting $K_{\Xi} = 2(m-1) R \Gc(\W)$ from Lemma \ref{lem:noise_bound}:
    \begin{align*}
        \none{\mathbf{E}_t(\W)} &\leq (1-\beta_1) \frac{m(m+1)}{2} \cdot 2(m-1) R \Gc(\W) \\
        &= (1-\beta_1) m(m^2-1) R \Gc(\W).
    \end{align*}
\end{proof}

\begin{lemma}[Deviation Bound for the Variance-Reduced Estimator]
\label{lem:noise_SVR_bound}
    Let $\nabla \Lc_{\mathcal{B}}(\cdot)$ denote the mini-batch gradient computed on a batch $\mathcal{B}$ of size $b$, and let $m = n/b$. For any $\W, \W' \in \mathbb{R}^{k \times d}$, let $\Deltab = \W' - \W$. We have the following bound on the gradient difference estimator:
    \begin{align*}
        \none{\nabla\Lc_{\mathcal{B}}(\W)-\nabla\Lc_{\mathcal{B}}(\W')+\nabla\Lc(\W')-\nabla\Lc(\W)} \leq 2(m-1)R(e^{2 R \ninf{\Deltab}}-1)\Gc(\W).
    \end{align*}
\end{lemma}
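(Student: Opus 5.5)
The bound will follow by combining the finite-population decomposition from the proof of Lemma~\ref{lem:noise_bound} with the per-sample gradient-stability estimate inside Lemma~\ref{lem:grad_stability}. For each sample define the per-sample gradient difference $\mathbf{D}_i \coloneqq \nabla\ell_i(\W)-\nabla\ell_i(\W')$. The quantity to bound then becomes
\[
\Xib \coloneqq \frac{1}{b}\sum_{i\in\mathcal{B}}\mathbf{D}_i-\frac{1}{n}\sum_{i\in[n]}\mathbf{D}_i .
\]
This has exactly the form of the mini-batch error in Lemma~\ref{lem:noise_bound}, with the per-sample gradients $\nabla\ell_i(\W)$ replaced by $\mathbf{D}_i$. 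That lemma only used linearity and the batch/complement split, so the same algebra goes through.

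\textbf{Step 1 (batch/complement split).} Splitting $[n]=\mathcal{B}\cup\mathcal{B}^c$ as in the proof of Lemma~\ref{lem:noise_bound} gives
\[
\Xib=\frac{m-1}{m}\Bigl(\frac1b\sum_{i\in\mathcal{B}}\mathbf{D}_i-\frac{1}{n-b}\sum_{j\in\mathcal{B}^c}\mathbf{D}_j\Bigr).
\]
The triangle inequality then yields
\[
\none{\Xib}\le \frac{m-1}{n}\sum_{i\in\mathcal{B}}\none{\mathbf{D}_i}+\frac1n\sum_{j\in\mathcal{B}^c}\none{\mathbf{D}_j}\le \frac{m-1}{n}\sum_{i\in[n]}\none{\mathbf{D}_i}.
\]
The last inequality holds because $m-1\ge1$ when $m\ge2$. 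When $m=1$ we have $\Xib=0$ directly, and the claimed bound is trivially true.

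\textbf{Step 2 (per-sample stability).} I will bound $\sum_i\none{\mathbf{D}_i}$ using the per-sample chain inside the proof of Lemma~\ref{lem:grad_stability}. That chain combines $\none{\mathbf{D}_i}\le R\sum_c|\mathbb{S}_c(\W'\xb_i)-\mathbb{S}_c(\W\xb_i)|$ with the softmax ratio bounds of Lemma~\ref{lem:unified_helper}. It gives
\[
\none{\mathbf{D}_i}\le 2R\bigl(e^{2R\ninf{\Deltab}}-1\bigr)\bigl(1-\mathbb{S}_{y_i}(\W\xb_i)\bigr).
\]
The reference point in the exponent and in the softmax term is $\W$, which matches the $\Gc(\W)$ on the right-hand side of the target. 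Summing over $i$ gives
\[
\sum_i\none{\mathbf{D}_i}\le 2nR\bigl(e^{2R\ninf{\Deltab}}-1\bigr)\Gc(\W).
\]
Substituting this into Step 1 gives exactly $2(m-1)R\bigl(e^{2R\ninf{\Deltab}}-1\bigr)\Gc(\W)$.

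\textbf{Main obstacle.} The only delicate point is that Lemma~\ref{lem:grad_stability} is stated for the averaged gradient. Its proof, however, controls the middle quantity $\frac1n\sum_i\none{\nabla\ell_i(\W')-\nabla\ell_i(\W)}$, and this is precisely the quantity Step 1 needs. So the per-sample bound can be quoted directly from that proof, with the sign of $\mathbf{D}_i$ being irrelevant under the norm. No other difficulty arises.
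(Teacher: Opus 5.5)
Your proposal is correct and matches the paper's proof essentially step for step: the same batch/complement split, bounded by the worst-case coefficient $\frac{m-1}{n}$ (with $m=1$ trivial), followed by the per-sample softmax stability estimate from Lemma~\ref{lem:grad_stability} anchored at $\W$. The statement of Lemma~\ref{lem:grad_stability} already contains the averaged per-sample quantity as its middle term, so you can cite the lemma directly rather than reaching into its proof.
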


\begin{proof}
    Let $\phib_i \coloneqq \nabla \ell_i(\W) - \nabla \ell_i(\W')$ be the gradient difference for a single sample $i$. 
    Let $\A$ denote the term of interest. We can rewrite it using the definition of mini-batch and full-batch gradients:
    \[
        \A \coloneqq \frac{1}{b} \sum_{i \in \mathcal{B}} \phib_i - \frac{1}{n} \sum_{i=1}^n \phib_i.
    \]
    Using the finite population identity, the full sum can be decomposed into the sum over the current batch $\mathcal{B}$ and the remaining batch $\mathcal{B}^c$ (where $|\mathcal{B}^c| = n-b$):
    \[
        \frac{1}{n} \sum_{i=1}^n \phib_i = \frac{b}{n} \left( \frac{1}{b} \sum_{i \in \mathcal{B}} \phib_i \right) + \frac{n-b}{n} \left( \frac{1}{n-b} \sum_{j \in \mathcal{B}^c} \phib_j \right).
    \]
    Substituting this back into the expression for $\A$:
    \begin{align*}
        \A &= \left( 1 - \frac{b}{n} \right) \left( \frac{1}{b} \sum_{i \in \mathcal{B}} \phib_i \right) - \frac{n-b}{n} \left( \frac{1}{n-b} \sum_{j \in \mathcal{B}^c} \phib_j \right) \\
        &= \frac{n-b}{n} \left( \frac{1}{b} \sum_{i \in \mathcal{B}} \phib_i - \frac{1}{n-b} \sum_{j \in \mathcal{B}^c} \phib_j \right).
    \end{align*}
    Now we take the entry-wise 1-norm. Using the triangle inequality:
    \begin{align*}
        \none{\A} &\leq \frac{n-b}{n} \left( \frac{1}{b} \sum_{i \in \mathcal{B}} \none{\phib_i} + \frac{1}{n-b} \sum_{j \in \mathcal{B}^c} \none{\phib_j} \right) \\
        &= \frac{n-b}{n b} \sum_{i \in \mathcal{B}} \none{\phib_i} + \frac{n-b}{n(n-b)} \sum_{j \in \mathcal{B}^c} \none{\phib_j}.
    \end{align*}
    Using the relations $b = n/m$ and $n-b = n(m-1)/m$, the coefficients simplify to:
    \[
        \frac{n-b}{n b} = \frac{n(m-1)/m}{n \cdot (n/m)} = \frac{m-1}{n}, \quad \text{and} \quad \frac{1}{n}.
    \]
    Thus:
    \[
        \none{\A} \leq \frac{m-1}{n} \sum_{i \in \mathcal{B}} \none{\phib_i} + \frac{1}{n} \sum_{j \in \mathcal{B}^c} \none{\phib_j}.
    \]
    Assuming $m \ge 2$ (otherwise the term is 0), we have $\frac{m-1}{n} \ge \frac{1}{n}$. The worst-case bound occurs when the gradient differences are concentrated in the batch $\mathcal{B}$. Thus, we can upper bound the expression by summing over the entire dataset $[n]$ with the larger coefficient:
    \begin{equation} \label{eq:svr_intermediate}
        \none{\A} \leq \frac{m-1}{n} \sum_{i=1}^n \none{\phib_i} = \frac{m-1}{n} \sum_{i=1}^n \none{\nabla \ell_i(\W) - \nabla \ell_i(\W')}.
    \end{equation}
    
    Now we apply the single-sample result derived in the proof of Lemma \ref{lem:grad_stability} and get:
    \begin{align*}
        \sum_{i=1}^n \none{\phib_i} &\leq 2R (e^{2 R \ninf{\W' - \W}} - 1) \sum_{i=1}^n (1 - \sfti{y_i}{\W \xb_i}) \\
        &= 2nR (e^{2 R \ninf{\W' - \W}} - 1) \Gc(\W).
    \end{align*}
    Substituting this back into Eq. \eqref{eq:svr_intermediate}:
    \begin{align*}
        \none{\A} &\leq \frac{m-1}{n} \cdot 2nR (e^{2 R \ninf{\W' - \W}} - 1) \Gc(\W) \\
        &= 2(m-1) R (e^{2 R \ninf{\W' - \W}} - 1) \Gc(\W).
    \end{align*}
\end{proof}

\begin{lemma}[Momentum Approximation Error for SVR] \label{lem:svr_momentum_error}
    Consider the SVR-Momentum estimator
    $\M^V_t = \sum_{\tau=0}^t (1-\beta_1)\beta_1^\tau \V_{t-\tau}$
    and the full-gradient momentum
    $\M_t^{\text{full}} = \sum_{\tau=0}^t (1-\beta_1)\beta_1^\tau \nabla \Lc(\W_{t-\tau})$. 
    Suppose Assumptions \ref{ass:learning_rate_2} and \ref{ass:data_bound} hold.
    Let $\eta_t = c t^{-a}$ with $a\in(0,1]$.
    Then there exist a time $t_0$ such that for all $t\ge t_0$,
    \[
        \none{\M^V_t - \M_t^{\text{full}}}
        \le 2(m-1)(1-\beta_1)R c'_2 \eta_t \Gc(\W_t)+16 (m^2-m)(1+2m)^a R^2 (1-\beta_1^m)\eta_t \Gc(\W_t).
    \]
    where $c’_2$ is the constant from Assumption \ref{ass:learning_rate_2} depends on $R, m, a, \beta_1$.
\end{lemma}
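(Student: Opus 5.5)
We split the error into a snapshot-mismatch part and a drift part. Writing $\V_s = \nabla\Lc(\W_s) + \mathbf{A}_s$ with $\mathbf{A}_s := \nabla\Lc_{\mathcal{B}_s}(\W_s)-\nabla\Lc_{\mathcal{B}_s}(\tilde\W_s)+\nabla\Lc(\tilde\W_s)-\nabla\Lc(\W_s)$, we have exactly
\[
\M^V_t-\M^{\text{full}}_t=\sum_{\tau=0}^t(1-\beta_1)\beta_1^\tau \mathbf{A}_{t-\tau}.
\]
Lemma~\ref{lem:noise_SVR_bound}, applied with $\W=\W_s$ and $\W'=\tilde\W_s$, gives
\[
\none{\mathbf{A}_s}\le 2(m-1)R\bigl(e^{2R\ninf{\W_s-\tilde\W_s}}-1\bigr)\Gc(\W_s).
\]
Since $\tilde\W_s$ is the snapshot at the start of the epoch containing $s$, and each step has $\ninf{\Deltab}\le\norm{\Deltab}\le 1$ by Lemma~\ref{lem:snorm dominate}, we get $\ninf{\W_s-\tilde\W_s}\le \sum_{r=\lfloor s/m\rfloor m}^{s-1}\eta_r$. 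This is at most $m\,\eta_{\lfloor s/m\rfloor m}$, because the step sizes are decreasing.

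Next we must transfer $\Gc(\W_s)$ to $\Gc(\W_t)$. By Lemma~\ref{lem:G_ratio_general}, $\Gc(\W_{t-\tau})\le e^{2R\sum_{r=t-\tau}^{t-1}\eta_r}\Gc(\W_t)$. The total bound therefore becomes
\[
2(m-1)(1-\beta_1)R\,\Gc(\W_t)\sum_{\tau=0}^t\beta_1^\tau\, e^{2R\sum_{r=1}^{\tau}\eta_{t-r}}\bigl(e^{2R\, m\,\eta_{\text{epoch start}}}-1\bigr).
\]
We split the factor $e^{x}(e^{y}-1)$ as $(e^x-1)(e^y-1)+(e^y-1)$.

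The cross term $(e^{2R\sum\eta}-1)(e^{y}-1)$ is handled by Assumption~\ref{ass:learning_rate_2}. We use $c_1=2R$ and $\beta=\beta_1$, bounding $e^y-1$ by a constant for $t\ge t_0$. This yields the first term $2(m-1)(1-\beta_1)R c_2'\eta_t\Gc(\W_t)$, with $c_2'$ absorbing the bounded factor; this is why $c_2'$ depends on $R,m,a,\beta_1$.

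The remaining term is $\sum_\tau\beta_1^\tau(e^{2Rm\eta_{s_\tau}}-1)$, where $s_\tau$ is the start of the epoch containing $t-\tau$. Two facts control it. First, $e^{x}-1\le 2x$ for small $x$, so once $t\ge t_0$ makes $2Rm\eta$ small, each summand is at most $4Rm\,\eta_{s_\tau}$. Second, $\eta_{s_\tau}$ compares to $\eta_t$ through the polynomial schedule. For $t-\tau$ in the current or previous epochs we have $s_\tau\ge t-\tau-m$, so
\[
\eta_{s_\tau}/\eta_{t}\le (t/(t-\tau-m))^{a}.
\]
To produce the factor $(1-\beta_1^m)$, we group the sum by epochs: each epoch contributes at most $m$ terms whose weights sum to at most $\beta_1^{jm}(1-\beta_1^m)/(1-\beta_1)$. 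Lags $\tau\le 2m$ give $\eta_{s_\tau}\le(1+2m)^a\eta_t$, using $t\ge t_0\ge$ some multiple of $m$. Older epochs are geometrically damped by $\beta_1^{jm}$. Their ratio $\eta_{s}/\eta_t$ grows only polynomially, so for $t$ large it is dominated by the same $(1+2m)^a$ factor times a constant; an extra factor of $2$ goes into the constant $16$.

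Collecting $2(m-1)(1-\beta_1)R\cdot 4Rm\cdot(1+2m)^a\cdot(1-\beta_1^m)/(1-\beta_1)$ and the constant factor of $2$ gives the second term, $16(m^2-m)(1+2m)^aR^2(1-\beta_1^m)\eta_t\Gc(\W_t)$.

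\textbf{Main obstacle.} The hard part is the older-epoch tail: far in the past, $\eta_{s}$ is large relative to $\eta_t$, and near the start, where $\eta_0$ is finite, the ratio is unbounded. I would first try to absorb those contributions by choosing $t_0$ large, so that $\beta_1^{\tau}$ beats $(t/(t-\tau))^a$ uniformly. For very early indices, $\tau$ close to $t$, I would instead use the crude bound $e^{2Rm\eta_0}$ together with $\beta_1^{t/2}\ll\eta_t$. If the constants then do not match exactly, I would fold the discrepancy into the $c_2'$ term by invoking Assumption~\ref{ass:learning_rate_2} with the larger $c_1=2R(m+1)$.
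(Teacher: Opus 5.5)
Your opening steps coincide with the paper's: the exact representation of $\mathbf{M}^V_t-\mathbf{M}^{\mathrm{full}}_t$, Lemma~\ref{lem:noise_SVR_bound}, the snapshot-distance bound, and the transfer via Lemma~\ref{lem:G_ratio_general}. The gap is in the pure snapshot term. Once $2(m-1)(1-\beta_1)R\,\mathcal{G}(\mathbf{W}_t)$ is factored out, your bookkeeping requires
\[
\sum_{\tau=0}^{t}\beta_1^{\tau}\bigl(e^{2Rm\eta_{s_\tau}}-1\bigr)\le \frac{8Rm(1+2m)^a(1-\beta_1^m)}{1-\beta_1}\,\eta_t ,
\]
and this is false. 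Every summand with $s_\tau\ge 1$ is at least $2Rm\eta_{s_\tau}\ge 2Rm\eta_t$, so the left side is at least $2Rm\eta_t(1-\beta_1^{t-m+1})/(1-\beta_1)$. Since $1-\beta_1^m\le m(1-\beta_1)$, the inequality fails for all large $t$ whenever $1-\beta_1<1/(8m(1+2m)^a)$, which is exactly the high-momentum regime. The error is in the epoch grouping. The per-epoch weights $\beta_1^{jm}(1-\beta_1^m)/(1-\beta_1)$ sum over $j$ to about $1/(1-\beta_1)$, so the older epochs cancel the factor $1-\beta_1^m$ rather than costing an extra factor $2$. Your first remedy, enlarging $t_0$, cannot help. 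The obstruction is the total geometric mass $\sum_\tau\beta_1^\tau$, not the growth of $\eta_{s_\tau}/\eta_t$, and it persists even when that ratio equals $1$.

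The missing step is to apply the explicit $(1-\beta_1^m)$ bound only to lags $\tau\le m-1$, and to absorb every lag $\tau\ge m$ into the Assumption~\ref{ass:learning_rate_2} term. For this you need the snapshot distance to be dominated by the drift window once $\tau\ge m$. The paper proves $\sum_{k=1}^{m}\eta_{t-\tau-k}\le m(1+m)^a\sum_{p=1}^{\tau}\eta_{t-p}$ from $\eta_{t-\tau-m}\le(1+m)^a\eta_{t-\tau}$. It then merges the two exponentials via $(e^{y}-1)e^{x}\le e^{x+y}-1$ and bounds the whole tail by $\sum_{\tau}\beta_1^\tau\bigl(e^{c_1'\sum_{p=1}^{\tau}\eta_{t-p}}-1\bigr)\le c_2'\eta_t$ with $c_1'=2R(1+m(1+m)^a)$.

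Your fallback points in this direction but stops short. Without the domination inequality, the factor $e^{y_\tau}-1$ is not of the form Assumption~\ref{ass:learning_rate_2} controls: it is not driven by the drift sum and does not vanish at $\tau=0$. Also, $c_1$ must carry the schedule factor coming from this comparison, and nothing you state justifies $2R(m+1)$.

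A smaller point: bounding $e^{y}-1$ by a constant in your cross term makes $c_2'$ grow like $e^{2Rmc}$. That is admissible for the lemma as stated. However, it loses the polynomial scaling $\alpha_{\text{svr-m}}=\Theta(m^{2+a}/(1-\beta_1))$ that the paper relies on downstream, which merging the exponentials preserves.
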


\begin{proof}
    Let $\mathbf{E}_t \coloneqq \M^V_t - \M_t^{\text{full}}
    = \sum_{\tau=0}^t (1-\beta_1)\beta_1^\tau (\V_{t-\tau} - \nabla \Lc(\W_{t-\tau}))$.
    Taking the sum-norm and applying the triangle inequality:
    \[
        \none{\mathbf{E}_t} \leq \sum_{\tau=0}^t (1-\beta_1)\beta_1^\tau \none{\V_{t-\tau} - \nabla \Lc(\W_{t-\tau})}.
    \]
    Recalling the definition $\V_{t-\tau} = \nabla\Lc_{\mathcal{B}_{t-\tau}}(\W_{t-\tau}) - \nabla\Lc_{\mathcal{B}_{t-\tau}}(\tilde{\W}_{t-\tau}) + \nabla\Lc(\tilde{\W}_{t-\tau})$, the inner term is exactly the deviation bound form. Applying Lemma \ref{lem:noise_SVR_bound}:
    \[
        \none{\V_{t-\tau} - \nabla \Lc(\W_{t-\tau})} \leq 2(m-1)R \left( e^{2R \ninf{\W_{t-\tau} - \tilde{\W}_{t-\tau}}} - 1 \right) \Gc(\W_{t-\tau}).
    \]
    Next, we relate $\Gc(\W_{t-\tau})$ to $\Gc(\W_t)$ using Lemma \ref{lem:G_ratio_general}:
    \[
        \Gc(\W_{t-\tau}) \leq e^{2R \ninf{\W_{t-\tau} - \W_t}} \Gc(\W_t).
    \]
    Substituting these back into the summation:
    \begin{align*}
        \none{\mathbf{E}_t} \leq 2(m-1)(1-\beta_1)R \Gc(\W_t) \sum_{\tau=0}^t \beta_1^\tau \underbrace{\left( e^{2R \ninf{\W_{t-\tau} - \tilde{\W}_{t-\tau}}} - 1 \right) e^{2R \ninf{\W_{t-\tau} - \W_t}}}_{\text{Term } (\star)}.
    \end{align*}

    Since the updates are normalized ($\ninf{\Deltab} \le 1$), the distance is bounded by the sum of step sizes.
    \begin{itemize}
        \item Distance to snapshot: $\tilde{\W}_{t-\tau}$ is at most $m$ steps away from $\W_{t-\tau}$. 
        \[ \ninf{\W_{t-\tau} - \tilde{\W}_{t-\tau}} \leq \sum_{k=1}^m \eta_{t-\tau-k}. \]
        \item Distance to current: 
        \[ \ninf{\W_{t-\tau} - \W_t} \leq \sum_{j=1}^\tau \eta_{t-j}. \]
    \end{itemize}
    Using the algebraic inequality $(e^x - 1)e^y = e^{x+y} - e^y \leq e^{x+y} - 1$ (for $x, y \ge 0$):
    \begin{align*}
        (\star) &\leq \exp\left( 2R \left( \sum_{k=1}^m \eta_{t-\tau-k} + \sum_{j=1}^\tau \eta_{t-j} \right) \right) - 1 \\
        &= \exp\left( 2R \sum_{p=1}^{\tau+m} \eta_{t-p} \right) - 1.
    \end{align*}

        We now split the summation into two ranges: $\tau\in\{0,\dots,m-1\}$ and $\tau\in\{m,\dots,t\}$:
    \[
        \sum_{\tau=0}^t \beta_1^\tau (\star)
        \le \sum_{\tau=0}^{m-1} \beta_1^\tau \left( e^{2R \sum_{p=1}^{\tau+m} \eta_{t-p}} - 1 \right)
        + \sum_{\tau=m}^{t} \beta_1^\tau \left( e^{2R \sum_{p=1}^{\tau+m} \eta_{t-p}} - 1 \right).
    \]

 \textit{Remark on indices:} In the summation above, indices $t-p$ may become non-positive when $p \ge t$. consistent with the algorithm initialization, we define $\W_k = \W_0$ for all $k \le 0$. Consequently, the weight difference is zero for these terms. To maintain the validity of the upper bounds, we formally define the effective step size $\eta_k = 0$ for $k < 0$.

    \textbf{(I) Short range: $\tau\in\{0,\dots,m-1\}$.}
    Since $\tau+m \le 2m$, we have
    \[
        \sum_{p=1}^{\tau+m}\eta_{t-p} \le \sum_{p=1}^{2m}\eta_{t-p}.
    \]
    Choose $t_{0,1} \coloneqq 2m + \left\lceil \left(\frac{4Rmc}{\ln 2}\right)^{\!1/a} \right\rceil$, so that for all $t\ge t_{0,1}$ we have
\[
\sum_{p=1}^{2m}\eta_{t-p}
\le 2m\,\eta_{t-2m}
= 2mc\,(t-2m)^{-a}
\le \frac{\ln 2}{2R},
\]
which implies
\[
2R\sum_{p=1}^{2m}\eta_{t-p} \le \ln 2
\quad \Longrightarrow \quad
e^{2R\sum_{p=1}^{2m}\eta_{t-p}} \le 2.
\]

    Then for all $t\ge t_{0,1}$ and all $\tau\in\{0,\dots,m-1\}$,
    \[
        e^{2R \sum_{p=1}^{\tau+m}\eta_{t-p}} - 1
        \le e^{2R\sum_{p=1}^{2m}\eta_{t-p}} - 1
        \le 2 \cdot 2R\sum_{p=1}^{2m}\eta_{t-p}
        = 4R\sum_{p=1}^{2m}\eta_{t-p},
    \]
    where we used $e^x-1 \le x e^x \le 2x$ when $x\le \ln 2$.
    Moreover, for $t\ge 2m+1$, we have the ratio:
\[
    \frac{\eta_{t-2m}}{\eta_t} = \left(\frac{t}{t-2m}\right)^a = \left(1 + \frac{2m}{t-2m}\right)^a \leq (1+2m)^a,
\]
So that:
    \[
        \sum_{p=1}^{2m}\eta_{t-p} \le 2m\,\eta_{t-2m}
        \le 2m(1+2m)^a \eta_t,
    \]
    hence for all $t\ge \max\{t_{0,1},2m+1\}$,
    \[
        \sum_{\tau=0}^{m-1} \beta_1^\tau \left( e^{2R \sum_{p=1}^{\tau+m} \eta_{t-p}} - 1 \right)
        \le \left(\sum_{\tau=0}^{m-1}\beta_1^\tau\right)\cdot 4R\cdot 2m(1+2m)^a \eta_t
        = \frac{8R m(1+2m)^a (1-\beta_1^m)}{1-\beta_1}\,\eta_t.
    \]

    \textbf{(II) Long range: $\tau\in\{m,\dots,t\}$.}
    Fix any $\tau\ge m$. Split the window:
    \[
        \sum_{p=1}^{\tau+m}\eta_{t-p}
        = \sum_{p=1}^{\tau}\eta_{t-p} + \sum_{q=1}^{m}\eta_{t-\tau-q}.
    \]
    Since $\eta_s$ is non-increasing in $s$ and $t-\tau-q \ge t-\tau-m$, we have
    \[
        \sum_{q=1}^{m}\eta_{t-\tau-q} \le m\,\eta_{t-\tau-m}.
    \]
    Moreover, for $\eta_t = ct^{-a}$ and $t-\tau\ge 1$,
    \[
        \eta_{t-\tau-m} \le (1+m)^a \eta_{t-\tau}.
    \]
    \[
        \sum_{p=1}^{\tau}\eta_{t-p} \ge \eta_{t-\tau}.
    \]
    Combining the last three displays yields
    \[
        \sum_{q=1}^{m}\eta_{t-\tau-q}
        \le m(1+m)^a \sum_{p=1}^{\tau}\eta_{t-p},
    \]
    hence
    \[
        \sum_{p=1}^{\tau+m}\eta_{t-p}
        \le \big(1+m(1+m)^a\big)\sum_{p=1}^{\tau}\eta_{t-p}.
    \]
    Let $c'_1 \coloneqq 2R\big(1+m(1+m)^a\big)$. Then for all $\tau\ge m$,
    \[
        e^{2R \sum_{p=1}^{\tau+m}\eta_{t-p}} - 1
        \le e^{c'_1\sum_{p=1}^{\tau}\eta_{t-p}} - 1.
    \]
    Therefore,
    \[
        \sum_{\tau=m}^{t}\beta_1^\tau \left( e^{2R \sum_{p=1}^{\tau+m} \eta_{t-p}} - 1 \right)
        \le \sum_{\tau=m}^{t}\beta_1^\tau \left( e^{c'_1\sum_{p=1}^{\tau}\eta_{t-p}} - 1 \right) \leq \sum_{\tau=0}^{t}\beta_1^\tau \left( e^{c'_1\sum_{p=1}^{\tau}\eta_{t-p}} - 1 \right).
    \]
    By Assumption \ref{ass:learning_rate_2} with parameter $c'_1$, there exists a constant $c_2'>0$ such that for all $t\ge t_{0,2}$,
    \[
        \sum_{\tau=0}^{t}\beta_1^\tau \left( e^{c'_1\sum_{p=1}^{\tau}\eta_{t-p}} - 1 \right) \le c_2'\eta_t.
    \]

    Combining (I) and (II), for all $t\ge t_0=\max\{t_{0,1},t_{0,2},2m+1\}$,
    \[
        \sum_{\tau=0}^{t} \beta_1^\tau (\star)
        \le \left( \frac{8R m(1+2m)^a(1-\beta_1^m)}{1-\beta_1} + c_2' \right)\eta_t
        \eqqcolon c_2\,\eta_t.
    \]
    Substituting the bound for the sum back into the expression for $\none{\mathbf{E}_t}$:
    \begin{align*}
        \none{\mathbf{E}_t}
        &\le 2(m-1)(1-\beta_1)R c'_2 \eta_t \Gc(\W_t)+16 (m^2-m)(1+2m)^a R^2 (1-\beta_1^m)\eta_t \Gc(\W_t).
    \end{align*}
    which completes the proof.

\end{proof}

\begin{lemma}[Explicit $t_0$ and $c_2$ (with closed-form Big-$\mathcal{O}$) for Assumption~\ref{ass:learning_rate_2}]
\label{lem:ass2_explicit_t0_c2_systematic}
Let $c>0$ and $a\in(0,1]$. Define $\eta_t \coloneqq c\,t^{-a}$ for all $t\ge 1$, and $\eta_t=0$ for all $t\le -1$.
Assume the initial value satisfies
\[
0\le \eta_0 \le \eta_1=c.
\]
Fix any $\beta\in(0,1)$ and $c_1>0$, and denote $\lambda\coloneqq \log(1/\beta)>0$.
Define the explicit times
\begin{align*}
t_{\mathrm{head}} &\coloneqq \left\lceil \left(\frac{2^{a+1}c_1 c}{\lambda}\right)^{\!1/a}\right\rceil,\\
t_{\mathrm{tail}} &\coloneqq 
\begin{cases}
\left\lceil \left(\frac{8 c_1 c}{(1-a)\lambda}\right)^{\!1/a}\right\rceil, & a\in(0,1),\\
\left\lceil \frac{32c_1 c}{\lambda}\log\!\Big(\frac{32c_1 c}{\lambda}\Big)\right\rceil, & a=1,
\end{cases}\\
t_{\mathrm{poly}} &\coloneqq \left\lceil \frac{16a}{\lambda}\log\!\Big(\frac{16a}{\lambda}\Big)\right\rceil,\\
t_{\eta_0} &\coloneqq \left\lceil \frac{8c_1\eta_0}{\lambda}\right\rceil,\\
t_0 &\coloneqq \max\{t_{\mathrm{head}},\,t_{\mathrm{tail}},\,t_{\mathrm{poly}},\,t_{\eta_0},\,3\}.
\end{align*}
Then for all $t\ge t_0$,
\begin{equation}\label{eq:ass2_goal}
\sum_{s=0}^t \beta^s\Big(\exp\!\big(c_1\sum_{\tau=1}^s \eta_{t-\tau}\big)-1\Big)
\;\le\; c_2\,\eta_t.
\end{equation}
Moreover, without comparing which of $t_{\mathrm{head}},t_{\mathrm{tail}},t_{\mathrm{poly}},t_{\eta_0}$ dominates, $t_0$ admits the
\emph{strict closed-form Big-$\mathcal{O}$} upper bounds
\begin{align}
\text{if }a\in(0,1):\quad
t_0 &= \mathcal{O}\!\left(
\left(\frac{c_1}{\lambda}\right)^{\!1/a}
\;+\;
\frac{1}{\lambda}\log\!\Big(\frac{1}{\lambda}\Big)
\;+\;
\frac{c_1\eta_0}{\lambda}
\right), \label{eq:t0_bigO_a<1}
\\
\text{if }a=1:\quad
t_0 &= \mathcal{O}\!\left(
\frac{c_1}{\lambda}\log\!\Big(\frac{c_1}{\lambda}\Big)
\;+\;
\frac{1}{\lambda}\log\!\Big(\frac{1}{\lambda}\Big)
\;+\;
\frac{c_1\eta_0}{\lambda}
\right). \label{eq:t0_bigO_a=1}
\end{align}
In particular,
\[
c_2=\mathcal{O}\!\left(\frac{c_1}{(1-\beta)^2}+\frac{1}{c(1-\beta)}\right),
\]
and in the regime where $c_1$ is large and $\beta\to 1$ (so $\lambda\to 0$), the dominant scaling is
$c_2=\mathcal{O}\!\big(\frac{c_1}{(1-\beta)^2}\big)$.
\end{lemma}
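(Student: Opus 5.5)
We split the sum in \eqref{eq:ass2_goal} at a head/tail threshold $s^\ast \approx t/2$. On the head the exponent is small and we linearize. On the tail the geometric weight $\beta^s=e^{-\lambda s}$ overwhelms the exponential growth. Write $S_s\coloneqq\sum_{\tau=1}^s\eta_{t-\tau}$ for the exponent sum.

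\emph{Head, $s\le t/2$.} Every index $t-\tau\ge t/2$, so by monotonicity $\eta_{t-\tau}\le \eta_{t/2}=2^a c\,t^{-a}=2^a\eta_t$, and hence $S_s\le 2^a s\,\eta_t$. The condition $t\ge t_{\mathrm{head}}$ gives $c_1 2^a\eta_t\le \lambda/2$. Then
\[
\beta^s\bigl(e^{c_1S_s}-1\bigr)\le c_1S_s\,e^{-\lambda s+c_1S_s}\le c_1 2^a s\,\eta_t\,e^{-\lambda s/2}.
\]
Summing over $s$, and using $\sum_s s e^{-\lambda s/2}=\mathcal{O}(1/(1-\beta)^2)$, the head contributes at most $\mathcal{O}\bigl(c_1\eta_t/(1-\beta)^2\bigr)$. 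This is exactly the first term of $c_2$.

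\emph{Tail, $t/2<s\le t$.} We drop the $-1$ and bound $S_s\le\sum_{u=0}^{t-1}\eta_u$. The term $\eta_0$ appears here (it occurs at $s=t$), and $t\ge t_{\eta_0}$ makes $c_1\eta_0\le\lambda/8$. For $a<1$, the rest of the sum is at most $c\,t^{1-a}/(1-a)$, and $t\ge t_{\mathrm{tail}}$ forces $c_1 c\,t^{1-a}/(1-a)\le \lambda t/8$. For $a=1$, the rest is at most $c(1+\log t)$, and $t\ge t_{\mathrm{tail}}$ (of the form $x\log x$) ensures $c_1c(1+\log t)\le\lambda t/8$. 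In both cases each tail term is at most $e^{-\lambda s+\lambda t/4}\le e^{-\lambda t/4}$, so the tail totals at most $t\,e^{-\lambda t/4}$.

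To compare the tail with $c_2\eta_t$, we need $t^{1+a}e^{-\lambda t/4}\le$ a constant times $1/(1-\beta)$ relative to $c$. This follows from $t\ge t_{\mathrm{poly}}$, since $\lambda t/8\ge 2a\log t\ge(1+a)\log t$ for $t\ge3$. That leaves $t\,e^{-\lambda t/4}\le e^{-\lambda t/8}\,t^{-a}\le \eta_t/(c\lambda)$, which yields the second term $1/(c(1-\beta))$ of $c_2$, using $\lambda\ge 1-\beta$.

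\emph{Big-$\mathcal{O}$ bounds for $t_0$.} These are read off the explicit definitions. The term $t_{\mathrm{head}}$ and (for $a<1$) $t_{\mathrm{tail}}$ are $\mathcal{O}((c_1/\lambda)^{1/a})$, treating $c$ and $a$ as constants. Then $t_{\mathrm{poly}}=\mathcal{O}(\lambda^{-1}\log\lambda^{-1})$, $t_{\eta_0}=\mathcal{O}(c_1\eta_0/\lambda)$, and for $a=1$, $t_{\mathrm{tail}}=\mathcal{O}((c_1/\lambda)\log(c_1/\lambda))$. A max of these is bounded by their sum.

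\emph{Main obstacle.} The delicate step is the $a=1$ tail. There the exponent grows only logarithmically in $t$ but carries the multiplier $c_1$, and the $x\log x$ inversion needs care to confirm $c_1c\log t\le \lambda t/8$ once $t\ge \tfrac{32c_1c}{\lambda}\log\tfrac{32c_1c}{\lambda}$. I would first check the standard fact that $t\ge 2x\log x$ implies $t\ge x\log t$, for $x\ge e$.
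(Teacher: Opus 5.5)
Your overall architecture (split at $t/2$, linearize the head using $\eta_{t-\tau}\le 2^a\eta_t$, and make $c_1 S_s\le \lambda t/4$ in the tail via $t_{\mathrm{tail}}$ and $t_{\eta_0}$) matches the paper, and your head bound is correct up to constants. The gap is in the last tail step. After bounding each tail term by $e^{-\lambda t/4}$ and multiplying by the number of terms, you need $t\,e^{-\lambda t/4}$ to be at most a constant times $t^{-a}/\lambda$. You justify this by $\lambda t/8\ge 2a\log t\ge(1+a)\log t$, and both links fail:
\begin{itemize}
\item The second inequality is equivalent to $a\ge 1$, so it is false for every $a\in(0,1)$.
\item The first does not follow from $t\ge t_{\mathrm{poly}}=\lceil x\log x\rceil$ with $x=16a/\lambda$. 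It amounts to $t\ge x\log t$, but at $t\approx x\log x$ one has $x\log t\approx x\log x+x\log\log x>t$ once $x$ is moderately large.
\end{itemize}
Because the lemma fixes $t_0$ explicitly, you cannot enlarge the threshold. For instance, with $c_1c$ tiny relative to $\lambda$ and $\eta_0=0$, one has $t_0=\max\{t_{\mathrm{poly}},3\}\approx\frac{16a}{\lambda}\log\frac{16a}{\lambda}$. For small $a$ this is far below the $\frac{8(1+a)}{\lambda}\log(\cdot)$ scale your condition needs.

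The fix is to keep the geometric weights instead of bounding each term by its maximum. Once $c_1S_s\le\lambda t/4$, sum $\beta^s$ over $s\ge\lfloor t/2\rfloor+1$. This gives $\beta^{\lfloor t/2\rfloor+1}/(1-\beta)\le e^{-\lambda t/2}/(1-\beta)$, so the tail is at most $e^{-\lambda t/4}/(1-\beta)$. You then only need $e^{-\lambda t/4}\le t^{-a}$, i.e.\ $t\ge\frac{4a}{\lambda}\log t$. This does follow from $t\ge\max\{t_{\mathrm{poly}},3\}$: the factor $16$ versus $4$ supplies the slack needed to invert $x\log x$. The result is the tail bound $\frac{1}{c(1-\beta)}\eta_t$ directly, with no detour through $1/\lambda$. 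A minor correction: $t\ge t_{\eta_0}$ gives $c_1\eta_0\le\lambda t/8$, not $\lambda/8$.
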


\begin{proof}
Fix $t\in\mathbb{N}_+$. Let
\[
S(t)\coloneqq \sum_{s=0}^t \beta^s\big(e^{E_{s,t}}-1\big),
\qquad
E_{s,t}\coloneqq c_1\sum_{\tau=1}^s \eta_{t-\tau}.
\]
Split at $s=\lfloor t/2\rfloor$:
\[
S(t)=S_{\mathrm{head}}(t)+S_{\mathrm{tail}}(t),
\]
where
\[
S_{\mathrm{head}}(t)\coloneqq \sum_{s=0}^{\lfloor t/2\rfloor}\beta^s(e^{E_{s,t}}-1),
\quad
S_{\mathrm{tail}}(t)\coloneqq \sum_{s=\lfloor t/2\rfloor+1}^{t}\beta^s(e^{E_{s,t}}-1).
\]

\textbf{Step 1 (Head bound: $S_{\mathrm{head}}(t)\le \frac{2^{a+1}c_1}{(1-\beta)^2}\eta_t$ for $t\ge t_{\mathrm{head}}$).}
For $1\le s\le \lfloor t/2\rfloor$ and $1\le \tau\le s$, we have $t-\tau\ge t/2$.
Since $\eta_u=c\,u^{-a}$ is non-increasing on $u\ge 1$,
\[
\eta_{t-\tau}\le c\,(t/2)^{-a}=2^a\,\eta_t.
\]
Hence $E_{s,t}\le c_1 s (2^a\eta_t)$.
Let $k_t\coloneqq 2^a c_1\eta_t$. Then
\[
\beta^s(e^{E_{s,t}}-1)\le \beta^s(e^{k_t s}-1)=(\beta e^{k_t})^s-\beta^s.
\]
If $t\ge t_{\mathrm{head}}$, then by definition of $t_{\mathrm{head}}$,
\[
\eta_t=c t^{-a}\le c\,t_{\mathrm{head}}^{-a}\le \frac{\lambda}{2^{a+1}c_1},
\quad\text{i.e.,}\quad
k_t\le \frac{\lambda}{2},
\]
hence $\beta e^{k_t}\le e^{-\lambda}e^{\lambda/2}=e^{-\lambda/2}<1$.
Therefore,
\[
S_{\mathrm{head}}(t)
\le \sum_{s=0}^{\infty}\big((\beta e^{k_t})^s-\beta^s\big)
=\frac{1}{1-\beta e^{k_t}}-\frac{1}{1-\beta}
=\frac{\beta(e^{k_t}-1)}{(1-\beta e^{k_t})(1-\beta)}.
\]
Using $e^{x}-1\le x e^{x}$ and $e^{k_t}\le e^{\lambda/2}=\beta^{-1/2}$, we obtain
\[
S_{\mathrm{head}}(t)
\le \frac{\beta\cdot k_t e^{k_t}}{(1-\beta e^{k_t})(1-\beta)}
\le \frac{\sqrt{\beta}\,k_t}{(1-\sqrt{\beta})(1-\beta)}.
\]
Since $1-\sqrt{\beta}=\frac{1-\beta}{1+\sqrt{\beta}}\ge \frac{1-\beta}{2}$,
\[
S_{\mathrm{head}}(t)\le \frac{2\sqrt{\beta}}{(1-\beta)^2}\,k_t
\le \frac{2^{a+1}c_1}{(1-\beta)^2}\,\eta_t,
\qquad \forall\, t\ge t_{\mathrm{head}}.
\]

\textbf{Step 2 (Tail bound: $S_{\mathrm{tail}}(t)\le \frac{1}{c(1-\beta)}\,\eta_t$ for $t\ge \max\{t_{\mathrm{tail}},t_{\mathrm{poly}},t_{\eta_0},3\}$).}
For $s\ge \lfloor t/2\rfloor+1$, we have $\beta^s\le \beta^{t/2}=e^{-(\lambda/2)t}$ and $e^{E_{s,t}}-1\le e^{E_{t,t}}$.
Thus
\begin{equation}\label{eq:tail_start}
S_{\mathrm{tail}}(t)\le \sum_{s=\lfloor t/2\rfloor+1}^{\infty}\beta^s e^{E_{t,t}}
=\frac{\beta^{\lfloor t/2\rfloor+1}}{1-\beta}e^{E_{t,t}}
\le \frac{1}{1-\beta}e^{-(\lambda/2)t}e^{E_{t,t}}.
\end{equation}
Moreover,
\[
E_{t,t}=c_1\sum_{\tau=1}^{t}\eta_{t-\tau}=c_1\sum_{u=0}^{t-1}\eta_u
= c_1\eta_0 + c_1 c\sum_{u=1}^{t-1}u^{-a}.
\]

Using the integral bound,
\[
\sum_{u=1}^{t-1}u^{-a}\le
\begin{cases}
1+\int_{1}^{t}x^{-a}dx
=1+\frac{t^{1-a}-1}{1-a}\le \frac{t^{1-a}}{1-a}, & a\in(0,1),\\
1+\int_{1}^{t}\frac{1}{x}dx=1+\log t\le 2\log t, & a=1, \quad t>3
\end{cases}
\]
we obtain
\[
E_{t,t}\le c_1\eta_0+
\begin{cases}
\frac{c_1 c}{1-a}\,t^{1-a} , & a\in(0,1),\\
2c_1 c \log t, & a=1.
\end{cases}
\]

\emph{Case $a\in(0,1)$.}
If $t\ge t_{\mathrm{tail}}$, then by definition of $t_{\mathrm{tail}}$ we have
$t^a\ge \frac{8c_1 c}{(1-a)\lambda}$, hence
$\frac{c_1 c}{1-a}t^{1-a}\le \frac{\lambda}{8}t$.
Substituting this into \eqref{eq:tail_start} yields
\[
S_{\mathrm{tail}}(t)\le \frac{1}{1-\beta}\exp\!\Big(-\frac{\lambda}{2}t+\frac{\lambda}{8}t+c_1\eta_0\Big)
=\frac{1}{1-\beta}\exp\!\Big(-\frac{3\lambda}{8}t+c_1\eta_0\Big),
\qquad \forall\, t\ge t_{\mathrm{tail}}.
\]

\emph{Case $a=1$.}
If $t\ge \max\{t_{\mathrm{tail}},3\}$, then $2c_1 c\log t\le \frac{\lambda}{8}t$ by construction, hence
\[
S_{\mathrm{tail}}(t)\le \frac{1}{1-\beta}\exp\!\Big(-\frac{3\lambda}{8}t+c_1\eta_0\Big),
\qquad \forall\, t\ge \max\{t_{\mathrm{tail}},3\}.
\]

If also $t\ge t_{\eta_0}$, then by definition of $t_{\eta_0}$ we have $c_1\eta_0\le \frac{\lambda}{8}t$, and therefore
\[
S_{\mathrm{tail}}(t)
\le \frac{1}{1-\beta}\exp\!\Big(-\frac{3\lambda}{8}t+\frac{\lambda}{8}t\Big)
= \frac{1}{1-\beta}e^{-\lambda t/4}.
\]
Finally, if $t\ge \max\{t_{\mathrm{poly}},3\}$, then by definition of $t_{\mathrm{poly}}$ we have
$e^{-\lambda t/4}\le t^{-a}$, hence for all $t\ge \max\{t_{\mathrm{poly}},3\}$,
\[
S_{\mathrm{tail}}(t)\le \frac{1}{1-\beta}\,t^{-a}
= \frac{1}{c(1-\beta)}\,\eta_t,
\qquad \forall\, t\ge \max\{t_{\mathrm{tail}},t_{\eta_0},t_{\mathrm{poly}},3\}.
\]

\textbf{Step 3 (Combine and define $c_2$).}
Let $t_0=\max\{t_{\mathrm{head}},t_{\mathrm{tail}},t_{\mathrm{poly}},t_{\eta_0},3\}$.
Then for all $t\ge t_0$,
\[
S(t)=S_{\mathrm{head}}(t)+S_{\mathrm{tail}}(t)
\le \left(\frac{2^{a+1}c_1 }{(1-\beta)^2}+\frac{1}{c(1-\beta)}\right)\eta_t,
\]
which proves \eqref{eq:ass2_goal}.

\textbf{Step 4 (Closed-form Big-$\mathcal{O}$ bound for $t_0$, built into the lemma).}
We bound $t_0$ \emph{without comparing} which component dominates:
\[
t_0=\max\{t_{\mathrm{head}},t_{\mathrm{tail}},t_{\mathrm{poly}},t_{\eta_0},3\}
\le t_{\mathrm{head}}+t_{\mathrm{tail}}+t_{\mathrm{poly}}+t_{\eta_0}+3.
\]
Each term has a direct closed-form Big-$\mathcal{O}$ bound (using $\lceil x\rceil\le x+1$):
\[
t_{\mathrm{head}}
=\left\lceil \left(\frac{2^{a+1}c_1 c}{\lambda}\right)^{\!1/a}\right\rceil
=\mathcal{O}\!\left(\left(\frac{c_1}{\lambda}\right)^{\!1/a}\right),
\]
\[
t_{\mathrm{poly}}
=\left\lceil \frac{16a}{\lambda}\log\!\Big(\frac{16a}{\lambda}\Big)\right\rceil
=\mathcal{O}\!\left(\frac{1}{\lambda}\log\!\Big(\frac{1}{\lambda}\Big)\right),
\]
\[
t_{\eta_0}
=\left\lceil \frac{8c_1\eta_0}{\lambda}\right\rceil
=\mathcal{O}\!\left(\frac{c_1\eta_0}{\lambda}\right),
\]
and
\[
t_{\mathrm{tail}}
=\begin{cases}
\left\lceil \left(\frac{8 c_1 c}{(1-a)\lambda}\right)^{\!1/a}\right\rceil
=\mathcal{O}\!\left(\left(\frac{c_1}{\lambda}\right)^{\!1/a}\right), & a\in(0,1),\\
\left\lceil \frac{32c_1 c}{\lambda}\log\!\Big(\frac{32c_1 c}{\lambda}\Big)\right\rceil
=\mathcal{O}\!\left(\frac{c_1}{\lambda}\log\!\Big(\frac{c_1}{\lambda}\Big)\right), & a=1.
\end{cases}
\]
Substituting these bounds into $t_0\le t_{\mathrm{head}}+t_{\mathrm{tail}}+t_{\mathrm{poly}}+t_{\eta_0}+3$ yields
\eqref{eq:t0_bigO_a<1} for $a\in(0,1)$ and \eqref{eq:t0_bigO_a=1} for $a=1$.
The Big-$\mathcal{O}$ statement for $c_2$ follows from Step 3 by absorbing fixed constants.
\end{proof}

\section{Proof in Section 4.1}\label{ap:stoch_no_momentum}
\begin{lemma} [Descent Lemma for Stochastic $l_p$-Steepest Descent Algorithm without Momentum] \label{lem:nsd_descent}
Suppose that Assumption \ref{ass:sep}, \ref{ass:data_bound}, and \ref{ass:learning_rate_1} hold, it holds for all $t\geq 0$,
\begin{align*}
    \Lc(\W_{t+1}) &\leq \Lc(\W_{t}) - \eta_t(\gamma-4(m-1)R-2 \eta_t R^2 e^{2R\eta_0} ) \Gc(\W_t)
\end{align*}
\end{lemma}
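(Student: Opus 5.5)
We expand $\Lc(\W_{t+1})$ to second order around $\W_t$ along the update $-\eta_t\Deltab_t$, where $\Deltab_t=\phi_{\|\cdot\|}(\nabla\Lc_{\mathcal{B}_t}(\W_t))$ and $\norm{\Deltab_t}\le 1$. By the mean-value form \eqref{eq:taylor_eq1} of Lemma~\ref{lem:hessian},
\[
\Lc(\W_{t+1}) = \Lc(\W_t) - \eta_t\inp{\nabla\Lc(\W_t)}{\Deltab_t} + \frac{\eta_t^2}{2n}\sum_{i}\xb_i^\top\Deltab_t^\top\bigl(\diag{\s_i'}-\s_i'\s_i'^\top\bigr)\Deltab_t\xb_i ,
\]
where $\s_i'=\sft{\W_t'\xb_i}$ for some $\W_t'$ on the segment $[\W_t,\W_{t+1}]$. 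The proof then reduces to a lower bound on the first-order term and an upper bound on the second-order term, both in units of $\Gc(\W_t)$.

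\textbf{First-order term.} Write $\nabla\Lc(\W_t)=\nabla\Lc_{\mathcal{B}_t}(\W_t)-\Xib_t$, where $\Xib_t$ is the mini-batch error. By the definition of $\phi$,
\[
\inp{\nabla\Lc_{\mathcal{B}_t}(\W_t)}{\Deltab_t}=\norm{\nabla\Lc_{\mathcal{B}_t}(\W_t)}_*\ge \norm{\nabla\Lc(\W_t)}_*-\norm{\Xib_t}_*.
\]
Hölder gives $|\inp{\Xib_t}{\Deltab_t}|\le\norm{\Xib_t}_*$. Hence
\[
\inp{\nabla\Lc(\W_t)}{\Deltab_t}\ge \norm{\nabla\Lc(\W_t)}_*-2\norm{\Xib_t}_*.
\]
Lemma~\ref{lem:G and gradient} gives $\norm{\nabla\Lc(\W_t)}_*\ge\gamma\Gc(\W_t)$. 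For the error, the dual norm is again an entry-wise or Schatten norm, so Lemma~\ref{lem:snorm dominate} applies: $\norm{\Xib_t}_*\le\none{\Xib_t}\le 2(m-1)R\Gc(\W_t)$ by Lemma~\ref{lem:noise_bound}. Combining,
\[
\inp{\nabla\Lc(\W_t)}{\Deltab_t}\ge(\gamma-4(m-1)R)\Gc(\W_t),
\]
which is exactly the factor $4(m-1)R$ in the statement.

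\textbf{Second-order term.} Lemma~\ref{lem:unified_hessian} with $c=y_i$ bounds each summand by $4R^2(1-s'_{iy_i})\norm{\Deltab_t}^2\le 4R^2(1-s'_{iy_i})$. This requires $\norm{\xb_i}_*\le R$ in the relevant dual vector norm, which follows from $\norm{\xb_i}_1\le R$, since the $\ell_1$ norm dominates the other vector norms. Averaging over $i$, the second-order term is at most $2\eta_t^2R^2\,\Gc(\W_t')$.

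Next we transfer from $\W_t'$ back to $\W_t$ using Lemma~\ref{lem:G_ratio_general}. Since $\norm{\W_t'-\W_t}\le\eta_t\norm{\Deltab_t}\le\eta_t\le\eta_0$ (the schedule is decreasing by Assumption~\ref{ass:learning_rate_1}), we get $\Gc(\W_t')\le e^{2R\eta_0}\Gc(\W_t)$. The second-order term is therefore at most $2\eta_t^2R^2e^{2R\eta_0}\Gc(\W_t)$. Substituting both bounds into the expansion yields the claimed inequality for every $t\ge0$.

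\textbf{Main obstacle.} The step needing the most care is the mean-value remainder. The Hessian is evaluated at an intermediate point, so we must use the exact mean-value form rather than the $o(\|\Deltab\|^2)$ form. Its proxy value must be controlled by a uniform factor, which is why $e^{2R\eta_0}$ appears.

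A second point needs checking when $p\in\{1,\infty\}$: the steepest-descent maximizer may be non-unique there. The argument only uses $\norm{\Deltab_t}\le1$ and $\inp{\nabla\Lc_{\mathcal{B}_t}(\W_t)}{\Deltab_t}=\norm{\nabla\Lc_{\mathcal{B}_t}(\W_t)}_*$, and both hold for any maximizer, so the bound is unaffected.
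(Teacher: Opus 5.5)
Your proof is correct and follows the paper's argument essentially step for step. The shared steps are the mean-value second-order expansion, and the split $\nabla\Lc(\W_t)=\nabla\Lc_{\mathcal{B}_t}(\W_t)-\Xib_t$, which gives $\langle\nabla\Lc(\W_t),\Deltab_t\rangle\ge\gamma\Gc(\W_t)-2\|\Xib_t\|_1\ge(\gamma-4(m-1)R)\Gc(\W_t)$ via Lemma~\ref{lem:noise_bound}. They also include the curvature bound from Lemma~\ref{lem:unified_hessian}, transferred back to $\W_t$ with Lemma~\ref{lem:G_ratio_general} to produce the factor $e^{2R\eta_0}$.

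One detail is worth making explicit, and the paper leaves it implicit too. A single intermediate point $\W_t'$ for all samples comes from applying Taylor's theorem to the scalar map $\zeta\mapsto\Lc(\W_t+\zeta(\W_{t+1}-\W_t))$, rather than to each $\ell_i$ separately as in the per-sample form you cite. Alternatively, you can do the transfer to $\Gc(\W_t)$ sample by sample using Lemma~\ref{lem:unified_helper}.
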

\begin{proof}
    By Lemma \ref{lem:hessian}, let $\tilde{\Deltab}_t = \W_{t+1} - \W_t$, and define $\W_{t,t+1,\zeta} := \W_t + \zeta (\W_{t+1} - \W_t)$. We choose $\zeta^*$ such that $\W_{t,t+1,\zeta^*}$ satisfies \eqref{eq:taylor_eq1}, then we have:   
    \begin{align}
        \Lc(\W_{t+1}) &= \Lc(\W_t) + \underbrace{\inp{\nabla \Lc(\W_{t})}{\W_{t+1} - \W_t}}_{A} \nn
        \nn \\
        &\quad+ \frac{1}{2n}\sum_{i\in[n]} \underbrace{\xb_i^\top\tilde{\Deltab}_t^\top\left(\diag{\sft{\W_{t,t+1,\zeta^*}\xb_i}}-\sft{\W_{t,t+1,\zeta^*}\xb_i}\sft{\W_{t,t+1,\zeta^*}\xb_i}^\top\right)\tilde{\Deltab}_t\,\xb_i}_{B}\,. \label{eq:taylor_1}
    \end{align}
    For Term A: 
     \begin{align*}
        \langle \nabla \Lc(\W_t) , \W_{t+1} - \W_t\rangle &= \langle \nabla \Lc(\W_t) - \nabla \Lc_{\mathcal{B}_t}(\W_t), \W_{t+1} - \W_t \rangle + \langle \nabla \Lc_{\mathcal{B}_t}(\W_t), \W_{t+1} - \W_t\rangle \\
        &= -\eta_t \langle \nabla \Lc(\W_{t}) - \nabla \Lc_{\mathcal{B}_t}(\W_t), \Delta_t \rangle - \eta_t \langle \nabla \Lc_{\mathcal{B}_t}(\W_t), \Delta_t \rangle \\
        &\stackrel{(a)}{\leq}  \eta_t \lVert  \nabla \Lc(\W_t) - \nabla \Lc_{\mathcal{B}_t}(\W_t) \rVert_{*} \lVert \Delta_t \rVert - \eta_t \lVert \nabla\Lc_{\mathcal{B}_t}(\W_t) \rVert_{*} \\
        &\stackrel{(b)}{\leq} \eta_t \lVert \nabla \Lc(\W_t)-\nabla \Lc_{\mathcal{B}_t}(\W_t) \rVert_{*} - \eta_t \lVert \nabla \Lc_{\mathcal{B}_t}(\W_t) - \nabla \Lc(\W_t) + \nabla \Lc(\W_t) \rVert_{*} \\
        &\stackrel{(c)}{\leq} \eta_t \none{\nabla \Lc(\W_t)-\nabla \Lc_{\mathcal{B}_t}(\W_t)} - \eta_t (\lVert \nabla \Lc(\W_t) \rVert_{*} - \lVert \nabla \Lc(\W_t)-\nabla \Lc_{\mathcal{B}_t}(\W_t) \rVert_{*}) \\
        &= 2 \eta_t \none{\nabla \Lc(\W_t)-\nabla \Lc_{\mathcal{B}_t}(\W_t)} - \eta \lVert \nabla \Lc(\W_t) \rVert_{*} \\
        &\stackrel{(d)}{\leq} 4 \eta_t(m-1)R\mathcal{G}(\W_t)  - \eta_t \gamma \Gc(\W_t),
    \end{align*}
    where (a) is by Cauchy Schwarz inequality and $\langle \nabla \Lc_{\mathcal{B}_t}(\W_t), \Deltab \rangle = \norm{\nabla \Lc_{\mathcal{B}_t}(\W_t)}_{*}$, (b) is by $\norm{\Deltab} \leq 1$, (c) is via Lemma \ref{lem:snorm dominate} and Triangle inequality and (d) is via Lemma \ref{lem:G and gradient} and Lemma \ref{lem:noise_bound}. 
    
    For Term B, let $\vb = \tilde{\Deltab}_t \xb_i$ and $\s = \sft{\W_{t,t+1,\zeta^*} \xb_i}$, and apply Lemma \ref{lem:unified_hessian}. Notice that $\norm{\tilde{\Deltab}_t} \leq \eta_t$ and $\norm{\xb_i}_{*} \leq \norm{\xb_i}_{1} \leq R$. Then we get: 
    \begin{align*}
    Term B \leq 4 \| \tilde{\Deltab}_t \|^2 \| \xb_i \|_{*}^2 (1 - \sfti{y_i}{\W_{t,t+1,\zeta^*} \xb_i}) \leq 4 \eta_t^2 R^2 (1 - \sfti{y_i}{\W_{t,t+1,\zeta^*} \xb_i}),
    \end{align*}
    Combining Terms A, B together, we obtain        
    \begin{align}
        \Lc(\W_{t+1}) &\leq \Lc(\W_t)  - \gamma \eta_t \Gc(\W_t) +4 \eta_t(m-1)R\mathcal{G}(\W_t)+ 2 \eta_t^2 R^2 \frac{1}{n} \sum_{i \in [n]} (1 - \sfti{y_i}{\W_{t,t+1,\zeta^*} \xb_i}) \nn \\
        &= \Lc(\W_t) - \gamma \eta_t \Gc(\W_t) +4 \eta_t(m-1)R\mathcal{G}(\W_t)+  2 \eta_t^2  R^2 \Gc(\W_{t,t+1,\zeta^*}) \nn \\
        &\leq \Lc(\W_t) - \gamma \eta_t \Gc(\W_t) + 4 \eta_t(m-1)R\mathcal{G}(\W_t)+ 2 \eta_t^2  R^2 \sup_{\zeta \in [0,1]} \Gc(\W_{t,t+1,\zeta}) \nn \\
        &= \Lc(\W_t) - \gamma \eta_t \Gc(\W_t) + 4 \eta_t(m-1)R\mathcal{G}(\W_t)+ 2 \eta_t^2 R^2 \Gc(\W_{t}) \sup_{\zeta \in [0,1]} \frac{\Gc(\W_t + \zeta \tilde{\Deltab}_t)}{\Gc(\W_{t})} \nn \\
        &\stackrel{(a)}{\leq} \Lc(\W_t) - \gamma \eta_t \Gc(\W_t) +4 \eta_t(m-1)R\mathcal{G}(\W_t)+  2 \eta_t^2 R^2 \Gc(\W_{t}) \sup_{\zeta \in [0,1]} e^{2R \zeta \norm{\tilde{\Deltab}_t}} \nn \\
        &\stackrel{(b)}{\leq} \Lc(\W_t) - \gamma \eta_t \Gc(\W_t) + 4 \eta_t(m-1)R\mathcal{G}(\W_t)+ 2 \eta_t^2 R^2 e^{2R\eta_0} \Gc(\W_{t}) \nn\\
        &=\Lc(\W_{t}) - \eta_t(\gamma-4(m-1)R-2 \eta_t R^2 e^{2R\eta_0} ) \Gc(\W_t)
        \label{eq: sign_descent_eq1}
    \end{align}
    where (a) is by Lemma \ref{lem:G_ratio_general} and (b) is by $\norm{\tilde{\Deltab}_t} \leq \eta_t$. 
\end{proof}

From Eq.\ref{eq: sign_descent_eq1}, we can see that in \textbf{Large batch setting}, where $4(m-1)R<\gamma\xrightarrow{}b>\frac{4Rn}{\gamma+4R}$, the loss starts to monotonically decrease after $\eta_t$ satisfies $\eta_t \leq \frac{\gamma-4(m-1)R}{2R^2 e^{2R\eta_0}}$ for a decreasing learning rate schedule.

The following lemma establishes the convergence of the training loss. This result guarantees that the iterates eventually enter the region of linear separability (i.e., $\Lc(\W_t) \leq \frac{\log 2}{n}$), satisfying the precondition for the margin maximization analysis.

\begin{lemma}[Loss convergence] \label{lem:convergence_time_bound}
    Suppose Assumptions \ref{ass:sep}, \ref{ass:data_bound}, and \ref{ass:learning_rate_1} hold, and the batch size satisfies the large batch condition (i.e., $\rho \coloneqq \gamma - 4(m-1)R > 0$). Let $\tilde{\Lc} \coloneqq \frac{\log 2}{n}$.
    Then, there exists a time index $t_2$ such that for all $t > t_2$, $\Lc(\W_t) \leq \tilde{\Lc}$.
    Specifically, the condition for $t_2$ is determined by the learning rate accumulation:
    \begin{equation} \label{eq:t2_condition}
        \sum_{s=t_1}^{t_2} \eta_s \geq \frac{4\Lc(\W_0) + 8R \sum_{s=0}^{t_1-1} \eta_s}{\rho \tilde{\Lc}},
    \end{equation}
    where $t_1$ is the time step ensuring monotonic descent.
\end{lemma}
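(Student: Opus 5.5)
The plan is to telescope the descent inequality of Lemma~\ref{lem:nsd_descent}. The argument has two phases: a burn-in phase where the loss may increase, and a monotone phase where the gain is proportional to $\Gc(\W_t)$.

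\textbf{Choosing $t_1$.} Since $\eta_t\to 0$ and is decreasing (Assumption~\ref{ass:learning_rate_1}), we can pick $t_1$ with
\[
\eta_t \le \frac{\rho}{4R^2 e^{2R\eta_0}} \quad \text{for all } t\ge t_1 .
\]
Then $2\eta_t R^2 e^{2R\eta_0}\le \rho/2$, and Lemma~\ref{lem:nsd_descent} gives
\[
\Lc(\W_{t+1})\le \Lc(\W_t)-\frac{\rho}{2}\eta_t\Gc(\W_t), \qquad t\ge t_1 .
\]
In particular, $\Lc(\W_t)$ is nonincreasing after $t_1$.

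\textbf{Burn-in phase.} For $t<t_1$, we avoid the descent lemma and use Lemma~\ref{lem:l_fast_bound}; $\Lc$ is convex for cross-entropy composed with a linear map. Since $\|\W_{t_1}-\W_0\|\le \sum_{s<t_1}\eta_s$ (each step has norm at most $\eta_s$), this yields
\[
\Lc(\W_{t_1})\le \Lc(\W_0)+2R\sum_{s=0}^{t_1-1}\eta_s .
\]

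\textbf{Monotone phase.} Summing the inequality from $t_1$ to $t_2$ gives
\[
\frac{\rho}{2}\sum_{s=t_1}^{t_2}\eta_s\,\Gc(\W_s)\le \Lc(\W_{t_1}) .
\]
Hence $\min_{t_1\le s\le t_2}\Gc(\W_s)\le 2\Lc(\W_{t_1})/(\rho\sum\eta_s)$.

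\textbf{Main obstacle: converting $\Gc$ back to $\Lc$.} Lemma~\ref{lem:G and L}(ii) only gives $\Lc\le 2\Gc$ once $\Gc\le \frac1{2n}$ or the loss is already small. The factor $4$ and the threshold $\tilde{\Lc}$ in \eqref{eq:t2_condition} suggest the following route. Suppose, for contradiction, that $\Lc(\W_s)>\tilde{\Lc}$ for all $s\in[t_1,t_2]$.
\begin{itemize}
\item[(a)] If some $s$ has $\Gc(\W_s)\le \frac1{2n}$, then Lemma~\ref{lem:G and L}(ii) gives $\Lc(\W_s)\le 2\Gc(\W_s)$.
\item[(b)] Otherwise, $\Gc(\W_s)>\frac1{2n}\ge \tilde{\Lc}/2$, since $\log 2<1$.
\end{itemize}
In either case, $\Gc(\W_s)\ge \min\{\Lc(\W_s)/2,\tilde{\Lc}/2\}\ge \tilde{\Lc}/2$ on the whole window. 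The summed inequality then gives
\[
\frac{\rho\tilde{\Lc}}{4}\sum_{s=t_1}^{t_2}\eta_s\le \Lc(\W_{t_1})\le \Lc(\W_0)+2R\sum_{s<t_1}\eta_s .
\]
This contradicts \eqref{eq:t2_condition}. (The constant matches the stated one up to the factor $4$ versus $8$, which leaves some slack.)

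\textbf{Conclusion.} Hence some $s^\ast\le t_2$ satisfies $\Lc(\W_{s^\ast})\le\tilde{\Lc}$. Monotonicity after $t_1$ propagates this to all $t>t_2$. Finally, $t_2$ exists because $\sum\eta_s=\infty$.
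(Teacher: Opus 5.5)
Your proof is correct and follows the paper's argument: the same $t_1$, the same burn-in bound via Lemma~\ref{lem:l_fast_bound}, and the same telescoping. Your contradiction argument is the contrapositive of the paper's step, which takes $t^\ast=\arg\min_{s\in[t_1,t_2]}\Gc(\W_s)$, notes $\Gc(\W_{t^\ast})\le\tilde{\Lc}/2\le\frac{1}{2n}$, and applies Lemma~\ref{lem:G and L}(ii). One correction to your parenthetical remark: your final display reproduces the stated constant $\frac{4\Lc(\W_0)+8R\sum_{s<t_1}\eta_s}{\rho\tilde{\Lc}}$ exactly, so there is no slack, and the contradiction with the non-strict condition \eqref{eq:t2_condition} needs the strict inequality $\Gc(\W_s)>\tilde{\Lc}/2$; your case analysis does give this, together with $\eta_s>0$.
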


\begin{proof}
    \textbf{Determination of $t_1$ (Start of Monotonicity).} 
    Recall the descent inequality from Lemma \ref{lem:nsd_descent}:
    \[
    \Lc(\W_{t+1}) \leq \Lc(\W_{t}) - \eta_t (\rho - \eta_t \alpha_1) \Gc(\W_t),
    \]
    where $\alpha_1 = 2R^2 e^{2R\eta_0}$. We choose $t_1$ such that for all $t \geq t_1$, the effective descent term dominates the curvature noise, i.e., $\eta_t \alpha_1 \leq \frac{\rho}{2}$. Considering $\eta_t = \Theta(t^{-a})$, we set $t_1$ such that $\eta_{t_1} \leq \frac{\rho}{2 \alpha_1}$. Then, for all $t \geq t_1$:
    \begin{align}
        \Lc(\W_{t+1}) \leq \Lc(\W_t) - \frac{\rho}{2} \eta_t \Gc(\W_t). \label{eq:stoch_strict_descent}
    \end{align}
    Rearranging this equation and summing from $t_1$ to $t_2$, and using $\Lc(\W_{t_2+1}) \geq 0$, we obtain:
    \begin{align} \label{eq:sum_G_bound}
        \frac{\rho}{2} \sum_{s=t_1}^{t_2} \eta_s \Gc(\W_s) \leq \Lc(\W_{t_1}) - \Lc(\W_{t_2+1}) \leq \Lc(\W_{t_1}).
    \end{align}

    \textbf{Determination of $t_2$ (Crossing the Threshold).} 
    First, we bound the initial loss at $t_1$ using Lemma \ref{lem:l_fast_bound}:
    \begin{align*}
        |\Lc(\W_{t_1}) - \Lc(\W_0)| \leq 2R \norm{\W_{t_1} - \W_0} \leq 2R \sum_{s=0}^{t_1-1} \eta_s \norm{\Deltab_s} \leq 2R \sum_{s=0}^{t_1-1} \eta_s,
    \end{align*}
    where we used the fact that the normalized update satisfies $\norm{\Deltab_s} \leq 1$. Thus, $\Lc(\W_{t_1}) \leq \Lc(\W_0) + 2R \sum_{s=0}^{t_1-1} \eta_s$.
    
    Next, let $t^* = \argmin_{s \in [t_1, t_2]} \Gc(\W_s)$. From Eq. \eqref{eq:sum_G_bound}, we have:
    \begin{align*}
        \Gc(\W_{t^*}) \sum_{s=t_1}^{t_2} \eta_s \leq \sum_{s=t_1}^{t_2} \eta_s \Gc(\W_s) \leq \frac{2}{\rho} \Lc(\W_{t_1}).
    \end{align*}
    Substituting the bound for $\Lc(\W_{t_1})$:
    \begin{align*}
        \Gc(\W_{t^*}) \leq \frac{2 \left( \Lc(\W_0) + 2R \sum_{s=0}^{t_1-1} \eta_s \right)}{\rho \sum_{s=t_1}^{t_2} \eta_s}.
    \end{align*}
    We require $\W_{t^*}$ to be deep enough in the separable region. Specifically, we want to satisfy the condition of Lemma \ref{lem:G and L} (ii). A sufficient condition is $\Gc(\W_{t^*}) \leq \frac{\tilde{\Lc}}{2} = \frac{\log 2}{2n} (\le \frac{1}{2n})$. 
    Setting the upper bound of $\Gc(\W_{t^*})$ to be less than or equal to $\frac{\tilde{\Lc}}{2}$, we derive the sufficient condition for $t_2$:
    \begin{align*}
        \frac{2 \Lc(\W_0) + 4R \sum_{s=0}^{t_1-1} \eta_s}{\rho \sum_{s=t_1}^{t_2} \eta_s} \leq \frac{\tilde{\Lc}}{2} \iff \sum_{s=t_1}^{t_2} \eta_s \geq \frac{4\Lc(\W_0) + 8R \sum_{s=0}^{t_1-1} \eta_s}{\rho \tilde{\Lc}}.
    \end{align*}
    
    Once $t_2$ satisfies the above condition, there exists $t^* \in [t_1, t_2]$ such that $\Gc(\W_{t^*}) \leq \frac{\tilde{\Lc}}{2}$. 
    By Lemma \ref{lem:G and L} (ii), this implies $\Lc(\W_{t^*}) \leq 2\Gc(\W_{t^*}) \leq \tilde{\Lc}$.
    Since the loss is monotonically decreasing for all $t \geq t_1$ (Eq. \eqref{eq:stoch_strict_descent}), for any $t > t_2$ (which implies $t > t^*$), we have:
    \[
        \Lc(\W_t) \leq \Lc(\W_{t^*}) \leq \tilde{\Lc} = \frac{\log 2}{n}.
    \]
    This confirms that for all $t > t_2$, the iterates remain in the low-loss separable region.
\end{proof}

\begin{lemma}[Unnormalized Margin] \label{lem:nsd_unnormalized_margin}
    Consider the same setting as Lemma \ref{lem:convergence_time_bound}. Let $t_2$ be the time index guaranteed by Lemma \ref{lem:convergence_time_bound} such that $\Lc(\W_t) \le \frac{\log 2}{n}$ for all $t > t_2$. 
    Define the effective margin $\rho \coloneqq \gamma - 4(m-1)R$.
    Then, for all $t > t_2$, the minimum unnormalized margin satisfies:
    \begin{align} \label{eq:margin_growth_bound}
        \min_{i \in [n], c \neq y_i} (\eb_{y_i} - \eb_c)^\top \W_t \xb_i 
        \geq \rho \sum_{s=t_2}^{t-1} \eta_s \frac{\Gc(\W_s)}{\Lc(\W_s)} - \alpha_1 \sum_{s=t_2}^{t-1} \eta_s^2,
    \end{align}
    where $\alpha_1 = 2R^2 e^{2R\eta_0}$ is a constant.
\end{lemma}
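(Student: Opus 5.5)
The plan is the standard exponential-potential argument of \citet{fan2025implicit}, adapted to the effective margin $\rho$. For $t>t_2$ we have $\Lc(\W_t)\le \frac{\log 2}{n}$. This yields a per-sample bound: for every $i$ and $c\neq y_i$,
\[
e^{-(\eb_{y_i}-\eb_c)^\top\W_t\xb_i}\le \sum_{c'\neq y_i} e^{-(\eb_{y_i}-\eb_{c'})^\top\W_t\xb_i}\le e^{n\Lc(\W_t)}-1 .
\]
Using $e^{x}-1\le 2x$ for $x\le\log 2$ gives $\min_{i,c\neq y_i}(\eb_{y_i}-\eb_c)^\top\W_t\xb_i\ge -\log(2n\Lc(\W_t))$. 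A sharper but equivalent route is $\log(1+u)\ge$ the correct linearization, giving $\min(\cdot)\ge \log\frac{1}{n\Lc(\W_t)}$ up to constants absorbed later. It therefore suffices to upper bound $\log\Lc(\W_t)$ by telescoping from $t_2$.

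\textbf{Step 1 (descent in log form).} By Lemma~\ref{lem:nsd_descent} with $\rho=\gamma-4(m-1)R$ and $\alpha_1=2R^2e^{2R\eta_0}$,
\[
\Lc(\W_{s+1})\le \Lc(\W_s)-\eta_s\rho\,\Gc(\W_s)+\eta_s^2\alpha_1\Gc(\W_s).
\]
Dividing by $\Lc(\W_s)$, using $\Gc\le\Lc$ (Lemma~\ref{lem:G and L}(i)) on the positive curvature term, and then $\log(1-x)\le -x$, we get
\[
\log\Lc(\W_{s+1})\le \log\Lc(\W_s)-\rho\eta_s\frac{\Gc(\W_s)}{\Lc(\W_s)}+\alpha_1\eta_s^2 .
\]
This requires the bracket to stay positive. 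That holds for $s\ge t_2\ge t_1$, since then $\eta_s\alpha_1\le\rho/2$ and $\Gc/\Lc\le1$.

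\textbf{Step 2 (telescope).} Summing from $s=t_2$ to $t-1$ gives
\[
\log\Lc(\W_t)\le \log\Lc(\W_{t_2})-\rho\sum_{s=t_2}^{t-1}\eta_s\frac{\Gc(\W_s)}{\Lc(\W_s)}+\alpha_1\sum_{s=t_2}^{t-1}\eta_s^2 .
\]
Since $\Lc(\W_{t_2})\le\frac{\log 2}{n}$, we have $\log(n\Lc(\W_{t_2}))\le\log\log 2<0$.

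\textbf{Step 3 (convert to margin).} Combine Step 2 with the margin bound from the opening paragraph, written as $\min(\cdot)\ge -\log(n\Lc(\W_t))$. The term $\log(n\Lc(\W_{t_2}))$ then enters with a favorable sign and can be dropped, which gives \eqref{eq:margin_growth_bound}.

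The main obstacle is the first conversion: obtaining $\min_{i,c}(\eb_{y_i}-\eb_c)^\top\W_t\xb_i\ge-\log(n\Lc(\W_t))$ without a stray additive constant. My first attempt uses the exact identity
\[
n\Lc(\W_t)\ge \log\Bigl(1+e^{-(\eb_{y_i}-\eb_c)^\top\W_t\xb_i}\Bigr)
\]
together with $\log(1+u)\ge u/2$ for $u\le 1$, where $u\le1$ holds by Lemma~\ref{lem:sep}. This leaves a $\log 2$ offset. I would absorb that offset either by starting the telescope from $\log\Lc(\W_{t_2})\le\log\frac{\log 2}{n}$, which supplies a compensating $\log\log 2+\log 2<0$ margin, or, if the bookkeeping fails, by tolerating an additive constant that is harmless in the normalized rate.
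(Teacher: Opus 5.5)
Your overall structure matches the paper's: write the descent inequality of Lemma~\ref{lem:nsd_descent} multiplicatively, telescope from $t_2$, and convert via $n\Lc(\W_t)\ge\log(1+e^{-z_{\min}(t)})$, where $z_{\min}(t)$ is the minimum unnormalized margin. (The paper uses $1+x\le e^x$ where you use $\log(1+x)\le x$; these are equivalent.) The gap is in that final conversion, which is the step you flagged.

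First, the inequality you invoke in Step 3, $z_{\min}(t)\ge -\log(n\Lc(\W_t))$, is false. It would require $e^{-z}\le\log(1+e^{-z})$, which contradicts $\log(1+u)<u$ for $u>0$. A single sample with $k=2$ already fails it for every $\W$.

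Second, your proposed repair does not close the gap. Using $\log(1+u)\ge u/2$ (or $e^x-1\le 2x$) gives $z_{\min}(t)\ge -\log 2-\log(n\Lc(\W_t))$. The starting point of the telescope contributes $-\log(n\Lc(\W_{t_2}))\ge-\log\log 2\approx 0.37$. The net additive term is therefore $-\log 2-\log\log 2=-\log(2\log 2)\approx -0.33$. Your claim that $\log\log 2+\log 2<0$ is wrong: it equals $\log(2\log 2)>0$. As written, you prove the lemma only up to an additive constant. That constant is harmless for Theorem~\ref{thm:stoch_margin_rate}, but it does not give the stated inequality.

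The missing ingredient is the sharp chord bound. Since $u\mapsto\log(1+u)$ is concave, $\log(1+u)\ge(\log 2)\,u$ for $u\in[0,1]$. Equivalently, since $e^x$ is convex, $e^x-1\le x/\log 2$ on $[0,\log 2]$; you could have used this in place of $e^x-1\le 2x$ in your opening paragraph.

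Apply it with $u=e^{-z_{\min}(t)}\le 1$, which holds by Lemma~\ref{lem:sep}. This gives $e^{-z_{\min}(t)}\le\frac{n}{\log 2}\Lc(\W_t)$. The factor $\frac{n}{\log 2}$ cancels exactly against $\Lc(\W_{t_2})\le\frac{\log 2}{n}$ in the telescoped bound, so the lemma follows with no residual constant. This is the paper's argument.
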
 

\begin{proof}
    Recall the descent inequality derived in Lemma \ref{lem:nsd_descent}:
    \[
        \Lc(\W_{s+1}) \leq \Lc(\W_s) - \eta_s (\rho - \eta_s \alpha_1) \Gc(\W_s).
    \]
    Rearranging and using the inequality $1-x \le e^{-x}$:
    \begin{align*}
        \Lc(\W_{s+1}) &\leq \Lc(\W_s) \left( 1 - \eta_s \rho \frac{\Gc(\W_s)}{\Lc(\W_s)} + \eta_s^2 \alpha_1 \frac{\Gc(\W_s)}{\Lc(\W_s)} \right) \\
        &\stackrel{(a)}{\leq} \Lc(\W_s) \exp\left( - \rho \eta_s \frac{\Gc(\W_s)}{\Lc(\W_s)} + \alpha_1 \eta_s^2 \right),
    \end{align*}
    where (a) uses $\frac{\Gc(\W_s)}{\Lc(\W_s)} \le 1$ (from Lemma \ref{lem:G and L}) to bound the quadratic term coefficient.
    Applying this recursively from $t_2$ to $t$:
    \begin{align} \label{eq:loss_bound_recursive}
        \Lc(\W_t) \leq \Lc(\W_{t_2}) \exp\left( - \rho \sum_{s=t_2}^{t-1} \eta_s \frac{\Gc(\W_s)}{\Lc(\W_s)} + \alpha_1 \sum_{s=t_2}^{t-1} \eta_s^2 \right).
    \end{align}
    
   Now, consider the unnormalized margin $z_{\min}(t) \coloneqq \min_{i \in [n], c \neq y_i} (\eb_{y_i} - \eb_c)^\top \W_t \xb_i$. 
    Since $t > t_2$, we have $\Lc(\W_t) \le \frac{\log 2}{n}$, which implies strict separability, so $z_{\min}(t) \ge 0$.
    
    We lower bound the total loss $n \Lc(\W_t)$ by focusing on the specific sample and class that achieve the minimum margin.
    \begin{align*}
        n \Lc(\W_t) &= \sum_{j \in [n]} \log\left( 1 + \sum_{c \neq y_j} e^{-(\eb_{y_j} - \eb_c)^\top \W_t \xb_j} \right) \\
        &\geq \max_{j \in [n]} \log\left( 1 + \sum_{c \neq y_j} e^{-(\eb_{y_j} - \eb_c)^\top \W_t \xb_j} \right)  \\
        &\geq \max_{j \in [n]} \log\left( 1 + \max_{c \neq y_j} e^{-(\eb_{y_j} - \eb_c)^\top \W_t \xb_j} \right)  \\
        &= \log\left( 1 + e^{-\min_{j \in [n], c \neq y_j} (\eb_{y_j} - \eb_c)^\top \W_t \xb_j} \right) \\
        &= \log(1 + e^{-z_{\min}(t)}).
    \end{align*}
    Using the inequality $\log(1+x) \ge (\log 2)x$ for $x \in [0, 1]$ (here $x = e^{-z_{\min}(t)} \le 1$):
    \begin{align*}
        \log(1 + e^{-z_{\min}(t)}) \ge (\log 2) e^{-z_{\min}(t)}.
    \end{align*}
    Combining these inequalities:
    \begin{align*}
        e^{-z_{\min}(t)} \leq \frac{n}{\log 2} \Lc(\W_t).
    \end{align*}
    Substituting the bound from Eq. \eqref{eq:loss_bound_recursive} and using the specific property $\Lc(\W_{t_2}) \le \frac{\log 2}{n}$:
    \begin{align*}
        e^{-z_{\min}(t)} &\leq \frac{n}{\log 2} \cdot \left[ \frac{\log 2}{n} \exp\left( - \rho \sum_{s=t_2}^{t-1} \eta_s \frac{\Gc(\W_s)}{\Lc(\W_s)} + \alpha_1 \sum_{s=t_2}^{t-1} \eta_s^2 \right) \right] \\
        &= \exp\left( - \rho \sum_{s=t_2}^{t-1} \eta_s \frac{\Gc(\W_s)}{\Lc(\W_s)} + \alpha_1 \sum_{s=t_2}^{t-1} \eta_s^2 \right).
    \end{align*}
    The constant factors $\frac{n}{\log 2}$ and $\frac{\log 2}{n}$ cancel out exactly. Taking the natural logarithm on both sides and multiplying by $-1$ reverses the inequality:
    \[
        z_{\min}(t) \geq \rho \sum_{s=t_2}^{t-1} \eta_s \frac{\Gc(\W_s)}{\Lc(\W_s)} - \alpha_1 \sum_{s=t_2}^{t-1} \eta_s^2.
    \]
\end{proof}

\begin{theorem}[Margin Convergence Rate of Stochastic $l_p$ Steepest Descent Without Momentum] \label{thm:stoch_margin_rate}
    Suppose Assumptions \ref{ass:sep}, \ref{ass:data_bound}, and \ref{ass:learning_rate_1} hold. 
    Assume the batch size $b$ satisfies the large batch condition: $\rho \coloneqq \gamma - 4(\frac{n}{b}-1)R > 0$ $(b>\frac{4Rn}{\gamma+4R})$. Let $t_2$ be the time index that $\Lc(\W_t) \le \frac{\log 2}{n}$ for all $t > t_2$.
    Then, for all $t > t_2$, the margin gap of the iterates satisfies:
    \begin{align*}
         \gamma - 4(\frac{n}{b}-1)R - \frac{\min_{i \in [n], c \neq y_i} (\eb_{y_i} - \eb_c)^\top \W_t \xb_i}{\norm{\W_t}}
         &\leq \mathcal{O}\Bigg(\frac{\sum_{s=t_2}^{t-1} \eta_s e^{-\frac{\rho}{4} \sum_{\tau = t_2}^{s-1}\eta_{\tau}} + \sum_{s=0}^{t_2-1}\eta_s + \sum_{s = t_2}^{t-1}\eta_s^2}{\sum_{s=0}^{t-1} \eta_s}\Bigg).
    \end{align*}
\end{theorem}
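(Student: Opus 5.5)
The plan is the standard ratio argument: lower-bound the unnormalized margin via Lemma~\ref{lem:nsd_unnormalized_margin} and upper-bound $\norm{\W_t}$ by the triangle inequality, then divide. Since every update has $\norm{\Deltab_s}\le 1$,
\[
\norm{\W_t}\le \norm{\W_0}+\sum_{s=0}^{t-1}\eta_s .
\]
Lemma~\ref{lem:nsd_unnormalized_margin} gives, for $t>t_2$,
\[
z_{\min}(t)\ge \rho\sum_{s=t_2}^{t-1}\eta_s\frac{\Gc(\W_s)}{\Lc(\W_s)}-\alpha_1\sum_{s=t_2}^{t-1}\eta_s^2 .
\]
The key task is to show that the ratio $\Gc/\Lc$ is close to $1$, with an explicitly summable deficit.

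\textbf{Step 1: control $1-\Gc/\Lc$.} By Lemma~\ref{lem:G and L}(i), $1-\Gc(\W_s)/\Lc(\W_s)\le \frac{n}{2}\Lc(\W_s)$. I then bound $\Lc(\W_s)$ for $s\ge t_2$ using the recursion \eqref{eq:loss_bound_recursive}. On this range $\Lc(\W_s)\le \log 2/n$, so Lemma~\ref{lem:G and L}(ii) gives $\Gc/\Lc\ge 1/2$. Combined with $\Lc(\W_{t_2})\le \log 2/n$, this yields
\[
\Lc(\W_s)\le \frac{\log 2}{n}\exp\Big(-\frac{\rho}{2}\sum_{\tau=t_2}^{s-1}\eta_\tau+\alpha_1\sum_{\tau=t_2}^{s-1}\eta_\tau^2\Big).
\]
To handle the $\eta^2$ term, I would alternatively use the strict descent inequality \eqref{eq:stoch_strict_descent}, valid for $s\ge t_1$ (and $t_2\ge t_1$). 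It gives $\Lc(\W_{s+1})\le \Lc(\W_s)(1-\frac{\rho}{2}\eta_s\Gc/\Lc)\le \Lc(\W_s)(1-\frac{\rho}{4}\eta_s)$. Iterating yields
\[
\Lc(\W_s)\le \frac{\log 2}{n}\, e^{-\frac{\rho}{4}\sum_{\tau=t_2}^{s-1}\eta_\tau},
\]
which is exactly the exponent appearing in the statement, so the $n$ cancels. Hence
\[
\sum_{s=t_2}^{t-1}\eta_s\frac{\Gc(\W_s)}{\Lc(\W_s)}\ge \sum_{s=t_2}^{t-1}\eta_s-\frac{\log 2}{2}\sum_{s=t_2}^{t-1}\eta_s e^{-\frac{\rho}{4}\sum_{\tau=t_2}^{s-1}\eta_\tau}.
\]

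\textbf{Step 2: combine.} Substituting into the margin bound and dividing gives
\[
\rho-\frac{z_{\min}(t)}{\norm{\W_t}}\le \frac{\rho\big(\norm{\W_0}+\sum_{s=0}^{t_2-1}\eta_s\big)+\frac{\rho\log 2}{2}\sum_{s=t_2}^{t-1}\eta_s e^{-\frac{\rho}{4}\sum_{\tau=t_2}^{s-1}\eta_\tau}+\alpha_1\sum_{s=t_2}^{t-1}\eta_s^2}{\norm{\W_0}+\sum_{s=0}^{t-1}\eta_s}.
\]
This uses $z_{\min}\ge 0$ and writes $\rho\norm{\W_t}-z_{\min}\le \rho(\norm{\W_0}+\sum_{s<t}\eta_s)-z_{\min}$. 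The quantity $\rho\norm{\W_0}$ is a constant absorbed into the $\mathcal{O}$ alongside $\sum_{s<t_2}\eta_s$, and the denominator is at least $\sum_{s=0}^{t-1}\eta_s$. This is the claimed bound.

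\textbf{Main obstacle.} The delicate part is Step 1. One must make sure that the loss-decay recursion with the $\Gc/\Lc\ge 1/2$ factor applies from $t_2$ onward: $t_2\ge t_1$ guarantees the descent regime, and $\Lc\le\log 2/n$ holds for all $s\ge t_2$. The indexing at $s=t_2$ needs care, since Lemma~\ref{lem:convergence_time_bound} states the bound for $t>t_2$; if necessary I would shift $t_2$ by one. I would also verify that the curvature term is correctly absorbed, so that the decay exponent is $\rho/4$ rather than something involving $\eta^2$.
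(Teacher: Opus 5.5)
Your proposal is correct and follows the paper's proof essentially step for step. You derive the $\rho/4$ exponential decay of $\Lc$ from the strict descent inequality together with $\Lc\le 2\Gc$, lower-bound $\Gc/\Lc$ by $1-\frac{n}{2}\Lc$ via Lemma~\ref{lem:G and L}(i), substitute into Lemma~\ref{lem:nsd_unnormalized_margin}, and divide using $\norm{\W_t}\le\norm{\W_0}+\sum_{s<t}\eta_s$. Your explicit use of $z_{\min}\ge 0$ to justify the change of denominator is, if anything, slightly more careful than the paper, which replaces $\norm{\W_t}$ in the denominator by $\sum_{s<t}\eta_s$ without comment.
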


\begin{proof}
    First, we need to derive the convergence of the ratio $\frac{\Gc(\W_t)}{\Lc(\W_t)}$.
    From Lemma \ref{lem:convergence_time_bound}, we know that for all $t > t_2$, $\Lc(\W_t) \leq \tilde{\Lc} = \frac{\log 2}{n}$. By Lemma \ref{lem:G and L} (ii), this implies $\Lc(\W_t) \leq 2\Gc(\W_t)$.
    Recall the descent inequality for $t > t_2$ (where $\eta_t \alpha_1 \leq \rho/2$ is satisfied):
    \begin{align*}
        \Lc(\W_{t+1}) \leq \Lc(\W_t) - \frac{\rho}{2} \eta_t \Gc(\W_t) \leq \Lc(\W_t) - \frac{\rho}{4} \eta_t \Lc(\W_t) = \Lc(\W_t) \left(1 - \frac{\rho}{4} \eta_t\right).
    \end{align*}
    Applying this recursively from $t_2$ to $t$:
    \begin{align} \label{eq:loss_exp_decay}
        \Lc(\W_t) \leq \Lc(\W_{t_2}) \exp\left(-\frac{\rho}{4} \sum_{s=t_2}^{t-1} \eta_s\right) \leq \tilde{\Lc} \exp\left(-\frac{\rho}{4} \sum_{s=t_2}^{t-1} \eta_s\right).
    \end{align}
    Using Lemma \ref{lem:G and L} (i), we lower bound the ratio:
    \begin{align} \label{eq:ratio_convergence}
        \frac{\Gc(\W_t)}{\Lc(\W_t)} \geq 1 - \frac{n \Lc(\W_t)}{2} \geq 1 - \frac{n \tilde{\Lc}}{2} e^{-\frac{\rho}{4} \sum_{s=t_2}^{t-1} \eta_s} \geq 1 -  e^{-\frac{\rho}{4} \sum_{s=t_2}^{t-1} \eta_s}.
    \end{align}

    It's easy to get that the weight norm is upper bounded by:
    \[
        \norm{\W_t} \leq \norm{\W_0} + \sum_{s=0}^{t-1} \eta_s.
    \]
    From Lemma \ref{lem:nsd_unnormalized_margin}, the unnormalized margin is lower bounded by:
    \[
        z_{\min}(t) \geq \rho \sum_{s=t_2}^{t-1} \eta_s \frac{\Gc(\W_s)}{\Lc(\W_s)} - \alpha_1 \sum_{s=t_2}^{t-1} \eta_s^2.
    \]
    Substituting the ratio bound \eqref{eq:ratio_convergence} into the margin bound:
    \begin{align*}
        z_{\min}(t) &\geq \rho \sum_{s=t_2}^{t-1} \eta_s \left( 1 - e^{-\frac{\rho}{4} \sum_{\tau=t_2}^{s-1} \eta_{\tau}} \right) - \alpha_1 \sum_{s=t_2}^{t-1} \eta_s^2 \\
        &= \rho \sum_{s=t_2}^{t-1} \eta_s - \rho \sum_{s=t_2}^{t-1} \eta_s e^{-\frac{\rho}{4} \sum_{\tau=t_2}^{s-1} \eta_{\tau}} - \alpha_1 \sum_{s=t_2}^{t-1} \eta_s^2.
    \end{align*}
    We now compute the normalized margin gap:
    \begin{align*}
        \rho - \frac{z_{\min}(t)}{\norm{\W_t}} 
        &= \frac{\rho \norm{\W_t} - z_{\min}(t)}{\norm{\W_t}} \\
        &\leq \frac{\rho \left( \norm{\W_0} + \sum_{s=0}^{t_2-1} \eta_s + \sum_{s=t_2}^{t-1} \eta_s \right) - \left( \rho \sum_{s=t_2}^{t-1} \eta_s - \rho \sum_{s=t_2}^{t-1} \eta_s e^{-\frac{\rho}{4} \sum_{\tau=t_2}^{s-1} \eta_{\tau}} - \alpha_1 \sum_{s=t_2}^{t-1} \eta_s^2 \right)}{\sum_{s=0}^{t-1} \eta_s} \\
        &= \frac{\rho \norm{\W_0} + \rho \sum_{s=0}^{t_2-1} \eta_s + \rho \sum_{s=t_2}^{t-1} \eta_s e^{-\frac{\rho}{4} \sum_{\tau=t_2}^{s-1} \eta_{\tau}} + \alpha_1 \sum_{s=t_2}^{t-1} \eta_s^2}{\sum_{s=0}^{t-1} \eta_s}.
    \end{align*}
    Since $\rho$ and $\alpha_1$ are constants, we can absorb them into the Big-O notation:
    \[
        \rho - \frac{z_{\min}(t)}{\norm{\W_t}} \leq \mathcal{O}\Bigg(\frac{\sum_{s=t_2}^{t-1} \eta_s e^{-\frac{\rho}{4} \sum_{\tau = t_2}^{s-1}\eta_{\tau}} + \sum_{s=0}^{t_2-1}\eta_s + \sum_{s = t_2}^{t-1}\eta_s^2}{\sum_{s=0}^{t-1} \eta_s}\Bigg).
    \]
    This completes the proof.
\end{proof}

\begin{corollary} \label{cor:nsd_rate}
    Consider a learning rate schedule of the form $\eta_t = c \cdot t^{-a}$ where $a \in (0,1]$ and $c > 0$. Under the same setting as Theorem \ref{thm:stoch_margin_rate}, the margin gap converges with the following rates:
    \[ 
    \gamma - 4(\frac{n}{b}-1)R - \frac{\min_{i \in [n], c \neq y_i} (\eb_{y_i} - \eb_c)^\top \W_t \xb_i}{\norm{\W_t}}  = \left\{
    \begin{array}{ll}
           \mathcal{O} \left(\frac{t^{1-2a} + n}{t^{1-a}}\right) &  \text{if} \quad a < \frac{1}{2}\\
            \mathcal{O} \left(\frac{\log t + n}{t^{1/2}}\right) &  \text{if}\quad a=\frac{1}{2} \\
            \mathcal{O} \left(\frac{n}{t^{1-a}}\right) &  \text{if}\quad \frac{1}{2} < a<1 \\
            \mathcal{O} \left(\frac{n}{\log t}\right) &  \text{if} \quad a=1
    \end{array} 
    \right. 
    \]
\end{corollary}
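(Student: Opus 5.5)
The plan is to start from the bound of Theorem~\ref{thm:stoch_margin_rate} with $\eta_t=c\,t^{-a}$ and estimate each of the three numerator terms, together with the denominator, separately. The denominator is easy. For $a<1$ we have $\sum_{s=0}^{t-1}\eta_s=\Theta(t^{1-a})$, and for $a=1$ it is $\Theta(\log t)$.

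For the quadratic term I would use $\sum_{s=t_2}^{t-1}\eta_s^2\le c^2\sum_{s\ge1}s^{-2a}$. This is $\mathcal{O}(t^{1-2a})$ when $a<\tfrac12$, $\mathcal{O}(\log t)$ when $a=\tfrac12$, and $\mathcal{O}(1)$ when $a>\tfrac12$.

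For the exponential term, set $S_s=\sum_{\tau=t_2}^{s-1}\eta_\tau$. The sum $\sum_s\eta_s e^{-\rho S_s/4}$ is a Riemann-type sum of $\int e^{-\rho u/4}\,du$. Since $\eta_s\le\eta_0$ and $S_{s+1}-S_s=\eta_s$, each summand satisfies
\[
\eta_s e^{-\rho S_s/4}\le e^{\rho\eta_0/4}\int_{S_s}^{S_{s+1}}e^{-\rho u/4}\,du .
\]
Summing gives the bound $\frac{4}{\rho}e^{\rho\eta_0/4}=\mathcal{O}(1)$, uniformly in $t$. So this term never dominates.

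The main work is bounding the burn-in term $\sum_{s=0}^{t_2-1}\eta_s$ and showing it is $\mathcal{O}(n)$, which is where the factor $n$ in each case comes from. I would go back to the proof of Lemma~\ref{lem:convergence_time_bound}. First, $t_1$ is chosen so that $\eta_{t_1}\le\rho/(2\alpha_1)$, giving $t_1=\mathcal{O}(1)$ in $n$ and hence $\sum_{s<t_1}\eta_s=\mathcal{O}(1)$. Second, $t_2$ is the first time at which $\sum_{s=t_1}^{t_2}\eta_s$ reaches
\[
\frac{4\Lc(\W_0)+8R\sum_{s<t_1}\eta_s}{\rho\tilde\Lc}=\frac{n\,(4\Lc(\W_0)+\mathcal{O}(1))}{\rho\log 2}=\mathcal{O}(n),
\]
using $\tilde\Lc=\log 2/n$ and treating $\Lc(\W_0)$, $\rho$, $R$ as constants (for $\W_0=0$ we have $\Lc(\W_0)=\log k$). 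Because we take the minimal such $t_2$ and the last increment satisfies $\eta_{t_2}\le\eta_0$, we get $\sum_{s=0}^{t_2-1}\eta_s\le\mathcal{O}(n)+\mathcal{O}(1)$. The $\rho\norm{\W_0}$ term is a constant and is absorbed into this.

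Combining the pieces gives, for $a<\tfrac12$, the rate $(n+t^{1-2a})/t^{1-a}$; for $a=\tfrac12$, the rate $(n+\log t)/t^{1/2}$; for $\tfrac12<a<1$, the rate $n/t^{1-a}$; and for $a=1$, the rate $n/\log t$. In every case $\mathcal{O}(1)$ terms are absorbed into $n\ge1$. The step I expect to need the most care is the burn-in bound: one must make sure that $t_2$ is taken minimal, so the accumulated step size does not overshoot the $\mathcal{O}(n)$ threshold. One must also track that $t_1$ and the constants $\alpha_1$, $\rho$ do not hide any dependence on $n$. The large-batch condition fixes $\rho>0$ independently of $t$, so this holds.
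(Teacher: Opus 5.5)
Your proposal is correct and follows essentially the same route as the paper. Both arguments bound the denominator, the quadratic sum, and the exponential sum by $\mathcal{O}(1)$, and bound the burn-in sum $\sum_{s<t_2}\eta_s$ by $\mathcal{O}(n)$ via the $t_2$ condition of Lemma~\ref{lem:convergence_time_bound} with threshold proportional to $n/\log 2$. Your explicit integral comparison for the exponential term and your choice of minimal $t_2$ make rigorous two steps that the paper only sketches.
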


\begin{proof}
    We analyze the three terms in the numerator and the denominator from Theorem \ref{thm:stoch_margin_rate} separately.
    
    \textbf{1. Denominator Estimation ($\sum \eta_s$):}
    Using integral approximation $\sum_{s=1}^t s^{-a} \approx \int_1^t x^{-a} dx$, we have:
    \[
        \sum_{s=0}^{t-1} \eta_s = \begin{cases} 
            \mathcal{O}(t^{1-a}) & \text{if } a < 1 \\
            \mathcal{O}(\log t) & \text{if } a = 1
        \end{cases}.
    \]

    \textbf{2. Estimation of $t_2$ and $\sum_{s=0}^{t_2-1} \eta_s$:}
    Recall the condition for $t_2$ from Lemma \ref{lem:convergence_time_bound}:
    \[
        \sum_{s=t_1}^{t_2} \eta_s \geq \frac{C}{\rho \tilde{\Lc}} = \frac{C \cdot n}{\rho \log 2},
    \]
    where $C$ depends on $\Lc(\W_0)$ and $R$, but is independent of $t$. Since $t_1$ is a constant independent of $n$ (determined only by $\rho$ and curvature), for large $n$, the LHS is dominated by the sum up to $t_2$. Thus:
    \[
        \sum_{s=0}^{t_2-1} \eta_s = \mathcal{O}(n).
    \]
    This directly bounds the second term in the numerator.
    (Note: This implies $t_2 \approx n^{\frac{1}{1-a}}$ for $a<1$).

    \textbf{3. Estimation of the Quadratic Term ($\sum \eta_s^2$):}
    Similarly, using integral approximation $\sum_{s=1}^t s^{-2a}$:
    \[
        \sum_{s=t_2}^{t-1} \eta_s^2 = \begin{cases} 
            \Theta(t^{1-2a}) & \text{if } a < 1/2 \\
            \Theta(\log t) & \text{if } a = 1/2 \\
            \Theta(1) & \text{if } a > 1/2 \quad (\text{converges to a constant})
        \end{cases}.
    \]

    \textbf{4. Estimation of the Exponential Decay Term:}
    The term $\sum_{s=t_2}^{t-1} \eta_s e^{-\frac{\rho}{4} \sum_{\tau = t_2}^{s-1}\eta_{\tau}}$ is bounded by a constant for all $a \in (0, 1]$.
    Let $S_s = \sum_{\tau=t_2}^{s-1} \eta_\tau$. The sum approximates the integral $\int e^{-\frac{\rho}{4} S} dS$, which converges. Thus, this term is $\mathcal{O}(1)$.
    Combining these estimates, let $G(t)$ denote the margin gap bound.
    \begin{itemize}
        \item \textbf{$a < 1/2$:}
        The numerator is dominated by $\sum \eta_s^2 \approx t^{1-2a}$ and the entry cost $\approx n$.
        \[ \text{Rate} = \mathcal{O}\left( \frac{t^{1-2a} + n}{t^{1-a}} \right). \]
        
        \item \textbf{$a = 1/2$:}
        The numerator noise term is $\log t$.
        \[ \text{Rate} = \mathcal{O} \left(\frac{\log t + n}{t^{1/2}}\right). \]
        
        \item \textbf{$1/2 < a < 1$:}
        The quadratic noise sum converges to a constant. The numerator is dominated by the entry cost $n$.
        \[ \text{Rate}  = \mathcal{O}\left( \frac{n}{t^{1-a}} \right). \]
        
        \item \textbf{$a = 1$:}
        The denominator is $\log t$. The numerator is dominated by $n$.
        \[ \text{Rate} = \mathcal{O} \left(\frac{n}{\log t}\right). \]
    \end{itemize}
\end{proof}

\begin{proposition}[Failure of random reshuffling without momentum when \(m=2\)]
\label{prop:rr_counterexample_m2_appendix}
There exists a linearly separable dataset with \(n=2\), batch size
\(b=1\), and hence \(m=n/b=2\), such that random-reshuffling SignSGD without momentum, initialized at \(\W_0=0\), never strictly correctly classifies all
training samples. Consequently, the empirical cross-entropy loss does not
converge to zero.
\end{proposition}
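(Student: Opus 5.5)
The plan is to show, by direct computation on this explicit dataset, that every SignSGD iterate is confined to a one-dimensional subspace of $\R^{2\times 2}$. On that line the two samples' margins are exact negatives of each other, so they can never both be strictly positive. The randomness of the reshuffling and the stepsize sequence then play no role.

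First I check separability, so the example is legitimate. Take $\W^\star=\begin{pmatrix}1&0\\0&-1\end{pmatrix}$. Then $\W^\star\x_1=(1,\varepsilon)^\top$ and $\W^\star\x_2=(\varepsilon,1)^\top$, so both samples have margin $1-\varepsilon>0$ because $\varepsilon<1$. Assumption~\ref{ass:sep} holds.

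Next I compute the sign of each per-sample gradient using Lemma~\ref{lem:CE logit loss properties}, $\nabla\ell_i(\W)=-(\eb_{y_i}-\s_i)\xb_i^\top$, with $k=2$.
\begin{itemize}
\item For sample $1$, $\eb_1-\s_1=(1-s_{11})(1,-1)^\top$.
\item For sample $2$, $\eb_2-\s_2=(1-s_{22})(-1,1)^\top$.
\end{itemize}
For finite logits the softmax has full support, so $1-s_{iy_i}>0$. Together with $\varepsilon>0$, every entry of each gradient is nonzero. Hence the sign map is single-valued and no tie-breaking issue arises. Writing $\mathbf{D}:=\begin{pmatrix}1&-1\\-1&1\end{pmatrix}$, the computation gives
\[
\sign{\nabla\ell_1(\W)}=-\mathbf{D},\qquad \sign{\nabla\ell_2(\W)}=\mathbf{D},
\]
and neither depends on $\W$. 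So a step on sample $1$ adds $\eta_t\mathbf{D}$, and a step on sample $2$ subtracts $\eta_t\mathbf{D}$.

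By induction from $\W_0=0$, every iterate has the form $\W_t=a_t\mathbf{D}$ for some scalar $a_t$. This holds for every realization of the reshuffling and every positive stepsize sequence. The margins along this line are
\begin{align*}
(\eb_1-\eb_2)^\top\W_t\x_1 &= 2a_t(1+\varepsilon),\\
(\eb_2-\eb_1)^\top\W_t\x_2 &= -2a_t(1+\varepsilon).
\end{align*}
They sum to zero, so strict correct classification of both samples, which would need $a_t>0$ and $a_t<0$ at once, never occurs.

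For the loss consequence, at least one sample has margin at most $0$ at every $t$. That sample's loss is $\log(1+e^{-z})\ge\log 2$, so $\Lc(\W_t)\ge\frac{\log 2}{2}$ for all $t$, and the loss does not converge to zero.

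There is no real obstacle here. The only point needing care is verifying that the gradient entries are never zero, so that the sign operator is unambiguous. This follows from $s_{iy_i}<1$ and $\varepsilon>0$.
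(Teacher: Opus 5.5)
Your proof is correct and takes essentially the same route as the paper. You show that the per-sample sign directions are $\mp\mathbf{D}$, independent of $\W$, so every iterate stays on the line $\W_t=a_t\mathbf{D}$ and the two margins are exact negatives of each other; the paper tracks the equivalent invariant that $\mathbf{w}_1-\mathbf{w}_2$ stays in $\mathrm{span}\{(1,-1)\}$, and your separability witness $\mathrm{diag}(1,-1)$ differs from the paper's but is equally valid.
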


\begin{proof}
Fix any \(0<\varepsilon<1\). Consider a two-class problem with feature
dimension \(d=2\) and two training examples
\[
(\x_1,y_1)=((1,-\varepsilon),1),
\qquad
(\x_2,y_2)=((\varepsilon,-1),2).
\]
Let
\[
\mathbf{u}:=\x_1=(1,-\varepsilon),
\qquad
\mathbf{v}:=-\x_2=(-\varepsilon,1).
\]
For
\[
\W=
\begin{pmatrix}
\w_1^\top\\
\w_2^\top
\end{pmatrix}
\in \mathbb{R}^{2\times 2},
\]
define
\[
\bm\theta:=\w_1-\w_2.
\]
The two margins are
\[
M_1(\W):=(\mathbf{e}_1-\mathbf{e}_2)^\top \W \x_1=\bm{\theta}^\top \mathbf{u},
\]
and
\[
M_2(\W):=(\mathbf{e}_2-\mathbf{e}_1)^\top \W \x_2=-\bm\theta^\top \x_2=\bm\theta^\top \mathbf{v}.
\]

The dataset is linearly separable. Indeed, taking
\[
\bm\theta^\star=(1,1)
\]
gives
\[
(\bm\theta^\star)^\top \mathbf{u}=1-\varepsilon>0,
\qquad
(\bm\theta^\star)^\top \mathbf{v}=1-\varepsilon>0.
\]
Equivalently, one may choose \(\w_1^\star=\bm\theta^\star/2\) and
\(\w_2^\star=-\bm\theta^\star/2\).

We now analyze stochastic steepest descent without momentum under the
entry-wise \(\ell_\infty\) geometry. For a single-sample mini-batch
\(\{i_t\}\), the update is
\[
\W_{t+1}
=
\W_t-\eta_t\Delta_t,
\qquad
\Delta_t
=
\operatorname{sign}\bigl(\nabla_\W \ell(\W_t \x_{i_t};y_{i_t})\bigr),
\]
where \(\eta_t>0\) and \(\ell(\W \x;y)=-\log \mathbb{S}_y(\W \x)\) is the softmax
cross-entropy loss. Since \(0<\varepsilon<1\), all feature coordinates are
nonzero, and all softmax probabilities are strictly positive; hence all
gradient coordinates appearing below are nonzero and the entry-wise sign is
well-defined.

Let
\[
\mathbf{s}:=(1,-1).
\]
First consider sample \(1\). Since \(y_1=1\), we have
\[
\nabla_\W \ell(\W \x_1;1)
=
(\mathbb{S}(\W \x_1)-\mathbf{e}_1)\x_1^\top.
\]
Writing \(q_1:=\mathbb{S}_2(\W \x_1)>0\), this gives
\[
\nabla_{\w_1}\ell(\W \x_1;1)=-q_1 \x_1,
\qquad
\nabla_{\w_2}\ell(\W \x_1;1)=q_1 \x_1.
\]
Since \(\operatorname{sign}(\x_1)=\mathbf{s}\), processing sample \(1\) yields
\[
\w_1^+=\w_1+\eta_t \mathbf{s},
\qquad
\w_2^+=\w_2-\eta_t \mathbf{s},
\]
and therefore
\[
\bm\theta^+=\bm\theta+2\eta_t \mathbf{s}.
\]
Thus, whenever \(i_t=1\),
\[
\bm\theta_{t+1}=\bm\theta_t+2\eta_t \mathbf{s}.
\]

Next consider sample \(2\). Since \(y_2=2\),
\[
\nabla_\W \ell(\W \x_2;2)
=
(\mathbb{S}(\W \x_2)-\mathbf{e}_2)\x_2^\top.
\]
Writing \(q_2:=\mathbb{S}_1(\W \x_2)>0\), we have
\[
\nabla_{\w_1}\ell(\W \x_2;2)=q_2 \x_2,
\qquad
\nabla_{\w_2}\ell(\W \x_2;2)=-q_2 \x_2.
\]
Since \(\operatorname{sign}(\x_2)=\mathbf{s}\), processing sample \(2\) yields
\[
\w_1^+=\w_1-\eta_t \mathbf{s},
\qquad
\w_2^+=\w_2+\eta_t \mathbf{s},
\]
and hence
\[
\bm\theta^+=\bm\theta-2\eta_t \mathbf{s}.
\]
Thus, whenever \(i_t=2\),
\[
\bm\theta_{t+1}=\bm\theta_t-2\eta_t \mathbf{s}.
\]

Since \(\bm\theta_0=0\), the above two update identities imply by induction that,
for every \(t\ge 0\), there exists a scalar \(c_t\in\mathbb{R}\) such that
\[
\bm\theta_t=c_t \mathbf{s}.
\]
This invariant holds for every possible realization of random reshuffling.

Using this invariant, the two margins satisfy
\[
M_1(\W_t)
=
\bm\theta_t^\top \mathbf{u}
=
c_t \mathbf{s}^\top \mathbf{u}
=
c_t(1+\varepsilon),
\]
whereas
\[
M_2(\W_t)
=
\bm\theta_t^\top \mathbf{v}
=
c_t \mathbf{s}^\top \mathbf{v}
=
-c_t(1+\varepsilon).
\]
Therefore
\[
M_2(\W_t)=-M_1(\W_t)
\]
for every \(t\ge 0\). Hence the two margins can never be simultaneously
strictly positive. In particular,
\[
\min\{M_1(\W_t),M_2(\W_t)\}\le 0
\]
for all \(t\ge 0\). Thus the iterates never strictly correctly classify both
training samples.

Finally, for binary softmax cross-entropy, the loss of a sample with margin
\(M\) is
\[
\log(1+\exp(-M)).
\]
Since at least one of \(M_1(\W_t)\) and \(M_2(\W_t)\) is non-positive at every
iteration, at least one sample has loss at least \(\log 2\). Therefore
\[
\Lc(\W_t)
=
\frac12\sum_{i=1}^2 \ell(\W_t \x_i;y_i)
\ge
\frac12\log 2
\]
for every \(t\ge 0\). Hence \(\Lc(\W_t)\not\to 0\), completing the proof.
\end{proof}

\section{Proof in Section 4.2}\label{stoch_with_momentum}
\begin{lemma} [Descent Lemma for Stochastic $l_p$-Steepest Descent Algorithm With momentum] \label{lem:nmd_descent}
Suppose that Assumption \ref{ass:sep}, \ref{ass:data_bound}, \ref{ass:learning_rate_1} and \ref{ass:learning_rate_2} hold, constants $\alpha_1=(4mR(1-\beta_1)c_2+2R^2e^{2R\eta_0})$, $\alpha_2=4R$. Then there exist a time $t_0$ such that for all $t\ge t_0$,
\begin{align*}
        \Lc(\W_{t+1}) &\leq \Lc(\W_t)  - \gamma \eta_t \Gc(\W_t)  +2(1-\beta_1) m(m^2-1) \eta_t R \Gc(\W_t)+\alpha_1\eta_t^2 \Gc(\W_t)+ \alpha_2\beta_1^{\frac{t}{2}}\eta_t \Gc(\W_{t}),
\end{align*}
\end{lemma}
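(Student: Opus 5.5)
We follow the template of Lemma~\ref{lem:nsd_descent}: apply the mean-value Taylor expansion \eqref{eq:taylor_1} to write $\Lc(\W_{t+1})=\Lc(\W_t)+A+B$. Term B (the Hessian term) is handled exactly as before, using Lemma~\ref{lem:unified_hessian} together with Lemma~\ref{lem:G_ratio_general}. This yields $B\le 2\eta_t^2R^2e^{2R\eta_0}\Gc(\W_t)$, which supplies the $2R^2e^{2R\eta_0}$ part of $\alpha_1$.

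All new work is in term $A=-\eta_t\langle\nabla\Lc(\W_t),\Deltab_t\rangle$ with $\Deltab_t=\phi(\M_t)$. We write $\langle\nabla\Lc(\W_t),\Deltab_t\rangle=\langle\M_t,\Deltab_t\rangle-\langle\M_t-\nabla\Lc(\W_t),\Deltab_t\rangle$. Using $\langle\M_t,\Deltab_t\rangle=\|\M_t\|_*\ge\|\nabla\Lc(\W_t)\|_*-\|\M_t-\nabla\Lc(\W_t)\|_*$, $\|\Deltab_t\|\le1$ and Lemma~\ref{lem:snorm dominate}, we get
\[
A\le-\eta_t\|\nabla\Lc(\W_t)\|_*+2\eta_t\none{\M_t-\nabla\Lc(\W_t)}\le-\gamma\eta_t\Gc(\W_t)+2\eta_t\none{\M_t-\nabla\Lc(\W_t)},
\]
where the last step uses Lemma~\ref{lem:G and gradient}. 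It therefore suffices to show
\[
\none{\M_t-\nabla\Lc(\W_t)}\le(1-\beta_1)m(m^2-1)R\Gc(\W_t)+2mR(1-\beta_1)c_2\eta_t\Gc(\W_t)+2R\beta_1^{t/2}\Gc(\W_t).
\]

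To prove this, we unroll $\M_t=\sum_{\tau=0}^t(1-\beta_1)\beta_1^{\tau}\nabla\Lc_{\mathcal{B}_{t-\tau}}(\W_{t-\tau})+\beta_1^{t+1}\M_{-1}$ (with $\M_{-1}=0$). We then decompose $\M_t-\nabla\Lc(\W_t)$ into four pieces, each matching one term of the target bound.

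\emph{(i) Truncation.} The weights sum to $1-\beta_1^{t+1}$, so this piece is $-\beta_1^{t+1}\nabla\Lc(\W_t)$. Its norm is at most $2R\beta_1^{t+1}\Gc(\W_t)\le 2R\beta_1^{t/2}\Gc(\W_t)$.

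\emph{(ii) Frozen sampling noise.} This is $\sum_\tau(1-\beta_1)\beta_1^\tau[\nabla\Lc_{\mathcal{B}_{t-\tau}}(\W_t)-\nabla\Lc(\W_t)]$, i.e.\ all batch gradients evaluated at the current point. Lemma~\ref{lem:momentum_noise_accumulation} bounds it by $(1-\beta_1)m(m^2-1)R\Gc(\W_t)$.

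\emph{(iii) Drift of batch gradients.} This is $\sum_\tau(1-\beta_1)\beta_1^\tau[\nabla\Lc_{\mathcal{B}_{t-\tau}}(\W_{t-\tau})-\nabla\Lc_{\mathcal{B}_{t-\tau}}(\W_t)]$. A batch average is at most $m$ times the full per-sample average in $\none{\cdot}$, since $\frac1b\sum_{i\in\mathcal{B}}\le\frac mn\sum_{i\in[n]}$. Combined with the per-sample bound inside Lemma~\ref{lem:grad_stability} and $\ninf{\W_{t-\tau}-\W_t}\le\sum_{s=1}^\tau\eta_{t-s}$, each term is at most $2mR(e^{2R\sum_{s\le\tau}\eta_{t-s}}-1)\Gc(\W_t)$. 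Assumption~\ref{ass:learning_rate_2} with $c_1=2R$ and $\beta=\beta_1$ then gives, for $t\ge t_0$, a total of at most $2mR(1-\beta_1)c_2\eta_t\Gc(\W_t)$.

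Here the linearization must be anchored at $\W_t$, not at $\W_{t-\tau}$. This ensures the $\Gc$ factor is $\Gc(\W_t)$ directly and avoids an extra ratio factor.

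\emph{(iv) Why the drift term carries no $(m-1)$ cancellation.} Writing the batch gradient minus the full gradient at two different points, the full-gradient part cancels only in (ii). The drift in (iii) is therefore bounded crudely using the factor $m$, as above.

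Multiplying this bound by $2\eta_t$ gives exactly the stated terms: $2(1-\beta_1)m(m^2-1)\eta_tR\Gc$, the $4mR(1-\beta_1)c_2\eta_t^2\Gc$ part of $\alpha_1$, and $\alpha_2\beta_1^{t/2}\eta_t\Gc$ with $\alpha_2=4R$.

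The main obstacle is step (iii) together with the interplay in (ii). Lemma~\ref{lem:momentum_noise_accumulation} is stated for errors at a \emph{fixed} weight, so the iterates must be frozen at $\W_t$ before invoking the zero-sum structure of random reshuffling. The cost of freezing must then be paid via Assumption~\ref{ass:learning_rate_2} with the right constants and the right normalization to $\Gc(\W_t)$. If the batch-to-full comparison in (iii) turns out loose, the fallback is to bound via $\sum_i$ directly, which costs at most the factor $m$ already budgeted.
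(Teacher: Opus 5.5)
Your proposal is correct and follows the paper's proof essentially step for step: the same mean-value Taylor split, the same bound $A\le-\gamma\eta_t\Gc(\W_t)+2\eta_t\none{\M_t-\nabla\Lc(\W_t)}$, and the same three-way decomposition, where your (iii), (ii), (i) are the paper's $A_{1,1}$, $A_{1,2}$, $A_{1,3}$. The paper also anchors the drift at $\W_t$, pays the batch-to-full factor $m$, and invokes Assumption~\ref{ass:learning_rate_2} with $c_1=2R$ and Lemma~\ref{lem:momentum_noise_accumulation} at the frozen point $\W_t$, which yields exactly the same $\alpha_1$ and $\alpha_2$ (your item (iv) is only a remark, so there are three pieces, not four).
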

\begin{proof}
    Similarly, by Lemma \ref{lem:hessian}, let $\tilde{\Deltab}_t = \W_{t+1} - \W_t$, and define $\W_{t,t+1,\zeta} := \W_t + \zeta (\W_{t+1} - \W_t)$. We choose $\zeta^*$ such that $\W_{t,t+1,\zeta^*}$ satisfies \eqref{eq:taylor_eq1}, then we have:   
    \begin{align}
        \Lc(\W_{t+1}) &= \Lc(\W_t) + \underbrace{\inp{\nabla \Lc(\W_{t})}{\W_{t+1} - \W_t}}_{A} \nn
        \nn \\
        &\quad+ \underbrace{\frac{1}{2n}\sum_{i\in[n]} \xb_i^\top\tilde{\Deltab}_t^\top\left(\diag{\sft{\W_{t,t+1,\zeta^*}\xb_i}}-\sft{\W_{t,t+1,\zeta^*}\xb_i}\sft{\W_{t,t+1,\zeta^*}\xb_i}^\top\right)\tilde{\Deltab}_t\,\xb_i}_{B}\,. \label{eq:taylor_2}
    \end{align}
    For Term A: 
     \begin{align*}
        \langle \nabla \Lc(\W_t) , \W_{t+1} - \W_t\rangle &= \langle \nabla \Lc(\W_t) - \M_t, \W_{t+1} - \W_t \rangle + \langle \M_t, \W_{t+1} - \W_t\rangle \\
        &= -\eta_t \langle \nabla \Lc(\W_{t+1}) - \M_t, \Delta_t \rangle - \eta_t \langle \M_t, \Delta_t \rangle \\
        &\stackrel{(a)}{\leq}  \eta_t \lVert  \nabla \Lc(\W_t) - \M_t \rVert_{*} \lVert \Delta_t \rVert - \eta_t \lVert \M_t \rVert_{*} \\
        &\stackrel{(b)}{\leq} \eta_t \lVert \nabla \Lc(\W_t)-\M_t \rVert_{*} - \eta_t \lVert \M_t - \nabla \Lc(\W_t) + \nabla \Lc(\W_t) \rVert_{*} \\
        &\stackrel{(c)}{\leq} \eta_t \none{\nabla \Lc(\W_t)-\M_t} - \eta_t (\lVert \nabla \Lc(\W_t) \rVert_{*} - \lVert \nabla \Lc(\W_t)-\M_t \rVert_{*}) \\
        &= 2 \eta_t \none{\nabla \Lc(\W_t)-\M_t} - \eta \lVert \nabla \Lc(\W_t) \rVert_{*} \\
        &\stackrel{(d)}{\leq} \underbrace{2 \eta_t \none{\nabla \Lc(\W_t)-\M_t}}_{A_1}  - \eta_t \gamma \Gc(\W),
    \end{align*}
    where (a) is by Cauchy Schwarz inequality and $\langle \M_t, \Deltab \rangle = \norm{\M_t}_{*}$, (b) is by $\norm{\Deltab} \leq 1$, (c) is via Lemma \ref{lem:snorm dominate} and Triangle inequality and (e) is via Lemma \ref{lem:G and gradient}. 

    To bound Term A1, we first need to decompose it using $\M_t=\sum_{\tau=0}^{t} (1-\beta_1) \beta_1^{\tau} \nabla \Lc_{\mathcal{B}_{t-\tau}} (\W_{t-\tau})$.

    \begin{align*}
        \none{\M_t- \nabla\Lc(\W_t)} &= \none{\sum_{\tau=0}^{t} (1-\beta_1) \beta_1^{\tau} \nabla \Lc_{\mathcal{B}_{t-\tau}} (\W_{t-\tau})-\sum_{\tau=0}^{t} (1-\beta_1) \beta_1^{\tau} \nabla \Lc (\W_{t})+\beta_1^{t+1}\nabla \Lc (\W_{t})}\\
        &\leq \underbrace{\none{\sum_{\tau=0}^{t} (1-\beta_1) \beta_1^{\tau} \nabla \Lc_{\mathcal{B}_{t-\tau}} (\W_{t-\tau})-\sum_{\tau=0}^{t} (1-\beta_1) \beta_1^{\tau} \nabla \Lc_{\mathcal{B}_{t-\tau}} (\W_{t})}}_{A_{1,1}}\\
        &+ \underbrace{\none{\sum_{\tau=0}^{t} (1-\beta_1) \beta_1^{\tau} \nabla \Lc_{\mathcal{B}_{t-\tau}} (\W_{t})-\sum_{\tau=0}^{t} (1-\beta_1) \beta_1^{\tau} \nabla \Lc (\W_{t})}}_{A_{1,2}}+\underbrace{\none{\beta_1^{t+1}\nabla \Lc (\W_{t})}}_{A_{1,3}}
    \end{align*}

   To bound term $A_{1,1}$, we rely on the single-sample stability property derived in the proof of Lemma \ref{lem:grad_stability}, combined with the definition of the mini-batch gradient.

    First, consider the gradient difference for a single sample $i$. Following the derivation in the proof of Lemma \ref{lem:grad_stability}, setting the base point as $\W_t$ and the perturbed point as $\W_{t-\tau}$, we have the per-sample bound:
    \begin{align*}
        \none{\nabla \ell_i(\W_{t-\tau}) - \nabla \ell_i(\W_{t})} 
        \leq 2R \left( e^{2R \ninf{\W_{t-\tau} - \W_t}} - 1 \right) (1 - \sfti{y_i}{\W_t \xb_i}).
    \end{align*}
    
    Now, we analyze the mini-batch gradient difference. Recall that $\nabla \Lc_{\mathcal{B}_{t-\tau}}(\W) = \frac{1}{b} \sum_{i \in \mathcal{B}_{t-\tau}} \nabla \ell_i(\W)$. Applying the triangle inequality and the per-sample bound above:
    \begin{align*}
        \none{\nabla \Lc_{\mathcal{B}_{t-\tau}} (\W_{t-\tau}) - \nabla \Lc_{\mathcal{B}_{t-\tau}} (\W_{t})} 
        &= \none{ \frac{1}{b} \sum_{i \in \mathcal{B}_{t-\tau}} \left( \nabla \ell_i(\W_{t-\tau}) - \nabla \ell_i(\W_{t}) \right) } \\
        &\leq \frac{1}{b} \sum_{i \in \mathcal{B}_{t-\tau}} \none{ \nabla \ell_i(\W_{t-\tau}) - \nabla \ell_i(\W_{t}) } \\
        &\leq \frac{1}{b} \sum_{i \in \mathcal{B}_{t-\tau}} 2R \left( e^{2R \ninf{\W_{t-\tau} - \W_t}} - 1 \right) (1 - \sfti{y_i}{\W_t \xb_i}).
    \end{align*}
    
    Since the term $(1 - \sfti{y_i}{\W_t \xb_i})$ is non-negative for all $i$, the sum over the mini-batch $\mathcal{B}_{t-\tau} \subset [n]$ is upper bounded by the sum over the entire dataset $[n]$. Using the relation $m=n/b$, we have:
    \begin{align*}
        \frac{1}{b} \sum_{i \in \mathcal{B}_{t-\tau}} (1 - \sfti{y_i}{\W_t \xb_i}) 
        &\leq \frac{1}{b} \sum_{i \in [n]} (1 - \sfti{y_i}{\W_t \xb_i}) \\
        &= \frac{n}{b} \cdot \frac{1}{n} \sum_{i \in [n]} (1 - \sfti{y_i}{\W_t \xb_i}) \\
        &= m \Gc(\W_t).
    \end{align*}
    Substituting this back, we obtain the bound for the gradient difference term:
    \[
        \none{\nabla \Lc_{\mathcal{B}_{t-\tau}} (\W_{t-\tau}) - \nabla \Lc_{\mathcal{B}_{t-\tau}} (\W_{t})} \leq 2mR \left( e^{2R \ninf{\W_{t-\tau} - \W_t}} - 1 \right) \Gc(\W_t).
    \]

    Next, we bound the weight difference $\ninf{\W_{t-\tau} - \W_t}$. Since $\W_t - \W_{t-\tau} = \sum_{j=0}^{\tau-1} -\eta_{t-1-j} \Deltab_{t-1-j}$ and $\ninf{\Deltab} \leq 1$, we have:
    \[
        \ninf{\W_{t-\tau} - \W_t} \leq \sum_{j=1}^{\tau} \eta_{t-j}.
    \]
    Substituting this into the expression for $A_{1,1}$:
    \begin{align*}
        A_{1,1} &\leq \sum_{\tau=0}^{t} (1-\beta_1) \beta_1^{\tau} \left[ 2mR \left( e^{2R \sum_{j=1}^{\tau} \eta_{t-j}} - 1 \right) \Gc(\W_t) \right] \\
        &= 2mR (1-\beta_1) \Gc(\W_t) \sum_{\tau=0}^{t} \beta_1^{\tau} \left( e^{2R \sum_{j=1}^{\tau} \eta_{t-j}} - 1 \right).
    \end{align*}
    Now, we directly apply Assumption \ref{ass:learning_rate_2}. By setting constants $c_1 = 2R$, there exists a constant $c_2$ such that when $t>t_0$, the summation is bounded by $c_2 \eta_t$:
    \[
        \sum_{\tau=0}^{t} \beta_1^{\tau} \left( e^{2R \sum_{j=1}^{\tau} \eta_{t-j}} - 1 \right) \leq c_2 \eta_t.
    \]
    Therefore, we obtain the final bound for $A_{1,1}$:
    \begin{equation} \label{eq:bound_A11}
        A_{1,1} \leq 2mR(1-\beta_1)c_2 \eta_t \Gc(\W_t).
    \end{equation}

    Then, we apply Lemma \ref{lem:momentum_noise_accumulation} to bound Term $A_{1,2}$ and get:
    \begin{equation} \label{eq:bound_A12}
        A_{1,2} \leq (1-\beta_1) m(m^2-1) R \Gc(\W_t).
    \end{equation}

    And using Lemma \ref{lem:G and gradient}, we have:
    \begin{equation} \label{eq:bound_A13}
        A_{1,3} \leq 2R\beta_1^{t+1} \Gc(\W_t).
    \end{equation}

    Putting Eq. \ref{eq:bound_A11}, \ref{eq:bound_A12}, \ref{eq:bound_A13} together, we can get the upper bound for Term $A_1$:
    \begin{equation} 
        A_{1} \leq 2mR(1-\beta_1)c_2 \eta_t \Gc(\W_t)+ (1-\beta_1) m(m^2-1) R \Gc(\W_t)+2R\beta_1^{t+1} \Gc(\W_t).
    \end{equation}
    
    Finally, for Term B, We just follow the same steps in Lemma \ref{lem:nsd_descent} and get: 
    \begin{align*}
    TermB \leq 2 \eta_t^2 R^2 e^{2R\eta_0} \Gc(\W_{t}).
    \end{align*}
    Combining Terms A, B together, we obtain        
    \begin{align}
        \Lc(\W_{t+1}) &\leq \Lc(\W_t)  - \gamma \eta_t \Gc(\W_t) +4mR(1-\beta_1)c_2 \eta_t^2 \Gc(\W_t) \nn \\
        &\quad +2(1-\beta_1) m(m^2-1) \eta_t R \Gc(\W_t)+4\eta_tR\beta_1^{t+1} \Gc(\W_t)+ 2\eta_t^2 R^2 e^{2R\eta_0} \Gc(\W_{t}) \nn
    \end{align}
    using additional notation $\alpha_1=(4mR(1-\beta_1)c_2+2R^2e^{2R\eta_0})$, $\alpha_2=4R$ and inequality $\beta_1^{t+1}<\beta_1^{\frac{t}{2}}$, we can simplify the bound as:
    \begin{align}
        \Lc(\W_{t+1}) &\leq \Lc(\W_t)  - \gamma \eta_t \Gc(\W_t)  +2(1-\beta_1) m(m^2-1) \eta_t R \Gc(\W_t)+\alpha_1\eta_t^2 \Gc(\W_t)+ \alpha_2\beta_1^{\frac{t}{2}}\eta_t \Gc(\W_{t})
        \label{eq: sign_descent_eq2}
    \end{align}
    which conclude the proof.
\end{proof}

Eq.~\ref{eq: sign_descent_eq2} implies that in \textbf{large batch or high momentum settings}, specifically where $\beta_1 \to 1$ or $b \to n$ such that the effective margin $\rho = \gamma - 2(1-\beta_1)m(m^2-1)R$ is strictly positive—the loss eventually exhibits monotonic decrease for sufficiently large $t$ under a decaying learning rate schedule.

\begin{lemma}[Loss Convergence] \label{lem:nmd_loss_convergence}
    Suppose Assumptions \ref{ass:sep}, \ref{ass:data_bound}, \ref{ass:learning_rate_1} and \ref{ass:learning_rate_2} hold. 
    Assume the parameters satisfy the Positive Effective Margin Condition:
    \[
        \rho \coloneqq \gamma - 2(1-\beta_1)m(m^2-1)R > 0.
    \]
    Then, there exists a time index $t_2$ such that for all $t > t_2$, $\Lc(\W_t) \leq \frac{\log 2}{n}$. The sufficient condition for $t_2$ is determined by the accumulated step sizes:
    \begin{equation} \label{eq:nmd_t2_condition}
        \sum_{s=t_1}^{t_2} \eta_s \geq \frac{2\Lc(\W_0) + 4R \sum_{s=0}^{t_1-1} \eta_s}{\rho \cdot (\frac{\log 2}{n})},
    \end{equation}
    where $t_1$ is the time after which the descent term dominates the noise terms (i.e., $\alpha_1 \eta_t + \alpha_2 \beta_1^{t/2} \leq \rho/2$).
\end{lemma}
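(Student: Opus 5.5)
The plan mirrors the proof of Lemma~\ref{lem:convergence_time_bound}, with Lemma~\ref{lem:nmd_descent} taking the place of Lemma~\ref{lem:nsd_descent}.

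\textbf{Step 1: eventual monotone descent.} Lemma~\ref{lem:nmd_descent} holds for all $t\ge t_0$. Grouping its terms with $\rho=\gamma-2(1-\beta_1)m(m^2-1)R$, it reads
\[
\Lc(\W_{t+1})\le \Lc(\W_t)-\eta_t\bigl(\rho-\alpha_1\eta_t-\alpha_2\beta_1^{t/2}\bigr)\Gc(\W_t).
\]
Since $\eta_t\to0$ (Assumption~\ref{ass:learning_rate_1}) and $\beta_1^{t/2}\to0$, there is a $t_1\ge t_0$ such that $\alpha_1\eta_t+\alpha_2\beta_1^{t/2}\le\rho/2$ for all $t\ge t_1$. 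This is where $\rho>0$ is used. For all $t\ge t_1$ we then have $\Lc(\W_{t+1})\le\Lc(\W_t)-\frac{\rho}{2}\eta_t\Gc(\W_t)$. In particular the loss is nonincreasing from $t_1$ on, and telescoping with $\Lc\ge0$ gives
\[
\frac{\rho}{2}\sum_{s=t_1}^{t_2}\eta_s\Gc(\W_s)\le\Lc(\W_{t_1}).
\]

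\textbf{Step 2: bound the loss at $t_1$.} The cross-entropy loss is convex and the updates are normalized, so $\norm{\W_{t_1}-\W_0}\le\sum_{s<t_1}\eta_s$. Lemma~\ref{lem:l_fast_bound} then gives
\[
\Lc(\W_{t_1})\le\Lc(\W_0)+2R\sum_{s=0}^{t_1-1}\eta_s .
\]

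\textbf{Step 3: cross the threshold.} Let $t^*$ minimize $\Gc(\W_s)$ over $s\in[t_1,t_2]$. Combining Steps 1 and 2,
\[
\Gc(\W_{t^*})\le\frac{2\bigl(\Lc(\W_0)+2R\sum_{s<t_1}\eta_s\bigr)}{\rho\sum_{s=t_1}^{t_2}\eta_s}.
\]
Condition~\eqref{eq:nmd_t2_condition} makes the right-hand side at most $\tilde\Lc=\frac{\log 2}{n}$. The constants in \eqref{eq:nmd_t2_condition} are $2,4$ rather than $4,8$, so with them one only gets $\Gc(\W_{t^*})\le\tilde\Lc$, not $\tilde\Lc/2$; I will check this bookkeeping. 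If necessary, I would instead invoke Lemma~\ref{lem:G and L}(ii) through its $\Gc\le\frac1{2n}$ branch together with $\log 2<1$, possibly after tightening by a factor of $2$. Such a $t_2$ exists because $\sum\eta_s=\infty$. Lemma~\ref{lem:G and L}(ii) then gives $\Lc(\W_{t^*})\le\tilde\Lc$, and monotonicity after $t_1$ extends this to every $t>t_2\ge t^*$.

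\textbf{Main obstacle.} The delicate point is Step 1. The descent inequality is only available after the threshold $t_0$ of Assumption~\ref{ass:learning_rate_2}, and the transient term $\alpha_2\beta_1^{t/2}$ must be absorbed. Both are handled by choosing $t_1\ge t_0$ large enough that the error terms are dominated by $\rho/2$. The remaining issue is the constant bookkeeping in Step 3 noted above.
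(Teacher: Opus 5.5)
Your Steps 1--3 follow the paper's proof essentially step for step. You are in fact a little more careful, since you require $t_1\ge t_0$, which the paper leaves implicit.

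\textbf{The constants.} Your suspicion about the constants is correct, and the inconsistency is in the paper itself. Its proof also goes through $\Gc(\W_{t^*})\le\tilde\Lc/2$ and Lemma~\ref{lem:G and L}(ii). The condition it actually derives is $\sum_{s=t_1}^{t_2}\eta_s\ge\frac{4\Lc(\W_0)+8R\sum_{s<t_1}\eta_s}{\rho\tilde\Lc}$, which is exactly your ``tightening by a factor of $2$''. Your other proposed fix does not work. With the constants $2,4$ you only get $\Gc(\W_{t^*})\le\frac{\log 2}{n}$, and $\frac{\log 2}{n}>\frac{1}{2n}$. Even if you had the $\Gc\le\frac{1}{2n}$ branch of Lemma~\ref{lem:G and L}(ii), it gives only $\Lc\le 2\Gc\le\frac1n$, not $\Lc\le\frac{\log 2}{n}$.

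\textbf{Proving the statement as written.} To get the constants $2,4$ exactly as stated, replace the ``minimum of $\Gc$'' step by a potential argument.
\begin{itemize}
\item Write $u_i=\log(1/s_{iy_i})$. Since $u\mapsto 1-e^{-u}$ is subadditive on $[0,\infty)$, you get $n\Gc(\W)\ge 1-e^{-n\Lc(\W)}$.
\item Set $u_t=n\Lc(\W_t)$. Step 1 then gives $u_{t+1}\le u_t-\frac{\rho}{2}\eta_t(1-e^{-u_t})$ for all $t\ge t_1$.
\item The function $\phi(u)=\log(e^u-1)$ is increasing and concave on $(0,\infty)$, with $\phi'(u)=(1-e^{-u})^{-1}$. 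Hence
\[
\phi(u_{t+1})\le\phi(u_t)+\phi'(u_t)(u_{t+1}-u_t)\le\phi(u_t)-\frac{\rho}{2}\eta_t .
\]
\item Telescope from $t_1$ to $t_2$, use $\phi(u)<u$, and apply your Step 2. This gives
\[
\phi(u_{t_2+1})<n\Bigl(\Lc(\W_0)+2R\sum_{s<t_1}\eta_s\Bigr)-\frac{\rho}{2}\sum_{s=t_1}^{t_2}\eta_s\le 0
\]
under \eqref{eq:nmd_t2_condition}. The condition even has a spare factor of $1/\log 2$.
\item Therefore $\Lc(\W_{t_2+1})<\frac{\log 2}{n}$. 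Monotonicity after $t_1$ then extends this to every $t>t_2$.
\end{itemize}
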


\begin{proof}
    The proof follows the same two-step strategy as Lemma \ref{lem:convergence_time_bound} (Determination of $t_1$ and $t_2$), adapted for the descent inequality derived in Lemma \ref{lem:nmd_descent}.

    Recall the descent inequality (Eq. \ref{eq: sign_descent_eq2}):
    \[
        \Lc(\W_{t+1}) \leq \Lc(\W_t) - \eta_t \left( \rho - \alpha_1 \eta_t - \alpha_2 \beta_1^{t/2} \right) \Gc(\W_t).
    \]
    Since $\eta_t \to 0$ and $\beta_1 < 1$ implies $\beta_1^{t/2} \to 0$, there exists a finite time $t_1$ such that for all $t \ge t_1$, the noise terms are dominated by the effective margin:
    \[
        \alpha_1 \eta_t + \alpha_2 \beta_1^{t/2} \leq \frac{\rho}{2}.
    \]
    Consequently, for all $t \geq t_1$, the loss is strictly decreasing:
    \begin{equation} \label{eq:nmd_strict_descent}
        \Lc(\W_{t+1}) \leq \Lc(\W_t) - \frac{\rho}{2} \eta_t \Gc(\W_t).
    \end{equation}
    
    Summing the strict descent inequality \eqref{eq:nmd_strict_descent} from $t_1$ to $t_2$:
    \[
        \frac{\rho}{2} \sum_{s=t_1}^{t_2} \eta_s \Gc(\W_s) \leq \Lc(\W_{t_1}) - \Lc(\W_{t_2+1}) \leq \Lc(\W_{t_1}).
    \]
    Let $t^* = \argmin_{s \in [t_1, t_2]} \Gc(\W_s)$. Then:
    \[
        \Gc(\W_{t^*}) \sum_{s=t_1}^{t_2} \eta_s \leq \frac{2}{\rho} \Lc(\W_{t_1}).
    \]
    Let $\tilde{\Lc} = \frac{\log 2}{n}$. To ensure $\Lc(\W_{t^*}) \leq \tilde{\Lc}$, it suffices to enforce $\Gc(\W_{t^*}) \leq \frac{\tilde{\Lc}}{2}$ (by Lemma \ref{lem:G and L}). This leads to the condition:
    \[
        \frac{2 (\Lc(\W_0) + 2R \sum_{s=0}^{t_1-1} \eta_s)}{\rho \sum_{s=t_1}^{t_2} \eta_s} \leq \frac{\tilde{\Lc}}{2} \iff \sum_{s=t_1}^{t_2} \eta_s \geq \frac{4\Lc(\W_0) + 8R \sum_{s=0}^{t_1-1} \eta_s}{\rho \tilde{\Lc}}.
    \]
    Under this condition, there exists $t^* \le t_2$ with low loss. Due to monotonicity for $t \ge t_1$, for all $t > t_2$, we have $\Lc(\W_t) \leq \Lc(\W_{t^*}) \leq \tilde{\Lc}$.
\end{proof}

\begin{lemma}[Unnormalized Margin] \label{lem:nmd_unnormalized_margin}
    Consider the same setting as Lemma \ref{lem:nmd_loss_convergence}. Let $t_2$ be the time index guaranteed by Lemma \ref{lem:nmd_loss_convergence} such that $\Lc(\W_t) \le \frac{\log 2}{n}$ for all $t > t_2$.
    Recall the constants: $\rho = \gamma - 2(1-\beta_1)m(m^2-1)R$, $\alpha_1 = 4mR(1-\beta_1)c_2 + 2R^2e^{2R\eta_0}$, and $\alpha_2 = 4R$.
    Define the constant $D \coloneqq \frac{\alpha_2 \eta_0 }{1 - \sqrt{\beta_1}}.$
    
    Then, for all $t > t_2$, the minimum unnormalized margin satisfies:
    \begin{align} \label{eq:nmd_margin_growth}
        \min_{i \in [n], c \neq y_i} (\eb_{y_i} - \eb_c)^\top \W_t \xb_i 
        \geq \rho \sum_{s=t_2}^{t-1} \eta_s \frac{\Gc(\W_s)}{\Lc(\W_s)} - \alpha_1 \sum_{s=t_2}^{t-1} \eta_s^2 - D.
    \end{align}
\end{lemma}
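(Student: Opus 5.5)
The argument follows the proof of Lemma~\ref{lem:nsd_unnormalized_margin}, with one extra ingredient: the transient momentum term $\alpha_2\beta_1^{s/2}\eta_s$ is absorbed into the constant $D$.

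\textbf{Step 1: multiplicative form of the descent inequality.} For $s\ge t_2$ (with $t_2\ge t_0$, so Lemma~\ref{lem:nmd_descent} applies), write the descent bound as
\[
\Lc(\W_{s+1})\le \Lc(\W_s)-\eta_s\bigl(\rho-\alpha_1\eta_s-\alpha_2\beta_1^{s/2}\bigr)\Gc(\W_s).
\]
Factor out $\Lc(\W_s)$. Keep the favourable term $\rho\eta_s\Gc(\W_s)/\Lc(\W_s)$ exactly, and bound the two error terms using $\Gc/\Lc\le 1$ from Lemma~\ref{lem:G and L}(i). Then apply $1+x\le e^x$ to get
\[
\Lc(\W_{s+1})\le \Lc(\W_s)\exp\Bigl(-\rho\eta_s\tfrac{\Gc(\W_s)}{\Lc(\W_s)}+\alpha_1\eta_s^2+\alpha_2\beta_1^{s/2}\eta_s\Bigr).
\]

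\textbf{Step 2: telescope from $t_2$ to $t$.} Multiplying these bounds gives
\[
\Lc(\W_t)\le \Lc(\W_{t_2})\exp\Bigl(-\rho\textstyle\sum_{s=t_2}^{t-1}\eta_s\frac{\Gc(\W_s)}{\Lc(\W_s)}+\alpha_1\sum_{s=t_2}^{t-1}\eta_s^2+\alpha_2\sum_{s=t_2}^{t-1}\beta_1^{s/2}\eta_s\Bigr).
\]
The last sum is where $D$ comes from. Since $\eta_s$ is decreasing, $\eta_s\le\eta_0$, so
\[
\alpha_2\sum_{s\ge0}\beta_1^{s/2}\eta_s\le \frac{\alpha_2\eta_0}{1-\sqrt{\beta_1}}=D.
\]
This bound is uniform in $t$ and $t_2$, which is why it enters only as an additive constant.

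\textbf{Step 3: convert the loss bound into a margin bound.} Because $\Lc(\W_t)\le\frac{\log 2}{n}$, Lemma~\ref{lem:sep} gives $z_{\min}(t)\ge0$. As in Lemma~\ref{lem:nsd_unnormalized_margin}, this yields
\[
(\log 2)e^{-z_{\min}(t)}\le n\Lc(\W_t).
\]
Combining this with $\Lc(\W_{t_2})\le\frac{\log 2}{n}$, the factors $n/\log 2$ cancel. Taking logarithms then gives the claimed inequality \eqref{eq:nmd_margin_growth}.

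\textbf{Main obstacle.} The only delicate point is making sure $t_2$ lies beyond the time $t_0$ from Assumption~\ref{ass:learning_rate_2}, so that the descent lemma holds for every $s\ge t_2$. This is arranged by construction of $t_2\ge t_1\ge t_0$ in Lemma~\ref{lem:nmd_loss_convergence}. The remaining steps are routine.
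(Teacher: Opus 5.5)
Your proposal is correct and is essentially the paper's argument. You write the descent inequality of Lemma~\ref{lem:nmd_descent} multiplicatively, using $\Gc/\Lc\le 1$ on the error terms and $1+x\le e^x$, then telescope from $t_2$, convert to a margin bound via $e^{-z_{\min}(t)}\le \frac{n}{\log 2}\Lc(\W_t)$, and bound $\alpha_2\sum_s\beta_1^{s/2}\eta_s$ by the geometric series $D$. Your remark that $t_2\ge t_1\ge t_0$ is needed for the descent lemma to apply makes explicit a point the paper leaves implicit.
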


\begin{proof}
    The proof mirrors the structure of Lemma \ref{lem:nsd_unnormalized_margin}, adapted for the momentum-dependent descent inequality.
    
    Recall the descent inequality (Eq. \ref{eq: sign_descent_eq2}):
    \[
        \Lc(\W_{s+1}) \leq \Lc(\W_s) - \eta_s \left( \rho - \alpha_1 \eta_s - \alpha_2 \beta_1^{s/2} \right) \Gc(\W_s).
    \]
    We rewrite this recurrence by factoring out $\Lc(\W_s)$:
    \begin{align*}
        \Lc(\W_{s+1}) &\leq \Lc(\W_s) \left( 1 - \eta_s \rho \frac{\Gc(\W_s)}{\Lc(\W_s)} + \left(\alpha_1 \eta_s^2 + \alpha_2 \eta_s \beta_1^{s/2}\right) \frac{\Gc(\W_s)}{\Lc(\W_s)} \right).
    \end{align*}
    Using the inequality $1+x \le e^x$ and the property $\frac{\Gc(\W_s)}{\Lc(\W_s)} \le 1$ (Lemma \ref{lem:G and L} (i)) to bound the noise terms, we have:
    \begin{align*}
        \Lc(\W_{s+1}) &\leq \Lc(\W_s) \exp\left( - \eta_s \rho \frac{\Gc(\W_s)}{\Lc(\W_s)} + \alpha_1 \eta_s^2 + \alpha_2 \eta_s \beta_1^{s/2} \right).
    \end{align*}
    Applying this recursively from $t_2$ to $t$:
    \begin{align} \label{eq:nmd_loss_recursive}
        \Lc(\W_t) \leq \Lc(\W_{t_2}) \exp\left( - \rho \sum_{s=t_2}^{t-1} \eta_s \frac{\Gc(\W_s)}{\Lc(\W_s)} + \alpha_1 \sum_{s=t_2}^{t-1} \eta_s^2 + \alpha_2 \sum_{s=t_2}^{t-1} \eta_s \beta_1^{s/2} \right).
    \end{align}

    Now, relate the loss to the unnormalized margin $z_{min}(t) \coloneqq \min_{i \in [n], c \neq y_i} (\eb_{y_i} - \eb_c)^\top \W_t \xb_i$. Since $t > t_2$, the data is separable ($z_i \ge 0$). Using the standard inequality derived in Lemma \ref{lem:nsd_unnormalized_margin}:
    \[
        e^{-z_{min}(t)} \leq \frac{n}{\log 2} \Lc(\W_t).
    \]
    Substituting the bound from Eq. \eqref{eq:nmd_loss_recursive} and utilizing the condition $\Lc(\W_{t_2}) \le \frac{\log 2}{n}$:
    \begin{align*}
        e^{-z_{min}(t)} &\leq \frac{n}{\log 2} \cdot \left[ \frac{\log 2}{n} \exp\left( \dots \right) \right] \\
        &= \exp\left( - \rho \sum_{s=t_2}^{t-1} \eta_s \frac{\Gc(\W_s)}{\Lc(\W_s)} + \alpha_1 \sum_{s=t_2}^{t-1} \eta_s^2 + \alpha_2 \sum_{s=t_2}^{t-1} \eta_s \beta_1^{s/2} \right).
    \end{align*}
    Taking the negative logarithm on both sides reverses the inequality:
    \[
        z_{min}(t) \geq \rho \sum_{s=t_2}^{t-1} \eta_s \frac{\Gc(\W_s)}{\Lc(\W_s)} - \alpha_1 \sum_{s=t_2}^{t-1} \eta_s^2 - \alpha_2 \sum_{s=t_2}^{t-1} \eta_s \beta_1^{s/2}.
    \]
    Finally, we bound the last term (the momentum drift noise) by a constant. Since the step size is non-increasing ($\eta_s \leq \eta_0$) and $\beta_1 < 1$, the series converges:
    \[
        \alpha_2 \sum_{s=t_2}^{t-1} \eta_s \beta_1^{s/2} \leq \alpha_2 \eta_0 \sum_{s=0}^{\infty} (\sqrt{\beta_1})^s = \frac{\alpha_2 \eta_0}{1 - \sqrt{\beta_1}} = D.
    \]
    Substituting this constant bound yields the desired result:
    \[
        z_{min}(t) \geq \rho \sum_{s=t_2}^{t-1} \eta_s \frac{\Gc(\W_s)}{\Lc(\W_s)} - \alpha_1 \sum_{s=t_2}^{t-1} \eta_s^2 - D.
    \]
\end{proof}

\begin{theorem}[Margin Convergence Rate of Stochastic $l_p$ Steepest Descent With Momentum] \label{thm:nmd_margin_rate}
    Suppose Assumptions \ref{ass:sep}, \ref{ass:data_bound}, \ref{ass:learning_rate_1}, and \ref{ass:learning_rate_2} hold. 
    Assume the hyperparameters (momentum $\beta_1$ and batch size $b$) satisfy the Positive Effective Margin Condition:
    \begin{equation} \label{eq:nmd_condition}
        \rho \coloneqq \gamma - 2(1-\beta_1)m(m^2-1)R > 0,
    \end{equation}
    where $m = n/b$. This condition holds in either of the following regimes:
    \begin{itemize}
        \item \textbf{High Momentum:} $\beta_1 \to 1$ (such that the noise term vanishes).
        \item \textbf{Large Batch:} $b \to n$ (implies $m \to 1$, such that $m^2-1 \to 0$).
    \end{itemize}
    Let $t_2$ be the time index guaranteed by Lemma \ref{lem:nmd_loss_convergence} such that $\Lc(\W_t) \le \frac{\log 2}{n}$. Let $D = \frac{4R \eta_0}{1 - \sqrt{\beta_1}}$ be the constant bound for momentum drift. Consider the learning rate schedule $\eta_t = c \cdot t^{-a}$ with $a \in (0,1]$.
    
    Then, for all $t > t_2$, the margin gap of the iterates satisfies:
    \begin{align*}
         \rho - \frac{\min_{i \in [n], c \neq y_i} (\eb_{y_i} - \eb_c)^\top \W_t \xb_i}{\norm{\W_t}}
         &\leq \mathcal{O}\Bigg(\frac{\sum_{s=t_2}^{t-1} \eta_s e^{-\frac{\rho}{4} \sum_{\tau = t_2}^{s-1}\eta_{\tau}} + \sum_{s=0}^{t_2-1}\eta_s + \frac{m}{1-\beta_1}\sum_{s = t_2}^{t-1}\eta_s^2 + D}{\sum_{s=0}^{t-1} \eta_s}\Bigg).
    \end{align*}
\end{theorem}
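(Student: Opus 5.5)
The argument mirrors the proof of Theorem~\ref{thm:stoch_margin_rate}, with the descent inequality of Lemma~\ref{lem:nsd_descent} replaced by the momentum version, Lemma~\ref{lem:nmd_descent}, and the unnormalized margin bound replaced by Lemma~\ref{lem:nmd_unnormalized_margin}. The plan has four steps.

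\textbf{Step 1: the ratio $\Gc/\Lc$ tends to $1$.} By Lemma~\ref{lem:nmd_loss_convergence}, $\Lc(\W_t)\le \frac{\log 2}{n}$ for all $t>t_2$. We also take $t_2\ge t_1$, so that $\alpha_1\eta_t+\alpha_2\beta_1^{t/2}\le\rho/2$. Lemma~\ref{lem:G and L}(ii) then gives $\Lc(\W_t)\le 2\Gc(\W_t)$. Combined with the strict descent \eqref{eq:nmd_strict_descent}, this yields
\[
\Lc(\W_{t+1})\le \Big(1-\tfrac{\rho}{4}\eta_t\Big)\Lc(\W_t).
\]
Iterating, we get $\Lc(\W_t)\le \frac{\log 2}{n}\exp\!\big(-\frac{\rho}{4}\sum_{s=t_2}^{t-1}\eta_s\big)$. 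Lemma~\ref{lem:G and L}(i) then gives
\[
\frac{\Gc(\W_s)}{\Lc(\W_s)}\ge 1-e^{-\frac{\rho}{4}\sum_{\tau=t_2}^{s-1}\eta_\tau}.
\]

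\textbf{Step 2: lower bound on the unnormalized margin.} Substituting the Step~1 bound into \eqref{eq:nmd_margin_growth} gives
\[
z_{\min}(t)\ge \rho\sum_{s=t_2}^{t-1}\eta_s-\rho\sum_{s=t_2}^{t-1}\eta_s e^{-\frac{\rho}{4}\sum_{\tau=t_2}^{s-1}\eta_\tau}-\alpha_1\sum_{s=t_2}^{t-1}\eta_s^2-D.
\]

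\textbf{Step 3: bound on the norm.} Since $\norm{\Deltab_s}\le 1$, we have $\norm{\W_t}\le\norm{\W_0}+\sum_{s=0}^{t-1}\eta_s$.

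\textbf{Step 4: assemble the gap.} Write the margin gap as $\frac{\rho\norm{\W_t}-z_{\min}(t)}{\norm{\W_t}}$. Using Step~3 for the numerator, the $\rho\sum_{s=t_2}^{t-1}\eta_s$ terms cancel, leaving
\[
\rho\norm{\W_0}+\rho\sum_{s<t_2}\eta_s+\rho\sum_{s=t_2}^{t-1}\eta_s e^{-\frac{\rho}{4}\sum_{\tau=t_2}^{s-1}\eta_\tau}+\alpha_1\sum_{s=t_2}^{t-1}\eta_s^2+D.
\]
For the denominator we need $\norm{\W_t}\gtrsim\sum_{s=0}^{t-1}\eta_s$. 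The direct route is to observe that $\norm{\W_t}\ge z_{\min}(t)/(2R)$, because $(\eb_{y_i}-\eb_c)^\top\W\xb_i\le 2R\ninf{\W}\le 2R\norm{\W}$. Step~2 shows that $z_{\min}(t)$ grows like $\rho\sum\eta_s$, since the subtracted terms are of lower order. Hence the denominator is $\Omega(\sum_{s=0}^{t-1}\eta_s)$ for $t$ large, after possibly enlarging $t_2$. Alternatively, whenever the numerator is nonnegative, dividing by the upper bound is legitimate up to constants, as in the earlier theorem.

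\textbf{Main obstacle: tracking the constant in the $\eta_s^2$ term.} The statement isolates the factor $\frac{m}{1-\beta_1}$ rather than absorbing it into the $\mathcal{O}$. We have $\alpha_1=4mR(1-\beta_1)c_2+2R^2e^{2R\eta_0}$, where $c_2$ is the constant from Assumption~\ref{ass:learning_rate_2} with $c_1=2R$ and $\beta=\beta_1$. Lemma~\ref{lem:ass2_explicit_t0_c2_systematic} gives
\[
c_2=\mathcal{O}\!\Big(\frac{R}{(1-\beta_1)^2}+\frac{1}{c(1-\beta_1)}\Big),
\]
so $\alpha_1=\mathcal{O}\big(\frac{mR^2}{1-\beta_1}+\frac{m}{c}+R^2\big)=\mathcal{O}\big(\frac{m}{1-\beta_1}\big)$, treating $R$, $c$, $\eta_0\le c$ as constants. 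This is exactly the stated coefficient. The remaining constants, namely $\rho$, $\norm{\W_0}$, and the $D$ already displayed, are then absorbed into the $\mathcal{O}$, which completes the bound.
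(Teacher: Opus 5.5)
Your proposal is correct and follows the paper's proof step for step. Exponential decay of the loss after $t_2$ gives the ratio bound on $\Gc/\Lc$, which is substituted into Lemma~\ref{lem:nmd_unnormalized_margin}; the norm is bounded by $\norm{\W_0}+\sum_{s<t}\eta_s$; and $c_2=\mathcal{O}((1-\beta_1)^{-2})$ from Lemma~\ref{lem:ass2_explicit_t0_c2_systematic} yields $\alpha_1=\mathcal{O}(m/(1-\beta_1))$.

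On the denominator, which the paper simply replaces by $\sum_{s<t}\eta_s$, your bound $\norm{\W_t}\ge z_{\min}(t)/(2R)$ works, but only for large $t$. Your ``alternatively'' remark fails if ``numerator'' means $\rho\norm{\W_t}-z_{\min}(t)$, because dividing a nonnegative quantity by an upper bound on $\norm{\W_t}$ goes the wrong way. The clean fix, valid for every $t>t_2$, is to use $z_{\min}(t)\ge 0$ (Lemma~\ref{lem:sep}), so that $z_{\min}(t)/\norm{\W_t}\ge z_{\min}(t)/(\norm{\W_0}+\sum_{s<t}\eta_s)$. The resulting nonnegative numerator can then legitimately be divided by $\sum_{s<t}\eta_s$.
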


\begin{proof}
    From Lemma \ref{lem:nmd_loss_convergence}, for all $t > t_2$, we have $\Lc(\W_t) \leq \frac{\log 2}{n}$. Lemma \ref{lem:G and L} (ii) implies $\Lc(\W_t) \leq 2\Gc(\W_t)$.
    Recall the descent inequality for $t > t_2$ (where the noise terms are dominated by $\rho/2$):
    \begin{align*}
        \Lc(\W_{t+1}) \leq \Lc(\W_t) - \frac{\rho}{2} \eta_t \Gc(\W_t) \leq \Lc(\W_t) \left(1 - \frac{\rho}{4} \eta_t\right).
    \end{align*}
    Recursively applying this yields exponential decay:
    \[
        \Lc(\W_t) \leq \frac{\log 2}{n} \exp\left(-\frac{\rho}{4} \sum_{s=t_2}^{t-1} \eta_s\right).
    \]
    Using Lemma \ref{lem:G and L} (i), the ratio is bounded as:
    \begin{equation} \label{eq:nmd_ratio_conv}
        \frac{\Gc(\W_t)}{\Lc(\W_t)} \geq 1 - \frac{n \Lc(\W_t)}{2} \geq 1 - e^{-\frac{\rho}{4} \sum_{s=t_2}^{t-1} \eta_s}.
    \end{equation}

The weight growth is bounded by the cumulative step size:
    \begin{equation} \label{eq:nmd_weight_bound}
        \norm{\W_t} \leq \norm{\W_0} + \sum_{s=0}^{t-1} \eta_s.
    \end{equation}

    From Lemma \ref{lem:nmd_unnormalized_margin}, for $t > t_2$:
    \[
        z_{\min}(t) \geq \rho \sum_{s=t_2}^{t-1} \eta_s \frac{\Gc(\W_s)}{\Lc(\W_s)} - \alpha_1 \sum_{s=t_2}^{t-1} \eta_s^2 - D.
    \]
    Substituting Eq. \eqref{eq:nmd_ratio_conv}:
    \begin{align*}
        z_{\min}(t) &\geq \rho \sum_{s=t_2}^{t-1} \eta_s \left( 1 - e^{-\frac{\rho}{4} \sum_{\tau=t_2}^{s-1} \eta_{\tau}} \right) - \alpha_1 \sum_{s=t_2}^{t-1} \eta_s^2 - D \\
        &= \rho \sum_{s=t_2}^{t-1} \eta_s - \rho \sum_{s=t_2}^{t-1} \eta_s e^{-\frac{\rho}{4} \sum_{\tau=t_2}^{s-1} \eta_{\tau}} - \alpha_1 \sum_{s=t_2}^{t-1} \eta_s^2 - D.
    \end{align*}

    The margin gap is:
    \begin{align*}
        \text{Gap}_t &= \rho - \frac{z_{\min}(t)}{\norm{\W_t}} = \frac{\rho \norm{\W_t} - z_{\min}(t)}{\norm{\W_t}} \\
        &\leq \frac{\rho (\norm{\W_0} + \sum_{s=0}^{t-1} \eta_s) - (\rho \sum_{s=t_2}^{t-1} \eta_s - \rho \sum_{s=t_2}^{t-1} \eta_s e^{-\frac{\rho}{4} \sum_{\tau=t_2}^{s-1} \eta_{\tau}} - \alpha_1 \sum_{s=t_2}^{t-1} \eta_s^2 - D)}{\sum_{s=0}^{t-1} \eta_s} \\
        &= \frac{\rho \norm{\W_0} + \rho \sum_{s=0}^{t_2-1} \eta_s + \rho \sum_{s=t_2}^{t-1} \eta_s e^{-\frac{\rho}{4} \sum_{\tau=t_2}^{s-1} \eta_{\tau}} + \alpha_1 \sum_{s=t_2}^{t-1} \eta_s^2 + D}{\sum_{s=0}^{t-1} \eta_s}.
    \end{align*}
    By Lemma \ref{lem:ass2_explicit_t0_c2_systematic}, $c_2=\mathcal{O}(1/(1-\beta_1)^2)$, so that $\alpha_1$ scales with $\mathcal{O}(m/(1-\beta_1))$.
    Absorbing the constants $\rho, \alpha_1$ into the Big-O notation, we obtain the stated result.
\end{proof}

    


\begin{corollary}\label{cor:nmd_rate_exact_final}
    Consider the learning rate schedule $\eta_t = c \cdot t^{-a}$ with $a \in (0,1]$. Under the setting of Theorem \ref{thm:nmd_margin_rate}, the margin gap converges with the following rates:
    
    \[ 
   \rho - \frac{\min_{i \in [n], c \neq y_i} (\eb_{y_i} - \eb_c)^\top \W_t \xb_i}{\norm{\W_t}} = \left\{
    \begin{array}{ll}
           \mathcal{O} \left( \frac{n [\frac{m}{1-\beta_1}]^{\frac{1}{a}-1}  + \frac{m}{1-\beta_1} t^{1-2a}}{t^{1-a}} \right) &  \text{if} \quad a < \frac{1}{2}\\
           
            \mathcal{O} \left( \frac{\frac{m}{1-\beta_1}(n+\log t) }{t^{1/2}} \right) &  \text{if}\quad a=\frac{1}{2} \\
            
            \mathcal{O} \left( \frac{n  [\frac{m}{1-\beta_1}]^{\frac{1}{a}-1} +\frac{m}{1-\beta_1}}{t^{1-a}} \right) &  \text{if}\quad \frac{1}{2} < a < 1 \\
            
            \mathcal{O} \left( \frac{n  \log(\frac{m}{1-\beta_1}) +\frac{m}{1-\beta_1}}{\log t}\right) &  \text{if} \quad a=1
    \end{array} 
    \right. 
    \]
If we view $\beta_1$ as a constant for better comparison, we have:
\[ 
   \rho - \frac{\min_{i \in [n], c \neq y_i} (\eb_{y_i} - \eb_c)^\top \W_t \xb_i}{\norm{\W_t}} = \left\{
    \begin{array}{ll}
           \mathcal{O} \left( \frac{n  m^{\frac{1}{a}-1} +  m t^{1-2a}}{t^{1-a}} \right) &  \text{if} \quad a < \frac{1}{2}\\
           
            \mathcal{O} \left( \frac{n  m +  m\log t }{t^{1/2}} \right)  &  \text{if}\quad a=\frac{1}{2} \\
            
            \mathcal{O} \left( \frac{n  m^{\frac{1}{a}-1}}{t^{1-a}} \right) &  \text{if}\quad \frac{1}{2} < a < 1 \\
            
            \mathcal{O} \left( \frac{n \log m}{\log t} \right) &  \text{if} \quad a=1
    \end{array} 
    \right. 
    \]
    
\end{corollary}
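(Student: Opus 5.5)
The proof substitutes $\eta_t=c\,t^{-a}$ into the four numerator terms of Theorem~\ref{thm:nmd_margin_rate} and into its denominator, with all $(m,\beta_1)$ dependence kept explicit. The central quantity is the threshold $t_2$, through the entry cost $\sum_{s=0}^{t_2-1}\eta_s$.

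\textbf{Easy terms.} The denominator satisfies $\sum_{s<t}\eta_s=\Theta(t^{1-a})$ for $a<1$ and $\Theta(\log t)$ for $a=1$. The exponential-decay term $\sum_{s\ge t_2}\eta_s e^{-\frac{\rho}{4}\sum_{\tau=t_2}^{s-1}\eta_\tau}$ is a Riemann sum for $\int_0^\infty e^{-\rho u/4}\,du$. Since $\eta_s\le\eta_0$, it is $\mathcal{O}(1)$. The drift $D$ is a constant once $\beta_1$ is fixed, so I absorb it into the $\frac{m}{1-\beta_1}$ factor. For the quadratic term, $\frac{m}{1-\beta_1}\sum_{s\ge t_2}\eta_s^2$ is $\frac{m}{1-\beta_1}$ times $\mathcal{O}(t^{1-2a})$, $\mathcal{O}(\log t)$, or $\mathcal{O}(1)$ according as $a<\frac12$, $a=\frac12$, or $a>\frac12$. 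This already produces the $\frac{m}{1-\beta_1}t^{1-2a}$, $\frac{m}{1-\beta_1}\log t$, and $\frac{m}{1-\beta_1}$ pieces.

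\textbf{Entry cost via $t_1$.} Here $t_1$ is the time at which $\alpha_1\eta_t+\alpha_2\beta_1^{t/2}\le\rho/2$. It must also dominate the time $t_0$ from Lemma~\ref{lem:ass2_explicit_t0_c2_systematic}, applied with $c_1=2R$ and $\beta=\beta_1$, after which the descent lemma is valid. I would take the scaling $\alpha_1=\mathcal{O}(\frac{m}{1-\beta_1})$, which is the scaling used in the proof of Theorem~\ref{thm:nmd_margin_rate}.

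I then collect three requirements on $t_1$:
\begin{itemize}
\item[(i)] $\eta_{t_1}\lesssim\rho/\alpha_1$ gives $t_1\gtrsim(\frac{m}{1-\beta_1})^{1/a}$.
\item[(ii)] $\alpha_2\beta_1^{t_1/2}\le\rho/4$ gives $t_1\gtrsim\frac{1}{1-\beta_1}\log(\cdot)$.
\item[(iii)] $t_0=\mathcal{O}((1/\lambda)^{1/a}+\lambda^{-1}\log\lambda^{-1})$ with $\lambda=\log(1/\beta_1)\ge 1-\beta_1$.
\end{itemize}
All three are dominated by $(\frac{m}{1-\beta_1})^{1/a}$ up to logarithms when $a<1$. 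For $a=1$ they are $\mathcal{O}(\frac{m}{1-\beta_1}\log\frac{m}{1-\beta_1})$.

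It follows that $\sum_{s<t_1}\eta_s=\mathcal{O}(t_1^{1-a})=\mathcal{O}((\frac{m}{1-\beta_1})^{1/a-1})$ for $a<1$, and $\mathcal{O}(\log\frac{m}{1-\beta_1})$ for $a=1$. The defining condition for $t_2$ in Lemma~\ref{lem:nmd_loss_convergence} then gives
\[
\sum_{s=0}^{t_2-1}\eta_s\le \sum_{s<t_1}\eta_s+\frac{n\,(4\Lc(\W_0)+8R\sum_{s<t_1}\eta_s)}{\rho\log 2}+\eta_0,
\]
where the final $\eta_0$ accounts for the minimal choice of $t_2$. This is $\mathcal{O}(n(\frac{m}{1-\beta_1})^{1/a-1})$ for $a<1$ and $\mathcal{O}(n\log\frac{m}{1-\beta_1})$ for $a=1$.

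\textbf{Assembly.} Dividing by the denominator gives the four displayed rates. At $a=\frac12$ the entry cost is $n(\frac{m}{1-\beta_1})$, which combines with $\frac{m}{1-\beta_1}\log t$ into $\frac{m}{1-\beta_1}(n+\log t)$. Setting $\beta_1$ constant gives the second display.

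\textbf{Main obstacle.} The delicate step is controlling $t_1$ explicitly. Both the $\frac{m}{1-\beta_1}$ scaling of $\alpha_1$ and the $(1-\beta_1)$ dependence of $t_0$ enter $t_1$, and a careless bound would lose extra powers of $\frac{1}{1-\beta_1}$. I would check that the $\beta_1^{t/2}$ and $t_0$ thresholds grow only like $\frac{1}{1-\beta_1}$ times logarithms, so that they are absorbed into $(\frac{m}{1-\beta_1})^{1/a}$ since $1/a\ge1$. Any leftover $\log\frac{1}{1-\beta_1}$ factors I would treat as constants in the $\mathcal{O}(\cdot)$.
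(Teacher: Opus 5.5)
Your proposal is correct and follows essentially the same route as the paper. Both arguments take $t_1$ as the maximum of three thresholds: the polynomial threshold $\Theta((\frac{m}{1-\beta_1})^{1/a})$ coming from $\alpha_1=\mathcal{O}(\frac{m}{1-\beta_1})$, the threshold from $\alpha_2\beta_1^{t/2}$, and $t_0$ from Lemma~\ref{lem:ass2_explicit_t0_c2_systematic}; they then bound the entry cost by $n$ times the warm-up sum $\sum_{s<t_1}\eta_s$, absorb $D=\mathcal{O}((1-\beta_1)^{-1})$ into $\frac{m}{1-\beta_1}$, and finish case by case in $a$. The only point to tighten is your hedge about logarithms: for $a<1$ no $\log\frac{1}{1-\beta_1}$ factor needs to be treated as a constant, which would only give a weaker, non-explicit form of the first display, because $1/a>1$ strictly gives $\lambda^{-1}\log\lambda^{-1}\le C_a\lambda^{-1/a}\le C_a(\frac{m}{1-\beta_1})^{1/a}$ with $\lambda=\log(1/\beta_1)\ge 1-\beta_1$, and this is exactly how the paper absorbs the $t_0$ contribution.
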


\begin{proof}
    We derive the convergence rate by systematically analyzing the order of the numerator terms in Theorem \ref{thm:nmd_margin_rate} and dividing by the denominator $\sum_{s=0}^{t-1} \eta_s$. Let $\xi \coloneqq \frac{m}{1-\beta_1}$ denote the scaling factor of the noise constant $\alpha_{1}$.
    
    The bound in Theorem \ref{thm:nmd_margin_rate} is:
    \[
        \text{Gap}_t \leq \mathcal{O}\left( \frac{\mathcal{S}_2 + \mathcal{N}_t + D}{\sum_{s=0}^{t-1} \eta_s} \right),
    \]
    where $\mathcal{S}_2 \coloneqq \sum_{s=0}^{t_2-1} \eta_s$, $\mathcal{N}_t \coloneqq \xi \sum_{s=t_2}^{t-1} \eta_s^2$ and $D = \mathcal{O}((1-\beta_1)^{-1})$.
    
    \textbf{Step 1: Order of the Start Time $t_1$ and Warm-up Drift $\mathcal{S}_1$.}
    The time $t_1$ is defined as the first iteration where the effective margin dominates the noise terms. Specifically, we require $t_1 \ge t_0$ (from Assumption \ref{ass:learning_rate_2}), $\alpha_{\text{svr-m}} \eta_{t_1} \le \rho/4$, and $\alpha_2 \beta_1^{t_1/2} \le \rho/4$. Thus:
    \[
        t_1 = \max \{ t_{1, \text{poly}}, t_{1, \text{exp}}, t_0 \}.
    \]
    We analyze the order of the cumulative step size $\mathcal{S}_1 = \sum_{s=0}^{t_1-1} \eta_s$ contributed by each component. Note that for large $T$, $\sum_{s=0}^\top s^{-a} = \mathcal{O}(T^{1-a})$ for $a < 1$ and $\mathcal{O}(\log T)$ for $a=1$.
    
    \begin{enumerate}
        \item \textbf{Polynomial Stability ($t_{1, \text{poly}}$):} 
        From $\xi t^{-a} \leq \mathcal{O}(1)$, we have $t_{1, \text{poly}} = \Theta(\xi^{1/a})$. 
        The contribution to the sum is:
        \[
             \sum_{s=0}^{t_{1, \text{poly}}} \eta_s = \mathcal{O}\left( (\xi^{1/a})^{1-a} \right) = \mathcal{O}\left( \xi^{\frac{1}{a}-1} \right). \quad (\text{For } a=1: \log \xi).
        \]
        
        \item \textbf{Exponential Stability ($t_{1, \text{exp}}$):}
        From $\beta_1^{t/2} \leq \mathcal{O}(1)$, we have $t_{1, \text{exp}} = \Theta(\frac{1}{\log(1/\beta_1)})$.
        The contribution to the sum is:
        \[
            \sum_{s=0}^{t_{1, \text{exp}}} \eta_s = \Theta\left( (\log(1/\beta_1))^{a-1} \right). \quad (\text{For } a=1: \log(\frac{1}{\log(1/\beta_1)})).
        \]
        
        \item \textbf{Assumption 3.4 Validity ($t_0$):}
        We explicitly invoke Lemma \ref{lem:ass2_explicit_t0_c2_systematic}. The order of $t_0$ is given in Eq. \eqref{eq:t0_bigO_a<1} and \eqref{eq:t0_bigO_a=1}. Define 
    \[
    \tilde{t} = 
    \begin{cases}
        (1/\log (1/\beta_1))^{\frac{1}{a}}+\frac{1}{\log (1/\beta_1)}\log(\frac{1}{\log (1/\beta_1)}), & \text{if } a \in (0,1) \\
        \frac{1}{\log (1/\beta_1)}\log(\frac{1}{\log (1/\beta_1)}), & \text{if } a = 1
    \end{cases}
\] The variable $\tilde{t}$ represents the integral of the learning rate up to this time $t_0$. Thus:
        \[
             \sum_{s=0}^{t_0} \eta_s = \mathcal{O}(\tilde{t}^{1-a}). \quad (\text{For } a=1: \log \tilde{t}).
        \]
    \end{enumerate}
    Summing these contributions, the total warm-up drift is:
    \[
        \mathcal{S}_1 = \mathcal{O}\left( \xi^{\frac{1}{a}-1} + (\log(1/\beta_1))^{a-1} + \tilde{t}^{1-a} \right).
    \]
    (With logarithmic modifications for $a=1$).

    \textbf{Step 2: Order of Entry Cost $\mathcal{S}_2$.}
    From Lemma \ref{lem:nmd_loss_convergence}, the condition for $t_2$ is $\sum_{s=t_1}^{t_2} \eta_s \ge \frac{C}{\gamma \tilde{\Lc}} (2\Lc(\W_0) + 4R \mathcal{S}_1)$. Since $\tilde{\Lc} = \Theta(1/n)$, the required integral scales linearly with $n$. Therefore, the total accumulation up to $t_2$ is dominated by $n$ times the warm-up drift:
    \[
        \mathcal{S}_2 = \sum_{s=0}^{t_2} \eta_s = \mathcal{O}(n \cdot \mathcal{S}_1).
    \]

    \textbf{Step 3: Order of Variance Noise $\mathcal{N}_t$.}
    The noise term is $\mathcal{N}_t = \Theta(\xi \sum_{s=t_2}^{t-1} \eta_s^2)$. We approximate the sum by integral $\int^t x^{-2a} dx$:
    \[
        \mathcal{N}_t = \begin{cases} 
            \mathcal{O}(\xi t^{1-2a}) & \text{if } a < 1/2, \\
            \mathcal{O}(\xi \log t) & \text{if } a = 1/2, \\
            \mathcal{O}(\xi) & \text{if } a > 1/2 \quad (\text{converges to constant}).
        \end{cases}
    \]

    \textbf{Step 4: Final Rate}
    The denominator is $\sum_{s=0}^{t-1} \eta_s = \mathcal{O}(t^{1-a})$ (or $\log t$ if $a=1$). The numerator is $\mathcal{S}_2 + \mathcal{N}_t + D$, where $D = \mathcal{O}((1-\beta_1)^{-1})$.
    
Before proceeding to the case-by-case discussion, we show that several numerator terms can be absorbed into the leading ones. Recall $\xi \coloneqq \frac{m}{1-\beta_1}$ and let $L \coloneqq \log(1/\beta_1)>0$.

\medskip
\noindent\textbf{Absorbing $D=\mathcal{O}((1-\beta_1)^{-1})$ into $\xi$.}
Since $m\ge 1$, we have
\[
\frac{1}{1-\beta_1}\le \frac{m}{1-\beta_1}=\xi,
\]
hence $D=\mathcal{O}((1-\beta_1)^{-1})$ can be absorbed into $\mathcal{O}(\xi)$ throughout.

\medskip
\noindent\textbf{Absorbing the remaining warm-up terms when $a\in(0,1)$.}
For $a\in(0,1)$, the warm-up drift satisfies
\[
\mathcal{S}_1=\mathcal{O}\!\left(\xi^{\frac{1}{a}-1}+L^{a-1}+\tilde t^{\,1-a}\right).
\]
We show that the last two terms can be absorbed into the first term (up to constants depending only on $a$), so that
\[
\mathcal{S}_1=\mathcal{O}\!\left(\xi^{\frac{1}{a}-1}\right)\qquad (a\in(0,1)).
\]

\begin{enumerate}
\item \textbf{Absorbing $L^{a-1}$.} Using the standard inequality $-\log x\ge 1-x$ for $x\in(0,1)$, we have
\[
L=\log(1/\beta_1)=-\log\beta_1\ge 1-\beta_1.
\]
Since $a-1<0$, raising both sides to power $(a-1)$ reverses the inequality:
\[
L^{a-1}\le (1-\beta_1)^{a-1}=\frac{1}{(1-\beta_1)^{1-a}}.
\]
Moreover, because $m\ge 1$ and $\frac{1-a}{a}\ge 1-a$ (equivalently $1/a\ge 1$),
\[
\xi^{\frac{1}{a}-1}=\Big(\frac{m}{1-\beta_1}\Big)^{\frac{1-a}{a}}
= m^{\frac{1-a}{a}}(1-\beta_1)^{-\frac{1-a}{a}}
\ge (1-\beta_1)^{-(1-a)}.
\]
Combining the two displays yields
\[
L^{a-1}\le \xi^{\frac{1}{a}-1}.
\]

\item \textbf{Absorbing $\tilde t^{\,1-a}$.}
For $a\in(0,1)$, recall $\tilde t = L^{-1/a} + L^{-1}\log(1/L)$ with $L=\log(1/\beta_1)$.
Since $1-1/a<0$, the function $y^{1-1/a}\log y$ is bounded on $[1,\infty)$, and hence
there exists a constant $C_a>0$ such that
\[
L^{-1}\log(1/L) \le C_a\,L^{-1/a}, \qquad \forall\, L\in(0,1].
\]
Therefore,
\[
\tilde t \le C_a\,L^{-1/a}
\quad\Rightarrow\quad
\tilde t^{\,1-a} \le C_a\,L^{-(1-a)/a}.
\]
Using $L\ge 1-\beta_1$ and $m\ge 1$, we obtain
\[
\tilde t^{\,1-a}
\le C_a (1-\beta_1)^{-(1-a)/a}
\le C_a \Big(\frac{m}{1-\beta_1}\Big)^{\frac{1-a}{a}}
= C_a\,\xi^{\frac{1}{a}-1}.
\]
\end{enumerate}

\noindent Hence, for $a\in(0,1)$,
\[
\mathcal{S}_1=\mathcal{O}\!\left(\xi^{\frac{1}{a}-1}\right),
\qquad\text{and thus}\qquad
\mathcal{S}_2=\mathcal{O}\!\left(n\,\xi^{\frac{1}{a}-1}\right).
\]

\medskip
\noindent\textbf{Remarks for the case $a=1$.}
When $a=1$, the warm-up drift becomes logarithmic:
\[
\mathcal{S}_1=\mathcal{O}\!\left(\log\xi+\log\frac{1}{L}+\log\tilde t\right),
\qquad 
\tilde t=\frac{1}{L}\log\frac{1}{L},
\]
where $L=\log(1/\beta_1)$.
Noting that
\[
\log\tilde t
=\log\Big(\frac{1}{L}\Big)+\log\Big(\log\frac{1}{L}\Big),
\]
and using the fact that $\log\log(1/L)=o(\log(1/L))$ as $L\to 0$, the third term is of lower order and can be absorbed into $\log(1/L)$. Hence,
\[
\mathcal{S}_1
=\mathcal{O}\!\left(\log\xi+\log\frac{1}{L}\right)
=\mathcal{O}\!\left(\log\frac{m}{1-\beta_1}\right).
\]
Consequently,
\[
\mathcal{S}_2=\mathcal{O}\!\left(n\log\frac{m}{1-\beta_1}\right).
\]

  Then We analyze by cases of $a$:

    \begin{itemize}
        \item \textbf{Case $a < 1/2$:}
        The denominator is $t^{1-a}$. The numerator is dominated by $\mathcal{S}_2$ (scaled by $n$) and the growing noise $\xi t^{1-2a}$.
        \begin{align*}
            \text{Rate} &= \mathcal{O} \left( \frac{n \left( [\frac{m}{1-\beta_1}]^{\frac{1}{a}-1} + (\log(1/\beta_1))^{a-1} +\tilde{t}^{1-a}\right) + \frac{m}{1-\beta_1} t^{1-2a}+(1-\beta_1)^{-1}}{t^{1-a}} \right),\\
            & = \mathcal{O} \left( \frac{n [\frac{m}{1-\beta_1}]^{\frac{1}{a}-1}  + \frac{m}{1-\beta_1} t^{1-2a}}{t^{1-a}} \right).
        \end{align*}

        \item \textbf{Case $a = 1/2$:}
        The denominator is $t^{1/2}$. The noise grows as $\xi \log t$.
        \begin{align*}
            \text{Rate} & = \mathcal{O} \left( \frac{n \left( \frac{m}{1-\beta_1} + (\log(1/\beta_1))^{-1/2} +\tilde{t}^{1/2}\right) + \frac{m}{1-\beta_1} \log t+(1-\beta_1)^{-1}}{t^{1/2}} \right),\\
            & = \mathcal{O} \left( \frac{\frac{m}{1-\beta_1}(n+\log t) }{t^{1/2}} \right).
        \end{align*}

        \item \textbf{Case $1/2 < a < 1$:}
        The denominator is $t^{1-a}$. The noise sum converges to a constant $\mathcal{O}(\xi)$, which is absorbed into the constant terms (or explicitly kept as $\xi$).
        \begin{align*}
            \text{Rate} & = \mathcal{O} \left( \frac{n \left( [\frac{m}{1-\beta_1}]^{\frac{1}{a}-1} + (\log(1/\beta_1))^{a-1} +\tilde{t}^{1-a} \right)+\frac{m}{1-\beta_1}+(1-\beta_1)^{-1}}{t^{1-a}} \right),\\
            & =\mathcal{O} \left( \frac{n  [\frac{m}{1-\beta_1}]^{\frac{1}{a}-1} +\frac{m}{1-\beta_1}}{t^{1-a}} \right).
        \end{align*}

        \item \textbf{Case $a = 1$:}
        The denominator is $\log t$. The powers in $\mathcal{S}_1$ become logarithms.
        \begin{align*}
            \text{Rate} &= \mathcal{O} \left( \frac{n \left( \log(\frac{m}{1-\beta_1}) + \log(\frac{1}{\log (1/\beta_1)}) +\log\tilde{t}\right)+\frac{m}{1-\beta_1}+(1-\beta_1)^{-1}}{\log t} \right),\\
            & = \mathcal{O} \left( \frac{n  \log(\frac{m}{1-\beta_1}) +\frac{m}{1-\beta_1}}{\log t} \right).
        \end{align*}
    \end{itemize}
    This concludes the derivation of the rates.
\end{proof}

\section{Proof in Section 4.3 (without Momentum)}\label{svr_no_momentum}
\begin{lemma} [Descent Lemma for SVR-Stochastic $l_p$-Steepest Descent Algorithm without Momentum] \label{lem:nsd_svr_descent}
Suppose Assumptions \ref{ass:sep}, \ref{ass:data_bound}, and \ref{ass:learning_rate_1} hold. 
Let $t_0 = \Theta(m^{1/a})$ and define $\alpha_{1} \coloneqq 16 R^2 m(m-1) (1+m)^a$ (where $m = n/b$) as the constant derived from the variance reduction bound.
Consider the learning rate schedule $\eta_t = c \cdot t^{-a}$ with $a \in (0,1]$. Then, for all iterations $t \ge t_0$, the loss satisfies the following descent inequality:
\begin{align*} 
    \Lc(\W_{t+1}) &\leq \Lc(\W_t)  - \eta_t \gamma \Gc(\W_t) + \alpha_{1} \eta_t^2 \Gc(\W_t)+2 \eta_t^2 R^2 e^{2R\eta_0} \Gc(\W_{t}).
\end{align*}
\end{lemma}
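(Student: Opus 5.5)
We follow the same template as Lemma~\ref{lem:nsd_descent}. We apply the mean-value Taylor expansion \eqref{eq:taylor_eq1} with $\tilde{\Deltab}_t=\W_{t+1}-\W_t=-\eta_t\Deltab_t$, where now $\Deltab_t=\phi_{\|\cdot\|}(\V_t)$. This splits $\Lc(\W_{t+1})-\Lc(\W_t)$ into a first-order term $A=\langle\nabla\Lc(\W_t),\W_{t+1}-\W_t\rangle$ and a curvature term $B$. Term $B$ is handled verbatim as before, using Lemma~\ref{lem:unified_hessian} and Lemma~\ref{lem:G_ratio_general} with $\norm{\tilde{\Deltab}_t}\le\eta_t\le\eta_0$. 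This yields the contribution $2\eta_t^2R^2e^{2R\eta_0}\Gc(\W_t)$.

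For term $A$ we repeat steps (a)--(c) of Lemma~\ref{lem:nsd_descent} with $\V_t$ in place of the mini-batch gradient. This gives
\[
A\le 2\eta_t\none{\V_t-\nabla\Lc(\W_t)}-\eta_t\norm{\nabla\Lc(\W_t)}_*\le 2\eta_t\none{\V_t-\nabla\Lc(\W_t)}-\eta_t\gamma\Gc(\W_t),
\]
where the last step uses Lemma~\ref{lem:G and gradient}. Next we invoke Lemma~\ref{lem:noise_SVR_bound} with $\W=\W_t$ and $\W'=\tilde{\W}_t$, noting that $\V_t-\nabla\Lc(\W_t)$ has exactly that form. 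This gives $\none{\V_t-\nabla\Lc(\W_t)}\le 2(m-1)R(e^{2R\ninf{\W_t-\tilde{\W}_t}}-1)\Gc(\W_t)$. The crucial gain over the plain stochastic case is the factor $e^{2R\ninf{\cdot}}-1$, which is small because the snapshot is close to the iterate.

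The main step is bounding this factor by $O(m^{1+a}R\,\eta_t)$ for $t\ge t_0$. Since the snapshot is taken at the start of the current epoch, $\W_t$ is at most $m-1\le m$ normalized steps away from $\tilde{\W}_t$. Using $\ninf{\Deltab_s}\le\norm{\Deltab_s}\le1$ (Lemma~\ref{lem:snorm dominate}) and the monotonicity of $\eta_s$, we get $\ninf{\W_t-\tilde{\W}_t}\le\sum_{k=1}^m\eta_{t-k}\le m\eta_{t-m}$. For $t\ge m+1$ the ratio bound from Lemma~\ref{lem:svr_momentum_error} gives $\eta_{t-m}\le(1+m)^a\eta_t$. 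We then choose $t_0=\Theta(m^{1/a})$, concretely $t_0\ge m+\lceil(4Rmc/\ln2)^{1/a}\rceil$, so that $2Rm\eta_{t-m}\le\ln 2$. Applying $e^x-1\le 2x$ for $x\le\ln 2$ gives $e^{2R\ninf{\W_t-\tilde{\W}_t}}-1\le 4Rm(1+m)^a\eta_t$.

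Combining the pieces, the error term in $A$ is at most $2\eta_t\cdot2(m-1)R\cdot4Rm(1+m)^a\eta_t\Gc(\W_t)=16R^2m(m-1)(1+m)^a\eta_t^2\Gc(\W_t)=\alpha_1\eta_t^2\Gc(\W_t)$. Adding $-\eta_t\gamma\Gc(\W_t)$ and the bound on $B$ gives the claimed inequality. The only delicate points are the index bookkeeping at epoch boundaries (when $t$ is the first step of an epoch, $\tilde{\W}_t=\W_t$ and the error vanishes, which is consistent with the bound) and making the choice of $t_0$ explicit.
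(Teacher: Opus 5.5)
Your proposal is correct and follows the paper's proof essentially step for step. It uses the same Taylor split, the same reduction of Term~$A$ to $2\eta_t\none{\V_t-\nabla\Lc(\W_t)}-\eta_t\gamma\Gc(\W_t)$, Lemma~\ref{lem:noise_SVR_bound} with the snapshot distance bounded by $m\eta_{t-m}\le m(1+m)^a\eta_t$, and it arrives at the same $\alpha_1=16R^2m(m-1)(1+m)^a$. The only difference is cosmetic: your threshold $t_0\ge m+\lceil(4Rmc/\ln 2)^{1/a}\rceil$ is more conservative than the paper's $m+\lceil(2Rmc/\ln 2)^{1/a}\rceil$, which already guarantees $2Rm\eta_{t-m}\le\ln 2$.
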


\begin{proof}
Similarly, let $\tilde{\Deltab}_t = \W_{t+1} - \W_t$, and define $\W_{t,t+1,\zeta} := \W_t + \zeta (\W_{t+1} - \W_t)$. We choose $\zeta^*$ such that $\W_{t,t+1,\zeta^*}$ satisfies \eqref{eq:taylor_eq1}, then we have:   
\begin{align}
    \Lc(\W_{t+1}) &= \Lc(\W_t) + \underbrace{\inp{\nabla \Lc(\W_{t})}{\W_{t+1} - \W_t}}_{A} \nn
    \nn \\
    &\quad+ \underbrace{\frac{1}{2n}\sum_{i\in[n]} \xb_i^\top\tilde{\Deltab}_t^\top\left(\diag{\sft{\W_{t,t+1,\zeta^*}\xb_i}}-\sft{\W_{t,t+1,\zeta^*}\xb_i}\sft{\W_{t,t+1,\zeta^*}\xb_i}^\top\right)\tilde{\Deltab}_t\,\xb_i}_{B}\,. \label{eq:taylor_3}
\end{align}

For Term A, we just follow the same steps in Lemma \ref{lem:nsd_descent}, \ref{lem:nmd_descent} and get:
\begin{align*}
    \langle \nabla \Lc(\W_t) , \W_{t+1} - \W_t\rangle  \leq 2 \eta_t \none{\nabla \Lc(\W_t)-\V_t}  - \eta_t \gamma \Gc(\W).
\end{align*}
Notice that $\V_t=\nabla\Lc_{\mathcal{B}_t}(\W_t)-\nabla\Lc_{\mathcal{B}_t}(\tilde{\W_t})+\nabla\Lc(\tilde{\W_t})$ and apply Lemma \ref{lem:noise_SVR_bound}:
\begin{align*}
    \none{\nabla \Lc(\W_t)-\V_t} &= \none{\nabla \Lc(\W_t)-\nabla\Lc_{\mathcal{B}_t}(\W_t)-\nabla\Lc_{\mathcal{B}_t}(\tilde{\W_t})+\nabla\Lc(\tilde{\W_t})}\\
    &\leq 2(m-1)R(e^{2 R \ninf{\W_t-\tilde{\W_t}}}-1)\Gc(\W_t)
\end{align*}

Recall that $\tilde{\W}_t$ is the snapshot weight at the beginning of the current epoch. Since there are at most $m$ steps between $\W_t$ and $\tilde{\W}_t$, we can bound the max-norm distance as:
\begin{align*}
    \ninf{\W_t - \tilde{\W}_t} \leq \sum_{j=\text{start\_of\_epoch}}^{t-1} \eta_j \leq m \eta_{t-m},
\end{align*}
where we used the monotonicity of the learning rate ($\eta_j \leq \eta_{t-m}$). 

Now, applying the inequality $e^x - 1 \leq x e^x$ (valid for $x \geq 0$) with $x = 2 R \ninf{\W_t - \tilde{\W}_t}$, we obtain:
\begin{align*}
    e^{2 R \ninf{\W_t - \tilde{\W}_t}} - 1 
    &\leq 2 R \ninf{\W_t - \tilde{\W}_t} e^{2 R \ninf{\W_t - \tilde{\W}_t}} \\
    &\leq 2 R m \eta_{t-m} e^{2 R m \eta_{t-m}}.
\end{align*}
Substituting this back into the bound for $\none{\nabla \Lc(\W_t)-\V_t}$:
\begin{align*}
    \none{\nabla \Lc(\W_t)-\V_t} 
    &\leq 2(m-1)R \left[ 2 R m \eta_{t-m} e^{2 R m \eta_{t-m}} \right] \Gc(\W_t) \\
    &= 4 R^2 m(m-1) e^{2 R m \eta_{t-m}} \eta_{t-m} \Gc(\W_t).
\end{align*}

Next, we handle the learning rate terms under the schedule $\eta_t = c \cdot t^{-a}$.
For the exponential term, Let $t_0$ be any time such that $t_0 = m + \left\lceil \left(\frac{2Rmc}{\ln 2}\right)^{\!1/a} \right\rceil$.
In particular, for all $t \ge t_0$, we have $2Rm\eta_{t-m}\le \ln 2$ and hence
$e^{2Rm\eta_{t-m}}\le 2$.

For the linear term $\eta_{t-m}$, for any $t > m$, we have the ratio:
\[
    \frac{\eta_{t-m}}{\eta_t} = \left(\frac{t}{t-m}\right)^a = \left(1 + \frac{m}{t-m}\right)^a \leq (1+m)^a,
\]
where the inequality holds because $t-m \geq 1$. Thus, $\eta_{t-m} \leq (1+m)^a \eta_t$.

Substituting these bounds, we get:
\begin{align*}
    \none{\nabla \Lc(\W_t)-\V_t} 
    &\leq 4 R^2 m(m-1) \cdot 2 \cdot (1+m)^a \eta_t \Gc(\W_t) \\
    &= 8 R^2 m(m-1) (1+m)^a \eta_t \Gc(\W_t).
\end{align*}

Finally, substituting this back into the expression for Term A:
\begin{align*}
    \langle \nabla \Lc(\W_t) , \W_{t+1} - \W_t\rangle  
    &\leq 2 \eta_t \left[ 8 R^2 m(m-1) (1+m)^a \eta_t \Gc(\W_t) \right] - \eta_t \gamma \Gc(\W_t) \\
    &= - \eta_t \gamma \Gc(\W_t) + \alpha_{1} \eta_t^2 \Gc(\W_t),
\end{align*}
where we define the constant $\alpha_{1} \coloneqq 16 R^2 m(m-1) (1+m)^a$.
Finally, for Term B, We just follow the same steps in Lemma \ref{lem:nsd_descent} and get: 
\begin{align*}
    TermB \leq 2 \eta_t^2 R^2 e^{2R\eta_0} \Gc(\W_{t}).
\end{align*}
Combining Terms A, B together, we obtain        
\begin{align}\label{eq:nsd_svr_descent}
    \Lc(\W_{t+1}) &\leq \Lc(\W_t)  - \eta_t \gamma \Gc(\W_t) + \alpha_{1} \eta_t^2 \Gc(\W_t)+2 \eta_t^2 R^2 e^{2R\eta_0} \Gc(\W_{t}).
\end{align}
\end{proof}

From Eq.~\ref{eq:nsd_svr_descent}, we observe a crucial advantage of the variance reduction mechanism: the noise terms scale with $\eta_t^2$, Consequently, unlike the standard stochastic setting (Random Reshuffling) which relies on a \textbf{Large Batch} condition to ensure a positive effective margin, SVR requires no constraints on the batch size. As long as $t$ is sufficiently large, the loss is guaranteed to decrease monotonically.

\begin{lemma}[Loss convergence] \label{lem:convergence_nsd_svr_time_bound}
    Suppose Assumptions \ref{ass:sep}, \ref{ass:data_bound}, and \ref{ass:data_bound} hold, let $\tilde{\Lc} \coloneqq \frac{\log 2}{n}$.
    Then, there exists a time index $t_2$ such that for all $t > t_2$, $\Lc(\W_t) \leq \tilde{\Lc}$.
    Specifically, the condition for $t_2$ is determined by the learning rate accumulation:
    \begin{equation} \label{eq:t2_condition_3}
        \sum_{s=t_1}^{t_2} \eta_s \geq \frac{4\Lc(\W_0) + 8R \sum_{s=0}^{t_1-1} \eta_s}{\gamma \tilde{\Lc}},
    \end{equation}
    where $t_1$ is the time step ensuring monotonic descent.
\end{lemma}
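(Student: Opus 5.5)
The argument mirrors Lemma~\ref{lem:convergence_time_bound}, with the effective margin $\rho$ replaced by the full margin $\gamma$ and the descent inequality taken from Lemma~\ref{lem:nsd_svr_descent}. That inequality holds for all $t\ge t_0=\Theta(m^{1/a})$ and reads
\[
\Lc(\W_{t+1})\le \Lc(\W_t)-\eta_t\bigl(\gamma-\alpha_{\mathrm{vr}}\eta_t\bigr)\Gc(\W_t),
\qquad
\alpha_{\mathrm{vr}}:=16R^2m(m-1)(1+m)^a+2R^2e^{2R\eta_0}.
\]
The key observation is that every noise term now carries a factor $\eta_t^2$. Hence no batch-size condition is needed: positivity of the descent coefficient follows from $\eta_t\to 0$ (Assumption~\ref{ass:learning_rate_1}) alone.

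\textbf{Step 1 (choice of $t_1$).} Choose $t_1\ge t_0$ such that $\alpha_{\mathrm{vr}}\eta_{t_1}\le \gamma/2$. This exists because $\eta_t=ct^{-a}\to 0$; explicitly, $t_1=\max\{t_0,\lceil(2\alpha_{\mathrm{vr}}c/\gamma)^{1/a}\rceil\}$ works. Since $\eta_t$ is decreasing, for all $t\ge t_1$ we get the strict descent
\[
\Lc(\W_{t+1})\le\Lc(\W_t)-\tfrac{\gamma}{2}\eta_t\Gc(\W_t).
\]
In particular the loss is monotone from $t_1$ on. Telescoping from $t_1$ to $t_2$ and using $\Lc\ge 0$ gives
\[
\tfrac{\gamma}{2}\sum_{s=t_1}^{t_2}\eta_s\Gc(\W_s)\le\Lc(\W_{t_1}).
\]

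\textbf{Step 2 (controlling $\Lc(\W_{t_1})$).} Here I would use convexity of the cross-entropy loss and Lemma~\ref{lem:l_fast_bound}. Every update satisfies $\|\Deltab_s\|\le 1$ because the steepest descent map is applied to $\V_t$, so
\[
\Lc(\W_{t_1})\le\Lc(\W_0)+2R\sum_{s=0}^{t_1-1}\eta_s .
\]
Next take $t^*=\argmin_{s\in[t_1,t_2]}\Gc(\W_s)$. Then
\[
\Gc(\W_{t^*})\le \frac{2\bigl(\Lc(\W_0)+2R\sum_{s<t_1}\eta_s\bigr)}{\gamma\sum_{s=t_1}^{t_2}\eta_s}.
\]
Condition \eqref{eq:t2_condition_3} makes the right-hand side at most $\tilde{\Lc}/2=\frac{\log 2}{2n}\le \frac{1}{2n}$. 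Such a $t_2$ exists because $\sum\eta_s=\infty$. Lemma~\ref{lem:G and L}(ii) then yields $\Lc(\W_{t^*})\le 2\Gc(\W_{t^*})\le\tilde{\Lc}$.

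\textbf{Step 3 (conclusion).} Since $t^*\ge t_1$, monotonicity gives $\Lc(\W_t)\le\Lc(\W_{t^*})\le\tilde{\Lc}$ for every $t>t_2\ge t^*$.

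The only delicate point is Step 1. Lemma~\ref{lem:nsd_svr_descent} is valid only once $t\ge t_0$, because the snapshot distance bound $e^{2Rm\eta_{t-m}}\le 2$ must hold. So $t_1$ has to be taken as a maximum of the two thresholds. This is also the source of the larger $m$-dependence in the final rates, through $\sum_{s<t_1}\eta_s$. Everything else transfers verbatim from the proof of Lemma~\ref{lem:convergence_time_bound}.
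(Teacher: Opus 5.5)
Your proposal is correct and follows essentially the same route as the paper: choose $t_1$ so that $\alpha_{\mathrm{svr}}\eta_t\le\gamma/2$ in the descent inequality of Lemma~\ref{lem:nsd_svr_descent}, then rerun the $t_2$ argument of Lemma~\ref{lem:convergence_time_bound} with $\rho$ replaced by $\gamma$. Your explicit requirement $t_1\ge t_0$, which guarantees that the SVR descent inequality is actually in force, is a point the paper leaves implicit and is a welcome tightening.
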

\begin{proof}
    \textbf{Determination of $t_1$ (Start of Monotonicity).}
    Recall the descent inequality derived in Lemma \ref{lem:nsd_svr_descent}:
    \[
        \Lc(\W_{t+1}) \leq \Lc(\W_t) - \eta_t \left( \gamma - \eta_t \alpha_{\text{svr}} \right) \Gc(\W_t),
    \]
    where $\alpha_{\text{svr}} \coloneqq \alpha_{1} + 2 R^2 e^{2R\eta_0}$ encapsulates all second-order noise terms.
    To ensure strict descent, we require the first-order margin term to dominate the second-order variance and curvature terms. Specifically, we choose $t_1$ such that for all $t \geq t_1$:
    \[
        \eta_t \alpha_{\text{svr}} \leq \frac{\gamma}{2} \iff \eta_t \leq \frac{\gamma}{2 \alpha_{\text{svr}}}.
    \]
    Since the learning rate $\eta_t = c \cdot t^{-a}$ is monotonically decreasing to zero, such a finite time $t_1$ always exists, regardless of the magnitude of $\alpha_{\text{svr}}$ (and thus independent of the batch size $m$). For all $t \geq t_1$, the loss satisfies:
    \begin{align} \label{eq:svr_strict_descent}
        \Lc(\W_{t+1}) \leq \Lc(\W_t) - \frac{\gamma}{2} \eta_t \Gc(\W_t).
    \end{align}
    
    \textbf{Determination of $t_2$.}
    The subsequent analysis for $t_2$ (the time to enter the separable region $\Lc(\W_t) \leq \tilde{\Lc}$) follows the exact same logic as in the proof of Lemma \ref{lem:convergence_time_bound}, by replacing the effective margin $\rho$ with the full margin $\gamma$. By summing Eq. \eqref{eq:svr_strict_descent} and bounding the initial loss growth, we obtain the stated condition for $t_2$.
\end{proof}

\begin{lemma}[Unnormalized Margin Growth for SVR] \label{lem:svr_unnormalized_margin}
    Consider the same setting as Lemma \ref{lem:convergence_nsd_svr_time_bound}. Let $t_2$ be the time index guaranteed by Lemma \ref{lem:convergence_nsd_svr_time_bound} such that $\Lc(\W_t) \le \frac{\log 2}{n}$ for all $t > t_2$. 
    Recall the total noise constant defined in Lemma \ref{lem:nsd_svr_descent}: $\alpha_{\text{svr}} \coloneqq \alpha_{1} + 2 R^2 e^{2R\eta_0}$.
    Then, for all $t > t_2$, the minimum unnormalized margin satisfies:
    \begin{align} \label{eq:svr_margin_growth}
        \min_{i \in [n], c \neq y_i} (\eb_{y_i} - \eb_c)^\top \W_t \xb_i 
        \geq \gamma \sum_{s=t_2}^{t-1} \eta_s \frac{\Gc(\W_s)}{\Lc(\W_s)} - \alpha_{\text{svr}} \sum_{s=t_2}^{t-1} \eta_s^2.
    \end{align}
\end{lemma}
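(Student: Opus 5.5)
The argument copies the proof of Lemma~\ref{lem:nsd_unnormalized_margin}. The only change is that the descent inequality now comes from Lemma~\ref{lem:nsd_svr_descent}, and $\gamma$ replaces $\rho$.

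\textbf{Step 1: multiplicative recursion.} For $s \ge t_0$, Lemma~\ref{lem:nsd_svr_descent} gives
\[
\Lc(\W_{s+1}) \le \Lc(\W_s) - \eta_s\left(\gamma - \alpha_{\text{svr}}\eta_s\right)\Gc(\W_s).
\]
Since $t_2 \ge t_1 \ge t_0$ by construction in Lemma~\ref{lem:convergence_nsd_svr_time_bound}, this holds for every $s \ge t_2$. Factoring out $\Lc(\W_s)$, using $\Gc(\W_s)/\Lc(\W_s) \le 1$ from Lemma~\ref{lem:G and L}(i) on the second-order term, and then $1+x \le e^x$, we get
\[
\Lc(\W_{s+1}) \le \Lc(\W_s)\exp\!\left(-\gamma\eta_s\frac{\Gc(\W_s)}{\Lc(\W_s)} + \alpha_{\text{svr}}\eta_s^2\right).
\]
Telescoping from $t_2$ to $t-1$ and inserting $\Lc(\W_{t_2}) \le \frac{\log 2}{n}$ bounds $\Lc(\W_t)$ by $\frac{\log 2}{n}$ times the exponential of the accumulated exponents.

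\textbf{Step 2: from loss to margin.} For $t>t_2$ we have $\Lc(\W_t)\le \frac{\log 2}{n}$, so Lemma~\ref{lem:sep} gives $z_{\min}(t)\ge 0$. Keep only the worst sample and worst class in $n\Lc(\W_t)$ and apply $\log(1+x)\ge (\log 2)x$ on $[0,1]$; this yields $e^{-z_{\min}(t)} \le \frac{n}{\log 2}\Lc(\W_t)$. Combining with Step~1, the factors $\frac{n}{\log 2}$ and $\frac{\log 2}{n}$ cancel. Taking $-\log$ of both sides gives \eqref{eq:svr_margin_growth}.

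\textbf{Main obstacle.} The only point needing care is the index bookkeeping. The descent inequality of Lemma~\ref{lem:nsd_svr_descent} is valid only for $t\ge t_0=\Theta(m^{1/a})$, so we must check that the threshold $t_1$ (where $\eta_t\alpha_{\text{svr}}\le\gamma/2$) is chosen with $t_1 \ge t_0$. Provided $t_2 \ge t_1 \ge t_0$, the recursion holds on the whole summation range and no extra drift term appears, unlike the momentum case where a constant $D$ had to be absorbed.
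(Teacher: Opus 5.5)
Your proposal is correct and is essentially the paper's own proof, which simply reruns the argument of Lemma~\ref{lem:nsd_unnormalized_margin} with $\rho$ replaced by $\gamma$ and $\alpha_1$ replaced by $\alpha_{\text{svr}}$. Your explicit check that $t_2 \ge t_1 \ge t_0$, so that the SVR descent inequality holds over the whole summation range, is a detail the paper leaves implicit. The same goes for the fact that $\Lc(\W_{t_2}) \le \frac{\log 2}{n}$ comes from the construction (there is a $t^\ast \le t_2$ with small loss, and the loss is monotone afterwards), not from the ``for all $t>t_2$'' clause of the statement.
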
 

\begin{proof}
    The proof follows the exact same logic as Lemma \ref{lem:nsd_unnormalized_margin}. We start from the SVR descent inequality derived in Lemma \ref{lem:nsd_svr_descent}:
    \[
        \Lc(\W_{s+1}) \leq \Lc(\W_s) - \eta_s (\gamma - \eta_s \alpha_{\text{svr}}) \Gc(\W_s).
    \]
    By replacing the effective margin $\rho$ with the full margin $\gamma$, and the curvature constant $\alpha_1$ with the total SVR noise constant $\alpha_{\text{svr}}$, the recursive derivation for the lower bound of the unnormalized margin remains valid.
\end{proof}

\begin{theorem}[Margin Convergence Rate of SVR-Stochastic $l_p$ Steepest Descent] \label{thm:svr_margin_rate}
    Suppose Assumptions \ref{ass:sep}, \ref{ass:data_bound}, and \ref{ass:learning_rate_1} hold. Consider the learning rate schedule $\eta_t = c \cdot t^{-a}$ with $a \in (0,1]$.
    Recall the total noise constant $\alpha_{\text{svr}} \coloneqq \alpha_{1} + 2 R^2 e^{2R\eta_0}$, which scales with the batch size parameter as $\alpha_{\text{svr}} = \Theta(m^{2+a})$.
    Let $t_2$ be the time index guaranteed by Lemma \ref{lem:convergence_nsd_svr_time_bound} such that $\Lc(\W_t) \le \frac{\log 2}{n}$ for all $t > t_2$. 
    
    Then, for all $t > t_2$, the SVR algorithm recovers the full max-margin solution, and the margin gap satisfies:
    \begin{align*}
         \gamma - \frac{\min_{i \in [n], c \neq y_i} (\eb_{y_i} - \eb_c)^\top \W_t \xb_i}{\norm{\W_t}}
         &\leq \mathcal{O}\Bigg(\frac{\sum_{s=t_2}^{t-1} \eta_s e^{-\frac{\gamma}{4} \sum_{\tau = t_2}^{s-1}\eta_{\tau}} + \sum_{s=0}^{t_2-1}\eta_s + m^{2+a} \sum_{s = t_2}^{t-1}\eta_s^2}{\sum_{s=0}^{t-1} \eta_s}\Bigg).
    \end{align*}
    \textit{Remark:} Unlike standard Random Reshuffling, the convergence to the full margin $\gamma$ is guaranteed regardless of the batch size. However, the magnitude of the noise term $\alpha_{\text{svr}}$ (scaling with $m^{2+a}$) is significantly larger, potentially affecting the pre-asymptotic convergence speed.
\end{theorem}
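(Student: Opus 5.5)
The proof will mirror Theorem~\ref{thm:stoch_margin_rate}, with $\rho$ replaced by the full margin $\gamma$ and the curvature constant $\alpha_1$ replaced by $\alpha_{\text{svr}}$. The plan has four steps.

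\textbf{Step 1 (loss decays geometrically).} By Lemma~\ref{lem:convergence_nsd_svr_time_bound}, for all $t>t_2$ we have $\Lc(\W_t)\le \tilde\Lc=\frac{\log 2}{n}$. Lemma~\ref{lem:G and L}(ii) then gives $\Lc(\W_t)\le 2\Gc(\W_t)$. Since $t_2\ge t_1$, the strict descent inequality \eqref{eq:svr_strict_descent} holds, so
\[
\Lc(\W_{t+1})\le \Lc(\W_t)-\tfrac{\gamma}{2}\eta_t\Gc(\W_t)\le \Lc(\W_t)\bigl(1-\tfrac{\gamma}{4}\eta_t\bigr).
\]
Iterating and using $1-x\le e^{-x}$ yields
\[
\Lc(\W_t)\le \tfrac{\log 2}{n}\,e^{-\frac{\gamma}{4}\sum_{s=t_2}^{t-1}\eta_s}.
\]
Lemma~\ref{lem:G and L}(i) then gives the ratio bound
\[
\frac{\Gc(\W_t)}{\Lc(\W_t)}\ge 1-e^{-\frac{\gamma}{4}\sum_{s=t_2}^{t-1}\eta_s}.
\]

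\textbf{Step 2 (unnormalized margin).} Substitute the ratio bound into Lemma~\ref{lem:svr_unnormalized_margin}. This gives
\[
z_{\min}(t)\ge \gamma\sum_{s=t_2}^{t-1}\eta_s-\gamma\sum_{s=t_2}^{t-1}\eta_s e^{-\frac{\gamma}{4}\sum_{\tau=t_2}^{s-1}\eta_\tau}-\alpha_{\text{svr}}\sum_{s=t_2}^{t-1}\eta_s^2 .
\]

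\textbf{Step 3 (normalize).} Since every update has $\norm{\Deltab_s}\le 1$, the triangle inequality gives $\norm{\W_t}\le \norm{\W_0}+\sum_{s=0}^{t-1}\eta_s$. We may assume $z_{\min}(t)\ge 0$, which holds by Lemma~\ref{lem:sep}, so it is valid to use this upper bound on $\norm{\W_t}$ in the denominator. Moreover, $\norm{\W_t}\ge z_{\min}(t)/(\max_i\|\xb_i\|_1)$ grows like $\sum\eta_s$, so the denominator may be replaced by $\Theta(\sum_{s=0}^{t-1}\eta_s)$. Then
\[
\gamma-\frac{z_{\min}(t)}{\norm{\W_t}}=\frac{\gamma\norm{\W_t}-z_{\min}(t)}{\norm{\W_t}}.
\]
The $\gamma\sum_{s\ge t_2}\eta_s$ terms cancel in the numerator. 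What remains is
\[
\gamma\norm{\W_0}+\gamma\sum_{s<t_2}\eta_s+\gamma\sum_{s=t_2}^{t-1}\eta_s e^{-\frac{\gamma}{4}\sum_{\tau=t_2}^{s-1}\eta_\tau}+\alpha_{\text{svr}}\sum_{s=t_2}^{t-1}\eta_s^2 .
\]
The constant $\gamma\norm{\W_0}$ is absorbed into $\sum_{s<t_2}\eta_s$ inside the $\mathcal O$.

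\textbf{Step 4 (order of $\alpha_{\text{svr}}$).} This is the only genuinely new point. From Lemma~\ref{lem:nsd_svr_descent},
\[
\alpha_{\text{svr}}=16R^2m(m-1)(1+m)^a+2R^2e^{2R\eta_0}.
\]
Since $m(m-1)(1+m)^a\le 2^a m^{2+a}$ and $m\ge1$, we get $\alpha_{\text{svr}}=\Theta(m^{2+a})$ with $R,\eta_0,a$ treated as constants. The case $m=1$ is the special case where the first term vanishes.

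\textbf{Main obstacle.} The only step needing care is ensuring $t_2\ge t_1\ge t_0$, so that the descent inequality of Lemma~\ref{lem:nsd_svr_descent} is actually valid on the whole range $s\ge t_2$. I would enforce this directly when choosing $t_1$ in Lemma~\ref{lem:convergence_nsd_svr_time_bound}, by taking $t_1\ge t_0=\Theta(m^{1/a})$. Any dependence on $m$ that this introduces into $t_2$ is absorbed in the term $\sum_{s<t_2}\eta_s$.
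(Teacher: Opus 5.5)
Your proposal is correct and follows the paper's route: the paper proves Theorem~\ref{thm:svr_margin_rate} by rerunning the argument of Theorem~\ref{thm:stoch_margin_rate} with $\rho$ replaced by $\gamma$ and the curvature constant replaced by $\alpha_{\text{svr}}$, using Lemmas~\ref{lem:convergence_nsd_svr_time_bound} and~\ref{lem:svr_unnormalized_margin}, exactly as you do. You are slightly more careful than the paper on two points it glosses over: you use $z_{\min}(t)\ge 0$ to justify replacing $\norm{\W_t}$ by $\norm{\W_0}+\sum_{s=0}^{t-1}\eta_s$ in the denominator, and you explicitly force $t_1\ge t_0$ so that the descent inequality of Lemma~\ref{lem:nsd_svr_descent} holds for all $s\ge t_2$ (your side remark $\norm{\W_t}\ge z_{\min}(t)/\max_i\|\xb_i\|_1$ is off by a factor $2$, but it is not needed).
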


\begin{proof}
    The proof parallels the derivation of Theorem \ref{thm:stoch_margin_rate}. 
    First, combining Lemma \ref{lem:convergence_nsd_svr_time_bound} with the SVR descent property (Lemma \ref{lem:nsd_svr_descent}) ensures that for $t > t_2$, the ratio $\frac{\Gc(\W_t)}{\Lc(\W_t)}$ converges to $1$ exponentially with a rate determined by the full margin $\gamma$.
    Second, substituting the unnormalized margin lower bound from Lemma \ref{lem:svr_unnormalized_margin} and the standard weight upper bound ($\norm{\W_t} \leq \norm{\W_0} + \sum \eta_s$) into the definition of the normalized margin gap yields the stated bound.
    Mathematically, this is identical to the result in Theorem \ref{thm:stoch_margin_rate}, obtained by substituting the effective margin $\rho$ with the full margin $\gamma$ and the noise coefficient with $\alpha_{\text{svr}}$.
\end{proof}

\begin{corollary}
    Consider the learning rate schedule $\eta_t = c \cdot t^{-a}$ with $a \in (0,1]$. Under the setting of Theorem \ref{thm:svr_margin_rate}, the margin gap converges with the following rates:
    
   \[ 
   \gamma - \frac{\min_{i \in [n], c \neq y_i} (\eb_{y_i} - \eb_c)^\top \W_t \xb_i}{\norm{\W_t}} = \left\{
    \begin{array}{ll}
           \mathcal{O} \left( \frac{n  m^{\frac{2-a-a^2}{a}} +  m^{2+a} t^{1-2a}}{t^{1-a}} \right) &  \text{if} \quad a < \frac{1}{2}\\
           
            \mathcal{O} \left( \frac{n  m^{5/2} +  m^{5/2}\log t }{t^{1/2}} \right)  &  \text{if}\quad a=\frac{1}{2} \\
            
            \mathcal{O} \left( \frac{n  m^{\frac{2-a-a^2}{a}}+m^{2+a}}{t^{1-a}} \right) &  \text{if}\quad \frac{1}{2} < a < 1 \\
            
            \mathcal{O} \left( \frac{n \log m^{3}+m^3}{\log t} \right) &  \text{if} \quad a=1
    \end{array} 
    \right. 
    \]
\end{corollary}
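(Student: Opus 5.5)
This corollary is obtained by plugging the schedule $\eta_t=ct^{-a}$ into Theorem~\ref{thm:svr_margin_rate}, as was done for Corollary~\ref{cor:nmd_rate_exact_final}. Let $\xi\coloneqq\alpha_{\text{svr}}=\Theta(m^{2+a})$. The bound has four pieces, each estimated by integral comparison: the denominator $\sum_{s<t}\eta_s$, the entry cost $\mathcal{S}_2=\sum_{s<t_2}\eta_s$, the noise $\xi\sum_{s=t_2}^{t-1}\eta_s^2$, and the exponential term.
\begin{itemize}
\item The denominator is $\Theta(t^{1-a})$ for $a<1$ and $\Theta(\log t)$ for $a=1$.
\item The noise term is $\mathcal{O}(m^{2+a}t^{1-2a})$, $\mathcal{O}(m^{5/2}\log t)$, or $\mathcal{O}(m^{2+a})$ according as $a<\tfrac12$, $a=\tfrac12$, or $a>\tfrac12$. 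For $a=1$ it is $\mathcal{O}(m^3)$.
\item The exponential term is a Riemann sum for $\int_0^\infty e^{-\gamma S/4}\,dS$, so it is $\mathcal{O}(1)$ and is absorbed.
\end{itemize}

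The main work is bounding $\mathcal{S}_2$. First I would find the order of $t_1$, which has two ingredients.
\begin{itemize}
\item The condition $t\ge t_0=m+\lceil(2Rmc/\ln 2)^{1/a}\rceil$ from Lemma~\ref{lem:nsd_svr_descent} gives $t_0=\Theta(m^{1/a})$.
\item The strict-descent condition $\eta_{t}\alpha_{\text{svr}}\le\gamma/2$ gives $t_{1}=\Theta(\xi^{1/a})=\Theta(m^{(2+a)/a})$, which dominates $t_0$.
\end{itemize}
The warm-up drift is then $\mathcal{S}_1=\sum_{s<t_1}\eta_s=\mathcal{O}(t_1^{1-a})$ for $a<1$ and $\mathcal{O}(\log t_1)=\mathcal{O}(\log m^{3})$ for $a=1$. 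Next, the condition \eqref{eq:t2_condition_3} requires $\sum_{s=t_1}^{t_2}\eta_s\gtrsim n(\Lc(\W_0)+\mathcal{S}_1)/\gamma$, so $\mathcal{S}_2=\mathcal{O}(n\,\mathcal{S}_1)$.

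The delicate point is the exponent of $m$. Directly, $t_1^{1-a}=m^{(2+a)(1-a)/a}=m^{(2-a-a^2)/a}$, which is exactly the claimed exponent. So I would track the factor $(1+m)^a$ carefully rather than absorbing it into $m^{2+a}$ loosely. I also need to check that the $n\,m$-type contributions from $t_0$ are dominated: $m^{(1-a)/a}\le m^{(2-a-a^2)/a}$ holds since $1-a\le 2-a-a^2$, i.e.\ $a^2\le 1$.

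Combining the pieces gives the four cases. For $a<\tfrac12$, divide $n m^{(2-a-a^2)/a}+m^{2+a}t^{1-2a}$ by $t^{1-a}$. At $a=\tfrac12$ the exponent is $(2-\tfrac12-\tfrac14)/\tfrac12=5/2$, giving $(n m^{5/2}+m^{5/2}\log t)/t^{1/2}$. For $\tfrac12<a<1$ the noise is constant, $\mathcal{O}(m^{2+a})$, and is kept in the numerator. For $a=1$ the result is $(n\log m^3+m^3)/\log t$. The term $\rho\norm{\W_0}$ is a constant and is absorbed.

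I expect the only real obstacle to be justifying $\mathcal{S}_2=\mathcal{O}(n\mathcal{S}_1)$ uniformly. I would handle it as in Corollary~\ref{cor:nmd_rate_exact_final}: $t_2$ is defined by the stepsize accumulation, so $\sum_{s<t_2}\eta_s\le\mathcal{S}_1+\mathcal{O}(n(1+\mathcal{S}_1))+\eta_{t_1}$, which is linear in $n\mathcal{S}_1$ with constants depending only on $R,\gamma,\Lc(\W_0),c,a$.
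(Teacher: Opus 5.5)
Your proposal is correct and matches the paper's argument. The paper's proof simply reruns the earlier rate corollary with $\alpha_{\text{svr}}=\Theta(m^{2+a})$, and the exponent $(2-a-a^2)/a$ arises exactly as you compute, from $t_1=\Theta(\alpha_{\text{svr}}^{1/a})$ and $\mathcal{S}_2=\mathcal{O}(n\,t_1^{1-a})$. Your explicit tracking of $t_1$ follows the template of Corollary~\ref{cor:nmd_rate_exact_final} rather than Corollary~\ref{cor:nsd_rate}, which treats $t_1$ as a constant, and that tracking is what actually produces the stated $m$-dependence. One small slip: the absorbed constant is $\gamma\norm{\W_0}$, not $\rho\norm{\W_0}$.
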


\begin{proof}
The proof is identical to that of Corollary~\ref{cor:nsd_rate}, except that $\alpha_{\text{svr}}=\Theta(m^{2+a})$.
\end{proof}

\section{Proof in Section 4.3 (with Momentum)}\label{svr_with_momentum}
\begin{lemma} [Descent Lemma for SVR-Stochastic $l_p$-Steepest Descent Algorithm with Momentum] \label{lem:nmd_svr_descent}
Suppose that Assumption \ref{ass:sep}, \ref{ass:data_bound}, \ref{ass:learning_rate_1} and \ref{ass:learning_rate_2} hold. Consider the learning rate schedule $\eta_t = c \cdot t^{-a}$ with $a \in (0,1]$. Define the SVR-Momentum noise constant $\alpha_{\text{svr-m}}$ and initialization constant $\alpha_2$ as:
\begin{align}
    \alpha_{\text{svr-m}} &\coloneqq 4(m-1)(1-\beta_1)R c'_2+ 32 (m^2-m)(1+2m)^a R^2 (1-\beta_1^m) \nonumber \\
        &\quad + 4R(1-\beta_1)c_2+2R^2e^{2R\eta_0},\nonumber \\
    \alpha_2 &\coloneqq \nonumber 4R.
\end{align}
     Then, there exist $t_0$ and for all iterations $t > t_0$, the loss satisfies the following descent inequality:
    \begin{align*}
        \Lc(\W_{t+1}) &\leq \Lc(\W_t)  - \eta_t \gamma \Gc(\W_t) + \alpha_{\text{svr-m}} \eta_t^2 \Gc(\W_t)+\alpha_2\eta_t\beta_1^{\frac{t}{2}}\Gc(\W_t).
    \end{align*}
\end{lemma}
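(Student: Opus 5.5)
We follow the template of Lemma~\ref{lem:nmd_descent}, replacing the raw momentum buffer $\M_t$ by the variance-reduced buffer $\M_t^V$. Write $\tilde{\Deltab}_t=\W_{t+1}-\W_t$ and take the mean-value Taylor expansion \eqref{eq:taylor_eq1}, giving a first-order term $A=\inp{\nabla\Lc(\W_t)}{\tilde{\Deltab}_t}$ and a curvature term $B$. Term $B$ is handled exactly as in Lemma~\ref{lem:nsd_descent}: Lemma~\ref{lem:unified_hessian} together with Lemma~\ref{lem:G_ratio_general} yields $B\le 2\eta_t^2R^2e^{2R\eta_0}\Gc(\W_t)$. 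For term $A$, the steps (a)--(d) in Lemma~\ref{lem:nmd_descent} (duality, $\norm{\Deltab_t}\le1$, the triangle inequality, Lemma~\ref{lem:snorm dominate} and Lemma~\ref{lem:G and gradient}) give
\[
A\le 2\eta_t\none{\M_t^V-\nabla\Lc(\W_t)}-\eta_t\gamma\Gc(\W_t).
\]
The whole task therefore reduces to bounding $\none{\M_t^V-\nabla\Lc(\W_t)}$ by an $O(\eta_t)\Gc(\W_t)$ term plus a $\beta_1^{t+1}\Gc(\W_t)$ term.

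To do this, insert the full-gradient momentum $\M_t^{\mathrm{full}}=\sum_{\tau=0}^t(1-\beta_1)\beta_1^\tau\nabla\Lc(\W_{t-\tau})$ and split into three pieces:
\[
\none{\M_t^V-\M_t^{\mathrm{full}}}+\none{\M_t^{\mathrm{full}}-(1-\beta_1^{t+1})\nabla\Lc(\W_t)}+\beta_1^{t+1}\none{\nabla\Lc(\W_t)}.
\]
The first piece is exactly Lemma~\ref{lem:svr_momentum_error}. For $t\ge t_0$ it is at most
\[
\bigl[2(m-1)(1-\beta_1)Rc_2'+16(m^2-m)(1+2m)^aR^2(1-\beta_1^m)\bigr]\eta_t\Gc(\W_t).
\]
The second piece is the full-batch analogue of $A_{1,1}$. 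Apply the per-sample stability bound of Lemma~\ref{lem:grad_stability} with base point $\W_t$. Since the full gradient averages over all $n$ samples, the factor $m$ from the mini-batch case becomes $1$, giving
\[
2R(1-\beta_1)\Gc(\W_t)\sum_\tau\beta_1^\tau\Bigl(e^{2R\sum_{j=1}^\tau\eta_{t-j}}-1\Bigr).
\]
Assumption~\ref{ass:learning_rate_2} with $c_1=2R$ then bounds this by $2R(1-\beta_1)c_2\eta_t\Gc(\W_t)$. The third piece is at most $2R\beta_1^{t+1}\Gc(\W_t)$ by Lemma~\ref{lem:G and gradient}.

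Multiplying these three bounds by $2\eta_t$ produces the coefficients $4(m-1)(1-\beta_1)Rc_2'$, $32(m^2-m)(1+2m)^aR^2(1-\beta_1^m)$, $4R(1-\beta_1)c_2$ and $4R\beta_1^{t+1}$. Adding $B$ and using $\beta_1^{t+1}\le\beta_1^{t/2}$ recovers $\alpha_{\text{svr-m}}$ and $\alpha_2=4R$. We take $t_0$ to be the maximum of the threshold in Lemma~\ref{lem:svr_momentum_error} and the threshold in Assumption~\ref{ass:learning_rate_2} with $c_1=2R$.

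The main obstacle is the first piece, i.e., the correctness of Lemma~\ref{lem:svr_momentum_error}. It must simultaneously control the snapshot distance, which spans up to $m$ steps, and the drift back to $\W_t$ through the combined window $\sum_{p=1}^{\tau+m}\eta_{t-p}$. One also has to check that the constant $c_2'$ obtained there, which comes from Assumption~\ref{ass:learning_rate_2} with the enlarged $c_1'=2R(1+m(1+m)^a)$, is compatible with a single common $t_0$. Since this lemma is already stated, we invoke it directly. The remaining care is bookkeeping: keeping the $(1-\beta_1^{t+1})$ normalization consistent when splitting off $\beta_1^{t+1}\nabla\Lc(\W_t)$, just as in the decomposition of $A_1$ in Lemma~\ref{lem:nmd_descent}.
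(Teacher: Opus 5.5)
Your proposal is correct and follows the paper's proof essentially line by line. It uses the same Taylor split into terms $A$ and $B$, and the same three-way decomposition of $\|\M_t^V-\nabla\Lc(\W_t)\|_1$ through the full-gradient momentum: Lemma~\ref{lem:svr_momentum_error} for the first piece, Lemma~\ref{lem:grad_stability} with Assumption~\ref{ass:learning_rate_2} at $c_1=2R$ for the second, and Lemma~\ref{lem:G and gradient} for the third. Your constant bookkeeping, including taking $t_0$ as the maximum of the thresholds attached to $c_2'$ and $c_2$, matches the paper's and yields $\alpha_{\text{svr-m}}$ and $\alpha_2=4R$.
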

\begin{proof}
Similarly, let $\tilde{\Deltab}_t = \W_{t+1} - \W_t$, and define $\W_{t,t+1,\zeta} := \W_t + \zeta (\W_{t+1} - \W_t)$. We choose $\zeta^*$ such that $\W_{t,t+1,\zeta^*}$ satisfies \eqref{eq:taylor_eq1}, then we have:   
    \begin{align}
        \Lc(\W_{t+1}) &= \Lc(\W_t) + \underbrace{\inp{\nabla \Lc(\W_{t})}{\W_{t+1} - \W_t}}_{A} \nn
        \nn \\
        &\quad+ \underbrace{\frac{1}{2n}\sum_{i\in[n]} \xb_i^\top\tilde{\Deltab}_t^\top\left(\diag{\sft{\W_{t,t+1,\zeta^*}\xb_i}}-\sft{\W_{t,t+1,\zeta^*}\xb_i}\sft{\W_{t,t+1,\zeta^*}\xb_i}^\top\right)\tilde{\Deltab}_t\,\xb_i}_{B}\,. \label{eq:taylor_4}
    \end{align}

For Term A, we just follow the same steps in Lemma \ref{lem:nsd_descent}, \ref{lem:nmd_descent} and get:
     \begin{align*}
        \langle \nabla \Lc(\W_t) , \W_{t+1} - \W_t\rangle  \leq 2 \eta_t \none{\nabla \Lc(\W_t)-\M^V_t}  - \eta_t \gamma \Gc(\W).
    \end{align*}
Notice that $\V_t=\nabla\Lc_{\mathcal{B}_t}(\W_t)-\nabla\Lc_{\mathcal{B}_t}(\tilde{\W_t})+\nabla\Lc(\tilde{\W_t})$ and $\M^V_t=\sum_{\tau=0}^{t} (1-\beta_1) \beta_1^{\tau} \V_{t-\tau}$. \begin{align*}
        \none{\M^V_t- \nabla\Lc(\W_t)} &= \none{\sum_{\tau=0}^{t} (1-\beta_1) \beta_1^{\tau} \V_{t-\tau}-\sum_{\tau=0}^{t} (1-\beta_1) \beta_1^{\tau} \nabla \Lc (\W_{t})+\beta_1^{t+1}\nabla \Lc (\W_{t})}\\
        &\leq \underbrace{\none{\sum_{\tau=0}^{t} (1-\beta_1) \beta_1^{\tau} \V_{t-\tau}-\sum_{\tau=0}^{t} (1-\beta_1) \beta_1^{\tau} \nabla \Lc (\W_{t-\tau})}}_{A_{1,1}}\\
        &+ \underbrace{\none{\sum_{\tau=0}^{t} (1-\beta_1) \beta_1^{\tau} \nabla \Lc (\W_{t-\tau})-\sum_{\tau=0}^{t} (1-\beta_1) \beta_1^{\tau} \nabla \Lc (\W_{t})}}_{A_{1,2}}+\underbrace{\none{\beta_1^{t+1}\nabla \Lc (\W_{t})}}_{A_{1,3}}
    \end{align*}

For Term $A_{1,1}$, applying Lemma \ref{lem:svr_momentum_error}, we have:
\begin{align*}
    A_{1,1} \leq 2(m-1)(1-\beta_1)R c'_2 \eta_t \Gc(\W_t)+16 (m^2-m)(1+2m)^a R^2 (1-\beta_1^m)\eta_t \Gc(\W_t),
\end{align*}

        We apply Lemma \ref{lem:grad_stability} to the gradient difference term. Using the exponential bound and the fact that $\ninf{\W_{t-\tau} - \W_t} \leq \sum_{j=1}^\tau \eta_{t-j}$:
        \[
            \none{\nabla \Lc (\W_{t-\tau}) - \nabla \Lc (\W_{t})} \leq 2R \left( e^{2R \sum_{j=1}^\tau \eta_{t-j}} - 1 \right) \Gc(\W_t).
        \]
        Substituting this into the sum and applying Assumption \ref{ass:learning_rate_2}, for $t>t_0$:
        \begin{align*}
            A_{1,2} &\leq 2R(1-\beta_1) \Gc(\W_t) \sum_{\tau=0}^t \beta_1^\tau \left( e^{2R \sum_{j=1}^\tau \eta_{t-j}} - 1 \right) \\
            &\leq 2R(1-\beta_1) c_2 \eta_t \Gc(\W_t).
        \end{align*}

\textit{Remark: $c'_2$ in $A_{1,1}$ and $c_2$ in $A_{1,2}$ are different. While $c'_2$ is a function of $(R,m,a,\beta_1)$, $c_2$ is a function of $(R,\beta_1)$. }

Using Lemma \ref{lem:G and gradient} and notice that $\beta_1^{t+1}\leq\beta_1^{\frac{t}{2}}$, we have:
    \begin{equation*} 
        A_{1,3} \leq 2R\beta_1^{t+1} \Gc(\W_t)\leq 2R\beta_1^{\frac{t}{2}} \Gc(\W_t).
    \end{equation*}

Summing the bounds for $A_{1,1}$, $A_{1,2}$, and $A_{1,3}$, and multiplying by the outer factor $2\eta_t$ from the Term A inequality:
    \begin{align*}
        A &\leq  4(m-1)(1-\beta_1)R c'_2 \eta_t^2 \Gc(\W_t)+ 32 (m^2-m)(1+2m)^a R^2 (1-\beta_1^m)\eta_t^2 \Gc(\W_t)\\ 
        &+ 4R(1-\beta_1)c_2  \eta_t^2 \Gc(\W_t)+4R\beta_1^{\frac{t}{2}}\eta_t\Gc(\W_t)-\gamma\eta_t\Gc(\W_t) 
    \end{align*}

Finally, for Term B, We just follow the same steps in Lemma \ref{lem:nsd_descent} and get: 
    \begin{align*}
    TermB \leq 2 \eta_t^2 R^2 e^{2R\eta_0} \Gc(\W_{t}).
    \end{align*}
Combining Terms A, B together and using the notation $\alpha_{\text{svr-m}}\coloneqq 4(m-1)(1-\beta_1)R c'_2+ 32 (m^2-m)(1+2m)^a R^2 (1-\beta_1^m) \nonumber + 4R(1-\beta_1)c_2+2R^2e^{2R\eta_0}$, $\alpha_2\coloneqq4R$ we obtain        
    \begin{align}\label{eq:nmd_svr_descent}
        \Lc(\W_{t+1}) &\leq \Lc(\W_t)  - \eta_t \gamma \Gc(\W_t) + \alpha_{\text{svr-m}} \eta_t^2 \Gc(\W_t)+\alpha_2\eta_t\beta_1^{\frac{t}{2}}\Gc(\W_t).
    \end{align}

Note that by Lemma \ref{lem:ass2_explicit_t0_c2_systematic}, $c_2'$ scales with $\frac{m^{1+a}}{(1-\beta_1)^2}$ so that $\alpha_{\text{svr-m}}=\Theta(\frac{m^{2+a}}{1-\beta_1})$.
\end{proof}

From Eq.~\ref{eq:nmd_svr_descent}, we observe that similat to SVR algorithm without momentum, SVR with momentum requires no constraints on the batch size. Also it requires no constraints on $\beta_1$. As long as $t$ is sufficiently large, the loss is guaranteed to decrease monotonically.

\begin{lemma}[Loss Convergence for SVR-Stochastic $l_p$-Steepest Descent Algorithm with Momentum] \label{lem:svr_nmd_loss_convergence}
    Under the same setting as Lemma \ref{lem:nmd_svr_descent}, let $\tilde{\Lc} \coloneqq \frac{\log 2}{n}$.
    There exists a finite time index $t_1$ such that for all $t \geq t_1$, the loss is monotonically decreasing. Furthermore, there exists $t_2 \ge t_1$ such that for all $t > t_2$, $\Lc(\W_t) \leq \tilde{\Lc}$.
    The condition for $t_2$ is determined by:
    \begin{equation} \label{eq:svr_nmd_t2}
        \sum_{s=t_1}^{t_2} \eta_s \geq \frac{2\Lc(\W_0) + 4R \sum_{s=0}^{t_1-1} \eta_s}{\gamma \tilde{\Lc}}.
    \end{equation}
\end{lemma}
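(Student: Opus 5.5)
The argument mirrors Lemma~\ref{lem:convergence_time_bound} and Lemma~\ref{lem:nmd_loss_convergence}. The starting point is the descent inequality of Lemma~\ref{lem:nmd_svr_descent}, valid for all $t>t_0$:
\[
\Lc(\W_{t+1}) \le \Lc(\W_t) - \eta_t\bigl(\gamma - \alpha_{\text{svr-m}}\eta_t - \alpha_2\beta_1^{t/2}\bigr)\Gc(\W_t).
\]

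\textbf{Step 1: choose $t_1$.} The schedule $\eta_t = c\,t^{-a}$ decreases to $0$, and $\beta_1^{t/2}\to 0$ because $\beta_1<1$. Hence there is a finite $t_1 \ge t_0$ such that $\alpha_{\text{svr-m}}\eta_t + \alpha_2\beta_1^{t/2} \le \gamma/2$ for all $t\ge t_1$. For instance, require $\alpha_{\text{svr-m}}\eta_t \le \gamma/4$ and $\alpha_2\beta_1^{t/2}\le\gamma/4$ separately. No condition on $b$ or $\beta_1$ enters here, since the margin term is the full $\gamma$ rather than an effective $\rho$. Then $\Gc\ge 0$ gives
\[
\Lc(\W_{t+1})\le \Lc(\W_t)-\tfrac{\gamma}{2}\eta_t\Gc(\W_t) \quad \text{for all } t\ge t_1,
\]
so the loss is monotonically decreasing from $t_1$ on.

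\textbf{Step 2: control the loss at $t_1$.} Sum the strict descent inequality over $[t_1,t_2]$ and drop $\Lc(\W_{t_2+1})\ge0$. This gives
\[
\tfrac{\gamma}{2}\sum_{s=t_1}^{t_2}\eta_s\Gc(\W_s)\le \Lc(\W_{t_1}).
\]
Since each normalized update satisfies $\|\Deltab_s\|\le 1$, Lemma~\ref{lem:l_fast_bound} yields $\Lc(\W_{t_1})\le \Lc(\W_0)+2R\sum_{s=0}^{t_1-1}\eta_s$. Note that $\Lc$ is convex, as a sum of log-sum-exp compositions with linear maps.

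\textbf{Step 3: enter the separable region.} Let $t^*$ minimize $\Gc(\W_s)$ over $[t_1,t_2]$. Then
\[
\Gc(\W_{t^*})\le \frac{2\bigl(\Lc(\W_0)+2R\sum_{s<t_1}\eta_s\bigr)}{\gamma\sum_{s=t_1}^{t_2}\eta_s}.
\]
The target is $\Gc(\W_{t^*})\le \tilde{\Lc}/2 \le \frac{1}{2n}$. By Lemma~\ref{lem:G and L}(ii) this gives $\Lc(\W_{t^*})\le 2\Gc(\W_{t^*})\le\tilde{\Lc}$, and monotonicity then propagates it to every $t>t_2$. A $t_2$ meeting the requirement exists because $\sum\eta_s=\infty$ (Assumption~\ref{ass:learning_rate_1}).

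The only delicate point is the constant. Requiring $\Gc(\W_{t^*})\le\tilde{\Lc}/2$ directly gives numerator $4\Lc(\W_0)+8R\sum_{s<t_1}\eta_s$, which is twice the stated $2\Lc(\W_0)+4R\sum_{s<t_1}\eta_s$. I would first check whether the stated threshold, which only yields $\Gc(\W_{t^*})\le\tilde{\Lc}$, can still be closed. If $\Gc\le\frac{1}{2n}$ could be used directly, it would close, but $\tilde{\Lc}=\frac{\log 2}{n}>\frac{1}{2n}$, so it does not. If it cannot, I would adopt the doubled constant, exactly as in the proof of Lemma~\ref{lem:nmd_loss_convergence}; this does not affect any downstream order-of-magnitude bound.
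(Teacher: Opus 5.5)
Your proposal is correct and matches the paper's proof. Choose $t_1$ so that $\alpha_{\text{svr-m}}\eta_t+\alpha_2\beta_1^{t/2}\le\gamma/2$, which is possible for any $b$ and any $\beta_1<1$, and then rerun the minimal-$\Gc$ argument of Lemma~\ref{lem:convergence_time_bound} with $\rho$ replaced by the full margin $\gamma$.

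Your factor-of-two remark is also right. The paper's proof simply defers to Lemma~\ref{lem:convergence_time_bound}, which only delivers the numerator $4\Lc(\W_0)+8R\sum_{s=0}^{t_1-1}\eta_s$; the same statement/proof mismatch occurs in Lemma~\ref{lem:nmd_loss_convergence}. The displayed threshold should therefore be read with the doubled constant, which changes neither the existence of $t_2$ nor any downstream rate.
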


\begin{proof}
    \textbf{Determination of $t_1$:} The descent inequality Eq.~\eqref{eq:nmd_svr_descent} guarantees strict descent when the effective margin term dominates the noise:
    \[
        \alpha_{\text{svr-m}} \eta_t + \alpha_2 \beta_1^{t/2} \leq \frac{\gamma}{2}.
    \]
    Since $\eta_t \to 0$ and $\beta_1^{t/2} \to 0$ as $t \to \infty$, such a $t_1$ always exists regardless of the batch size $m$ or momentum $\beta_1$ (provided $\beta_1 < 1$).
    
    \textbf{Determination of $t_2$:} For $t \ge t_1$, the descent inequality simplifies to $\Lc(\W_{t+1}) \leq \Lc(\W_t) - \frac{\gamma}{2} \eta_t \Gc(\W_t)$. The rest of the proof follows exactly as in Lemma \ref{lem:convergence_time_bound} and Lemma \ref{lem:convergence_nsd_svr_time_bound}, using the full margin $\gamma$.
\end{proof}

\begin{lemma}[Unnormalized Margin Growth for SVR-Stochastic $l_p$-Steepest Descent Algorithm with Momentum] \label{lem:svr_nmd_unnormalized_margin}
    Consider the same setting as Lemma \ref{lem:svr_nmd_loss_convergence}. Let $t_2$ be the time index such that $\Lc(\W_t) \le \tilde{\Lc}$ for all $t > t_2$.
    Define the constant upper bound for the momentum initialization drift as $Q \coloneqq \frac{\alpha_2 \eta_0}{1 - \sqrt{\beta_1}}$.
    Then, for all $t > t_2$, the minimum unnormalized margin satisfies:
    \begin{align} \label{eq:svr_nmd_margin_growth}
        \min_{i \in [n], c \neq y_i} (\eb_{y_i} - \eb_c)^\top \W_t \xb_i 
        \geq \gamma \sum_{s=t_2}^{t-1} \eta_s \frac{\Gc(\W_s)}{\Lc(\W_s)} - \alpha_{\text{svr-m}} \sum_{s=t_2}^{t-1} \eta_s^2 - D.
    \end{align}
\end{lemma}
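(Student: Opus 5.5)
The argument mirrors the proof of Lemma~\ref{lem:nmd_unnormalized_margin}, with $\rho$ replaced by $\gamma$ and $\alpha_1$ by $\alpha_{\text{svr-m}}$; the drift constant $D$ here plays the role of $Q=\frac{\alpha_2\eta_0}{1-\sqrt{\beta_1}}$, and the two coincide since $\alpha_2=4R$. The starting point is the descent inequality \eqref{eq:nmd_svr_descent} of Lemma~\ref{lem:nmd_svr_descent}, valid for $s>t_0$ (and $t_2\ge t_1\ge t_0$ by Lemma~\ref{lem:svr_nmd_loss_convergence}). We factor out $\Lc(\W_s)$:
\[
\Lc(\W_{s+1})\le \Lc(\W_s)\Bigl(1-\gamma\eta_s\tfrac{\Gc(\W_s)}{\Lc(\W_s)}+\bigl(\alpha_{\text{svr-m}}\eta_s^2+\alpha_2\eta_s\beta_1^{s/2}\bigr)\tfrac{\Gc(\W_s)}{\Lc(\W_s)}\Bigr).
\]
Since $\Gc/\Lc\le 1$ by Lemma~\ref{lem:G and L}(i), we can drop the ratio on the positive noise terms. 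Applying $1+x\le e^x$ then gives $\Lc(\W_{s+1})\le \Lc(\W_s)\exp\bigl(-\gamma\eta_s\frac{\Gc(\W_s)}{\Lc(\W_s)}+\alpha_{\text{svr-m}}\eta_s^2+\alpha_2\eta_s\beta_1^{s/2}\bigr)$.

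Telescoping this from $s=t_2$ to $t-1$ and using $\Lc(\W_{t_2})\le \frac{\log 2}{n}$ bounds $\Lc(\W_t)$ by $\frac{\log 2}{n}$ times the exponential of the accumulated exponent. For $t>t_2$ the iterates are separable by Lemma~\ref{lem:sep}, so $z_{\min}(t)\ge 0$. We then reuse the inequality $e^{-z_{\min}(t)}\le \frac{n}{\log 2}\Lc(\W_t)$ from Lemma~\ref{lem:nsd_unnormalized_margin}. It comes from keeping only the worst sample and class in $n\Lc$ and applying $\log(1+x)\ge x\log 2$ on $[0,1]$. The factors $\frac{n}{\log 2}$ and $\frac{\log 2}{n}$ cancel, and taking $-\log$ yields
\[
z_{\min}(t)\ge \gamma\sum_{s=t_2}^{t-1}\eta_s\frac{\Gc(\W_s)}{\Lc(\W_s)}-\alpha_{\text{svr-m}}\sum_{s=t_2}^{t-1}\eta_s^2-\alpha_2\sum_{s=t_2}^{t-1}\eta_s\beta_1^{s/2}.
\]

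Finally, the momentum-initialization drift is bounded by a geometric series: $\eta_s\le\eta_0$ gives $\alpha_2\sum_s\eta_s\beta_1^{s/2}\le \alpha_2\eta_0\sum_{s\ge0}\beta_1^{s/2}=\frac{\alpha_2\eta_0}{1-\sqrt{\beta_1}}=Q=D$.

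The only point needing care is the bookkeeping of time indices. The descent inequality must hold for every $s\ge t_2$, which requires $t_2>t_0$ so that Lemma~\ref{lem:svr_momentum_error} and Assumption~\ref{ass:learning_rate_2} apply. This holds because $t_1$ is chosen no smaller than $t_0$ in Lemma~\ref{lem:svr_nmd_loss_convergence}. Nothing else poses a real obstacle, since no sign condition on the noise is needed in this step.
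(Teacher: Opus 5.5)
Your proposal is correct and follows the paper's proof step for step. You factor $\Lc(\W_s)$ out of the descent inequality \eqref{eq:nmd_svr_descent}, drop the ratio $\Gc/\Lc\le 1$ on the nonnegative noise terms, exponentiate and telescope from $t_2$, convert via $e^{-z_{\min}(t)}\le \frac{n}{\log 2}\Lc(\W_t)$, and bound the drift $\alpha_2\sum_s\eta_s\beta_1^{s/2}$ by the geometric series $Q=D$; your remarks that $Q=D$ (since $\alpha_2=4R$) and that $t_2\ge t_1\ge t_0$ is needed for the descent inequality to apply are details the paper leaves implicit.
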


\begin{proof}
    The proof mirrors Lemma \ref{lem:nmd_unnormalized_margin}. We start from the descent inequality (Eq.~\eqref{eq:nmd_svr_descent}). By factoring out $\Lc(\W_s)$ and using $1-x \le e^{-x}$, we obtain the recursive bound on the loss involving both $\eta_s^2$ and $\eta_s \beta_1^{s/2}$ noise terms. Converting this to the margin lower bound yields the stated result, where the geometric series $\sum \eta_s \beta_1^{s/2}$ is bounded by the constant $D$.
\end{proof}

\begin{theorem}[Margin Convergence Rate of SVR-Stochastic $l_p$-Steepest Descent Algorithm with Momentum] \label{thm:svr_nmd_margin_rate}
    Suppose that Assumption \ref{ass:sep}, \ref{ass:data_bound}, \ref{ass:learning_rate_1} and \ref{ass:learning_rate_2} hold. Consider the learning rate schedule $\eta_t = c \cdot t^{-a}$ with $a \in (0,1]$. Define $\alpha_{\text{svr-m}}=4(m-1)(1-\beta_1)R c'_2+ 32 (m^2-m)(1+2m)^a R^2 (1-\beta_1^m) + 4R(1-\beta_1)c_2+2R^2e^{2R\eta_0}$, which scales with $\Theta\left( \frac{m^{2+a}}{1-\beta_1}\right)$, and $D=\frac{4R\eta_0}{1-\sqrt{\beta_1}}$.
    Let $t_2$ be the time index guaranteed by Lemma \ref{lem:svr_nmd_loss_convergence}.
    
    Then, for all $t > t_2$, the algorithm converges to the full max-margin solution $\gamma$, with the margin gap satisfying:
    \begin{align*}
         \gamma - \frac{\min_{i \in [n], c \neq y_i} (\eb_{y_i} - \eb_c)^\top \W_t \xb_i}{\norm{\W_t}}
         &\leq \mathcal{O}\Bigg(\frac{\sum_{s=t_2}^{t-1} \eta_s e^{-\frac{\gamma}{4} \sum_{\tau = t_2}^{s-1}\eta_{\tau}} + \sum_{s=0}^{t_2-1}\eta_s +  \frac{m^{2+a}}{1-\beta_1}  \sum_{s = t_2}^{t-1}\eta_s^2 + D}{\sum_{s=0}^{t-1} \eta_s}\Bigg).
    \end{align*}
    \textit{Remark:} SVR-Stochastic $l_p$-steepest descent algorithm with momentum requires neither large-batch nor high momentum condition (unlike the algorithm without SVR).
\end{theorem}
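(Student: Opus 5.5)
The proof repeats the template of Theorem~\ref{thm:nmd_margin_rate}, with the effective margin $\rho$ replaced by the full margin $\gamma$. The noise constant $\alpha_1$ is replaced by $\alpha_{\text{svr-m}}=\Theta\bigl(\tfrac{m^{2+a}}{1-\beta_1}\bigr)$, as established at the end of Lemma~\ref{lem:nmd_svr_descent}. The ingredients are the descent inequality of Lemma~\ref{lem:nmd_svr_descent}, the loss-convergence threshold $t_2$ of Lemma~\ref{lem:svr_nmd_loss_convergence}, and the unnormalized margin bound of Lemma~\ref{lem:svr_nmd_unnormalized_margin}. In the last lemma the drift constant is $D=\frac{4R\eta_0}{1-\sqrt{\beta_1}}$, since $\alpha_2=4R$.

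\emph{Step 1: exponential decay of the loss.} For $t>t_2\ge t_1$ we have $\alpha_{\text{svr-m}}\eta_t+\alpha_2\beta_1^{t/2}\le \gamma/2$. Hence Lemma~\ref{lem:nmd_svr_descent} gives $\Lc(\W_{t+1})\le \Lc(\W_t)-\frac{\gamma}{2}\eta_t\Gc(\W_t)$. Since $\Lc(\W_t)\le \frac{\log 2}{n}$, Lemma~\ref{lem:G and L}(ii) gives $\Gc\ge \Lc/2$, so $\Lc(\W_{t+1})\le(1-\frac{\gamma}{4}\eta_t)\Lc(\W_t)$. Iterating yields $\Lc(\W_t)\le \frac{\log 2}{n}\exp(-\frac{\gamma}{4}\sum_{s=t_2}^{t-1}\eta_s)$. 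Lemma~\ref{lem:G and L}(i) then gives $\frac{\Gc(\W_t)}{\Lc(\W_t)}\ge 1-e^{-\frac{\gamma}{4}\sum_{s=t_2}^{t-1}\eta_s}$.

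\emph{Step 2: margin lower bound and norm upper bound.} Substitute this ratio bound into Lemma~\ref{lem:svr_nmd_unnormalized_margin}. This gives
\[
z_{\min}(t)\ge \gamma\sum_{s=t_2}^{t-1}\eta_s-\gamma\sum_{s=t_2}^{t-1}\eta_s e^{-\frac{\gamma}{4}\sum_{\tau=t_2}^{s-1}\eta_\tau}-\alpha_{\text{svr-m}}\sum_{s=t_2}^{t-1}\eta_s^2-D .
\]
Every update satisfies $\norm{\Deltab_s}\le1$, so $\norm{\W_t}\le\norm{\W_0}+\sum_{s=0}^{t-1}\eta_s$.

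\emph{Step 3: the normalized gap.} Write $\gamma-z_{\min}(t)/\norm{\W_t}=(\gamma\norm{\W_t}-z_{\min}(t))/\norm{\W_t}$ and cancel the $\gamma\sum_{s\ge t_2}\eta_s$ terms. The numerator that remains is
\[
\gamma\norm{\W_0}+\gamma\sum_{s<t_2}\eta_s+\gamma\sum_{s= t_2}^{t-1}\eta_s e^{-\frac{\gamma}{4}\sum_{\tau=t_2}^{s-1}\eta_\tau}+\alpha_{\text{svr-m}}\sum_{s= t_2}^{t-1}\eta_s^2+D .
\]
Two points need care. First, if the gap is positive, the upper bound on $\norm{\W_t}$ can be used in the numerator. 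Second, the denominator must be bounded below by a constant multiple of $\sum_{s=0}^{t-1}\eta_s$, for $t$ large. This holds because $\norm{\W_t}\ge z_{\min}(t)/\max_i\norm{\xb_i}$ up to norm constants, and $z_{\min}(t)$ grows like $\gamma\sum_s\eta_s$; the paper implicitly does the same. Absorbing $\gamma$ and $\norm{\W_0}$ into the $\mathcal{O}$ and inserting $\alpha_{\text{svr-m}}=\Theta(m^{2+a}/(1-\beta_1))$ gives the stated bound.

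\textbf{Main obstacle.} The delicate step is already packaged in the lemmas: controlling the momentum-SVR error $A_{1,1}$ by $\mathcal{O}(\eta_t)\Gc(\W_t)$ through Lemma~\ref{lem:svr_momentum_error}, including the scaling of $c_2'$. For the theorem itself, the only real check is that the threshold times $t_0\le t_1\le t_2$ are compatible, i.e.\ that $t_2$ exceeds the $t_0$ from Assumption~\ref{ass:learning_rate_2}, so that the descent inequality is valid throughout the summation range.
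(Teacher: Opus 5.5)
Your proposal is correct and takes essentially the same approach as the paper, whose proof just reruns the argument of Theorem~\ref{thm:nmd_margin_rate} with $\rho$ replaced by $\gamma$ and $\alpha_1$ by $\alpha_{\text{svr-m}}$, building on Lemmas~\ref{lem:nmd_svr_descent}, \ref{lem:svr_nmd_loss_convergence} and~\ref{lem:svr_nmd_unnormalized_margin}. Your extra care about the denominator is warranted and your fix is sound: the paper replaces $\norm{\W_t}$ by $\sum_{s=0}^{t-1}\eta_s$ in the denominator without comment, and it is your bound $\norm{\W_t}\ge z_{\min}(t)/(2R)$, combined with $z_{\min}(t)\ge \frac{\gamma}{2}\sum_{s=0}^{t-1}\eta_s$ for large $t$, that actually justifies this step.
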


\begin{proof}
    The proof is structurally identical to Theorem \ref{thm:nmd_margin_rate} and Theorem \ref{thm:svr_margin_rate}.
    The ratio $\frac{\Gc}{\Lc}$ converges to 1 exponentially with rate $\gamma/4$.
    The unnormalized margin grows according to Lemma \ref{lem:svr_nmd_unnormalized_margin}, driven by the full margin $\gamma$.
    Substituting these into the normalized margin gap definition yields the upper bound, where the noise terms $\alpha_{\text{svr-m}} \sum \eta^2$ and $D$ appear in the numerator.
\end{proof}

\begin{corollary}
       Consider the learning rate schedule $\eta_t = c \cdot t^{-a}$ with $a \in (0,1]$. Under the setting of Theorem \ref{thm:svr_nmd_margin_rate}, the margin gap converges with the following rates:
    
           
            
            

    \[ 
   \gamma - \frac{\min_{i \in [n], c \neq y_i} (\eb_{y_i} - \eb_c)^\top \W_t \xb_i}{\norm{\W_t}} = \left\{
    \begin{array}{ll}
           \mathcal{O} \left( \frac{n (\frac{m^{2+a}}{1-\beta_1})^{\frac{1}{a}-1}  + \frac{m^{2+a}}{1-\beta_1} t^{1-2a}}{t^{1-a}} \right) &  \text{if} \quad a < \frac{1}{2}\\
           
            \mathcal{O} \left( \frac{ \frac{m^{5/2}}{1-\beta_1} (n+\log t)}{t^{1/2}} \right) &  \text{if}\quad a=\frac{1}{2} \\
            
            \mathcal{O} \left( \frac{n(\frac{m^{2+a}}{1-\beta_1}) ^{\frac{1}{a}-1} +\frac{m^{2+a}}{1-\beta_1}}{t^{1-a}} \right) &  \text{if}\quad \frac{1}{2} < a < 1 \\
            
            \mathcal{O} \left( \frac{n \log\frac{m^{3}}{1-\beta_1} +\frac{m^{3}}{1-\beta_1}}{\log t} \right) &  \text{if} \quad a=1
    \end{array} 
    \right. 
    \]

    If we view $\beta_1$ as a constant for better comparison, we have:
\[ 
   \gamma - \frac{\min_{i \in [n], c \neq y_i} (\eb_{y_i} - \eb_c)^\top \W_t \xb_i}{\norm{\W_t}} = \left\{
    \begin{array}{ll}
           \mathcal{O} \left( \frac{n  m^{\frac{2-a-a^2}{a}} +  m^{2+a} t^{1-2a}}{t^{1-a}} \right) &  \text{if} \quad a < \frac{1}{2}\\
           
            \mathcal{O} \left( \frac{n  m^{5/2} +  m^{5/2}\log t }{t^{1/2}} \right)  &  \text{if}\quad a=\frac{1}{2} \\
            
            \mathcal{O} \left( \frac{n  m^{\frac{2-a-a^2}{a}}+m^{2+a}}{t^{1-a}} \right) &  \text{if}\quad \frac{1}{2} < a < 1 \\
            
            \mathcal{O} \left( \frac{n \log m^{3}+m^3}{\log t} \right) &  \text{if} \quad a=1
    \end{array} 
    \right. 
    \]
\end{corollary}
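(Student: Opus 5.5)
The plan is to follow the proof of Corollary~\ref{cor:nmd_rate_exact_final} essentially verbatim, with the noise constant $\xi := \frac{m}{1-\beta_1}$ replaced by $\xi_V := \alpha_{\text{svr-m}} = \Theta\big(\frac{m^{2+a}}{1-\beta_1}\big)$ from Theorem~\ref{thm:svr_nmd_margin_rate}. The bound there reads $\mathcal{O}\big((\mathcal{S}_2+\mathcal{N}_t+D)/\sum_{s<t}\eta_s\big)$. Here $\mathcal{S}_2=\sum_{s<t_2}\eta_s$, $\mathcal{N}_t=\xi_V\sum_{s=t_2}^{t-1}\eta_s^2$, and the exponential-decay sum is $\mathcal{O}(1)$ as in Corollary~\ref{cor:nsd_rate}. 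The denominator is $\Theta(t^{1-a})$ for $a<1$ and $\Theta(\log t)$ for $a=1$. The quadratic sum gives $\mathcal{N}_t=\mathcal{O}(\xi_V t^{1-2a})$, $\mathcal{O}(\xi_V\log t)$, or $\mathcal{O}(\xi_V)$ in the three regimes $a<\tfrac12$, $a=\tfrac12$, $a>\tfrac12$. Since $m\ge1$, $D=\mathcal{O}((1-\beta_1)^{-1})$ is absorbed into $\xi_V$.

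The main work is bounding the warm-up time $t_1$ of Lemma~\ref{lem:svr_nmd_loss_convergence}. It is the maximum of three thresholds, and each contributes a piece of $\mathcal{S}_1=\sum_{s<t_1}\eta_s$:
\begin{itemize}
\item \emph{Polynomial threshold.} The condition $\alpha_{\text{svr-m}}\eta_t\le\gamma/4$ gives $t_{1,\mathrm{poly}}=\Theta(\xi_V^{1/a})$, contributing $\xi_V^{1/a-1}$ to $\mathcal{S}_1$, or $\log\xi_V$ when $a=1$.
\item \emph{Exponential threshold.} The condition $\alpha_2\beta_1^{t/2}\le\gamma/4$ gives $t=\Theta(1/L)$ with $L=\log(1/\beta_1)$.
\item \emph{Validity times.} These are the times $t_0$ at which Lemma~\ref{lem:svr_momentum_error} and Assumption~\ref{ass:learning_rate_2} take effect, including the $2m$-shift terms. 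Lemma~\ref{lem:ass2_explicit_t0_c2_systematic} is applied with $c_1=c_1'=\Theta(m^{1+a})$ and also with $c_1=2R$.
\end{itemize}
Two absorptions then reduce $\mathcal{S}_1$ to $\mathcal{O}(\xi_V^{1/a-1})$ for $a\in(0,1)$. The first uses $L\ge 1-\beta_1$ and $\xi_V\ge(1-\beta_1)^{-1}$, exactly as in Corollary~\ref{cor:nmd_rate_exact_final}. The second needs $(c_1'/L)^{1/a}\lesssim\xi_V^{1/a}$, which holds because $c_1'=\mathcal{O}(m^{1+a})\le m^{2+a}$. Following Lemma~\ref{lem:svr_nmd_loss_convergence}, $\mathcal{S}_2=\mathcal{O}(n\,\mathcal{S}_1)$ since $\tilde{\Lc}=\Theta(1/n)$.

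Collecting the terms case by case:
\begin{itemize}
\item For $a<\tfrac12$ the numerator is $n\xi_V^{1/a-1}+\xi_V t^{1-2a}$.
\item For $a=\tfrac12$ it is $n\xi_V+\xi_V\log t$ with $\xi_V=\frac{m^{5/2}}{1-\beta_1}$.
\item For $\tfrac12<a<1$ it is $n\xi_V^{1/a-1}+\xi_V$.
\item For $a=1$ it is $n\log\xi_V+\xi_V$ with $\xi_V=\frac{m^3}{1-\beta_1}$. The lower-order terms $\log(1/L)$ and $\log\log(1/L)$ are absorbed into $\log\frac{m^3}{1-\beta_1}$.
\end{itemize}
Setting $\beta_1$ constant gives the second display. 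Here $(m^{2+a})^{1/a-1}=m^{(2+a)(1-a)/a}=m^{\frac{2-a-a^2}{a}}$ produces the stated exponent.

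The step I expect to be the main obstacle is the $t_0$ bookkeeping. The constant $c_2'$ in Lemma~\ref{lem:svr_momentum_error} depends on $m$, so I must check two things: that $t_{\mathrm{head}}$, $t_{\mathrm{tail}}$ and $t_{\eta_0}$ with $c_1=c_1'$ contribute at most $\xi_V^{1/a-1}$, and that the $m$-dependent $\log$ factors for $a=1$ stay inside $\log\frac{m^3}{1-\beta_1}$. I will also need to confirm that the claimed scaling $\alpha_{\text{svr-m}}=\Theta(\frac{m^{2+a}}{1-\beta_1})$ is consistent with $c_2'=\mathcal{O}(c_1'/(1-\beta_1)^2)$. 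If the $(1-\beta_1)$ powers do not match, I will state the result in terms of $\alpha_{\text{svr-m}}$ and defer to the theorem's asserted scaling.
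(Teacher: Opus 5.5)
Your proposal is correct and takes the same route as the paper. The paper's proof also reruns Corollary~\ref{cor:nmd_rate_exact_final} with the noise constant replaced by $\alpha_{\text{svr-m}}=\Theta\big(\frac{m^{2+a}}{1-\beta_1}\big)$, and with the warm-up time $\tilde t$ redefined through $m^{1+a}/\log(1/\beta_1)$ (coming from $c_1'$ in Lemma~\ref{lem:svr_momentum_error}), which is the bookkeeping you describe. The $(1-\beta_1)$ check you flag also goes through, so no fallback is needed: $c_2'=\mathcal{O}\big(c_1'/(1-\beta_1)^2+1/(1-\beta_1)\big)$ with $c_1'=\Theta(m^{1+a})$ gives $4(m-1)(1-\beta_1)Rc_2'=\mathcal{O}\big(\frac{m^{2+a}}{1-\beta_1}\big)$, and the remaining terms of $\alpha_{\text{svr-m}}$ are $\mathcal{O}(m^{2+a})$ and $\mathcal{O}((1-\beta_1)^{-1})$.
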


\begin{proof}
The proof is identical to that of Corollary~\ref{cor:nmd_rate_exact_final},
except that $\alpha_{\text{svr-m}}=\Theta(\frac{m^{2+a}}{1-\beta_1})$ and start time $\tilde t$ changed according to Lemma~\ref{lem:svr_momentum_error}:
\[
\tilde{t} = 
\begin{cases}
\left(\dfrac{m^{1+a}}{\log(1/\beta_1)}\right)^{\frac{1}{a}}
+\dfrac{1}{\log(1/\beta_1)}\log\!\left(\dfrac{1}{\log(1/\beta_1)}\right),
& a\in(0,1),\\
\dfrac{m^{1+a}}{\log(1/\beta_1)}
\log\!\left(\dfrac{m^{1+a}}{\log(1/\beta_1)}\right),
& a=1.
\end{cases}
\]
\end{proof}

\section{Implicit Bias of Stochastic $l_p$ Steepest Descent in Small Batch Regime}
In this section, we consider an extreme scenario where the batch size is $1$. We will show that in this regime, the implicit bias of stochastic $l_p$ steepest descent differs fundamentally from its full-batch counterpart.

\subsection{Spectral-SGD and Normalized-SGD are Equivalent when Batchsize=1}\label{app:Spec_normal_equivalent}
For the spectral descent direction, the optimizer is not unique in general. Following Muon's definition, for a gradient matrix
$\G_t = \mathbf{U}_t \bsigma_t \V_t^\top,$
we take the canonical spectral descent direction to be $-\mathbf{U}_t \V_t^\top.$
\begin{lemma}[Equivalence of Spectral and Frobenius Descent for Single-Sample]
\label{lem:spec_frob_equiv}
    Consider the update using a single sample $(\xb_i, y_i)$. The steepest descent directions defined by the Spectral norm ($\snorm{\cdot}{\infty}$) and the Frobenius norm ($\snorm{\cdot}{2}$) are identical:
\[
    \Deltab^{\mathrm{Spec}}
    =
    \Deltab^{\mathrm{Frob}}
    \in
    \argmax_{\snorm{\Deltab}{\infty}\le 1}
    \inp{-\nabla \ell_i(\W)}{\Deltab}
    \cap
    \argmax_{\snorm{\Deltab}{2}\le 1}
    \inp{-\nabla \ell_i(\W)}{\Deltab}.
\]

\end{lemma}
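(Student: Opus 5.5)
The key structural fact is that a single-sample gradient has rank one. By Lemma~\ref{lem:CE logit loss properties}, $\nabla \ell_i(\W) = -(\eb_{y_i}-\s_i)\xb_i^\top$, where $\s_i=\sft{\W\xb_i}$. Set $\ub \coloneqq \eb_{y_i}-\s_i$ and $\vb\coloneqq \xb_i$, so that $-\nabla\ell_i(\W)=\ub\vb^\top$. If $\ub=\mathbf{0}$ or $\vb=\mathbf{0}$, the gradient vanishes and every feasible $\Deltab$ is a maximizer. The paper's convention (the canonical direction from the empty SVD) then gives $\Deltab=\mathbf{0}$ for both maps, so the claim is trivial. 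I will therefore assume both vectors are nonzero. In fact $\ub\neq 0$ always holds, because softmax probabilities are strictly positive.

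\textbf{Compact SVD and the two canonical directions.} Writing $\ub\vb^\top = \|\ub\|_2\|\vb\|_2\,\hat{\ub}\hat{\vb}^\top$ with $\hat\ub=\ub/\|\ub\|_2$ and $\hat\vb=\vb/\|\vb\|_2$ gives the compact SVD. There is one singular value $\sigma_1=\|\ub\|_2\|\vb\|_2$, with $\mathbf{U}=\hat\ub$ and $\V=\hat\vb$.

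For the spectral norm, the Muon-style canonical choice is therefore $\Deltab^{\mathrm{Spec}}=\mathbf{U}\V^\top=\hat\ub\hat\vb^\top$. This is well defined despite the sign ambiguity of singular vectors, since the pair $(\hat\ub,\hat\vb)$ can only be flipped jointly.

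For the Frobenius norm, the maximizer is unique: $\Deltab^{\mathrm{Frob}}=\ub\vb^\top/\|\ub\vb^\top\|_F$. Because $\|\ub\vb^\top\|_F=\|\ub\|_2\|\vb\|_2$, this also equals $\hat\ub\hat\vb^\top$. So the two directions coincide.

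\textbf{Membership in both argmax sets.} It remains to check that $\Deltab^\star\coloneqq\hat\ub\hat\vb^\top$ lies in both argmax sets. It is rank one with unit singular value, so $\snorm{\Deltab^\star}{\infty}=\snorm{\Deltab^\star}{2}=1$, and it is feasible for both problems. Its objective value is $\inp{\ub\vb^\top}{\hat\ub\hat\vb^\top}=\|\ub\|_2\|\vb\|_2$.

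By duality, the optimum of the spectral problem is $\snorm{\ub\vb^\top}{1}$ and the optimum of the Frobenius problem is $\snorm{\ub\vb^\top}{2}$. For a rank-one matrix every Schatten norm equals $\sigma_1=\|\ub\|_2\|\vb\|_2$. Hence $\Deltab^\star$ attains the maximum in both problems, which proves the stated intersection.

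\textbf{Expected obstacle.} The only delicate point is non-uniqueness in the spectral case. The spectral argmax set is larger: it contains $\hat\ub\hat\vb^\top+\mathbf{P}$ for any $\mathbf{P}$ supported on the orthogonal complements with $\snorm{\mathbf{P}}{\infty}\le1$. The lemma therefore has to be read as being about the canonical choice $\mathbf{U}\V^\top$ stated before it. I will state this explicitly and note that the equivalence holds for that selection, which is exactly what Muon/Spectral-SGD implements.
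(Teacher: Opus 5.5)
Your proposal is correct and takes essentially the same route as the paper. Both arguments use that $-\nabla\ell_i(\mathbf{W})=(\mathbf{e}_{y_i}-\mathbf{s}_i)\mathbf{x}_i^\top$ is rank one, so the Muon-style direction $\mathbf{U}\mathbf{V}^\top$ from its compact SVD coincides with the Frobenius-normalized gradient; your explicit duality check that this matrix attains both optima, and your note that the spectral argmax is not a singleton (so the lemma concerns the canonical selection), make precise what the paper's proof only asserts.
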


\begin{proof}
    Recall that the negative gradient for sample $i$ is the outer product $-\nabla \ell_i(\W) = (\eb_{y_i} - \s_i)\xb_i^\top$. Being an outer product of two vectors, this matrix has rank 1. Its singular value decomposition is simply $\sigma_1 \ub \vb^\top$, where $\sigma_1 = \|\eb_{y_i} - \s_i\|_2 \|\xb_i\|_2$, $\ub = \frac{\eb_{y_i} - \s_i}{\|\eb_{y_i} - \s_i\|_2}$, and $\vb = \frac{\xb_i}{\|\xb_i\|_2}$.
    
    For the Frobenius norm (Schatten 2-norm), the optimal direction is the normalized gradient:
    \[
        \Deltab^{\text{Frob}} = \frac{-\nabla \ell_i(\W)}{\snorm{\nabla \ell_i(\W)}{2}} = \frac{(\eb_{y_i} - \s_i)\xb_i^\top}{\sigma_1} = \ub \vb^\top.
    \]
    For the Spectral norm (Schatten $\infty$-norm), the inner product is maximized when $\Deltab$ aligns with the top left and right singular vectors:
    \[
        \Deltab^{\text{spec}} = \ub \vb^\top.
    \]
    Since both optimization problems yield the same rank-1 matrix $\ub \vb^\top$, the update directions are identical.
\end{proof}

\subsection{Two Special Cases for SignSGD and Normalized-SGD (Batch size=1)}\label{ap:twospecial}
\subsubsection{Data Construction}
We consider a dataset $\mathcal{D} = \{(\xb_i, y_i)\}_{i=1}^n$ with $K$ classes constructed under an \textit{orthogonal scale-skewed} setting. Specifically, for the $i$-th sample with label $y_i$, the input feature is aligned with the canonical basis vector of its class, given by $\mathbf{x}_i = \alpha_i \eb_{y_i}$, where $\eb_{y_i} \in \mathbb{R}^K$ denotes the standard basis vector and $\alpha_i > 0$ represents the arbitrary, heterogeneous scale of the sample.

\subsubsection{Implicit Bias of SignSGD (Batch size = 1) on Orthogonal Scale-Skewed Dataset}

In this section, we analyze the implicit bias of SignSGD with a batch size of $b=1$ (per-sample update) under the Random Reshuffling scheme on the Orthogonal Scale-Skewed dataset.

\textbf{Definitions.} 
For a sample $i$ with label $y_i$, let us define a \textit{voting vector} $\vb_i \in \R^k$ corresponding to the target alignment. Specifically:
\[
\vb_i[c] = \begin{cases} 
1 & \text{if } c = y_i, \\
-1 & \text{if } c \neq y_i.
\end{cases}
\]
We define the \textit{Stochastic Sign Bias Matrix} as the average of the individual sign-gradient directions over the dataset:
\[
\bar{\W}_{\text{sign}} \coloneqq \sum_{i=1}^n \vb_i \sign{\xb_i}^\top.
\]

\begin{theorem}[Global Convergence and Implicit Bias of SignSGD]
\label{thm:sign_convergence}
Consider the SignSGD algorithm with batch size $b=1$ initialized at $\W_0 = \mathbf{0}$ and trained on the Orthogonal Scale-Skewed dataset $\mathcal{D}$. The step size satisfies $\eta_t = c \cdot t^{-a}$ for $t \ge 1$ with constants $c > 0$ and $a \in (0, 1]$. Then:
\begin{enumerate}
    \item \textbf{Loss Convergence:} The training loss converges to zero asymptotically:
    \[
        \lim_{t \to \infty} \mathcal{L}(\W_t) = 0.
    \]
    \item \textbf{Implicit Bias:} The parameter direction converges to the Stochastic Sign Bias Matrix:
    \[
        \lim_{t \to \infty} \frac{\W_t}{\|\W_t\|_F} = \frac{\bar{\W}_{\text{sign}}}{\|\bar{\W}_{\text{sign}}\|_F}.
    \]
\end{enumerate}
\end{theorem}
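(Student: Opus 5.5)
The key observation is that on the Orthogonal Scale-Skewed dataset the per-sample SignSGD direction does not depend on the current iterate. The plan is to use this to write $\W_t$ in closed form as a weighted sum of fixed rank-one matrices, and then read off both claims from the weights.

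\textbf{Step 1: the update direction is constant for each sample.} For sample $i$ we have $-\nabla\ell_i(\W)=(\eb_{y_i}-\s_i)\xb_i^\top=\alpha_i(\eb_{y_i}-\s_i)\eb_{y_i}^\top$, where $\s_i=\sft{\W\xb_i}$ has all entries in $(0,1)$.
\begin{itemize}
\item The only nonzero column is column $y_i$.
\item In that column, the entry at $c=y_i$ is $1-s_{iy_i}>0$, and each entry at $c\neq y_i$ is $-s_{ic}<0$.
\end{itemize}
Hence $\sign{-\nabla\ell_i(\W)}=\vb_i\eb_{y_i}^\top=\vb_i\sign{\xb_i}^\top$ for every $\W$, taking $\sign{0}=0$ on the zero columns. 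This choice is the natural maximizer in the $\ell_\infty$ steepest-descent set, and I will fix it. Consequently
\[
\W_t=\sum_{s<t}\eta_s\,\vb_{i_s}\eb_{y_{i_s}}^\top=\sum_{i=1}^n N_i(t)\,\vb_i\eb_{y_i}^\top,
\qquad
N_i(t)\coloneqq\sum_{s<t,\ i_s=i}\eta_s .
\]

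\textbf{Step 2: random reshuffling balances the weights.} Here $b=1$, so $m=n$, and each epoch $\{kn,\dots,(k+1)n-1\}$ visits every sample exactly once. Let $S_t=\sum_{s<t}\eta_s$. Since $\eta_s$ is non-increasing, within one full epoch any two samples receive step sizes that differ by at most $\eta_{kn}-\eta_{(k+1)n-1}$. Summing over epochs telescopes to at most $\eta_0$. The current partial epoch contributes at most a further $\eta_0$. Therefore, for every realization of the shuffling,
\[
\bigl|N_i(t)-N_j(t)\bigr|\le 2\eta_0 ,
\qquad\text{hence}\qquad
\Bigl|N_i(t)-\tfrac{S_t}{n}\Bigr|\le 2\eta_0 \quad\text{for all } i,j .
\]
This gives $\W_t=\frac{S_t}{n}\bar{\W}_{\text{sign}}+\mathbf{E}_t$ with $\|\mathbf{E}_t\|_F\le 2\eta_0\sum_i\|\vb_i\eb_{y_i}^\top\|_F=\mathcal{O}(n\sqrt{k}\,\eta_0)$, a bound independent of $t$. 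I expect this deterministic balancing bound, valid for every shuffle, to be the main point needing care. The telescoping argument must handle both the partial epoch and the initial step sizes with $\eta_0\le c$.

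\textbf{Step 3: conclude both claims.} For the implicit bias, $a\in(0,1]$ gives $S_t\to\infty$. Also $\bar{\W}_{\text{sign}}\neq 0$, since its column $y$ equals $n_y\vb_y$, where $n_y\ge1$ is the class count and $\vb_y$ is the common voting vector of class $y$. Dividing $\W_t$ by its Frobenius norm, the bounded error $\mathbf{E}_t$ vanishes relative to $\frac{S_t}{n}\bar{\W}_{\text{sign}}$, which yields the stated direction limit.

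For loss convergence, the logits of sample $i$ are $\W_t\xb_i=\alpha_i\W_t\eb_{y_i}=\alpha_i\bigl(\sum_{j:y_j=y_i}N_j(t)\bigr)\vb_{y_i}$. The margin against any class $c\neq y_i$ is therefore
\[
(\eb_{y_i}-\eb_c)^\top\W_t\xb_i
=2\alpha_i\sum_{j:y_j=y_i}N_j(t)
\ge 2\alpha_i\,n_{y_i}\Bigl(\tfrac{S_t}{n}-2\eta_0\Bigr)\to\infty .
\]
Using the expression $\Lc=\frac1n\sum_i\log\bigl(1+\sum_{c\neq y_i}e^{-\text{margin}}\bigr)$ from the proof of Lemma~\ref{lem:sep}, this gives $\Lc(\W_t)\to0$.
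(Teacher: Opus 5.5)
Your proof is correct, but your argument for the direction differs from the paper's. The two proofs start from the same observation: each per-sample step is the fixed matrix $\vb_i\eb_{y_i}^\top$. Your loss argument, with margins $2\alpha_i\sum_{j:y_j=y_i}N_j(t)\to\infty$, is essentially the paper's computation $(\W_t)_{:,y}=S_y(t)(2\eb_y-\mathbf{1})$.

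For the implicit bias, the paper works at epoch boundaries. It writes $\W_{(r+1)n}=\W_{rn}+\eta_{rn}\bar{\W}_{\text{sign}}+E_r$ and bounds the within-epoch drift by the mean value theorem applied to $ct^{-a}$. This gives $\|E_r\|_F=\mathcal{O}(r^{-(a+1)})$, which is summable, so the paper concludes along the subsequence $\W_{Rn}$. It then passes to the full sequence with a separate remark that intra-epoch deviations are $\mathcal{O}(\eta_{rn}n)$ against a diverging norm.

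Your telescoping bound on the spread $|N_i(t)-N_j(t)|$ replaces both steps at once, and buys three things:
\begin{itemize}
\item It holds at every iterate and for every shuffle, so you need no subsequence-to-sequence step.
\item It uses only monotonicity and divergence of $\sum_s\eta_s$, so it covers any non-increasing, non-summable schedule, not just $ct^{-a}$.
\item The same argument carries over unchanged to the paper's per-sample Normalized-SGD theorem.
\end{itemize}
The paper's mean value theorem estimate yields a per-epoch decay rate, but that rate is used only to get summability, which your bound gives directly.

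One small fix concerns the constant. If $\eta_0\le c=\eta_1$, as assumed elsewhere in the paper, the schedule is not monotone at index $0$, so $2\eta_0$ should be replaced by something like $3\max_s\eta_s$. Epoch $0$, the later full epochs (telescoping from $\eta_n$), and the partial epoch each contribute at most $\max_s\eta_s$ to the spread. Since only the boundedness of $\mathbf{E}_t$ matters, nothing else changes.
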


\begin{proof}
    First, we characterize the update direction for a single sample $(\xb_i, y_i)$. The gradient of the Cross-Entropy loss is $\nabla \ell_i(\W_t) = (\sft{\W_t \xb_i} - \eb_{y_i}) \xb_i^\top$. The update direction depends on $-\sign{\nabla \ell_i(\W_t)}$. Using the property $\sign{\mathbf{u}\mathbf{v}^\top} = \sign{\mathbf{u}}\sign{\mathbf{v}}^\top$, and observing that the softmax probability is always less than 1, we have:
    \[
    -\sign{\nabla \ell_i(\W_t)} = \sign{\eb_{y_i} - \sft{\W_t \xb_i}} \sign{\xb_i}^\top = \vb_i \sign{\xb_i}^\top \eqqcolon \M_i^{\text{sign}}.
    \]
    Crucially, this matrix $\M_i^{\text{sign}}$ depends only on the data sample $i$ and is independent of the weights $\W_t$.

    \textbf{Loss Convergence.}
    Since the update is always $\eta_\tau \M_{n_\tau}^{\text{sign}}$. The weights at time $t$ are:
\begin{align*}
    \W_t = \sum_{\tau=0}^{t-1} \eta_\tau \M_{n_\tau}^{\text{sign}}.
\end{align*}

$\M_n^{\text{sign}}$ is non-zero only at column $y_n$. For the column $y$ of $\W_t$, let $\mathcal{T}_y(t)$ be the set of time steps where class $y$ was sampled.
The diagonal entry accumulates $+\eta$, while off-diagonal entries accumulate $-\eta$. Let $S_y(t) = \sum_{\tau \in \mathcal{T}_y(t)} \eta_\tau$.
\begin{align*}
    (\W_t)_{:, y} = S_y(t) \cdot (2\eb_y - \mathbf{1}).
\end{align*}

For sample $i$ (class $y$), logits are $\mathbf{z} = \alpha_i (\W_t)_{:, y}$.
Target logit: $z_y = \alpha_i S_y(t) (2-1) = \alpha_i S_y(t)$.
Non-target logit ($k \neq y$): $z_k = \alpha_i S_y(t) (0-1) = -\alpha_i S_y(t)$.
The margin is $\Delta z_i(t) = z_y - z_k = 2 \alpha_i S_y(t)$.
Since $\eta_t$ is not summable ($a \le 1$) and Random Reshuffling ensures constant visits, $S_y(t) \to \infty$. Thus, the margin diverges to $+\infty$. $\ell_i(\W_t) = \log(1 + (K-1)e^{-2\alpha_i S_y(t)})$. As $S_y(t) \to \infty$, $\ell_i \to 0$. So $\Lc=\frac{1}{n}\sum_{i=1}^n\ell_i \rightarrow 0$.\\

    \textbf{Implicit Bias.}
     The algorithm operates in epochs. In epoch $r$, the data indices are permuted by $\sigma_r$. For analysis, let us consider $t > T_0$ so the signs are stable. Let $\W_{rn}$ denote the weights at the start of epoch $r$. The weight update after one full epoch is:
    \begin{align*}
        \W_{(r+1)n} &= \W_{rn} + \sum_{k=1}^n \eta_{rn+k-1} \M_{\sigma_r(k)}^{\text{sign}} \\
        &= \W_{rn} + \eta_{rn} \sum_{k=1}^n \M_{\sigma_r(k)}^{\text{sign}} + \sum_{k=1}^n (\eta_{rn+k-1} - \eta_{rn}) \M_{\sigma_r(k)}^{\text{sign}}.
    \end{align*}
    Since $\sigma_r$ is a permutation of $\{1, \dots, n\}$, the sum of update matrices is invariant to the order:
    \[
    \sum_{k=1}^n \M_{\sigma_r(k)}^{\text{sign}} = \sum_{i=1}^n \M_i^{\text{sign}} = \bar{\W}_{\text{sign}}.
    \]
    Thus, the recurrence relation is:
    \[
    \W_{(r+1)n} = \W_{rn} + \eta_{rn} \bar{\W}_{\text{sign}} + E_r, \quad \text{where } E_r = \sum_{k=1}^n (\eta_{rn+k-1} - \eta_{rn}) \M_{\sigma_r(k)}^{\text{sign}}.
    \]

    We bound the drift term $E_r$. Let $f(t) = c \cdot t^{-a}$. By the Mean Value Theorem, for $k \in \{1, \dots, n\}$, there exists $\xi \in [rn, rn+n]$ such that $|\eta_{rn+k-1} - \eta_{rn}| \leq |f'(\xi)| \cdot n \leq c a (rn)^{-a-1} n$.
    The Frobenius norm of the update matrix is constant: $\|\M_i^{\text{sign}}\|_F = \|\vb_i\|_2 \|\sign{\xb_i}\|_2 \leq \sqrt{K} \sqrt{1}$. Let $C_M = \sqrt{K}$.
    Then, the accumulated error in epoch $r$ is bounded by:
    \[
    \|E_r\|_F \leq \sum_{k=1}^n |\eta_{rn+k-1} - \eta_{rn}| \|\M_{\sigma_r(k)}^{\text{sign}}\|_F \leq n \cdot \frac{c a n}{(rn)^{a+1}} C_M = \frac{c a C_M n^2}{(rn)^{a+1}}.
    \]

    By the triangle inequality and the bound on $\|E_r\|_F$, we have
\begin{align*}
\left\|\sum_{r=0}^{R-1} E_r\right\|_F
&\le \sum_{r=0}^{R-1} \|E_r\|_F
\le \sum_{r=1}^{R-1} \frac{c a C_M n^2}{(rn)^{a+1}}
= \frac{c a C_M n^2}{n^{a+1}} \sum_{r=1}^{R-1} \frac{1}{r^{a+1}}.
\end{align*}
Since $a\in(0,1]$, we have $a+1>1$, hence the series $\sum_{r=1}^{\infty} r^{-(a+1)}$ converges. Therefore there exists a constant $C_E<\infty$ such that for all $R$,
\[
\left\|\sum_{r=0}^{R-1} E_r\right\|_F \le C_E.
\]
Recalling that $S_R=\sum_{r=0}^{R-1}\eta_{rn}\to\infty$, it follows that
\[
\lim_{R\to\infty}\frac{\left\|\sum_{r=0}^{R-1} E_r\right\|_F}{S_R}
\le \lim_{R\to\infty}\frac{C_E}{S_R}=0.
\]
    Therefore, the accumulated error term becomes negligible compared to the signal direction $\bar{\W}_{\text{sign}}$. The parameter direction converges to:
    \[
    \lim_{R \to \infty} \frac{\W_{Rn}}{\|\W_{Rn}\|_F} = \frac{\bar{\W}_{\text{sign}}}{\|\bar{\W}_{\text{sign}}\|_F}.
    \]
Since the parameter change within each epoch is $O(\eta_{rn} n)$ while $\|\W_{rn}\|_F \to\infty$, the relative deviation inside an epoch vanishes, and hence convergence of the subsequence $\{\W_{rn}\}$ implies convergence of the full sequence $\{\W_t\}$.
\end{proof}

\subsubsection{Implicit Bias of Normalized-SGD (Batch size = 1) on Orthogonal Scale-Skewed Dataset}

\textbf{Algorithm.}
We analyze Per-sample Normalized-SGD with random shuffling and without replacement in the epoch. Training proceeds in epochs $r=0, 1, \dots$; at the start of each epoch, the dataset indices are shuffled via a random permutation $\sigma_r$. At the global time step $t$ (the $k$-th iteration of epoch $r$), the parameter is updated using the sample $i = \sigma_r(k)$ as follows:
\begin{align*}
    \W_{t+1} = \W_t - \eta_t \frac{\nabla \ell_i(\W_t)}{\|\nabla \ell_i(\W_t)\|_F},
\end{align*}
where $\ell_i(\cdot)$ is the Cross-Entropy loss and the learning rate follows a decay schedule $\eta_t = c \cdot t^{-a}$ with $a \in (0, 1]$.

\textbf{Geometric Definitions.}
To analyze the optimization trajectory, we define the geometric properties of the target solution. For a sample $i$ with label $y_i$, let the \textit{normalized centered label vector} be $\bar{\mathbf{u}}_{y_i} \triangleq (\eb_{y_i} - \frac{1}{K}\mathbf{1}) / \|\eb_{y_i} - \frac{1}{K}\mathbf{1}\|_2$, and the \textit{normalized input direction} be $\bar{\mathbf{x}}_i \triangleq \xb_i / \|\xb_i\|_2 = \eb_{y_i}$ (due to the data construction).
We define the \textit{Canonical Update Matrix} for sample $i$ as:
\begin{align*}
    \M_i \triangleq \bar{\mathbf{u}}_{y_i} \bar{\mathbf{x}}_i^\top.
\end{align*}
The \textit{Specific Bias Matrix} is defined as the sum of these canonical updates over the dataset: $\bar{\W} \triangleq \sum_{i=1}^n \M_i$.

\begin{theorem}[Loss Convergence and Implicit Bias of Spectral-SGD/Normalized-SGD]
\label{thm:global_convergence_bias}
Consider the Random Reshuffling Per-sample Normalized-SGD algorithm initialized at $\W_0 = \mathbf{0}$ and trained on the Orthogonal Scale-Skewed dataset $\mathcal{D}$. The step size satisfies $\eta_t = c \cdot t^{-a}$ for $t \ge 1$ with constants $c > 0$ and $a \in (0, 1]$.
Then, the algorithm satisfies:
\begin{enumerate}
    \item \textbf{Loss Convergence:} The training loss converges to zero:
    \begin{align*}
        \lim_{t \to \infty} \mathcal{L}(\W_t) = 0.
    \end{align*}
    \item \textbf{Implicit Bias:} The parameter matrix direction converges to the normalized Specific Bias Matrix $\bar{\W}$:
    \begin{align*}
        \lim_{t \to \infty} \frac{\W_t}{\|\W_t\|_F} = \frac{\bar{\W}}{\|\bar{\W}\|_F}.
    \end{align*}
\end{enumerate}
\end{theorem}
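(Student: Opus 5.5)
The plan is to show that Per-sample Normalized-SGD on the Orthogonal Scale-Skewed dataset reduces exactly to the fixed-direction structure already exploited in the proof of Theorem~\ref{thm:sign_convergence}. Concretely, I will show that every update equals $\eta_t \M_{i_t}$, with $\M_i=\bar{\mathbf{u}}_{y_i}\bar{\mathbf{x}}_i^\top$ independent of $\W_t$. Once this holds, both claims follow by repeating the sign argument verbatim with $\M_i^{\text{sign}}$ replaced by $\M_i$.

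\textbf{Step 1 (invariant structure and constancy of the direction).} For sample $i$ with label $y$ we have $\xb_i=\alpha_i\eb_y$. Hence $-\nabla\ell_i(\W)=\alpha_i(\eb_y-\s_i)\eb_y^\top$, where $\s_i=\sft{\alpha_i\W[:,y]}$, and only column $y$ is modified. I will prove by induction, starting from $\W_0=\mathbf{0}$, that each column has the form $\W_t[:,y]=S_y(t)\,\bar{\mathbf{u}}_y$ for a scalar $S_y(t)\ge 0$.
\begin{itemize}
\item Suppose column $y$ has this form. Its non-target entries are then all equal, so $\s_i$ has equal non-target entries $s_o=(1-s_y)/(K-1)$.
\item A direct computation then gives
\[
\eb_y-\s_i=(1-s_y)\tfrac{K}{K-1}\bigl(\eb_y-\tfrac1K\mathbf 1\bigr),
\]
which is a positive multiple of $\bar{\mathbf u}_y$ because $s_y<1$.
\item Normalizing in Frobenius norm, with $\|\eb_y\|_2=1$, the update direction is exactly $\bar{\mathbf u}_y\eb_y^\top=\M_i$.
\item Therefore $S_y$ increases by $\eta_t$ and the form is preserved.
\end{itemize}
This is the only genuinely new step; everything else mirrors the sign case. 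The main obstacle is verifying this self-consistent symmetry, i.e.\ that equal non-target logits are preserved. The closed-form identity for $\eb_y-\s_i$ above handles it.

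\textbf{Step 2 (loss convergence).} By Step 1, the logits of sample $i$ are $\alpha_i S_y(t)\bar{\mathbf u}_y$. Every pairwise margin then equals $\alpha_i S_y(t)\,\|\eb_y-\tfrac1K\mathbf1\|_2^{-1}>0$. Random reshuffling visits each sample once per epoch and $\sum_t\eta_t=\infty$, so $S_y(t)\to\infty$ for every class. Hence
\[
\ell_i(\W_t)=\log\bigl(1+(K-1)e^{-\text{margin}}\bigr)\to 0,
\]
and so $\Lc(\W_t)\to 0$.

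\textbf{Step 3 (implicit bias).} Write the epoch recursion $\W_{(r+1)n}=\W_{rn}+\eta_{rn}\bar{\W}+E_r$, using that $\sum_k\M_{\sigma_r(k)}=\bar{\W}$ for any permutation. Since $\|\M_i\|_F=1$ and, by the mean value theorem, $|\eta_{rn+k-1}-\eta_{rn}|\le can(rn)^{-a-1}$, we get $\|E_r\|_F=O(r^{-a-1})$. This is summable, so $\sum_r E_r$ stays bounded while $\sum_r\eta_{rn}\to\infty$. Consequently $\W_{Rn}/\|\W_{Rn}\|_F\to\bar{\W}/\|\bar{\W}\|_F$. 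Finally, the within-epoch movement is $O(n\eta_{rn})$, which is negligible relative to $\|\W_{rn}\|_F\to\infty$; this extends convergence from the subsequence to the full sequence, exactly as at the end of the proof of Theorem~\ref{thm:sign_convergence}.
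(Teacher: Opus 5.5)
Your proposal is correct and follows essentially the same route as the paper. Both arguments use an induction showing that every update equals $\eta_t$ times the fixed canonical matrix $\bar{\mathbf u}_{y_i}\bar{\mathbf x}_i^\top$, then the divergent-margin argument for the loss, then the epoch recurrence with summable drift $E_r$ for the direction; the only cosmetic difference is that you build the column form $S_y(t)\bar{\mathbf u}_y$ into the induction hypothesis, whereas the paper inducts on equality of the off-diagonal entries in each column and derives the column form afterwards.
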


\begin{proof}
The proof relies on the invariant geometric update property established in Lemma \ref{lem:invariant_gradient}, which states that for any step $t$ utilizing sample $i$, the update is strictly $\M_i$. We analyze the convergence of the loss and the direction separately.

\begin{lemma}[Invariant Normalized Gradient]
\label{lem:invariant_gradient}
Consider the Algorithm on the orthogonal scale-skewed dataset with $\W_0 = \mathbf{0}$. For any time step $t \ge 0$, assuming the loss is not exactly zero, the normalized negative gradient direction is invariant to the current parameter magnitude $\|\W_t\|_F$ and the sample scale $\alpha_i$. Specifically, if sample $i$ is selected at step $t$, the update direction is strictly equal to the Canonical Update Matrix:
\begin{align*}
    -\frac{\nabla \ell_i(\W_t)}{\|\nabla \ell_i(\W_t)\|_F} = \M_i.
\end{align*}
\end{lemma}

\begin{proof}
The proof relies on mathematical induction to establish a symmetry property of the weight matrix $\W_t$.

Let the current sample be indexed by $i$ with label $y = y_i$ and input $\xb_i = \alpha_i \eb_{y_i}$. Let $\mathbf{p} \in \mathbb{R}^K$ be the softmax probability vector where $p_k = \frac{\exp((\W_t \xb_i)_k)}{\sum_{j} \exp((\W_t \xb_i)_j)}$.
The gradient of the Cross-Entropy loss is given by $\nabla \ell_i(\W_t) = (\mathbf{p} - \eb_y) \xb_i^\top$.
The normalized negative gradient is:
\begin{align} \label{eq:norm_grad_decomp}
    -\frac{\nabla \ell_i(\W_t)}{\|\nabla \ell_i(\W_t)\|_F} = \frac{\eb_y - \mathbf{p}}{\|\eb_y - \mathbf{p}\|_2} \cdot \frac{\xb_i^\top}{\|\xb_i\|_2}.
\end{align}
Observing that $\frac{\xb_i^\top}{\|\xb_i\|_2} = \bar{\mathbf{x}}_i^\top$ matches the right component of $\M_i$, we must prove the left component matches $\bar{\mathbf{u}}_y$.

We claim that for all $t \ge 0$, the matrix $\W_t$ satisfies the \textit{Column-wise Off-diagonal Equality} property: for any column $j \in \{1, \dots, K\}$, all off-diagonal entries are equal. That is, $(\W_t)_{p,j} = (\W_t)_{q,j}$ for all $p, q \neq j$.

\textit{Base Case ($t=0$):} $\W_0 = \mathbf{0}$, so all entries are 0. The property holds.

\textit{Inductive Step:} Assume the property holds for $\W_t$. Consider the update with sample $i$ (class $y$).
The logits are $\mathbf{z} = \W_t \xb_i = \alpha_i (\W_t)_{:, y}$.
By the induction hypothesis, the column $(\W_t)_{:, y}$ has equal off-diagonal entries. Thus, for all non-target classes $k \neq y$, the logits $z_k$ are identical.
Consequently, the softmax probabilities for non-target classes are identical:
\[
p_k = \frac{e^{z_k}}{e^{z_y} + \sum_{j \neq y} e^{z_j}} = \epsilon, \quad \forall k \neq y.
\]
The gradient update is proportional to $(\eb_y - \mathbf{p})\xb_i^\top$. Since $\xb_i$ is zero everywhere except index $y$, only the $y$-th column of $\W$ is updated.
The update vector for this column is proportional to $\mathbf{v} = \eb_y - \mathbf{p}$.
Its components are $v_y = 1 - p_y$ and $v_k = -\epsilon$ for $k \neq y$.
Since the update adds the same value ($-\eta_t \cdot C \cdot \epsilon$) to all off-diagonal entries of column $y$, and they were previously equal, they remain equal in $\W_{t+1}$. The property is preserved.

Using the symmetry $p_k = \epsilon$ for $k \neq y$, and the fact $\sum p_j = 1$, we have $p_y + (K-1)\epsilon = 1$, implies $1 - p_y = (K-1)\epsilon$.
The vector $(\eb_y - \mathbf{p})$ can be written as:
\[
\eb_y - \mathbf{p} = \begin{bmatrix} -\epsilon \\ \vdots \\ 1-p_y \\ \vdots \\ -\epsilon \end{bmatrix} = \begin{bmatrix} -\epsilon \\ \vdots \\ (K-1)\epsilon \\ \vdots \\ -\epsilon \end{bmatrix} = K\epsilon \left( \eb_y - \frac{1}{K}\mathbf{1} \right).
\]
Let $\lambda = K\epsilon$. Since the loss is non-zero, $\epsilon > 0 \implies \lambda > 0$.
Substituting this into Eq. (\ref{eq:norm_grad_decomp}):
\[
\frac{\eb_y - \mathbf{p}}{\|\eb_y - \mathbf{p}\|_2} = \frac{\lambda (\eb_y - \frac{1}{K}\mathbf{1})}{\lambda \|\eb_y - \frac{1}{K}\mathbf{1}\|_2} = \frac{\eb_y - \frac{1}{K}\mathbf{1}}{\|\eb_y - \frac{1}{K}\mathbf{1}\|_2} \equiv \bar{\mathbf{u}}_y.
\]
Thus, the normalized gradient is strictly $\M_i$.
\end{proof}

\textbf{Loss Convergence.} Recall that for sample $i$, the input is $\xb_i = \alpha_i \eb_{y_i}$ and the update matrix is $\M_i = \bar{\mathbf{u}}_{y_i} \eb_{y_i}^\top$.
Since $\M_i$ is non-zero only in the $y_i$-th column, the update at step $t$ only affects the column of $\W_t$ corresponding to the label of the current sample. Thus, the dynamics of the $K$ columns of $\W$ are mutually independent.

Let $(\W_t)_{:, c}$ denote the $c$-th column of $\W_t$. Since $\W_0 = \mathbf{0}$, the column at time $t$ is the sum of all historical updates applied to class $c$. Let $\mathcal{T}_c(t) = \{ \tau < t \mid \text{sample at step } \tau \text{ has label } c \}$ be the set of time steps where class $c$ was sampled. We have:
\begin{align*}
    (\W_t)_{:, c} = \sum_{\tau \in \mathcal{T}_c(t)} \eta_\tau (\M_{i_\tau})_{:, c} = \left( \sum_{\tau \in \mathcal{T}_c(t)} \eta_\tau \right) \bar{\mathbf{u}}_c.
\end{align*}
Define the cumulative step size for class $c$ as $S_c(t) \triangleq \sum_{\tau \in \mathcal{T}_c(t)} \eta_\tau$.

Consider an arbitrary sample $i$ with label $y$ and scale $\alpha_i$. The logit vector is $\mathbf{z} = \W_t \xb_i = \alpha_i (\W_t)_{:, y}$.
Substituting the column expression:
\begin{align*}
    \mathbf{z} = \alpha_i S_y(t) \bar{\mathbf{u}}_y.
\end{align*}
We analyze the margin between the target class $y$ and any non-target class $k \neq y$. Recall $\bar{\mathbf{u}}_y = \lambda (\eb_y - \frac{1}{K}\mathbf{1})$ for some $\lambda > 0$.
\begin{align*}
    z_y &= \alpha_i S_y(t) \cdot \lambda (1 - 1/K), \\
    z_k &= \alpha_i S_y(t) \cdot \lambda (0 - 1/K).
\end{align*}
The margin is $\Delta z(t) = z_y - z_k = \alpha_i S_y(t) \lambda$.
Under Random Reshuffling with data completeness, class $y$ is visited at least once per epoch. Since $\eta_t = \Theta(t^{-a})$ with $a \le 1$, the series $\sum \eta_t$ diverges. Consequently, $\lim_{t \to \infty} S_y(t) = +\infty$, implying $\lim_{t \to \infty} \Delta z(t) = +\infty$.

The Cross-Entropy loss is strictly decreasing with respect to the margin:
\begin{align*}
    \ell_i(\W_t) = \log\left( 1 + \sum_{k \neq y} e^{-\Delta z(t)} \right).
\end{align*}
As $\Delta z(t) \to \infty$, the term $e^{-\Delta z(t)} \to 0$. Thus, $\lim_{t \to \infty} \ell_i(\W_t) = \log(1) = 0$.

\textbf{Implicit Bias}
We analyze the trajectory using a deterministic recurrence relation at the epoch level.

Let $\W_{rN}$ denote the weights at the start of epoch $r$. The weights at the start of the next epoch are:
\begin{align*}
    \W_{(r+1)n} &= \W_{rn} + \sum_{k=1}^n \eta_{rn+k-1} \M_{\sigma_r(k)} \\
    &= \W_{rn} + \eta_{rn} \sum_{k=1}^n \M_{\sigma_r(k)} + \sum_{k=1}^n (\eta_{rn+k-1} - \eta_{rn}) \M_{\sigma_r(k)}.
\end{align*}
Crucially, since $\sigma_r$ is a permutation of $\{1, \dots, n\}$, the sum of update matrices is invariant and exactly equals $\bar{\W} = \sum_{i=1}^n \M_i$. Thus, the recurrence relation is:
\begin{align} \label{eq:epoch_recurrence_nsd}
    \W_{(r+1)n} = \W_{rn} + \eta_{rn} \bar{\W} + E_r, \quad \text{where } E_r = \sum_{k=1}^n (\eta_{rn+k-1} - \eta_{rn}) \M_{\sigma_r(k)}.
\end{align}

We bound the drift term $E_r$. Let $f(t) = c t^{-a}$. By the Mean Value Theorem, for $k \in \{1, \dots, n\}$, there exists $\xi \in [rn, rn+n]$ such that $|\eta_{rn+k-1} - \eta_{rn}| \le |f'(\xi)| \cdot n \le c a (rn)^{-a-1} n$.
The Frobenius norm of the update matrix is constant: $\|\M_i\|_F = \|\bar{\mathbf{u}}_{y_i}\|_2 \|\bar{\mathbf{x}}_i\|_2 = 1 \cdot 1 = 1$.
Then, the accumulated error in epoch $r$ is bounded by:
\[
    \|E_r\|_F \le \sum_{k=1}^n |\eta_{rn+k-1} - \eta_{rn}| \|\M_{\sigma_r(k)}\|_F \le n \cdot \frac{c a n}{(rn)^{a+1}} \cdot 1 = \frac{c a n^2}{(rn)^{a+1}}.
\]

Summing Eq. (\ref{eq:epoch_recurrence_nsd}) from epoch $0$ to $R-1$ (with $\W_0=\mathbf{0}$):
\begin{align*}
    \W_{Rn} = \left( \sum_{r=0}^{R-1} \eta_{rn} \right) \bar{\W} + \sum_{r=0}^{R-1} E_r.
\end{align*}
By the triangle inequality and the bound on $\|E_r\|_F$, we analyze the accumulated error series:
\begin{align*}
    \left\|\sum_{r=0}^{R-1} E_r\right\|_F
    &\le \sum_{r=0}^{R-1} \|E_r\|_F
    \le \sum_{r=1}^{R-1} \frac{c a n^2}{(rn)^{a+1}} + \|E_0\|_F
    = \frac{c a n^2}{n^{a+1}} \sum_{r=1}^{R-1} \frac{1}{r^{a+1}} + \|E_0\|_F.
\end{align*}
Since $a \in (0, 1]$, we have $a+1 > 1$, hence the series $\sum_{r=1}^{\infty} r^{-(a+1)}$ converges. Therefore, there exists a constant $C_E < \infty$ such that for all $R$:
\[
    \left\|\sum_{r=0}^{R-1} E_r\right\|_F \le C_E.
\]
Let $S_R = \sum_{r=0}^{R-1} \eta_{rn}$. Since $a \le 1$, the learning rate series diverges, so $S_R \to \infty$. It follows that the ratio of error to signal vanishes:
\[
    \lim_{R \to \infty} \frac{\left\|\sum_{r=0}^{R-1} E_r\right\|_F}{S_R} \le \lim_{R \to \infty} \frac{C_E}{S_R} = 0.
\]
Therefore, the accumulated error term becomes negligible compared to the signal direction $\bar{\W}$. The parameter direction converges to the direction of the accumulated signal:
\[
    \lim_{R \to \infty} \frac{\W_{Rn}}{\|\W_{Rn}\|_F} = \lim_{R \to \infty} \frac{S_R \bar{\W} + \sum E_r}{\|S_R \bar{\W} + \sum E_r\|_F} = \frac{\bar{\W}}{\|\bar{\W}\|_F}.
\]
Since the parameter change within each epoch is $O(\eta_{rn} n)$ while $\|\W_{rn}\|_F \to\infty$, the relative deviation inside an epoch vanishes, and hence convergence of the subsequence $\{\W_{rn}\}$ implies convergence of the full sequence $\{\W_t\}$.

\end{proof}

\section{Extension to Exponential Loss}
\label{sec:extension_exp}

In this section, we demonstrate that our main convergence results (Theorems 4.1, 4.4, and 4.7) extend to the Exponential Loss. The analysis follows the unified framework established in the previous sections. We show that by choosing the Exponential Loss itself as the proxy function, i.e., $\Gc(\W) = \Lc(\W)$, all the required geometric and stochastic properties (Lemmas \ref{lem:hessian}--\ref{lem:noise_SVR_bound}) are satisfied, often with tighter constants than the Cross-Entropy case.

\subsection{Setup and Definitions}
We consider the multi-class Exponential Loss defined as:
\begin{align}
    \Lc_{\exp}(\W) \coloneqq \frac{1}{n} \sum_{i \in [n]} \sum_{c \neq y_i} e^{-(\eb_{y_i} - \eb_c)^\top \W \xb_i}.
\end{align}
For this loss, we define the proxy function simply as the loss itself:
\begin{align}
    \Gc_{\exp}(\W) \coloneqq \Lc_{\exp}(\W).
\end{align}
Note that under this definition, the compatibility ratio $\frac{\Gc_{\exp}(\W)}{\Lc_{\exp}(\W)} \equiv 1$, which trivially satisfies Lemma \ref{lem:G and L}.

\subsection{Verification of Geometric Properties}

We verify that $\Lc_{\exp}$ satisfies the gradient bounds, smoothness, and stability conditions required by our descent lemmas.

\begin{lemma}[Geometric Properties of Exponential Loss] \label{lem:exp_geometry}
    Under Assumption \ref{ass:data_bound} ($\norm{\xb_i}_1 \le R$), for any $\W, \Deltab \in \R^{k \times d}$:
    \begin{enumerate}[label=(\roman*)]
        \item \textbf{Gradient Bound:} $\gamma \Gc_{\exp}(\W) \leq \norm{\nabla \Lc_{\exp}(\W)}_{*} \leq 2R \Gc_{\exp}(\W)$.
        \item \textbf{Hessian/Smoothness:} The quadratic form is bounded by the proxy:
        \[ \xb_i^\top \Deltab^\top \nabla^2 \ell_i(\W) \Deltab \xb_i \leq 4R^2 \norm{\Deltab}^2 e^{-(\eb_{y_i} - \eb_c)^\top \W \xb_i}. \]
        Consequently, the descent lemma second-order term is bounded by $2 R^2 \norm{\Deltab}^2 \Gc_{\exp}(\W)$.
        \item \textbf{Stability (Ratio Property):} $\Gc_{\exp}(\W+\Deltab) \leq e^{2R \ninf{\Deltab}} \Gc_{\exp}(\W)$.
    \end{enumerate}
\end{lemma}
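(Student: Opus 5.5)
For each item I will argue directly from the explicit form of the exponential loss, reusing the cross-entropy proofs wherever the structure matches. Write $z_{ic}:=(\eb_{y_i}-\eb_c)^\top\W\xb_i$ and $w_{ic}:=e^{-z_{ic}}$, so that $\Gc_{\exp}(\W)=\frac1n\sum_i\sum_{c\neq y_i}w_{ic}$. Then
\[
\nabla\Lc_{\exp}(\W)=-\frac1n\sum_i\sum_{c\neq y_i}w_{ic}(\eb_{y_i}-\eb_c)\xb_i^\top .
\]

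\textbf{Item (i).} For the lower bound, take any $\A$ with $\norm{\A}\le1$. Then
\[
\inp{\A}{-\nabla\Lc_{\exp}}=\frac1n\sum_i\sum_{c\neq y_i}w_{ic}(\eb_{y_i}-\eb_c)^\top\A\xb_i ,
\]
and each factor $(\eb_{y_i}-\eb_c)^\top\A\xb_i$ is at least $\min_{i,c}(\eb_{y_i}-\eb_c)^\top\A\xb_i$. Since the weights $w_{ic}$ are nonnegative and do not depend on $\A$, I can maximize over $\A$ to get $\norm{\nabla\Lc_{\exp}}_*\ge\gamma\Gc_{\exp}$, exactly as in Lemma~\ref{lem:G and gradient}. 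For the upper bound, Lemma~\ref{lem:snorm dominate} gives $\norm{\cdot}_*\le\none{\cdot}$. Combined with $\none{\ub\vb^\top}=\|\ub\|_1\|\vb\|_1$ and $\|\eb_{y_i}-\eb_c\|_1=2$, the triangle inequality yields the bound $2R\Gc_{\exp}$.

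\textbf{Item (ii).} The per-class loss $e^{-z_{ic}}$ has Hessian in $\W$ given by $e^{-z_{ic}}\,\big((\eb_{y_i}-\eb_c)^\top\Deltab\xb_i\big)^2$ along direction $\Deltab$. It therefore suffices to bound $|(\eb_{y_i}-\eb_c)^\top\Deltab\xb_i|\le 2\ninf{\Deltab}\|\xb_i\|_1\le 2R\norm{\Deltab}$, using Hölder and Lemma~\ref{lem:snorm dominate}. Squaring gives $4R^2\norm{\Deltab}^2$. In the Taylor remainder these terms are summed over $c\neq y_i$, averaged over $i$, and multiplied by $\tfrac12$, which produces $2R^2\norm{\Deltab}^2\Gc_{\exp}$ evaluated at the intermediate point. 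Exactly as in Lemma~\ref{lem:nsd_descent}, this intermediate point is then transferred back to $\W$ by item (iii).

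\textbf{Item (iii).} This is the easiest step. Each summand changes by the factor $e^{-(\eb_{y_i}-\eb_c)^\top\Deltab\xb_i}\le e^{2R\ninf{\Deltab}}$, by the same Hölder bound as in item (ii), and summing over all $(i,c)$ gives the claim.

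I do not expect a real obstacle. The one point needing care is the lower bound in item (i): there the ``probabilities'' are the weights $w_{ic}$, which are not normalized. The argument still goes through because it only uses that the weights are nonnegative and independent of $\A$, and the averaged sum of the weights equals $\Gc_{\exp}$ exactly. This is simpler than the cross-entropy case, where one had to use $\sum_{c\ne y_i}s_{ic}=1-s_{iy_i}$.
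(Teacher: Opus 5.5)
Your proposal is correct and follows essentially the same route as the paper: the lower bound in (i) by the nonnegative-weight min argument of Lemma~\ref{lem:G and gradient}, the upper bound in (i) and the Hessian bound in (ii) by H\"older estimates $|(\eb_{y_i}-\eb_c)^\top\Deltab\xb_i|\le 2R\ninf{\Deltab}$ (and $\|(\eb_{y_i}-\eb_c)\xb_i^\top\|_*\le 2R$), and (iii) by bounding each summand's multiplicative change by $e^{2R\ninf{\Deltab}}$. You are more explicit than the paper, which only sketches the lower bound and the Hessian step. Your reading of the ``consequently'' clause is also the right one: the bound $2R^2\|\Deltab\|^2\Gc_{\exp}$ holds pointwise at whichever point the Hessian is evaluated, and in the descent lemma the intermediate point is transferred back to $\W$ via (iii) at the cost of an $e^{2R\ninf{\Deltab}}$ factor.
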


\begin{proof}
    \textbf{(i) Gradient.} The gradient is given by $\nabla \Lc_{\exp}(\W) = -\frac{1}{n} \sum_{i} \sum_{c \neq y_i} e^{-(\eb_{y_i} - \eb_c)^\top \W \xb_i} (\eb_{y_i} - \eb_c) \xb_i^\top$.
    Taking the dual norm (using $\norm{\xb_i}_1 \le R$ and $\norm{\eb_{y_i}-\eb_c}_1 \le 2$):
    \begin{align*}
        \norm{\nabla \Lc_{\exp}(\W)}_{*} &\leq \frac{1}{n} \sum_{i, c \neq y_i} e^{-(\eb_{y_i} - \eb_c)^\top \W \xb_i} \norm{(\eb_{y_i} - \eb_c) \xb_i^\top} \\
        &\leq 2R \cdot \frac{1}{n} \sum_{i, c \neq y_i} e^{-(\eb_{y_i} - \eb_c)^\top \W \xb_i} = 2R \Gc_{\exp}(\W).
    \end{align*}
    The lower bound follows from the separability Assumption \ref{ass:sep} similarly to Lemma \ref{lem:G and gradient}.

    \textbf{(ii) Hessian.} Let $z_{i,c} = (\eb_{y_i} - \eb_c)^\top \W \xb_i$. The second derivative of $e^{-z}$ is $e^{-z}$. Thus, $\nabla^2 \Lc_{\exp} \preceq \frac{1}{n} \sum_{i, c \neq y_i} e^{-z_{i,c}} \norm{(\eb_{y_i}-\eb_c)\xb_i^\top}^2$. Since $\norm{(\eb_{y_i}-\eb_c)\xb_i^\top} \le 2R$, the quadratic term scales with the loss itself.
    
    \textbf{(iii) Stability.} Note that $\Lc_{\exp}(\W+\Deltab)$ is a sum of terms of the form $e^{-(\eb_{y_i} - \eb_c)^\top (\W+\Deltab) \xb_i}$.
    \begin{align*}
        e^{-(\eb_{y_i} - \eb_c)^\top (\W+\Deltab) \xb_i} &= e^{-(\eb_{y_i} - \eb_c)^\top \W \xb_i} e^{-(\eb_{y_i} - \eb_c)^\top \Deltab \xb_i} \\
        &\leq e^{-z_{i,c}} e^{\norm{\eb_{y_i} - \eb_c}_1 \norm{\Deltab}_{\infty} \norm{\xb_i}_1} \\
        &\leq e^{-z_{i,c}} e^{2 R \ninf{\Deltab}}.
    \end{align*}
    Summing over $i,c$ yields $\Gc_{\exp}(\W+\Deltab) \leq e^{2 R \ninf{\Deltab}} \Gc_{\exp}(\W)$.
\end{proof}

\begin{lemma}[Gradient Stability for Exp Loss] \label{lem:exp_grad_stability}
    For any two weight matrices $\W, \W' \in \mathbb{R}^{k \times d}$, let $\Deltab = \W' - \W$. Suppose the data satisfies $\norm{\xb_i}_1 \le R$. Then, the entry-wise 1-norm of the gradient difference is bounded by:
    \[
        \norm{\nabla \Lc_{\exp}(\W') - \nabla \Lc_{\exp}(\W)}_1 \leq 2R \left( e^{2R \ninf{\Deltab}} - 1 \right) \Gc_{\exp}(\W).
    \]
\end{lemma}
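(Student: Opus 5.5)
We bound the gradient difference sample by sample and pair by pair, mirroring the proof of Lemma~\ref{lem:grad_stability} for cross-entropy. Write $z_{i,c}(\W) \coloneqq (\eb_{y_i}-\eb_c)^\top \W \xb_i$. Then
\[
\nabla \Lc_{\exp}(\W) = -\frac{1}{n}\sum_{i}\sum_{c\neq y_i} e^{-z_{i,c}(\W)}(\eb_{y_i}-\eb_c)\xb_i^\top ,
\]
so the difference $\nabla \Lc_{\exp}(\W') - \nabla \Lc_{\exp}(\W)$ is an average of rank-one terms $\bigl(e^{-z_{i,c}(\W')}-e^{-z_{i,c}(\W)}\bigr)(\eb_{y_i}-\eb_c)\xb_i^\top$.

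First apply the triangle inequality in the entry-wise $1$-norm, together with $\none{\ub\vb^\top}=\|\ub\|_1\|\vb\|_1$. Using $\|\eb_{y_i}-\eb_c\|_1 = 2$ and $\|\xb_i\|_1\le R$, this gives
\[
\none{\nabla \Lc_{\exp}(\W') - \nabla \Lc_{\exp}(\W)} \le \frac{2R}{n}\sum_i\sum_{c\neq y_i}\bigl|e^{-z_{i,c}(\W')}-e^{-z_{i,c}(\W)}\bigr|.
\]

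Second, control each scalar difference multiplicatively. Factor it as $e^{-z_{i,c}(\W)}\,\bigl|e^{-(\eb_{y_i}-\eb_c)^\top\Deltab\xb_i}-1\bigr|$. By Hölder, exactly as in part (iii) of Lemma~\ref{lem:exp_geometry}, the exponent satisfies $|(\eb_{y_i}-\eb_c)^\top\Deltab\xb_i|\le \|\eb_{y_i}-\eb_c\|_1\,\ninf{\Deltab}\,\|\xb_i\|_1\le 2R\ninf{\Deltab}$. The elementary inequality $|e^{u}-1|\le e^{|u|}-1$ then yields
\[
\bigl|e^{-z_{i,c}(\W')}-e^{-z_{i,c}(\W)}\bigr| \le e^{-z_{i,c}(\W)}\bigl(e^{2R\ninf{\Deltab}}-1\bigr).
\]

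Summing over $i$ and $c\neq y_i$ and dividing by $n$ reconstructs $\Gc_{\exp}(\W)=\Lc_{\exp}(\W)$. This gives exactly $2R\bigl(e^{2R\ninf{\Deltab}}-1\bigr)\Gc_{\exp}(\W)$.

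No real obstacle is expected. The only point needing care is the exponent bound: the factor $2$ must come from $\|\eb_{y_i}-\eb_c\|_1=2$ and the matrix norm used must be $\ninf{\cdot}$, so that the constant matches the cross-entropy case. Unlike Lemma~\ref{lem:grad_stability}, no separate target/non-target split is needed, because every summand is already weighted by its own loss term.
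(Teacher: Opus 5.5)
Your proposal is correct and follows essentially the same route as the paper: you apply the triangle inequality in the entry-wise $1$-norm with $\|(\eb_{y_i}-\eb_c)\xb_i^\top\|_1\le 2R$, bound the logit perturbation by $2R\ninf{\Deltab}$ via Hölder, and then use $|e^{-a}-e^{-b}|\le e^{-a}(e^{|b-a|}-1)$ termwise. Summing these bounds reconstructs $\Gc_{\exp}(\W)$ and gives the stated constant.
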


\begin{proof}
    The gradient of the exponential loss is given by $\nabla \Lc_{\exp}(\W) = -\frac{1}{n} \sum_{i} \sum_{c \neq y_i} e^{-z_{i,c}} (\eb_{y_i} - \eb_c) \xb_i^\top$, where $z_{i,c} = (\eb_{y_i} - \eb_c)^\top \W \xb_i$.
    Consider the gradient difference term-by-term:
    \begin{align*}
        \norm{\nabla \Lc_{\exp}(\W') - \nabla \Lc_{\exp}(\W)}_1 
        &\leq \frac{1}{n} \sum_{i=1}^n \sum_{c \neq y_i} \left| e^{-z'_{i,c}} - e^{-z_{i,c}} \right| \cdot \norm{(\eb_{y_i} - \eb_c) \xb_i^\top}_1 \\
        &\leq \frac{2R}{n} \sum_{i=1}^n \sum_{c \neq y_i} \left| e^{-z'_{i,c}} - e^{-z_{i,c}} \right|.
    \end{align*}
    Let $\delta_{i,c} = z'_{i,c} - z_{i,c} = (\eb_{y_i} - \eb_c)^\top \Deltab \xb_i$. The magnitude of the perturbation is bounded by:
    \[ |\delta_{i,c}| \leq \norm{\eb_{y_i} - \eb_c}_1 \norm{\Deltab \xb_i}_\infty \leq 2 \norm{\Deltab}_{\infty} \norm{\xb_i}_1 \leq 2R \ninf{\Deltab}. \]
    Using the elementary inequality $|e^{-a} - e^{-b}| = e^{-a} |e^{-(b-a)} - 1| \leq e^{-a} (e^{|b-a|} - 1)$, we have:
    \[ \left| e^{-z'_{i,c}} - e^{-z_{i,c}} \right| \leq e^{-z_{i,c}} \left( e^{|\delta_{i,c}|} - 1 \right) \leq e^{-z_{i,c}} \left( e^{2R \ninf{\Deltab}} - 1 \right). \]
    Substituting this back into the sum:
    \begin{align*}
        \norm{\nabla \Lc_{\exp}(\W') - \nabla \Lc_{\exp}(\W)}_1 
        &\leq 2R \left( e^{2R \ninf{\Deltab}} - 1 \right) \underbrace{\frac{1}{n} \sum_{i=1}^n \sum_{c \neq y_i} e^{-z_{i,c}}}_{\Gc_{\exp}(\W)}.
    \end{align*}
    This completes the proof.
\end{proof}

\subsection{Verification of Stochastic Properties}

Crucially for our stochastic analysis, the gradient noise and momentum accumulation for Exponential Loss are also naturally bounded by the proxy function.

\begin{lemma}[Stochastic Noise Bound for Exp Loss] \label{lem:exp_noise}
    The mini-batch gradient noise $\Xib = \nabla \Lc_{\mathcal{B}}(\W) - \nabla \Lc(\W)$ satisfies:
    \[ \none{\Xib} \leq 2(m-1) R \Gc_{\exp}(\W). \]
    This ensures that Lemmas \ref{lem:noise_bound}, \ref{lem:momentum_noise_accumulation}, and \ref{lem:noise_SVR_bound} hold directly for Exponential Loss.
\end{lemma}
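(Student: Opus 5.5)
The plan is to mirror the proof of Lemma~\ref{lem:noise_bound} exactly, replacing the cross-entropy per-sample gradient by the exponential-loss per-sample gradient. The key step is a per-sample bound of the form $\none{\nabla \ell_i^{\exp}(\W)} \le 2R\, g_i(\W)$, where
\[
g_i(\W) \coloneqq \sum_{c \neq y_i} e^{-(\eb_{y_i}-\eb_c)^\top \W \xb_i},
\]
so that $\Gc_{\exp}(\W) = \frac{1}{n}\sum_i g_i(\W)$.

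\textbf{Per-sample bound.} I first write
\[
\nabla \ell_i^{\exp}(\W) = -\sum_{c\neq y_i} e^{-z_{i,c}}(\eb_{y_i}-\eb_c)\xb_i^\top .
\]
Applying the triangle inequality together with $\none{\ub\vb^\top}=\|\ub\|_1\|\vb\|_1$, $\|\eb_{y_i}-\eb_c\|_1 = 2$ and $\|\xb_i\|_1\le R$ gives $\none{\nabla\ell_i^{\exp}(\W)}\le 2R\,g_i(\W)$. Summing over $i$ then yields $\sum_i \none{\nabla \ell_i^{\exp}(\W)} \le 2nR\,\Gc_{\exp}(\W)$, which is the analogue of \eqref{eq:total_grad_norm}.

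\textbf{Finite population decomposition.} Next I use $\nabla\Lc = \frac1m\nabla\Lc_{\mathcal B} + \frac{m-1}{m}\nabla\Lc_{\mathcal B^c}$, which depends only on linearity of the loss average and is unchanged for the exponential loss. This gives $\Xib = \frac{m-1}{m}(\nabla\Lc_{\mathcal B}-\nabla\Lc_{\mathcal B^c})$. The triangle inequality then produces coefficients $\frac{m-1}{n}$ on the batch and $\frac1n$ on the complement. When $m=1$ the bound is trivial. When $m\ge 2$, I bound both coefficients by $\frac{m-1}{n}$ and apply the total-sum bound, obtaining
\[
\none{\Xib}\le 2(m-1)R\,\Gc_{\exp}(\W).
\]

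\textbf{Consequences for the other lemmas.} Lemma~\ref{lem:momentum_noise_accumulation} uses only the uniform noise bound and the zero-sum property of an epoch, so it carries over with $\Gc$ replaced by $\Gc_{\exp}$. For Lemma~\ref{lem:noise_SVR_bound} the same decomposition applied to $\phib_i = \nabla\ell_i(\W)-\nabla\ell_i(\W')$ reduces the claim to the per-sample stability estimate. That estimate is exactly the term-by-term bound in the proof of Lemma~\ref{lem:exp_grad_stability}, applied before averaging over $i$: $\none{\phib_i}\le 2R(e^{2R\ninf{\Deltab}}-1)\,g_i(\W)$.

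\textbf{Main obstacle.} There is no real obstacle; the only care needed is to confirm that the per-sample (rather than averaged) forms of the bounds hold. They do, since every estimate above is proved term by term in $(i,c)$.
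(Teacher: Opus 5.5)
Your proposal is correct and follows the paper's proof essentially verbatim: the per-sample bound $\|\nabla\ell_i(\W)\|_1\le 2R\sum_{c\neq y_i}e^{-z_{i,c}}$, summation to $2nR\,\Gc_{\exp}(\W)$, then the finite-population decomposition from Lemma~\ref{lem:noise_bound}. Your observation that Lemma~\ref{lem:noise_SVR_bound} also needs the per-sample (pre-averaging) form of the stability estimate from Lemma~\ref{lem:exp_grad_stability} is correct, and it is a detail the paper leaves implicit.
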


\begin{proof}
    The single-sample gradient norm is $\none{\nabla \ell_i(\W)} = \norm{\sum_{c \neq y_i} e^{-z_{i,c}} (\eb_{y_i}-\eb_c)\xb_i^\top}_1 \leq 2R \sum_{c \neq y_i} e^{-z_{i,c}}$.
    Summing over all $i$:
    \[ \sum_{i=1}^n \none{\nabla \ell_i(\W)} \leq 2nR \Lc_{\exp}(\W) = 2nR \Gc_{\exp}(\W). \]
    Using the finite population correction argument from Lemma \ref{lem:noise_bound}, the mini-batch noise is bounded by $\frac{m-1}{n} \sum \none{\nabla \ell_i} \leq 2(m-1)R \Gc_{\exp}(\W)$.
\end{proof}

\subsection{Conclusion}
Since $\Lc_{\exp}$ and $\Gc_{\exp}$ satisfy all the geometric inequalities and stochastic noise bounds (Lemma  used in the proofs of Theorems 1--4, the convergence rates derived for Cross-Entropy apply directly to Exponential Loss. Specifically, the algorithms converge to the max-margin solution with the same asymptotic rates, as the self-bounding property of the exponential function provides strictly tighter control over the optimization trajectory.

\section{Empirical Validation of Theorem~\ref{thm:persample_bias}}\label{empiricalTwospecial}
\paragraph{Setup for Per-Sample Regime.}
To validate our theoretical findings regarding the unique implicit bias in the batch-size-one regime, we construct a synthetic \textit{Orthogonal Scale-Skewed} dataset with $n=500$ samples, $K=10$ classes, and dimension $d=10$.
The data is generated to explicitly decouple class frequency from sample hardness: class counts are sampled from a multinomial distribution to introduce label imbalance, while the feature scale $\alpha_i$ for each sample $\xb_i = \alpha_i \eb_{y_i}$ is drawn uniformly from heterogeneous class-specific ranges to introduce scale variation.
We train Per-sample SignSGD and Per-sample Normalized-SGD (corresponding to $b=1$) for $T=5,000$ iterations, initialized at $\W_0=\mathbf{0}$.
We use a polynomial learning rate schedule $\eta_t = \eta_0 t^{-a}$ with decay rate $a=0.5$ and $\eta_0=0.5$.
We evaluate convergence by measuring the cosine similarity to the theoretical bias direction $\bar{\W}$ (Definition~\ref{def:bias_matrices}) and the true max-margin solution $\W^*$, as well as the relative error to the optimal margin $\gamma^*$ under $\ell_2$, $\ell_\infty$, and spectral norm geometries.

\paragraph{Empirical validation of theory.}
\begin{figure}[h]  
    \centering
    \includegraphics[width=0.8\textwidth]{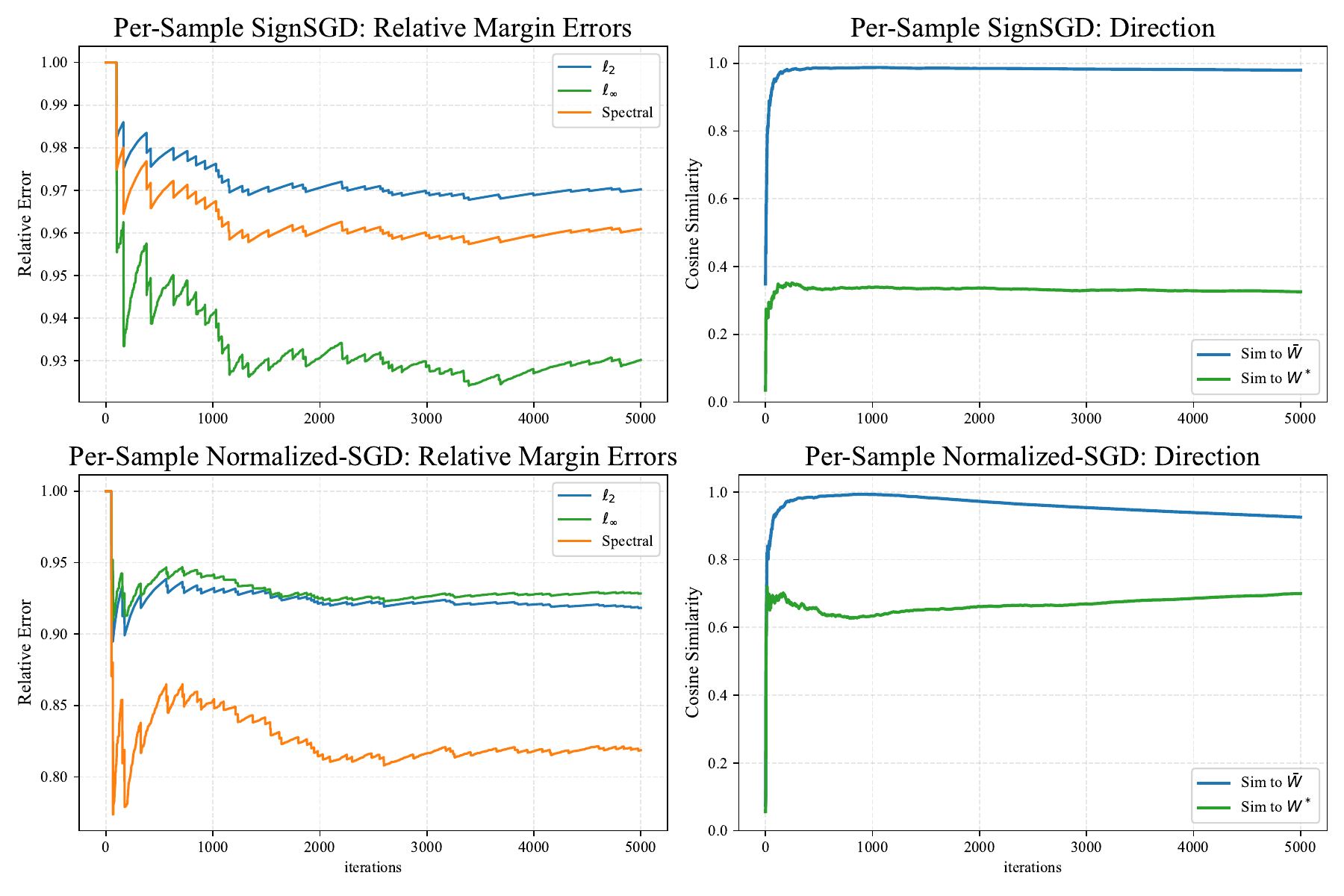}
    \caption{}
    \label{fig:twospecial}
\end{figure}
Figure~\ref{fig:twospecial} confirms our theoretical predictions for the per-sample regime ($b=1$).
The left column shows the relative margin error with respect to the optimal margin $\gamma^*$ under $\ell_2$, $\ell_\infty$, and spectral norm geometries. 
Regardless of the norm considered, the relative error remains high and does not converge to zero, indicating that per-sample stochastic updates fail to maximize the margin under any of these geometries.
In contrast, the right column demonstrates the directional convergence of the iterates.
While the cosine similarity to the true max-margin solution $\W^*$ (green) stagnates well below 1, the similarity to our theoretically derived bias direction $\bar{\W}$ (blue) steadily approaches 1.
This provides strong empirical evidence that the implicit bias is governed by the sample-averaged direction $\bar{\W}$ rather than the geometric max-margin solution.

\section{Additional Experimental Results}\label{addition_experiment}
In this section, we present additional experimental results for SignSGD and Signum under the $\ell_\infty$ geometry, as well as Spectral-SGD and Muon under the spectral norm, complementing the $\ell_2$-norm results reported in the main text.

These results in Figure~\ref{fig:six_plots_sign} and~\ref{fig:six_plots_muon} exhibit the same qualitative behavior as in the $\ell_2$ case discussed in the main text.

\begin{figure*}[t]
    \centering
    \begin{subfigure}{0.32\textwidth}
        \centering
        \includegraphics[width=\linewidth]{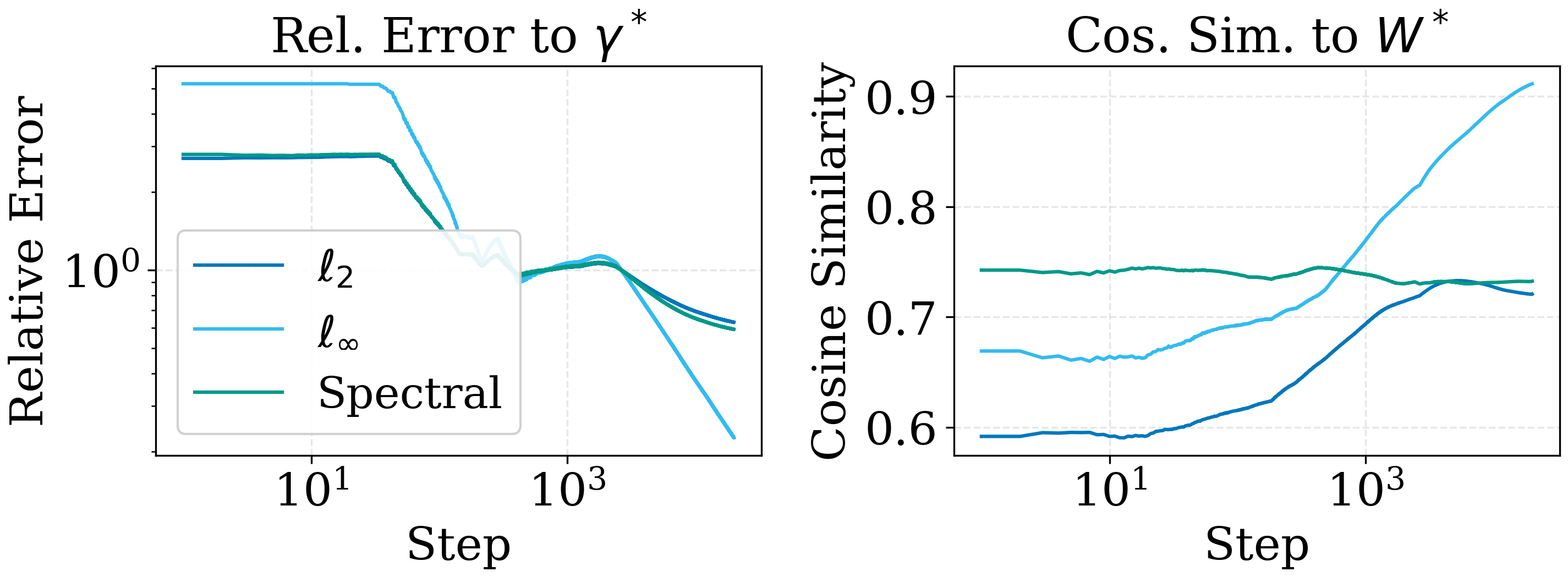}
        \caption{}
    \end{subfigure}
    \hfill
    \begin{subfigure}{0.32\textwidth}
        \centering
        \includegraphics[width=\linewidth]{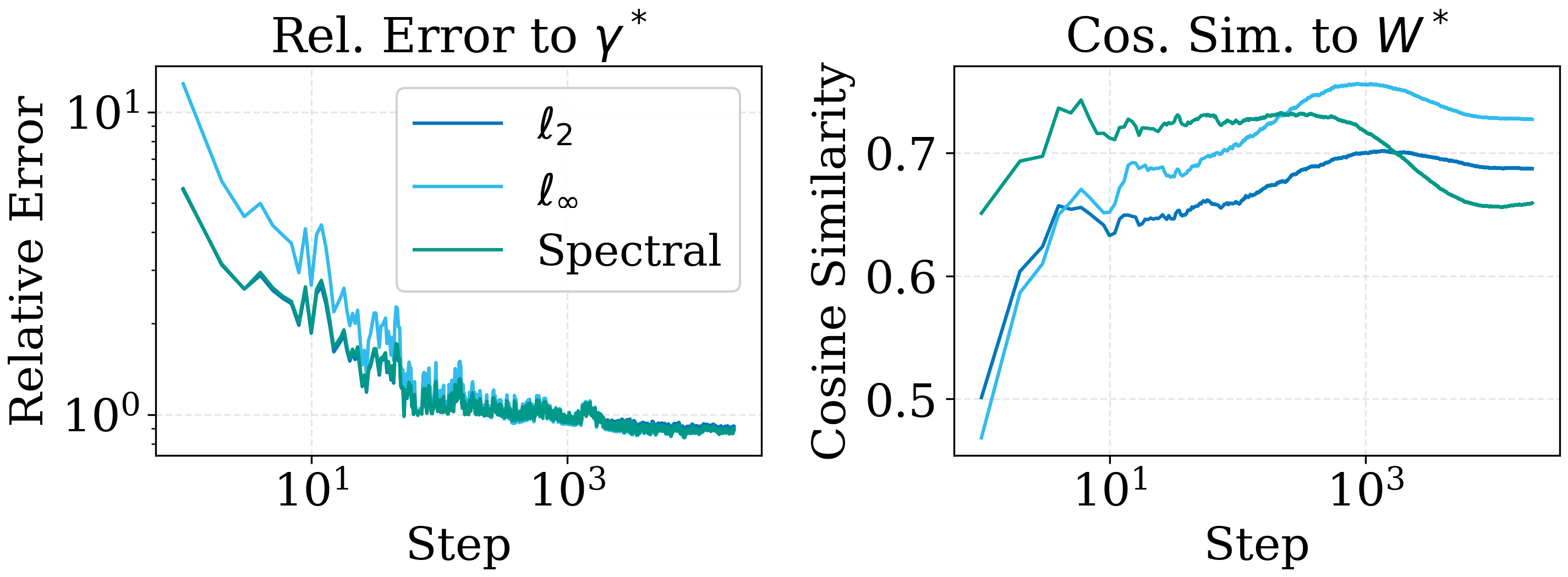}
        \caption{}
    \end{subfigure}
    \hfill
    \begin{subfigure}{0.32\textwidth}
        \centering
        \includegraphics[width=\linewidth]{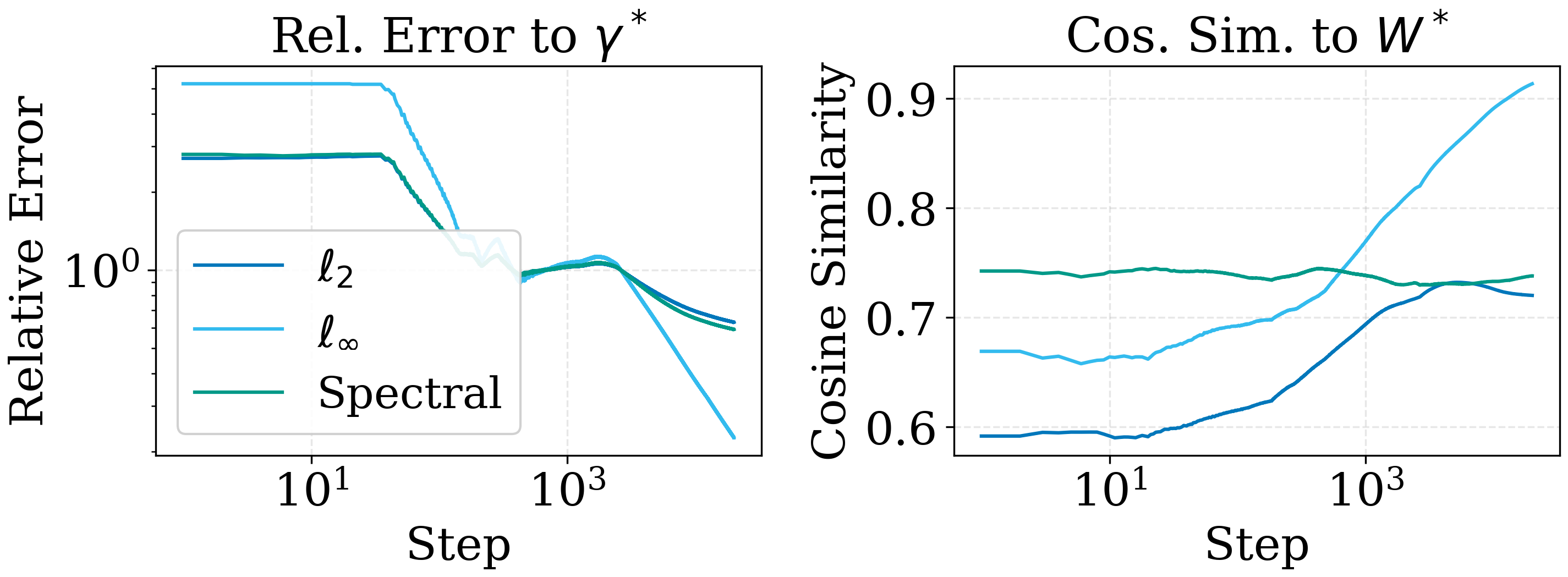}
        \caption{}
    \end{subfigure}


    \begin{subfigure}{0.32\textwidth}
        \centering
        \includegraphics[width=\linewidth]{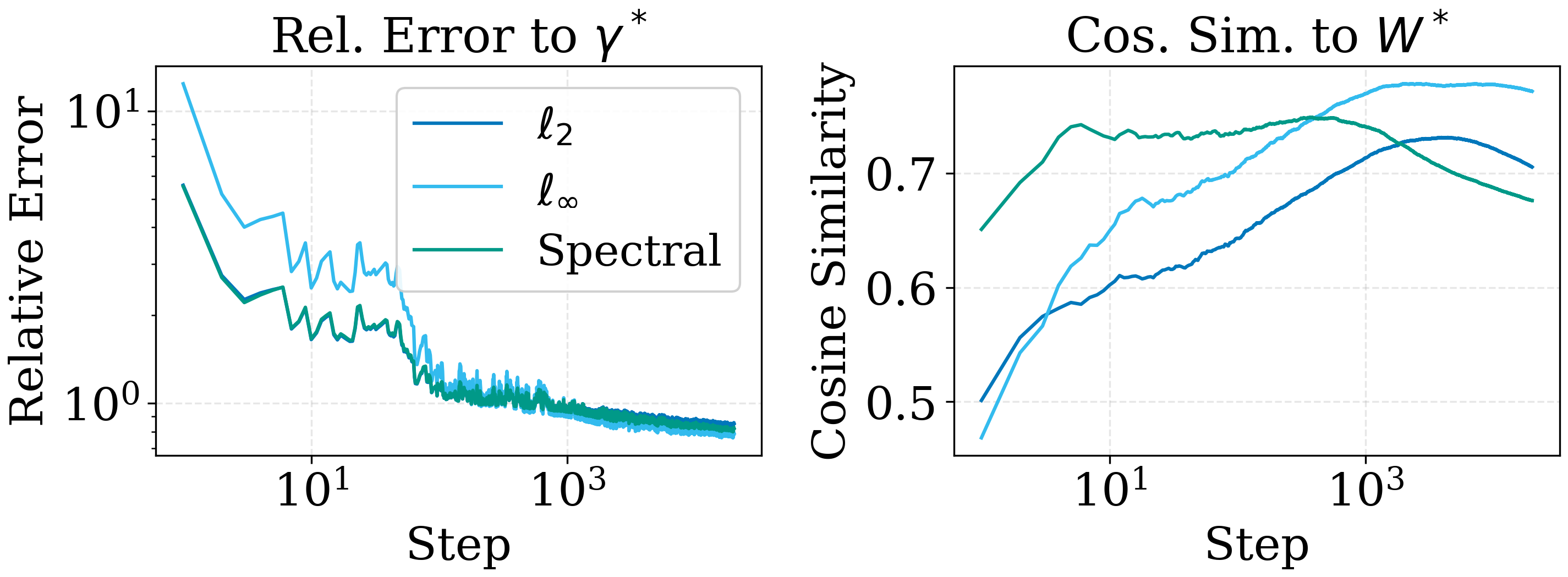}
        \caption{}
    \end{subfigure}
    \hfill
    \begin{subfigure}{0.32\textwidth}
        \centering
        \includegraphics[width=\linewidth]{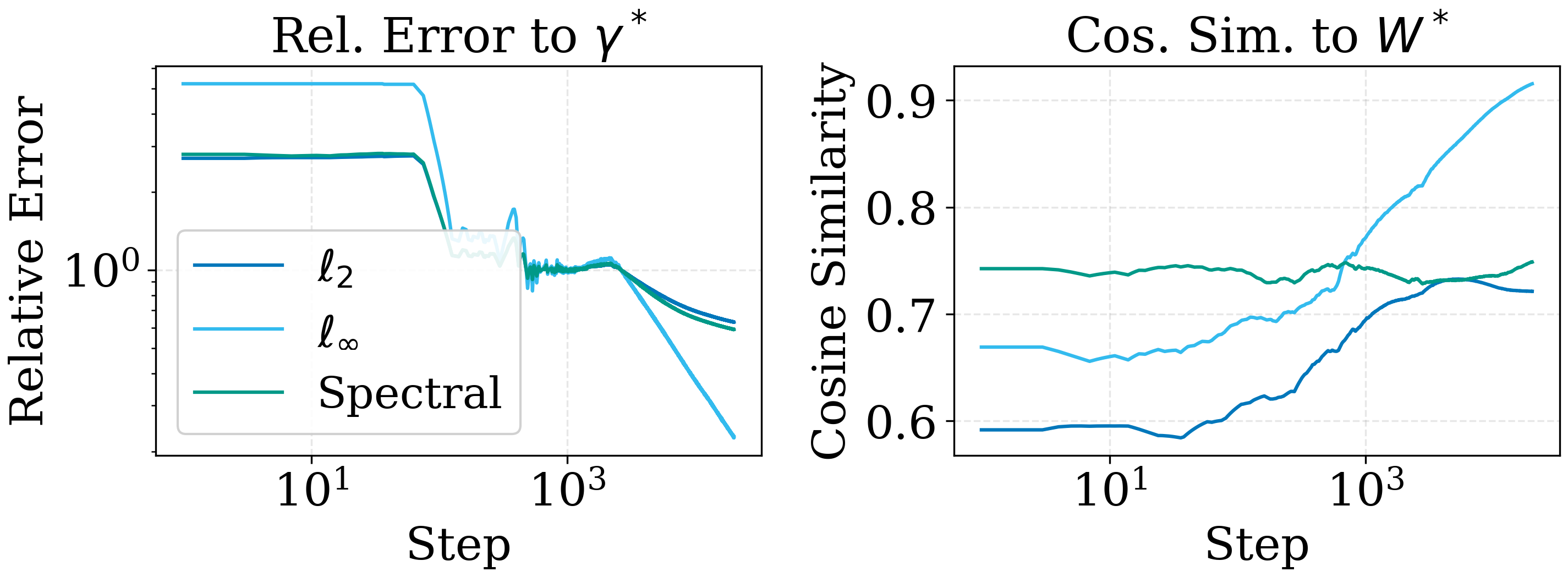}
        \caption{}
    \end{subfigure}
    \hfill
    \begin{subfigure}{0.32\textwidth}
        \centering
        \includegraphics[width=\linewidth]{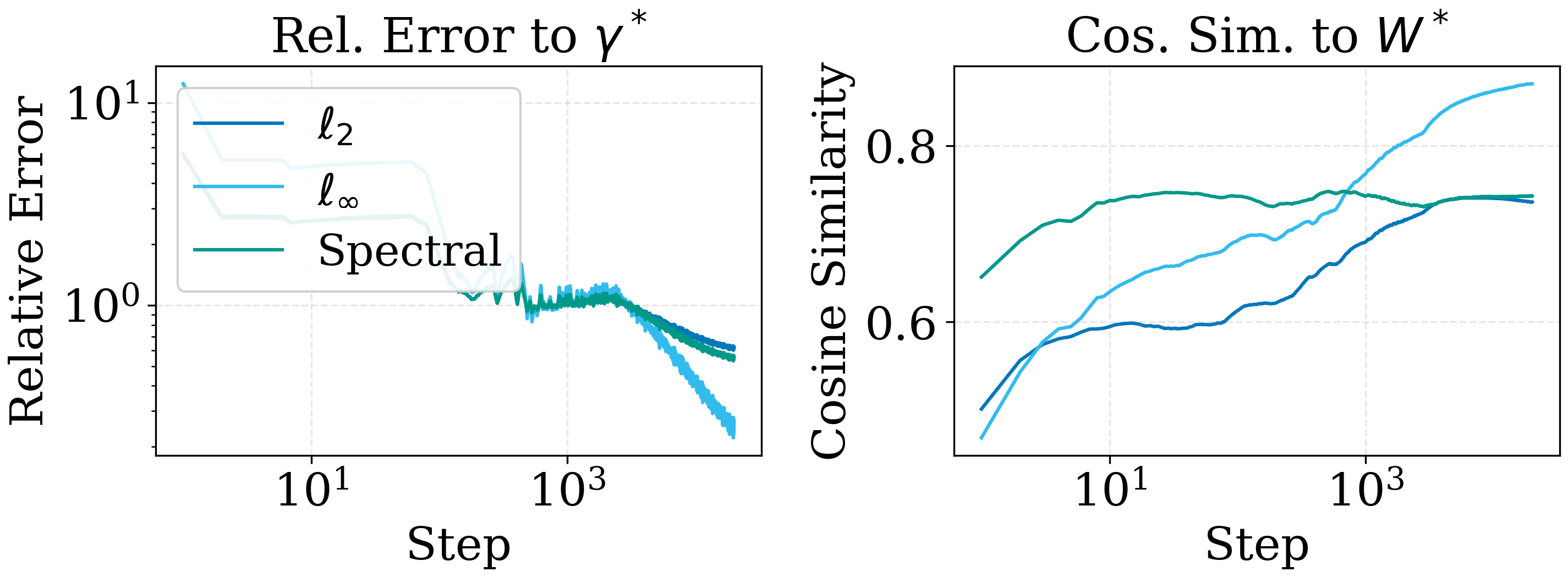}
        \caption{}
    \end{subfigure}

        \begin{subfigure}{0.32\textwidth}
        \centering
        \includegraphics[width=\linewidth]{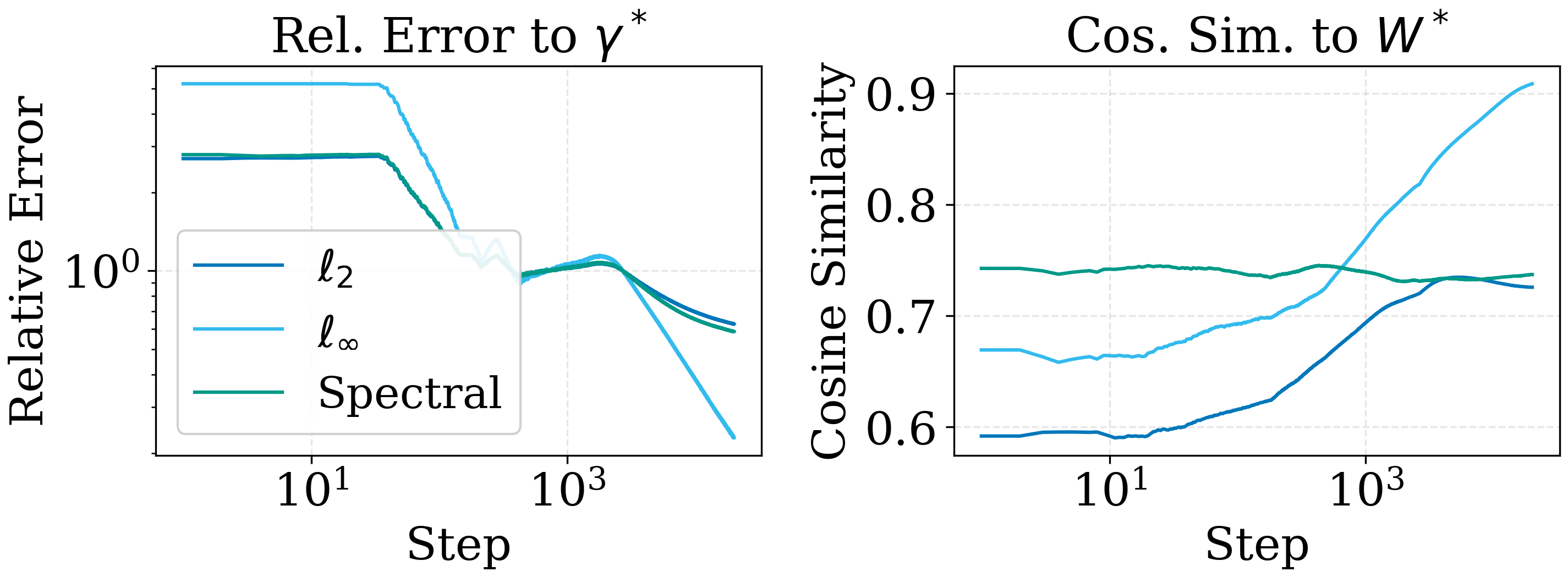}
        \caption{}
    \end{subfigure}
    \hfill
    \begin{subfigure}{0.32\textwidth}
        \centering
        \includegraphics[width=\linewidth]{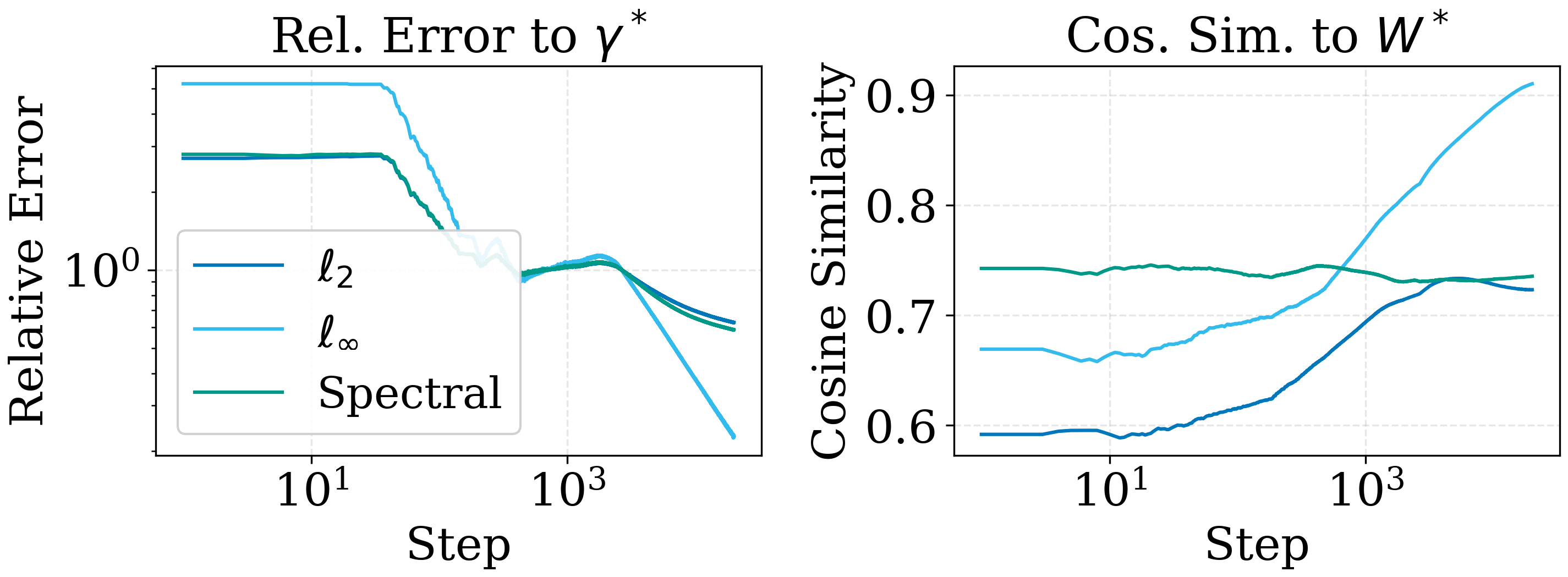}
        \caption{}
    \end{subfigure}
    \hfill
    \begin{subfigure}{0.32\textwidth}
        \centering
        \includegraphics[width=\linewidth]{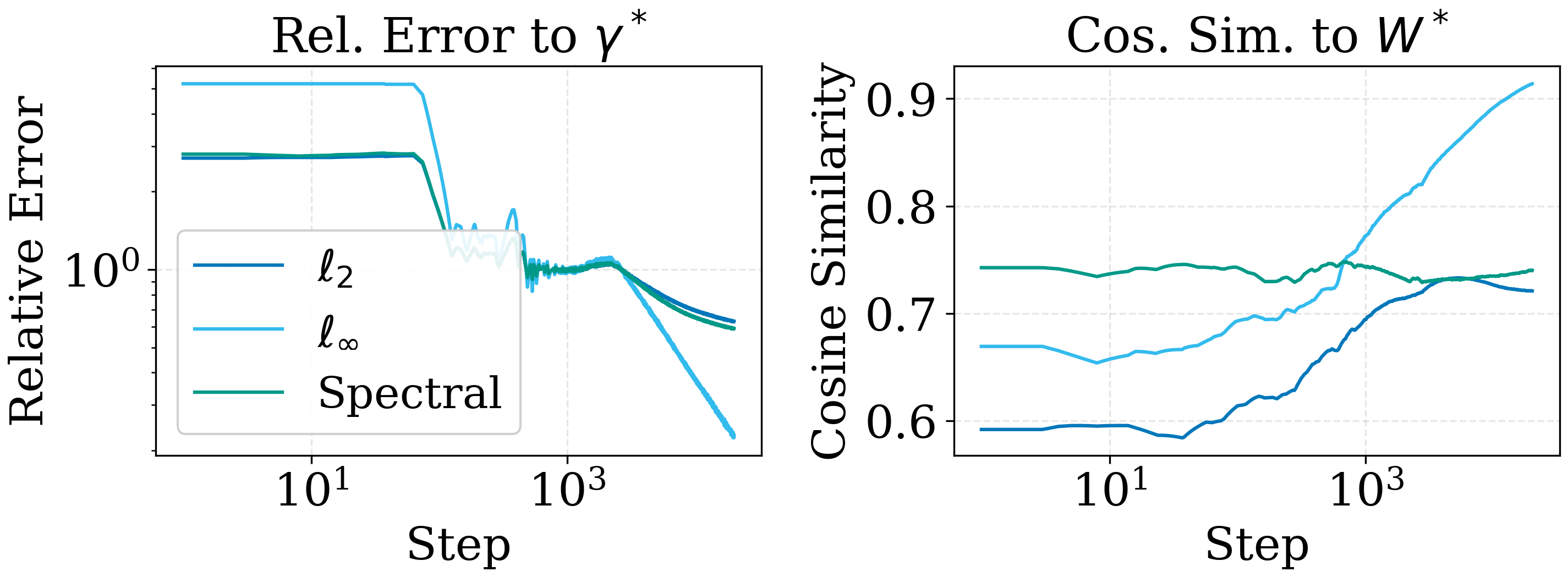}
        \caption{}
    \end{subfigure}

\caption{
Empirical validation of the implicit bias of normalized steepest descent under the $\ell_\infty$ norm.
(a) SignSGD with full-batch size $b=200$.
(b) SignSGD with mini-batch size $b=20$.
(c) Signum with momentum $\beta_1=0.5$ and full-batch size $b=200$.
(d) Signum with momentum $\beta_1=0.5$ and mini-batch size $b=20$.
(e) Signum with momentum $\beta_1=0.99$ and full-batch size $b=200$.
(f) Signum with momentum $\beta_1=0.99$ and mini-batch size $b=20$.
(g) VR-SignSGD with mini-batch size $b=20$.
(h) VR-Signum with momentum $\beta_1=0.5$ and mini-batch size $b=20$.
(i) VR-Signum with momentum $\beta_1=0.99$ and mini-batch size $b=20$.
}
\label{fig:six_plots_sign}
\end{figure*}

\begin{figure*}[t]
    \centering
    \begin{subfigure}{0.32\textwidth}
        \centering
        \includegraphics[width=\linewidth]{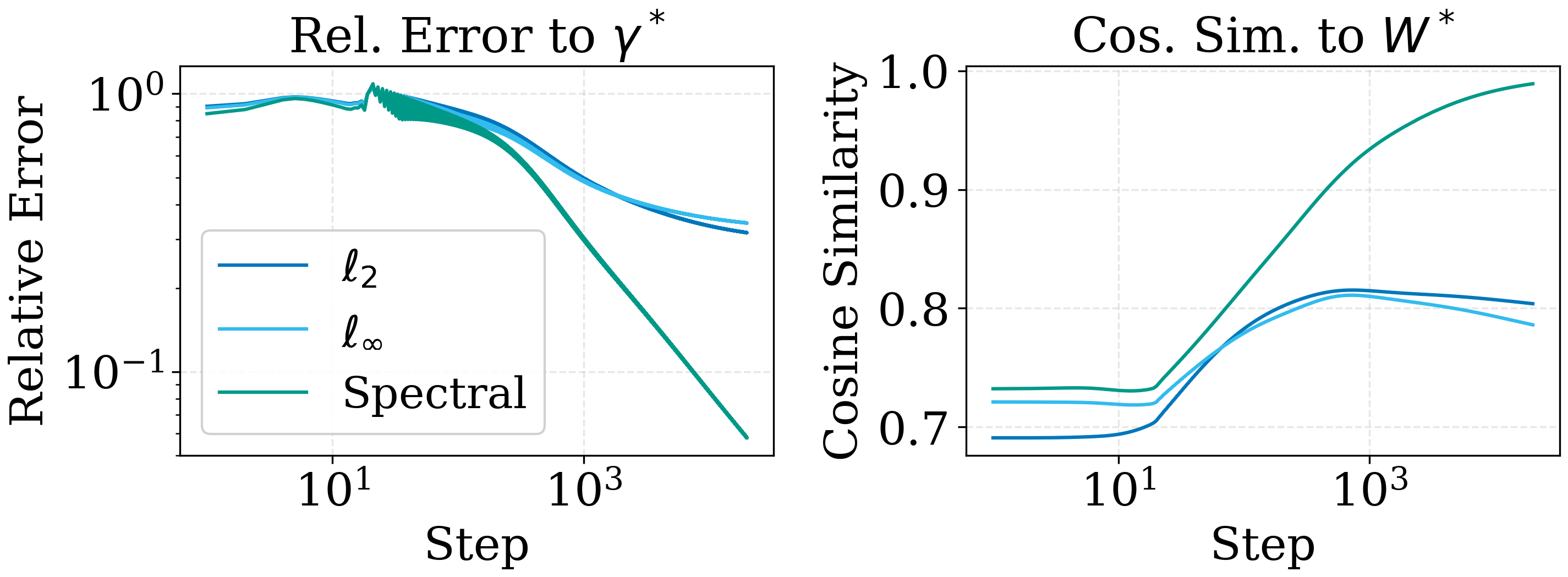}
        \caption{}
    \end{subfigure}
    \hfill
    \begin{subfigure}{0.32\textwidth}
        \centering
        \includegraphics[width=\linewidth]{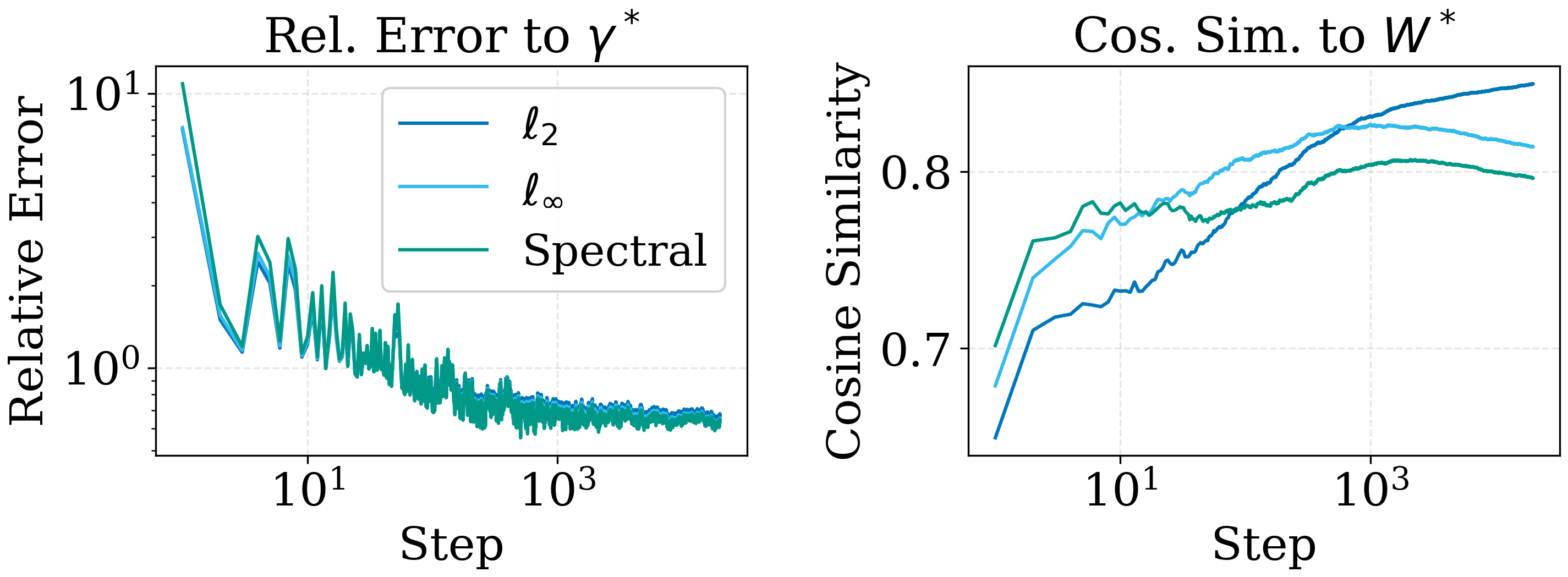}
        \caption{}
    \end{subfigure}
    \hfill
    \begin{subfigure}{0.32\textwidth}
        \centering
        \includegraphics[width=\linewidth]{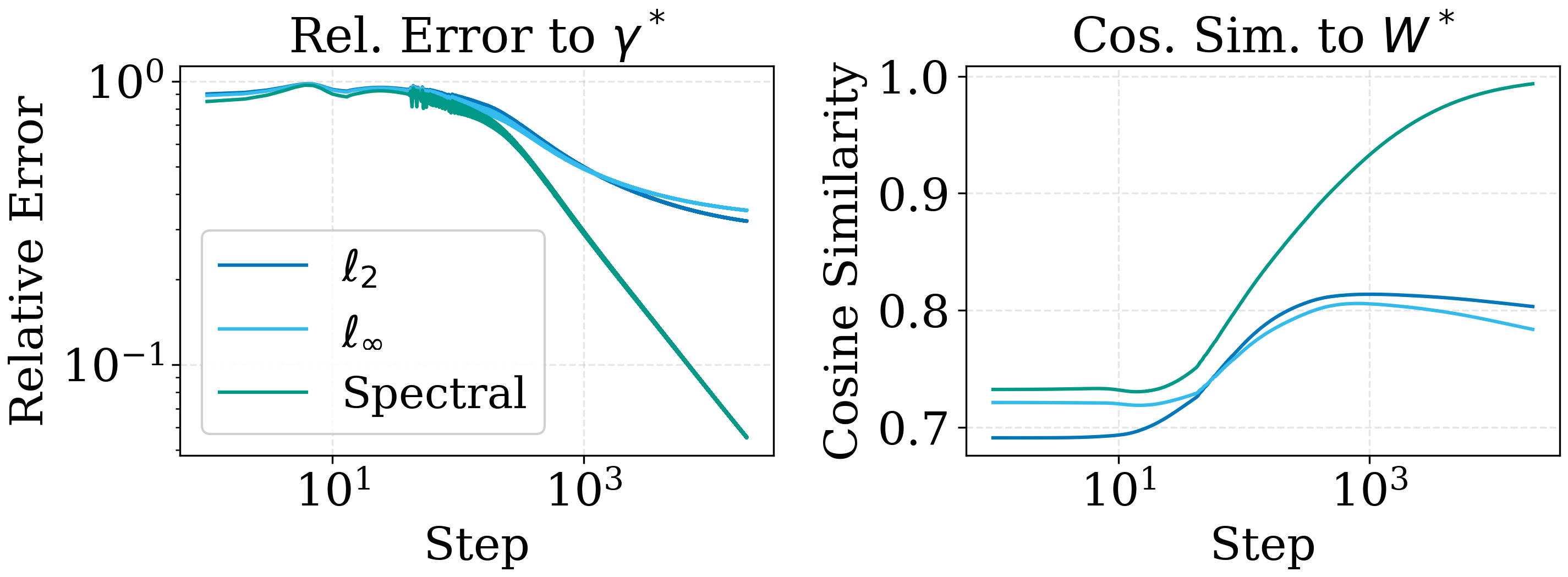}
        \caption{}
    \end{subfigure}


    \begin{subfigure}{0.32\textwidth}
        \centering
        \includegraphics[width=\linewidth]{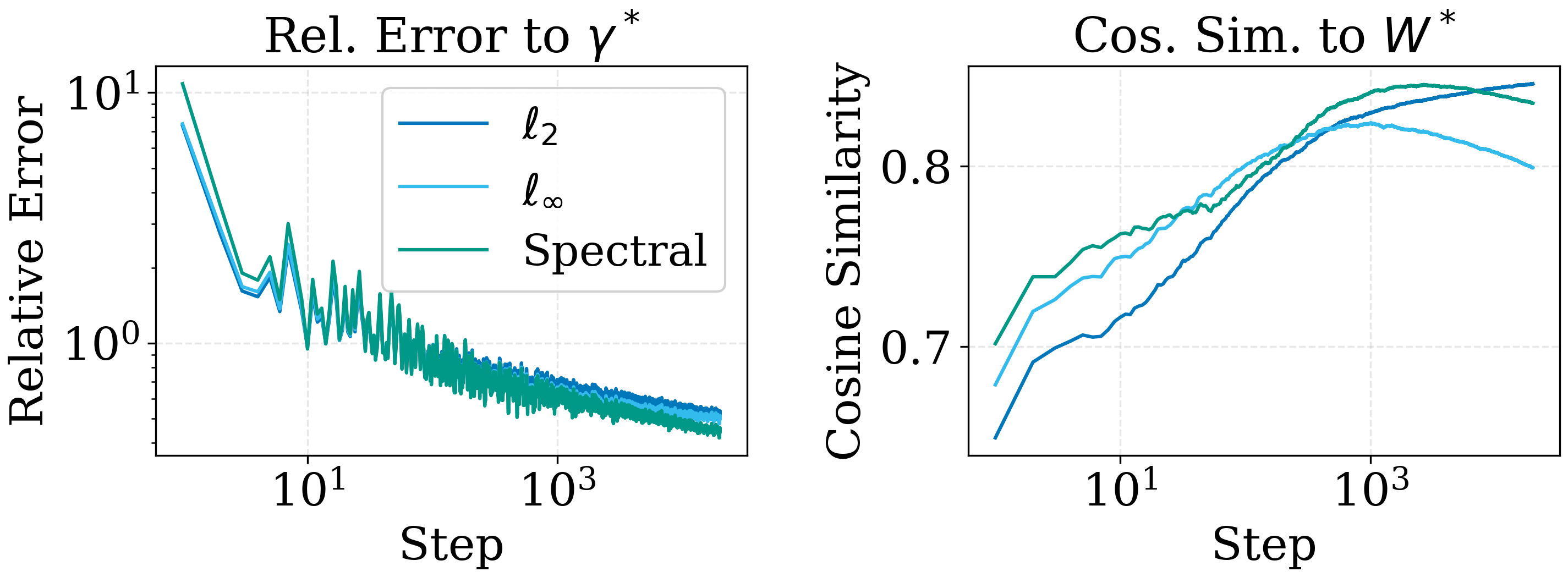}
        \caption{}
    \end{subfigure}
    \hfill
    \begin{subfigure}{0.32\textwidth}
        \centering
        \includegraphics[width=\linewidth]{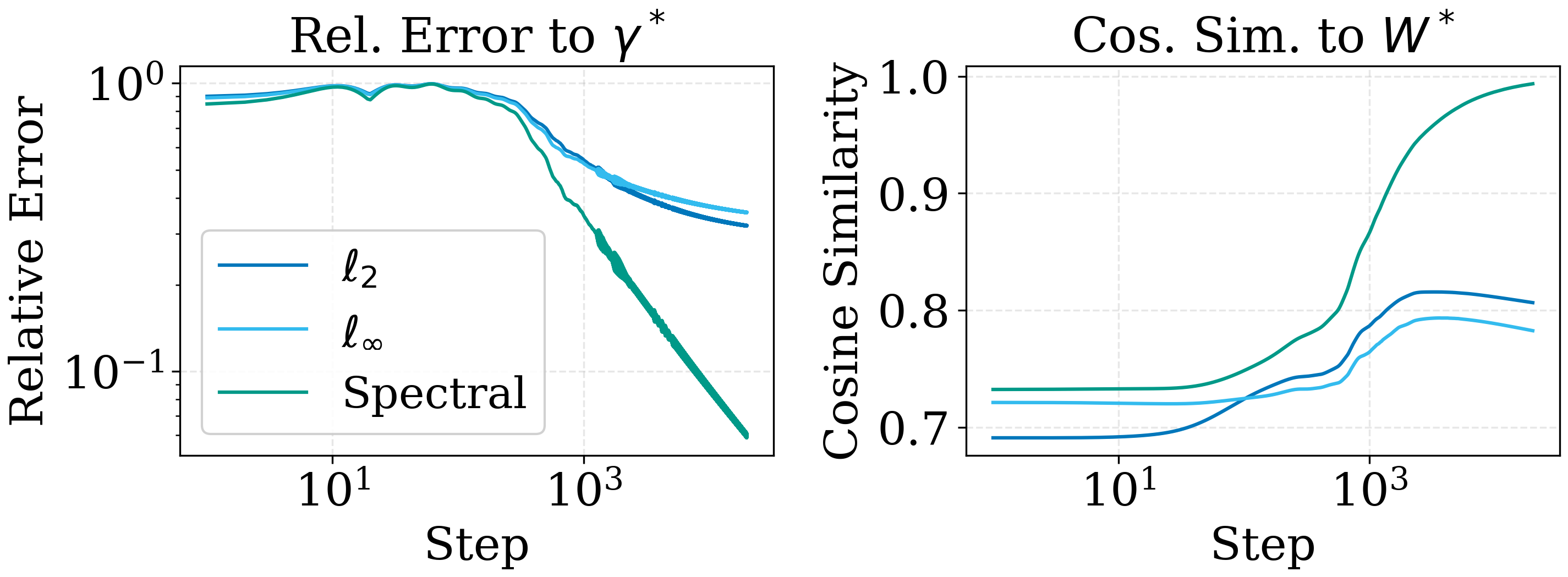}
        \caption{}
    \end{subfigure}
    \hfill
    \begin{subfigure}{0.32\textwidth}
        \centering
        \includegraphics[width=\linewidth]{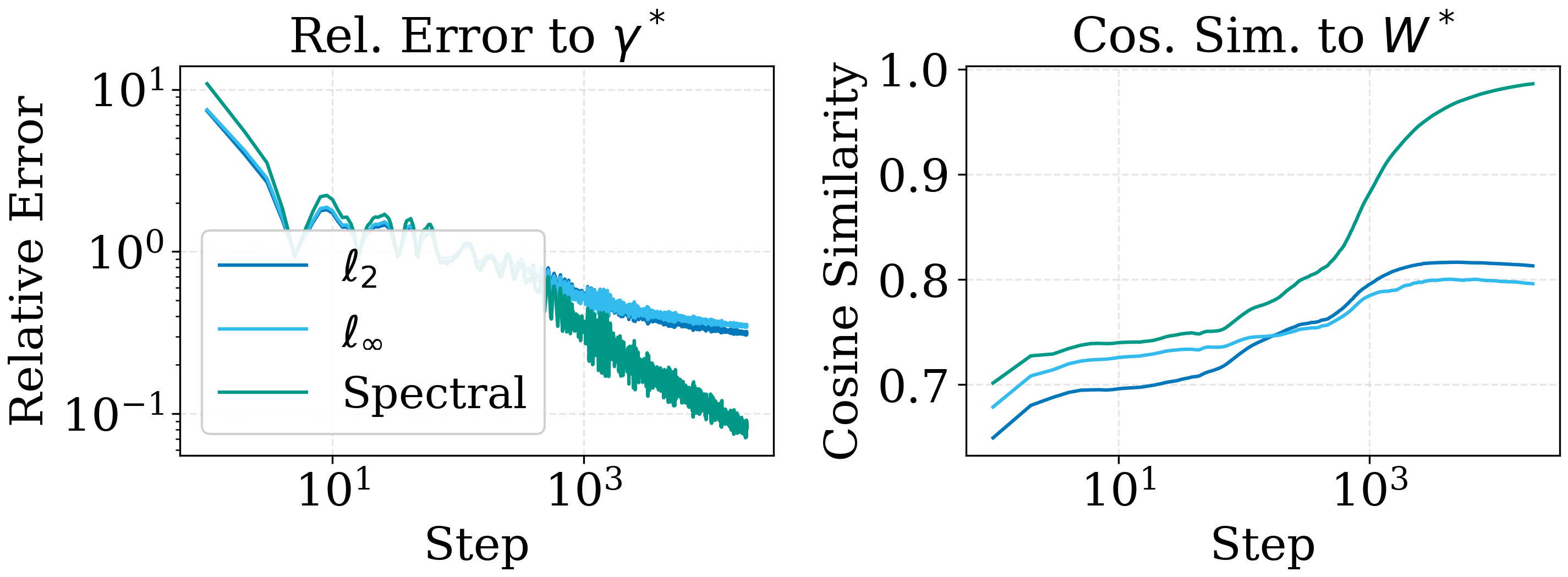}
        \caption{}
    \end{subfigure}

        \begin{subfigure}{0.32\textwidth}
        \centering
        \includegraphics[width=\linewidth]{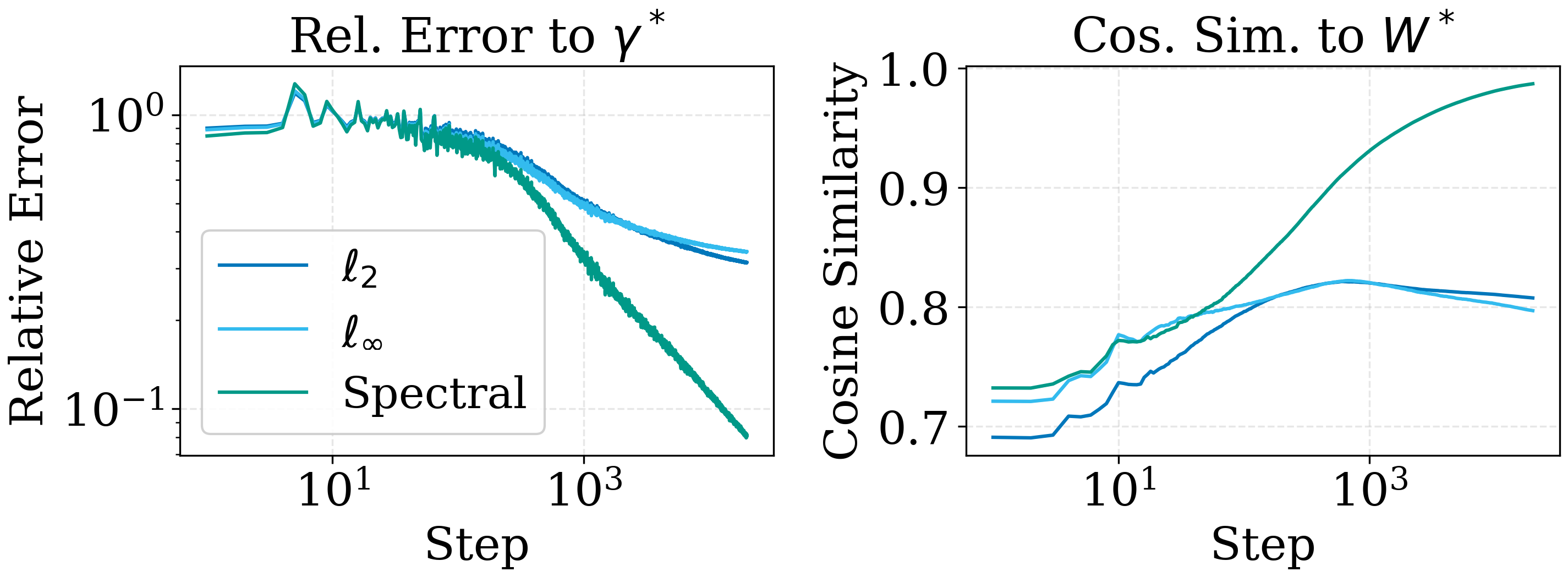}
        \caption{}
    \end{subfigure}
    \hfill
    \begin{subfigure}{0.32\textwidth}
        \centering
        \includegraphics[width=\linewidth]{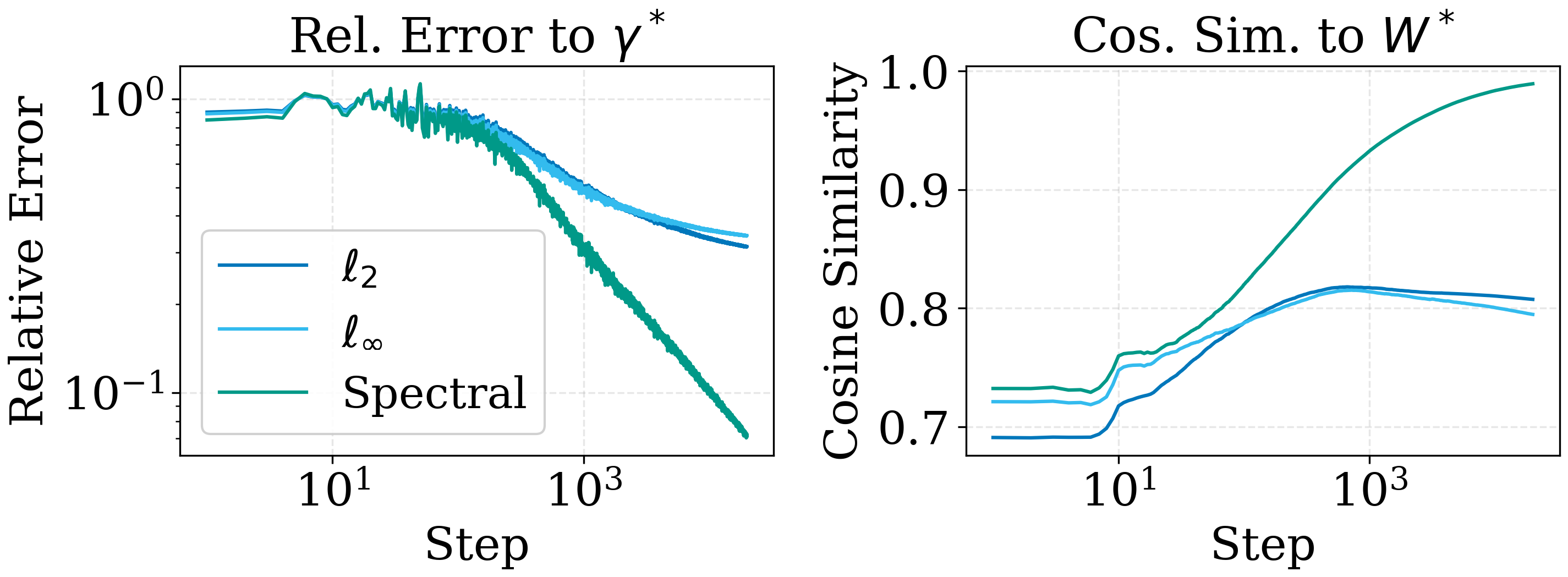}
        \caption{}
    \end{subfigure}
    \hfill
    \begin{subfigure}{0.32\textwidth}
        \centering
        \includegraphics[width=\linewidth]{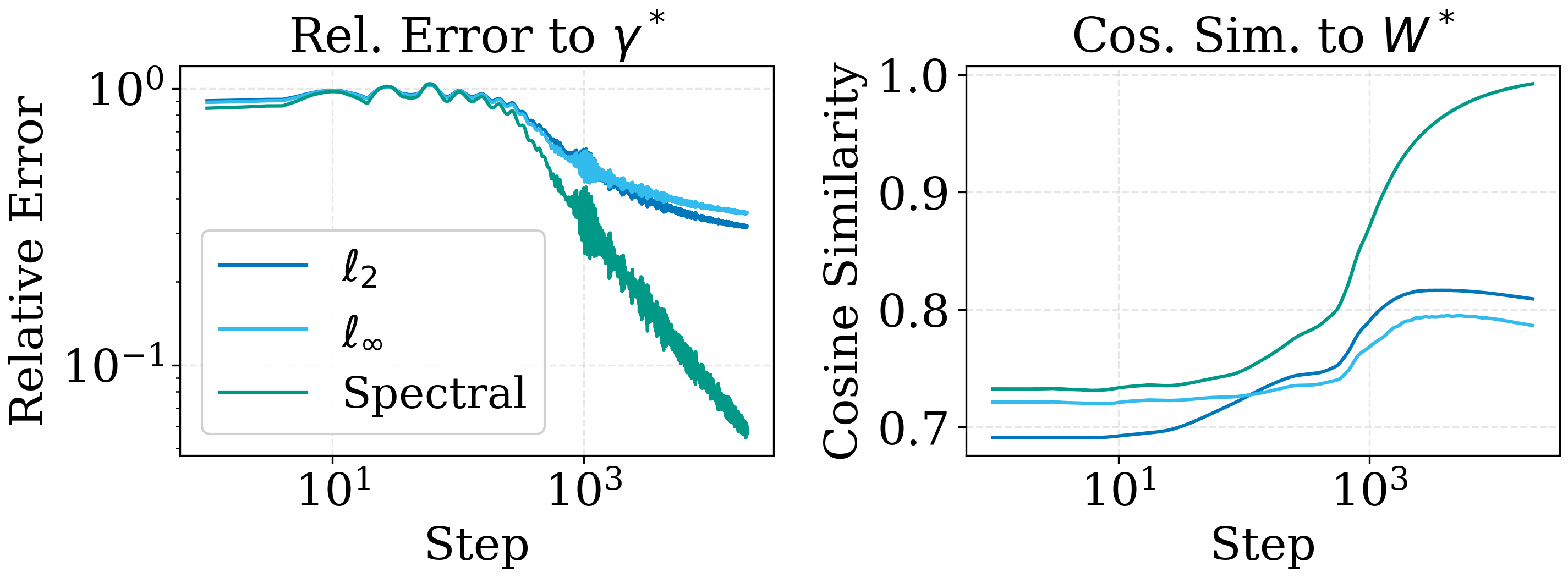}
        \caption{}
    \end{subfigure}

\caption{
Empirical validation of the implicit bias of normalized steepest descent under the Spectral norm.
(a) Spectral-SGD with full-batch size $b=200$.
(b) Spectral-SGD with mini-batch size $b=20$.
(c) Muon with momentum $\beta_1=0.5$ and full-batch size $b=200$.
(d) Muon with momentum $\beta_1=0.5$ and mini-batch size $b=20$.
(e) Muon with momentum $\beta_1=0.99$ and full-batch size $b=200$.
(f) Muon with momentum $\beta_1=0.99$ and mini-batch size $b=20$.
(g) VR-Spectral-SGD with mini-batch size $b=20$.
(h) VR-Muon with momentum $\beta_1=0.5$ and mini-batch size $b=20$.
(i) VR-Muon with momentum $\beta_1=0.99$ and mini-batch size $b=20$.
}
\label{fig:six_plots_muon}
\end{figure*}

\subsection{MNIST dataset results}

\begin{figure}[htbp]
    \centering
    
    \begin{subfigure}{\textwidth}
        \centering
        \includegraphics[width=0.9\textwidth]{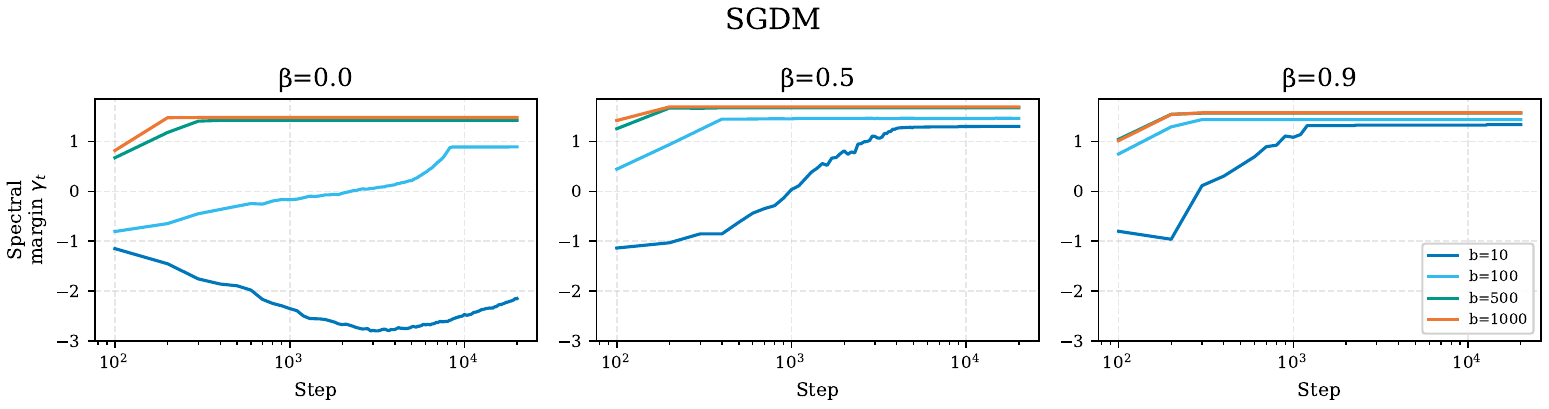}
        \label{fig:image_top}
    \end{subfigure}
    
    \vspace{0.2cm} 
    
    \begin{subfigure}{\textwidth}
        \centering
        \includegraphics[width=0.9\textwidth]{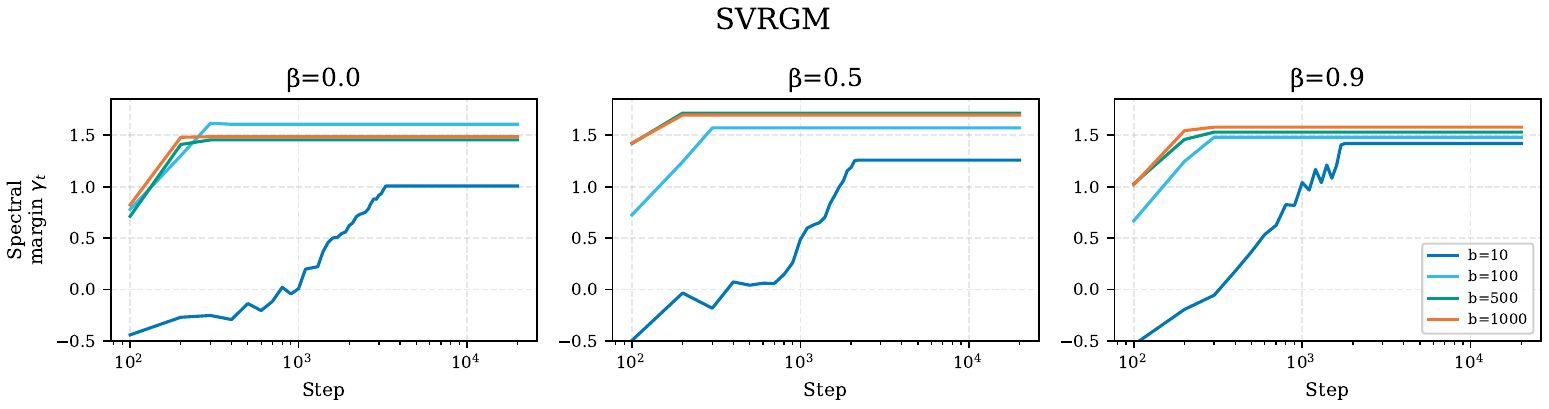}
        \label{fig:image_bottom}
    \end{subfigure}
    
\caption{Spectral margin $\gamma^{\mathbf{W_1},\mathbf{W_2}}$ along training on MNIST for a two-layer neural network trained with Spectral-SGD/Muon. 
The top row shows SGDM (Spectral-SGD with momentum), and the bottom row shows SVRGM (its variance-reduced counterpart).We observe that for SGDM (top), small batch sizes lead to a collapse of the spectral margin and deviate significantly from the full-batch behavior, while increasing momentum progressively restores the margin toward the full-batch regime. 
In contrast, SVRGM (bottom) consistently recovers a margin close to the full-batch solution across all batch sizes and momentum values, even when $b$ is small, highlighting the stabilizing effect of variance reduction.
}

    \label{fig:both_images}
\end{figure}

\end{document}